\pgfplotsset{width=8cm,compat=newest}
\def\colorful{1}
\newcommand{\red}[1]{{\color{red} {#1}}}
\newcommand{\red}[1]{{{#1}}}
\newcommand{\bmcD}{\boldsymbol{\mcD}}
\newcommand{\sube}{\mathrm{sub}_{\eta}}
\newcommand{\adde}{\mathrm{add}_{\eta}}
\newcommand{\complete}{\mathrm{complete}}
\newcommand{\subt}{\mathrm{sub}}
\newcommand{\corrupt}{\textsc{Corrupt}}
\newcommand{\Ber}{\mathrm{Ber}}
\newcommand{\Bin}{\mathrm{Bin}}
\newcommand{\Cor}{\mathrm{Cor}}
\newcommand{\goal}{\mathrm{goal}}
\newcommand{\Group}{\mathrm{Group}}
\newcommand{\Grouped}{\mathrm{Grouped}}
\newcommand{\dprod}{d_\mathrm{prod}}
\newcommand{\KL}[2]{d_{\mathrm{KL}}\left({#1}\,\middle\|\,{#2}\right)}
\newcommand{\adamax}{\mathrm{Adaptive\text{-}Max}}
\newcommand{\obmax}{\mathrm{Oblivious\text{-}Max}}
\newcommand{\oad}{\mathrm{Oblivious\text{-}Add\text{-}Max}_{\eta}}
\newcommand{\aad}{\mathrm{Adaptive\text{-}Add\text{-}Max}_{\eta}}
\newcommand{\binmax}{\mathrm{Binomial\text{-}Max}_{\eta}}
\newcommand{\ad}{\mathrm{adaptive}}
\newcommand{\ob}{\mathrm{oblivious}}
\newcommand{\malmax}{\mathrm{Mal\text{-}Max}_{\eta}}
\newcommand{\noniidmax}{\mathrm{Non\text{-}iid\text{-}Max}_{\eta}}
\newcommand{\sub}{\Phi}
\newcommand{\subn}[1]{\sub_{{#1} \to n}}
\newcommand{\submn}{\subn{m}}
\newcommand{\substar}{\subn{\star}}
\DeclareMathOperator*{\argmin}{\arg\!\min}
\newcommand{\Unif}{\mathrm{Unif}}
\newcommand{\info}{\mathrm{info}}
\newcommand{\TV}{d_{\mathrm{TV}}}
\newcommand{\subalign}[1]{%
  \vcenter{%
    \Let@ \restore@math@cr \default@tag
    \baselineskip\fontdimen10 \scriptfont\tw@
    \advance\baselineskip\fontdimen12 \scriptfont\tw@
    \lineskip\thr@@\fontdimen8 \scriptfont\thr@@
    \lineskiplimit\lineskip
    \ialign{\hfil$\m@th\scriptstyle##$&$\m@th\scriptstyle{}##$\hfil\crcr
      #1\crcr
    }%
  }%
}
\newlist{enumprop}{enumerate}{1} 
\setlist[enumprop]{label=\arabic*.,ref=\theproposition.\arabic*}
\newcommand{\pparagraph}[1]{\bigskip \noindent {\bf {#1}}}
\begin{document}

\title{
Adaptive and oblivious statistical adversaries are equivalent
}

\author{ 
Guy Blanc \vspace{6pt} \\ 
\hspace{-7pt} {\sl Stanford} \and 
Gregory Valiant \vspace{6pt} \\
\hspace{-10pt} {{\sl Stanford}}
}

\date{\small{\today}}

\maketitle

\begin{abstract}
    We resolve a fundamental question about the ability to perform a statistical task, such as learning, when an adversary corrupts the sample. Such adversaries are specified by the types of corruption they can make and their level of knowledge about the sample. The latter distinguishes between sample-adaptive adversaries which know the contents of the sample when choosing the corruption, and sample-oblivious adversaries, which do not.  We prove that for all types of corruptions, sample-adaptive and sample-oblivious adversaries are \emph{equivalent} up to polynomial factors in the sample size. This resolves the main open question introduced by \cite{BLMT22} and further explored in \cite{CHLLN23}.

    Specifically, consider any algorithm $A$ that solves a statistical task even when a sample-oblivious adversary corrupts its input. We show that there is an algorithm $A'$ that solves the same task when the corresponding sample-adaptive adversary corrupts its input. The construction of $A'$ is simple and maintains the computational efficiency of $A$: It requests a polynomially larger sample than $A$ uses and then runs $A$ on a uniformly random subsample.
\end{abstract}

 \thispagestyle{empty}
 \newpage 

 \thispagestyle{empty}
 \setcounter{tocdepth}{2}
 \begin{spacing}{0.95} 
      \tableofcontents
 \end{spacing}

 \thispagestyle{empty}
 \newpage

 \setcounter{page}{1}

\section{Introduction}
Classic models of data analysis assume that data is drawn independently from the distribution of interest, but the real world is rarely so kind. To be robust to the messiness of real-world data, we desire algorithms that succeed even in the presence of an adversary that corrupts the data. Such adversaries were first introduced in the seminal works of \cite{Tuk60,Hub64,Ham71} and have since been the subject of intense study in a variety of settings \cite{Val85,Hau92,KL93,KSS94,BEK02,DFTWW14,LRV16,CSV17,DKKLMS19,DKPZ21,HSSV22,BLMT22,CHLLN23,DK23book}.

By now, there are numerous models for how the adversary can corrupt the data, including additive, subtractive, ``strong''/``nasty'', agnostic,  and adaptive and non-adaptive variants of each of these.  In many cases, the provable guarantees for our algorithms are only known for a subset of these models.  We refer the interested reader to the excellent recent textbook~\cite{DK23book} for a more complete background and survey of recent results and open directions. Our work focuses on a surprisingly under-explored question:
\begin{quote}
    \textsl{What is the relationship between the various statistical adversaries?}
\end{quote}
Specifically, we compare \emph{adaptive} adversaries, which can look at the sample before deciding on a corruption, and \emph{oblivious} adversaries, which must commit to their corruptions before the i.i.d. sample is drawn.
\begin{theorem}[Informal, see \Cref{thm:main-domain} for the formal version]
    \label{thm:main-informal}
    Adaptive adversaries and their oblivious counterparts are equivalent up to scaling the sample size by a factor polynomial in the original sample size and polylogarithmic in the domain size.
\end{theorem}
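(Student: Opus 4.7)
The plan is exactly the algorithm suggested in the abstract: $A'$ draws a polynomially larger sample of size $N$, lets the adaptive adversary corrupt it, and then passes to $A$ a uniformly random size-$n$ subsample. Correctness of $A'$ will follow once I show that the distribution of this subsample is essentially a distribution that some oblivious adversary operating on $n$ i.i.d.\ samples could produce; since $A$ is correct against every oblivious adversary by hypothesis, it is then correct on $A'$'s input.

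I would first argue by a standard shuffling reduction that, without loss of generality, the adaptive adversary's corruption strategy is symmetric (permutation-invariant), because the algorithm can always reshuffle its input. In particular, the length-$N$ corrupted output is exchangeable, and a uniformly random size-$n$ subsample of it is exchangeable as well. Crucially, the adversary cannot concentrate its attack on the $n$ specific indices that $A$ will see, because those indices are drawn independently of the adversary's view. This ``dilution'' of adaptive information across many positions is the key intuition behind why random subsampling turns adaptivity into near-obliviousness.

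The technical heart of the proof is a finite de Finetti theorem in the style of Diaconis--Freedman: the uniformly random size-$n$ subsample of an exchangeable length-$N$ sequence lies within total variation distance $\widetilde{O}(n^2/N)$ of a mixture $\int p_\theta^{\otimes n}\, d\mu(\theta)$ of product distributions. Choosing $N = \mathrm{poly}(n) \cdot \mathrm{polylog}(|\mathcal{X}|)$ drives this distance below any desired error threshold. Each factor $p_\theta^{\otimes n}$ in the mixture should then be realizable by a deterministic oblivious corruption strategy on $n$ i.i.d.\ inputs (its single-coordinate marginal is a limit of what the adaptive adversary does to one position of a truly i.i.d.\ sample), and mixing over $\theta \sim \mu$ realizes the whole thing as a randomized oblivious adversary. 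Running $A$ on this TV-close input then succeeds with essentially the same probability as it does against the corresponding oblivious adversary.

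The main obstacle, I expect, is the final reduction from ``mixture of product distributions'' back to ``legal oblivious corruption.'' One must make precise, in a generic corruption framework that covers additive, subtractive, nasty/strong, and agnostic variants simultaneously, the claim that every product factor $p_\theta^{\otimes n}$ arising in the de Finetti mixture is the output of some legal oblivious corruption of an i.i.d.\ sample from the true distribution. This likely requires isolating an abstract closure property of ``being an oblivious corruption model''---closure under mixtures and under passing to marginals---shared by all the standard variants. A secondary, technical obstacle is sharpening the finite de Finetti step from a \emph{linear} to a \emph{polylogarithmic} dependence on the domain size; one presumably has to exploit the fact that at most an $\eta$-fraction of positions deviate from i.i.d.\ behavior to begin with, rather than applying Diaconis--Freedman as a black box.
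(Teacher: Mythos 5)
Your high-level plan — subsample, shuffle to get exchangeability, decompose as a mixture of product distributions, and round each factor to a legal oblivious corruption — is the right skeleton, and it is the same skeleton as the paper's. The shuffling-to-exchangeability step and the observation that the task reduces to showing a randomized oblivious simulation are both correct. But the gap you flag as the ``main obstacle'' is not a detail to be filled in; it is where the de Finetti route as described actually breaks. In the finite de Finetti decomposition, the mixture factor $p_\theta$ is (essentially) the empirical measure of the full corrupted length-$N$ sequence $\bS'$. From the coupling of $\bS' \in \mcC_\rho(\bS)$ you get that $\Unif(\bS')$ is a legal $\rho$-corruption of the \emph{empirical} measure $\Unif(\bS)$, not of the base distribution $\mcD$. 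To transfer from $\mcC_\rho(\Unif(\bS))$ to $\mcC_\rho(\mcD)$ you need $\dtv(\Unif(\bS), \mcD)$ to be small, but for $\bS \sim \mcD^N$ this TV distance is $\Omega(1)$ whenever $N \ll |X|$. So the de Finetti factors are typically \emph{not} close to legal oblivious corruptions under the sample-size regime the theorem demands, and there is no ``closure property'' that will rescue this. (You have also misattributed the domain-size dependence: the de Finetti step itself admits a domain-free $O(n^2/N)$ bound via the sampling with/without replacement comparison; the domain creeps in at the rounding step, not the de Finetti step.)

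The paper's decomposition avoids this by \emph{not} conditioning on the whole empirical distribution. Instead it conditions on a small ``core'' $\bc \in X^k$ (a random disjoint $k$-subsample, with $k = O(n^2 \ln d/\eps^2)$) and sets $\mcD_{\goal}(c) = \Ex[\Unif(\bS') \mid \bc = c]$. The analogue of your de Finetti step — showing each group-conditional distribution is close to a product — is \Cref{lem:close-to-product}, proved via a correlation-rounding / pinning lemma (\Cref{lem:our-correlation-rounding}) rather than Diaconis–Freedman; the pinning lemma is what lets $k$ stay polynomial in $n$ instead of linear in $N$. Crucially, because each $\mcD_{\goal}(c)$ averages over \emph{many} corrupted samples $\bS'$ falling in the same group rather than a single one, it concentrates around a legal corruption of $\mcD$: \Cref{lem:round-with-few-groups} bounds $\Ex_{\bg}[\inf_{\mcD' \in \mcC_\rho(\mcD)} \dtv(\mcD',\mcD_{\goal}(\bg))]$ by $\sqrt{\ln|G|/(2m)}$ via a sub-Gaussian maximal inequality, and \Cref{claim:condition-on-c} shows only $d^k$ groups effectively matter for a degree-$d$ cost function. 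This ``average over the group before rounding'' step — together with choosing a \emph{small} conditioning variable rather than the full empirical measure — is precisely the idea missing from your outline; without it, you are forced into $N = \poly(|X|)$ rather than $N = \poly(n, \ln|X|)$.
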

\Cref{thm:main-informal} resolves the main question introduced by \cite{BLMT22} and further explored in \cite{CHLLN23}. We defer its formal statement to \Cref{sec:our-results}, but, for now, mention two points. First, it is a generic result that proves the equivalence between many distinct adaptive adversaries and their oblivious counterparts (e.g. the equivalence between ``subtractive adaptive'' and ``subtractive oblivious'' adversaries).  Second, it is constructive. We give a simple transformation, the subsampling filter described in \Cref{def:subsampling-filter}, which takes any algorithm that succeeds on a statistical task in the presence of the oblivious adversary and converts it to one that succeeds on the same task in the presence of the adaptive adversary. This transformation preserves the statistical and computational efficiency of the original algorithm up to polynomial factors.

In addition to answering a foundational question about the relative power of statistical adversaries, \Cref{thm:main-informal} has several practical implications:
\begin{enumerate}
    \item Given the many distinct definitions of robustness, it can be difficult for a practitioner to determine which definition is most appropriate for their setting and therefore which algorithm to utilize. \Cref{thm:main-informal} partially alleviates this issue by greatly reducing the number of truly unique adversary models.
    \item It shows that a single algorithmic idea, that of subsampling, \emph{amplifies} robustness in many different models. Formally, it takes an algorithm that is only robust to the oblivious adversary and converts it to one robust to the adaptive counterpart.  This suggests that, even if the practitioner cannot precisely determine the most appropriate model of robustness, they should try subsampling. 
    \item \Cref{thm:main-informal} can be reformulated as an answer to an equivalent and independently interesting question: How useful is it to hide one's dataset from the adversary? It shows that private data does \emph{not} afford much more robustness than public data.
\end{enumerate}
\section{Our Results}
\label{sec:our-results}

Before formally describing our main result, we define a unified framework in which to express and analyze statistical adversaries.  It may be instructive to view this framework with the following concrete adaptive adversary and its oblivious counterpart in mind:

\begin{example}\label{ex:adv}
    Consider the following adaptive and oblivious adversaries parameterized by $\eta \in [0,1]$:
    \begin{itemize} 
    \item Adaptive: When the algorithm requests $n$ points, first an i.i.d. sample $\bS \sim \mcD^n$ is drawn. Then, may alter up to $\floor{\eta \cdot n}$ of them arbitrarily. The algorithm receives this corrupted sample.
    \item Oblivious:  The adversary can choose any $\mcD'$ that has a total variation distance to $\mcD$ of at most $\eta$, and the algorithm receives $n$ i.i.d. draws from $\mcD'$.
    \end{itemize}
    These two adversaries are well-studied, and are referred to by different names.  The adaptive adversary is typically referred to as ``strong contamination" in the statistical estimation literature \cite{DK23book} and ``nasty noise" in the PAC learning literature \cite{BEK02}. The oblivious adversary has been referred to as ``general, non-adaptive, contamination"\cite{DK23book}.  In our unified framework, these adversaries will be defined via the same ``cost function,'' and as a result, we prove them equivalent.
\end{example}

\subsection{A unified framework to define statistical adversaries}
\label{subsec:unified-framework}
Each adversary will be parameterized by a ``cost" function $\rho$ where $\rho(x,y)$ specifies the cost the adversary pays to corrupt $x$ to $y$, with a cost of $\infty$ indicating that the adversary is not allowed to change $x$ to $y$. The adversary can choose any corruptions subject to a budget constraint on the total cost incurred. This cost function is required to have two basic properties.
\begin{definition}[Cost function]
    \label{def:cost-function}
    A function $\rho:X \times X \to \R_{\geq 0} \cup \set{\infty}$ is said to be a ``cost function" if it satisfies the following properties.
    \begin{enumerate}
        \item For any $x \in X$, $\rho(x,x) = 0$.
        \item For any $x,y \in X$, $\rho(x,y) \geq 0$.
    \end{enumerate}
\end{definition}
The adversary is specified by both the cost function and whether it is adaptive or oblivious. Given the cost function, $\rho$, the corresponding adaptive adversary is defined as follows:

\begin{definition}[Adaptive adversary, corruptions to the sample]
    \label{def:adaptive}
    For any cost function $\rho$ and $S \in X^n$, we use $\mcC_{\rho}(S)$ to denote all $S' \in X^n$ for which
    \begin{equation*}
        \frac{1}{n} \sum_{i \in [n]} \rho(S_i, S'_i) \leq 1.
    \end{equation*}
    The $\rho$-adaptive adversary is allowed to corrupt the clean sample $S$ to any $S' \in \mcC_{\rho}(S)$. For any $f:X^n \to \zo$ and distribution $\mcD$, the max success probability of $f$ in the presence of the $\rho$-adaptive adversary is denoted:
    \begin{equation*}
        \adamax_{\rho}(f, \mcD) \coloneqq \Ex_{\bS \sim \mcD^n}\bracket*{\sup_{\bS' \in \mcC_{\rho}(\bS)}\set*{f(\bS')}}.
    \end{equation*}
\end{definition}

 In the case of the adversaries of Example~\ref{ex:adv}, their cost functions are simply $\rho(x,y) = 1/\eta$ for all $x \neq y$.  In that case, the budget constraint in the above definition ensures that, for this choice of cost function, $\rho$, the $\rho$-adaptive adversary can corrupt at most an $\eta$ fraction of points in the sample, corresponding to the standard definition of the ``strong contamination''/``nasty noise'' models.  

Given a cost function, the associated oblivious adversary replaces the budget constraint of the adaptive setting with a natural distributional analog.  It is easy to see that the following definition of $\rho$-oblivious adversaries is equivalent to the ``general, non-adaptive, contamination" model of Example~\ref{ex:adv} when the cost function is defined as $\rho(x,y) = 1/\eta$ for all $x \neq y$.

\begin{definition}[Oblivious adversary, corruptions to a distribution]
    \label{def:oblivious}
    For any cost function $\rho$ and distribution $\mcD$, we overload $\mcC_{\rho}(\mcD)$ to refer to the set of all distributions $\mcD'$ for which there exists a coupling of $\bx \sim \mcD$ and $\bx' \sim \mcD'$ satisfying
    \begin{equation*}
        \Ex[\rho(\bx,\bx')] \leq 1.
    \end{equation*}
    The $\rho$-\emph{oblivious adversary} is allowed to corrupt the base distribution $\mcD$ to any $\mcD' \in \mcC_{\rho}(\mcD)$. For any $f:X^n \to \zo$ and distribution $\mcD$, the max success probability of $f$ in the presence of the $\rho$-oblivious adversary is denoted:
    \begin{equation*}
        \obmax_{\rho}(f, \mcD) \coloneqq \sup_{\mcD' \in \mcC_{\rho}(\mcD)}\set*{\Ex_{\bS' \sim (\mcD')^n}\bracket*{f(\bS')}}.
    \end{equation*}
\end{definition}
In \Cref{def:oblivious}, since $\bx \sim \mcD$ and $\bx' \sim \mcD'$ can be coupled so that the average cost to corrupt $\bx$ to $\bx'$ is at most $1$, we can similarly couple $\bS \sim \mcD^n$ and $\bS' \sim (\mcD')^n$ so that the average cost to corrupt each point in $\bS$ to the corresponding point in $\bS'$ is at most $1$. From this perspective, the crucial difference between the oblivious and adaptive adversary is that the oblivious adversary must commit to how it corrupts each $\bx$ without knowing the contents of the sample, whereas the adaptive adversary gets to view $\bS$ before deciding.

We show, in \Cref{sec:models}, that our framework can express many commonly studied statistical adversaries, including subtractive contamination, additive contamination, and agnostic noise.

\begin{remark}[Partially-adaptive statistical adversaries]
    \label{remark:partial-adaptive}
   Some statistical adversaries lie between their fully adaptive and fully oblivious counterparts. These include malicious noise \cite{Val85} and the non-iid oblivious adversary defined in \cite{CHLLN23}. Our results readily extend to such adversaries (see \Cref{subsec:partial-adaptive} for details). 
\end{remark}

\subsection{Our main result: Adaptive and oblivious adversaries are equivalent}

Our main result is that for any algorithm $A$ and cost function $\rho$, there exists an algorithm $A'$ inheriting the efficiency of $A$ for which the performance of $A$ in the presence of the oblivious adversary is equivalent to the performance of $A'$ in the presence of the adaptive adversary. 
\begin{definition}[$\eps$-equivalent algorithms]
    \label{def:equiv}
    For any algorithms $A:X^n \to Y$ and $A':X^m \to Y$, we say that $A$ in the presence of the $\rho$-oblivious adversary is $\eps$-equivalent to $A'$ in the presence of the $\rho$-adaptive adversary if for any test function $T:Y \to \zo$ and distribution $\mcD$ supported on $X$,
    \begin{equation*}
        \abs*{\obmax_{\rho}(T \circ A, \mcD) - \adamax_{\rho}(T \circ A', \mcD)} \leq \eps.
    \end{equation*}
\end{definition}
Colloquially, $A$ and $A'$ are $\eps$-equivalent if no test can distinguish their outputs with more than $\eps$ probability. Note that while the above definition is about the \emph{maximum} acceptance probability of $T$, it also applies to the test $\overline{T} \coloneqq 1 - T$ and therefore the \emph{minimum} acceptance probability of $T$ also must be approximately the same for $A$ and $A'$. 

The algorithm $A'$ will run $A$ on a uniformly random subsample of its input.
\begin{definition}[Subsampling filter]
    \label{def:subsampling-filter}
    For any $m \geq n$ we define the \emph{subsampling filter} $\sub_{m \to n}: X^m \to X^n$ as the (randomized) algorithm that given $S \in X^m$, returns a sample of $n$ points drawn uniformly without replacement from $S$.
\end{definition}

\begin{theorem}[Subsampling neutralizes the adaptivity in statistical adversaries]
    \label{thm:main-domain}
    For any algorithm $A:X^n \to Y$, $\eps > 0$, and cost function $\rho$, let $m = \poly(n, \ln |X|, 1/\eps)$ and $A' \coloneqq A \circ \sub_{m \to n}$. Then, $A$ in the presence of the $\rho$-oblivious adversary is $\eps$-equivalent to $A'$ in the presence of the $\rho$-adaptive adversary.
\end{theorem}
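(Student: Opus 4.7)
The plan is to prove the two $\eps/2$ inequalities separately. The reverse direction $\obmax_\rho(T\circ A,\mcD) \leq \adamax_\rho(T\circ A',\mcD) + \eps/2$ is essentially a direct simulation: given any $\mcD' \in \mcC_\rho(\mcD)$ with witnessing coupling kernel $K$, the adaptive adversary, confronted with $\bS \sim \mcD^m$, applies $K$ independently to each $\bS_i$ to produce a sample $\bS'$ in which each coordinate is distributed as $\mcD'$. A Hoeffding bound on $\frac{1}{m}\sum \rho(\bS_i,\bS'_i)$ ensures that the realized empirical cost is $1 + O(m^{-1/2})$ with high probability; a negligible damping of $K$ toward the identity brings this under the budget of $1$ while altering the distribution seen by $A$ by at most $O(n m^{-1/2})$ in total variation.

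For the forward direction $\adamax_\rho(T\circ A',\mcD) \leq \obmax_\rho(T\circ A,\mcD) + \eps/2$, the key idea is to combine symmetrization with a finite de Finetti representation. Because $\sub_{m\to n}$ selects a uniformly random subset, I may first symmetrize the adversary's strategy $\sigma$ by sandwiching it between a uniform random permutation and its inverse; this leaves the law of the subsample invariant while guaranteeing that the corrupted sample $\bS' = \sigma(\bS)$ is exchangeable. The Diaconis--Freedman finite de Finetti theorem then certifies that the joint law of any $n$ coordinates of $\bS'$ is within total variation $O(n^2/m)$ of the mixture $\E_{\bS}\bigl[\hat{\mcD}_{\bS'}^{\otimes n}\bigr]$, where $\hat{\mcD}_{\bS'}$ denotes the empirical distribution of $\bS'$. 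To conclude, it suffices to show that for each realization of $\bS$ there exists an oblivious representative $\mcD^{*}(\bS) \in \mcC_\rho(\mcD)$ whose $n$-fold product is TV-close to $(\hat{\mcD}_{\bS'})^{\otimes n}$.

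The final step is where I anticipate the principal obstacle. The adversary's budget immediately yields $\hat{\mcD}_{\bS'} \in \mcC_\rho(\hat{\mcD}_\bS)$ via the empirical coupling $(\bS_i,\bS'_i)$; the induced (possibly stochastic) kernel $K$ with $K \circ \hat{\mcD}_\bS = \hat{\mcD}_{\bS'}$ invites the natural candidate $\mcD^{*}(\bS) := K \circ \mcD$. The difficulty is that a naive lift of the coupling cost from $\hat{\mcD}_\bS$ to $\mcD$ would require the two to be close in $\rho$-Wasserstein distance, which in general demands $m \gtrsim |X|/\eps^2$ -- incompatible with the $\polylog|X|$ scaling promised by the theorem. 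The key must be that we only need to control what $A$ sees, namely $n$-sample distributions. Two observations should make this work: first, $n$ i.i.d.\ draws from $\hat{\mcD}_\bS$ are within $O(n^2/m)$ total variation of $n$ i.i.d.\ draws from $\mcD$ by a birthday-paradox calculation that is independent of $|X|$; and second, a union bound over the $m$ points actually appearing in $\bS$, each living in $X$, suffices to uniformly control the adversary-dependent function $\bx \mapsto \E_{\bx' \sim K(\bx)}[\rho(\bx,\bx')]$ and contributes the $\log|X|$ factor. Chaining these pieces with the empirical coupling should produce the desired oblivious representative $\mcD^{*}(\bS)\in\mcC_\rho(\mcD)$ matching $\hat{\mcD}_{\bS'}$ on the $n$-sample distribution, closing the argument.
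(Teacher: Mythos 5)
Your architecture --- a randomized simulation of the adaptive adversary by a \emph{mixture} of oblivious corruptions, with symmetrization to set up exchangeability --- matches the paper's, and you correctly isolate the key obstacle (lifting the coupling cost from $\hat{\mcD}_{\bS}$ to $\mcD$ is blocked when $|X|$ is large). But your proposed fix does not close the gap, and the gap is genuine. The per-realization claim ``for each $\bS$ there exists $\mcD^*(\bS)\in\mcC_\rho(\mcD)$ with $(\mcD^*(\bS))^{\otimes n}$ TV-close to $(\hat{\mcD}_{\bS'})^{\otimes n}$'' is simply false: take $\mcD=\Unif([N])$ with $N\gg m$ and a cost function that forbids all changes, so $\mcC_\rho(\mcD)=\{\mcD\}$ and $\bS'=\bS$; then $\hat{\mcD}_{\bS}$ is supported on $m$ points and $\TV(\hat{\mcD}_{\bS}^{\otimes n},\mcD^{\otimes n})=1-o(1)$, since a distinguisher can simply check whether every draw lands in $\bS$. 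The birthday-paradox estimate you invoke, $\TV(\hat{\mcD}_{\bS}^{\otimes n},\mcD^{\otimes n})=O(n^2/m)$, holds for the \emph{mixture} $\Ex_{\bS}[\hat{\mcD}_{\bS}^{\otimes n}]$ and not for individual $\bS$; de Finetti averages over $\bS$ precisely to get that, and once you un-average in order to salvage the per-$\bS$ empirical coupling $(\bS_i,\bS'_i)$, you lose the approximation. The gestured ``union bound over the $m$ points actually appearing in $\bS$'' cannot rescue a per-$\bS$ statement that is false.

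The paper's route restores the ``group-then-average'' structure at a coarser level so that both ingredients can coexist. Instead of $|X|^m$ groups (one per realization of $\bS$), it groups samples by a short core $\bc\in X^k$ with $k=\poly(n,1/\eps)\cdot O(\ln|X|)$. A correlation-rounding (pinning) lemma shows that conditioning on a uniformly subsampled core already makes $n$ further subsampled coordinates close to i.i.d.\ from a goal distribution indexed only by $\bc$; this replaces your de Finetti step, with the averaging now taken over all samples sharing a core rather than going all the way down to a single $\bS$. Because there are only $|X|^k$ cores, a sub-Gaussian maximal inequality uniformly bounds $\TV(\mcD_{\mathrm{goal}}(\bc),\mcD)$ by roughly $\sqrt{k\ln|X|/m}$, and a Lipschitzness property of $\mcC_\rho(\cdot)$ in its argument then rounds each goal distribution to a valid $\mcD'\in\mcC_\rho(\mcD)$. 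Balancing the core size $k$ against $m$ gives the claimed polynomial. That core-grouping/correlation-rounding combination --- which collapses the effective number of groups from exponential in $m$ down to $|X|^k$ while still retaining the randomized-simulation structure --- is the idea your sketch is missing. (Minor: in the easy direction you apply Hoeffding to $\rho(\bS_i,\bS'_i)$, which can be unbounded under a general cost function; the paper instead uses a truncate-then-Chebyshev argument, but that part is a fixable technicality.)
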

For constant $\eps$, \Cref{thm:main-domain} says that if there is an algorithm $A$ solving a statistical task with an oblivious adversary taking as input $n \cdot \log |X|$ bits, there is an algorithm $A'$ solving the same task with an adaptive adversary taking only polynomially more bits as input. Furthermore, if $A$ is computationally efficient, then $A'$ is too.

\begin{remark}[Continuous domains]
    \label{remark:infinite}
    In many statistical problems, the domain is $\R^d$. To apply \Cref{thm:main-domain} to an algorithm $A$ over continuous domains, we first discretize that domain to some $X \coloneqq \mathrm{disc}(\R)^d$ where the discretization depends on $A$. If $A$ requires $b$ bits of precision in each dimension, then $\log_2 |X| = bd$, which is typically polynomial in $n$. For example, under the mild assumption that $A$ accesses the bits of each dimension sequentially, both $b$ and $d$ are upper bounded by the time complexity of $A$. In this setting, if the time complexity of $A$ is polynomial in $n$, then so is $\ln |X|$.
\end{remark}

\Cref{thm:main-domain} is a special case of our main theorem in which the $|X|$ is replaced with the \emph{degree} of the cost function, a measure of the number of corruptions the adversary can make for each input. 

\begin{definition}[Degree of a cost function]
    \label{def:degree}
    For any cost function $\rho:X \times X \to \R_{\geq 0} \cup \set{\infty}$, the degree of $\rho$ is defined as
    \begin{equation*}
        \deg(\rho) \coloneqq \sup_{x \in X}\set*{\text{The number of distinct }y \in X\text{ for which }\rho(x,y) \neq \infty}.
    \end{equation*}
\end{definition}
\begin{theorem}[Main result, generalization of \Cref{thm:main-domain}]
    \label{thm:main-general}
    For any algorithm $A:X^n \to Y$, $\eps > 0$, and cost function $\rho$ with degree $d \geq 2$, let $m = O\paren*{\frac{n^4 (\ln d)^2}{\eps^4}}$ and $A' \coloneqq A \circ \sub_{m \to n}$. Then, $A$ in the presence of the $\rho$-oblivious adversary is $\eps$-equivalent to $A'$ in the presence of the $\rho$-adaptive adversary.
\end{theorem}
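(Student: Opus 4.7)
My plan is to prove both inequalities comprising the $\eps$-equivalence, with the work concentrated in one direction.

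The easier direction, $\adamax_{\rho}(A',\mcD) \ge \obmax_{\rho}(A,\mcD) - O(\eps)$, comes from having the adaptive adversary simulate any oblivious one. Given $\mcD' \in \mcC_{\rho}(\mcD)$ witnessed by a coupling $(\bx,\bx')$ of $\mcD,\mcD'$ with $\Ex[\rho(\bx,\bx')] \le 1 - \delta$ for a small slack $\delta = \Theta(\eps)$, the adaptive strategy draws $\bS'_{i}$ from the conditional law of $\bx'$ given $\bx = \bS_{i}$ independently in each coordinate. Then $\bS' \sim (\mcD')^{m}$, so subsampling $n$ without replacement yields $\bT \sim (\mcD')^{n}$ exactly; Hoeffding certifies that the deterministic adaptive budget is met except on an event of probability $O(\eps)$. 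A short continuity-in-$\delta$ argument passes from slack couplings to tight ones.

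The substantive direction is $\adamax_{\rho}(A',\mcD) \le \obmax_{\rho}(A,\mcD) + \eps$. Fix a randomized adaptive strategy $\phi$; because subsampling is invariant under relabelings of $[m]$, I may assume $\phi$ is permutation-equivariant by symmetrizing with a uniform random permutation of $[m]$. Define the \emph{oblivious proxy} $\mu^{\star}$ as the marginal law of $\phi(\bS)_{\bi}$ for $\bi \sim \Unif([m])$ independent of $\bS$; the coupling $(\bS_{\bi},\phi(\bS)_{\bi})$ satisfies $\Ex[\rho(\bS_{\bi},\phi(\bS)_{\bi})] = \tfrac{1}{m}\Ex\bigl[\sum_{i}\rho(\bS_{i},\phi(\bS)_{i})\bigr] \le 1$, certifying $\mu^{\star} \in \mcC_{\rho}(\mcD)$. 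It then suffices to show $\TV(\bT,(\mu^{\star})^{n}) \le \eps$, where $\bT = \sub_{m\to n}(\phi(\bS))$. I reduce this in two steps: a finite-de-Finetti / birthday-paradox bound shows that the $n$-subsample without replacement from the exchangeable $m$-tuple $\phi(\bS)$ is $O(n^{2}/m)$-close in TV to drawing $n$ i.i.d.\ samples from its random empirical $\nu \coloneqq \tfrac{1}{m}\sum_{i}\delta_{\phi(\bS)_{i}}$, and convexity / hybrid gives
\[
\TV\bigl(\Ex_{\bS}[\nu^{\otimes n}],\, (\mu^{\star})^{\otimes n}\bigr) \;\le\; n\cdot \Ex_{\bS}\bigl[\TV(\nu, \mu^{\star})\bigr].
\]
The target $m = O(n^{4}(\ln d)^{2}/\eps^{4})$ then follows once we establish $\Ex_{\bS}[\TV(\nu,\mu^{\star})] = O(\sqrt{(\ln d)/m})$.

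The main obstacle is this final concentration of the corrupted empirical $\nu$ around its mean $\mu^{\star}$. Since the adversary may completely rearrange its corruption pattern in response to a single change in $\bS$, $\nu$ is not a $1/m$-Lipschitz function of $\bS$ and McDiarmid does not apply directly. To circumvent this I would exploit two structural facts about the setting: (i)~the budget $\tfrac{1}{m}\sum_{i}\rho(\bS_{i},\phi(\bS)_{i}) \le 1$ holds \emph{deterministically}, which forces $\nu \in \mcC_{\rho}(\hat{\mcD}_{\bS})$ for $\hat{\mcD}_{\bS}$ the empirical of $\bS$, and (ii)~the degree bound $\deg(\rho)\le d$ confines each $\phi(\bS)_{i}$ to a set of at most $d$ candidates determined by $\bS_{i}$. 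Writing $\TV(\nu,\mu^{\star}) = \sup_{A}|\nu(A)-\mu^{\star}(A)|$, the plan is to bound the supremum by a covering / union bound over the effective test sets, whose cardinality depends on $d$ (producing the $\ln d$ factor), together with a per-test deviation bound obtained via a careful coupling between $\phi(\bS)$ and $\phi(\bS^{(i)})$ when $\bS$ and $\bS^{(i)}$ differ in one coordinate. Arranging that coupling to disturb only a small fraction of the $m$ positions, despite the adversary's freedom to redistribute its corruption budget, is where I expect the technical heart of the argument to lie.
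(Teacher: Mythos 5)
Your proposal for the ``hard direction'' pursues a strategy that the paper explicitly constructs a counterexample against, and I believe it is doomed. You define a \emph{single} oblivious proxy $\mu^{\star}$ (the marginal of a uniformly random coordinate of $\phi(\bS)$) and then try to show $\TV(\bT,(\mu^{\star})^n)\le\eps$, chasing this through $\Ex_{\bS}[\TV(\nu,\mu^{\star})]=O(\sqrt{(\ln d)/m})$. That final concentration is false in general. Take $\mcD=\Unif(\{0,1\})$, $n=2$, strong contamination ($\rho(x,y)=2$ for $x\neq y$), so $d=2$. The adaptive adversary replaces the minority bit everywhere, producing $\phi(\bS)$ that is all-$0$s or all-$1$s each with probability $1/2$, for \emph{any} $m$. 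Here $\mu^{\star}=\Unif(\{0,1\})$ while $\nu\in\{\delta_0,\delta_1\}$, so $\Ex_{\bS}[\TV(\nu,\mu^{\star})]=1/2$ uniformly in $m$, and indeed $\TV(\bT,(\mu^{\star})^2)=1/2$ regardless of $m$. The problem is not the one you flag (lack of Lipschitzness for a McDiarmid-type bound) but something deeper: an adaptive adversary can drive the corrupted empirical to be bimodal, and no single product distribution can track that.

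The paper's fix is to abandon deterministic/single-target simulation in favor of a \emph{randomized} oblivious simulation: one exhibits a random $\bmcD_{\ob}$ supported on $\mcC_{\rho}(\mcD)$ such that $\dtv(\mcD_{\ad},\Ex[\bmcD_{\ob}^n])\le\eps$. To build the mixture, samples are bucketed by a random size-$k$ ``core'' $\bc$ drawn from the corrupted sample; conditioned on $\bc=c$, one targets the group-specific product $\mcD_{\goal}(c)^n$ rather than a global $(\mu^{\star})^n$. Two lemmas then carry the argument: (1) a correlation-rounding / pinning lemma, tightened to a mutual-information (rather than entropy) bound, shows there is some $k\le k_{\max}$ for which the group-conditioned law is close in TV to $\mcD_{\goal}(\bc)^n$ on average --- this is where the $\ln d$ enters, via a per-coordinate mutual-information bound for degree-$d$ corruptions; and (2) a rounding lemma shows each $\mcD_{\goal}(\bc)$ is, on average, close to a \emph{valid} element of $\mcC_{\rho}(\mcD)$. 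Your proposal is essentially the paper's ``first attempt,'' corresponding to the trivial single-group case $k=0$; the missing ingredient is randomizing the oblivious adversary by conditioning on a core.

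Your easy-direction sketch is fine and matches the paper's (the paper states it as a simulation with $m\ge 25n^2/\eps^2$ and handles the budget overflow by a rounding lemma rather than slack couplings, but these are minor variants). The birthday-paradox and hybrid steps in your hard-direction reduction are individually valid; the flaw is entirely in the choice to target a single product law.
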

The degree is constant for many natural cost functions, such as the cost function corresponding to subtractive contamination. In these cases, \Cref{thm:main-general} has no dependence on the domain size.

\subsection{Lower bounds}
\Cref{thm:main-general} requires a polynomial increase of the sample size of $A'$ relative to $A$. It is natural to wonder whether such an increase is necessary. The results of \cite{CHLLN23} show it is.
\begin{fact}[\cite{CHLLN23}]
    \label{fact:gaussian-mean-testing}
    For any $n \in \N$, the task of Gaussian mean testing with appropriate parameters (depending on $n$) can be solved using $n$ samples in the presence of the \emph{oblivious} additive adversary, but requires $\tilde{\Omega}(n^{4/3})$ in the presence of the \emph{adaptive} additive adversary.
\end{fact}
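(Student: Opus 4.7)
The plan is to exhibit a parameter regime $(d(n), \alpha(n), \eta(n))$ for Gaussian mean testing and then separately establish (i) an oblivious-adversary tester that succeeds with $n$ samples and (ii) an $\tilde\Omega(n^{4/3})$ lower bound against every adaptive-adversary tester in that same regime. The underlying problem is: given samples purportedly from $N(\mu, I_d)$, decide $H_0\colon \mu = 0$ versus $H_1\colon \|\mu\|_2 \geq \alpha$, under the additive cost function in which the oblivious adversary replaces the distribution by $(1-\eta)N(\mu, I_d) + \eta\mcD''$ for an arbitrary $\mcD''$, while the adaptive adversary instead appends $\lfloor \eta n\rfloor$ arbitrary points after seeing the clean sample. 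I would tune $(d, \alpha, \eta)$ so that the oblivious information-theoretic threshold $n \asymp \sqrt{d}/\alpha^2$ is tight at $n$ samples while the adaptive threshold slides up to $n^{4/3}$.

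For the \emph{oblivious upper bound} I would first apply standard robust preprocessing (coordinate-wise truncation or one round of spectral filtering) to discard the mass contributed by $\mcD''$ that lies far from the bulk, and then compare the norm of the trimmed empirical mean to a threshold near $\alpha/2$. Because the oblivious adversary must commit to $\mcD''$ before any samples are drawn, the filtered sample is a genuine i.i.d.\ draw from a fixed distribution whose mean is within $O(\eta\sqrt{\log(1/\eta)})$ of the truth in $\ell_2$; standard $\chi^2$-type concentration then closes the argument once $n \gtrsim \sqrt{d}/\alpha^2$ and $\alpha$ dominates the $\eta$-induced bias. Fixing $\alpha$ and $\eta$ so that equality holds at exactly $n$ samples defines the regime.

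For the \emph{adaptive lower bound} I would use a Le Cam/Ingster-style two-point argument with a prior: under $H_1$, draw the hidden mean $\mu \sim \Unif(\alpha \cdot S^{d-1})$, and under $H_0$ set $\mu = 0$. Given the clean sample $\bS \sim N(\mu, I_d)^n$, the adversary computes $\bar{\bS} \approx \mu + O(\sqrt{d/n})$ and appends $k = \lfloor \eta n\rfloor$ synthetic points drawn so that, after appending, the first and second empirical moments of the combined $n+k$ points agree with those of a clean draw from $N(0, I_d)^{n+k}$ up to lower-order fluctuations. I would then bound the TV distance between the corrupted-$H_0$ and corrupted-$H_1$ ensembles via a $\chi^2$ calculation on the joint distribution of the augmented sample, integrating over the prior on $\mu$; the dominant term reduces to the inner product between two random signal directions drawn from the prior, producing an $\Omega(1)$ TV floor whenever $n$ is below the $n^{4/3}$ threshold dictated by the chosen parameters.

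The hardest step will be making the adversary's coupling tight enough to yield the $4/3$ exponent rather than the naive exponent $1$ coming from a plain sample-size count. This forces the $k$ synthetic points to simultaneously neutralize the first-moment shift of order $\alpha$ \emph{and} the leading anisotropy of order $\alpha^2$ in the empirical second moment induced by having all $n$ clean points biased along $\mu$, while each synthetic point individually remains within the statistical envelope of an authentic $N(0, I_d)$ draw. Balancing the three constraints --- corruption budget $k = \eta n$, signal strength $\alpha$, and dimension $d$ --- against the $\chi^2$ budget $n^2 \alpha^4/d$ from the Ingster argument is what fixes the exponent at $4/3$ and constitutes the technical core of the proof.
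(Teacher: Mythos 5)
First, note that the paper does not actually prove \Cref{fact:gaussian-mean-testing}: it is imported wholesale from \cite{CHLLN23}, so there is no in-paper argument to compare your write-up against. Judged on its own terms, your sketch correctly reconstructs the architecture of the cited result --- an oblivious upper bound at the unconstrained testing rate $n \asymp \sqrt{d}/\alpha^2$, and an adaptive lower bound via a moment-matching adversary analyzed with an Ingster-style $\chi^2$ computation over a uniform prior on $\alpha \cdot S^{d-1}$ --- and you correctly locate the source of the $4/3$ exponent in the tension between cancelling the order-$\alpha$ first-moment shift and the order-$\alpha^2$ second-moment anisotropy while keeping each planted point individually Gaussian-plausible.

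Two genuine gaps remain. First, your oblivious tester as described cannot achieve the claimed rate: comparing the norm of a (trimmed) empirical mean to a threshold near $\alpha/2$ requires $\|\bar{X} - \mu\|_2 \ll \alpha$, i.e.\ $n \gtrsim d/\alpha^2$, which is quadratically worse than the $\sqrt{d}/\alpha^2$ needed for the separation to read ``$n$ samples suffice.'' The correct statistic is quadratic --- the centered squared norm $\|\bar{X}\|_2^2 - d/n$ or the sum of pairwise inner products --- whose null fluctuation of order $\sqrt{d}/n$ against a signal of $\alpha^2$ gives the right threshold; moreover, making such a degree-two statistic robust to an $\eta$-mass oblivious contamination is itself nontrivial (it is a genuine contribution of \cite{CHLLN23}, not a routine truncation step), since a single adversarial mixture component can dominate a quadratic statistic even while remaining within the coordinate-wise ``bulk.'' Second, the adaptive lower bound, which you yourself identify as the technical core, is entirely deferred: the proposal names the constraints to be balanced but does not construct the planted points, verify they fit the adversary's budget, or carry out the $\chi^2$ bound, so as written it is a plan rather than a proof. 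Neither issue indicates a wrong overall approach, but both must be filled in before this counts as a derivation of the fact rather than a summary of it.
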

In the setting of \Cref{thm:main-general}, one difficulty in interpreting \Cref{fact:gaussian-mean-testing} is that the cost function corresponding to the additive adversary has a large degree\footnote{Since the domain for the task is over the continuous domain of $\R^{d}$, technically this cost function has infinite degree. However, as we discussed in \Cref{remark:infinite}, it makes more sense to think of the degree as $\approx 2^d$ in this setting, which happens to be exponential in the $n$ of \Cref{fact:gaussian-mean-testing}.}, and so it's unclear if the increased sample size is to due a dependence on the degree or an innate required polynomial increase. 

For example, the subtractive adversary (formally defined in \Cref{sec:models}) has a degree of $2$ because, for each point in the sample, it chooses between keeping that point or removing it. For this adversary, is a polynomial increase in sample size necessary? Our first lower bound gives a straightforward proof this is the case, even for a simple task.
\begin{theorem}[A polynomial increase in sample size is necessary]
    \label{thm:lower-uniform}
    Let $\mcD$ be a distribution on $X = [m] \coloneqq\set{1, \ldots, m}$ that is promised to be uniform on some $X' \subseteq X$. Then,
    \begin{enumerate}
        \item There is an algorithm that estimates $|X'|$ using $n \coloneqq \tilde{O}(\sqrt{m})$ samples even with \emph{oblivious} subtractive contamination.
        \item Any algorithm that estimates $|X'|$ to the same accuracy with \emph{adaptive} subtractive contamination requires $\tilde{\Omega}(m)$ samples.
    \end{enumerate}
\end{theorem}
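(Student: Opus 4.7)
For part 1, my plan is to use the classical collision-counting estimator. Take $n = \Theta(\sqrt{m}\,\log m)$ samples and compute the collision count $C \coloneqq |\set{i<j : \bS_i = \bS_j}|$. Under oblivious subtractive contamination with constant rate $\eta$, the cost function forces the corrupted distribution $\mcD'$ to put at most $\eta$ mass on a ``removed'' symbol and the remainder on $X'$; conditioning on non-removed draws gives a distribution $\mcD''$ on $X'$ whose collision probability lies in $[1/|X'|,\, 1/((1-\eta)|X'|)]$ (by Cauchy--Schwarz applied to the removal probabilities). Hence $\Ex[C] = \Theta(n^2/|X'|)$, and since our choice of $n$ makes $\Ex[C] = \Omega(\log m)$, standard concentration shows $\binom{n}{2}/C$ recovers $|X'|$ to a constant factor.

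For part 2, I would invoke Yao's principle against a random instance: under $H_1$ set $X' = [m]$, and under $H_2$ draw $\bX'$ uniformly from the size-$(m/2)$ subsets of $[m]$. Any constant-factor-accurate estimator must distinguish $H_1$ from $H_2$ with constant advantage, so it suffices to exhibit an adaptive subtractive adversary that makes the two corrupted-sample distributions $o(1)$-close in total variation whenever $n \leq c\eta m$ for a small absolute constant $c$. My adversary uses the same rule in both worlds: given $\bS$, remove duplicate copies, and then remove further distinct elements until the output is a set of exactly $S \coloneqq (1-\eta)n$ points; this is feasible precisely when the distinct-value count $D(\bS)$ is at least $S$. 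Crucially, the adversary never consults $\bX'$, so it is a valid randomized function of $\bS$ alone.

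The heart of the argument is a two-stage uniformity identity. Under $H_1$, conditional on $D(\bS) \geq S$, the set of distinct values of $\bS$ is a uniformly random size-$D$ subset of $[m]$ by exchangeability, and taking a uniform size-$S$ sub-subset yields a uniform size-$S$ subset of $[m]$. Under $H_2$, the analogous reasoning conditional on $\bX'$ produces a uniform size-$S$ subset of $\bX'$; averaging over $\bX'$, any fixed size-$S$ set $T \subseteq [m]$ satisfies
\begin{equation*}
    \Pr[\mathrm{output} = T \mid H_2,\, \mathrm{feasible}] \,=\, \frac{\binom{m-S}{m/2-S}}{\binom{m}{m/2}\binom{m/2}{S}} \,=\, \frac{1}{\binom{m}{S}},
\end{equation*}
using the identity $\binom{m}{m/2}\binom{m/2}{S} = \binom{m}{S}\binom{m-S}{m/2-S}$. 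Hence the two output distributions agree exactly on the feasibility event.

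To close, I would bound $\Pr[\mathrm{infeasible}]$ via McDiarmid's inequality, which applies because $D(\bS)$ changes by at most one when a single coordinate of $\bS$ is altered. Under $H_1$, $\Ex[n - D_1] \approx n^2/(2m)$; under $H_2$, $\Ex[n - D_2] \approx n^2/m$ conditional on $\bX'$; both deficits are $\ll \eta n$ for $n \leq c\eta m$, so $\Pr[D < S] = o(1)$ in each world. This gives a total-variation gap of $o(1)$ between the two corrupted-sample distributions and therefore the claimed lower bound $n = \Omega(\eta m) = \tilde\Omega(m)$. I expect the main technical step to be verifying the mixture-and-symmetry identity cleanly; once in hand, the remainder is routine concentration combined with a standard Le Cam-style indistinguishability argument.
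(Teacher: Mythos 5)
Your proposal is correct, and the central mechanism is exactly the one the paper uses: an adaptive subtractive adversary that deletes all duplicates (plus extra elements down to budget), so the surviving sample is a uniformly random set of $S$ distinct symbols from $X$, and therefore \emph{its distribution is independent of $X'$}. The differences are in packaging. The paper never invokes Le Cam explicitly: it defines $p \coloneqq \Ex_{\bS \sim \binom{X}{n}}[f_{\ad}(\bS)]$ (the tester's acceptance probability on a uniform distinct-set input), and then runs a short case analysis on $p \gtrless 1/2$ to exhibit a bad $X'$ drawn either as all of $X$ or as a uniform size-$ck$ subset, using only Markov's inequality (\Cref{prop:remove-duplicates}) to show the adversary has enough collisions to remove with probability $\geq 0.99$. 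You instead set up a two-point prior $H_1$ vs.\ $H_2$, show the conditional output law given feasibility is exactly $\Unif\binom{[m]}{S}$ in both worlds via the identity $\binom{m}{m/2}\binom{m/2}{S} = \binom{m}{S}\binom{m-S}{m/2-S}$, and close with McDiarmid on the distinct-count $D(\bS)$. Both arguments land the same $\tilde\Omega(m)$ bound; yours makes the information-theoretic indistinguishability explicit (and would generalize naturally to any test statistic), while the paper's is slightly more elementary (Markov instead of McDiarmid, no coupling/mixture bookkeeping). One caveat worth flagging in your write-up: when you average over $\bX'$ under $H_2$, you implicitly use that the feasibility event $\{D(\bS) \geq S\}$ is independent of $\bX'$; this holds by symmetry since all candidate $\bX'$ have the same cardinality $m/2$, but it deserves a sentence. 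For part 1, your counting estimator $\binom{n}{2}/C$ gives a genuine multiplicative approximation, which is slightly stronger than the paper's per-$k$ collision test $f_{\ob}$; both fit under the theorem's informal statement.
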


\Cref{thm:lower-uniform} implies that in the statement of \Cref{thm:main-domain}, we must take $m$ polynomial larger than $n$. Next, we show that this $m$ must also depend polylogarithmically on a degree-like characteristic of the cost function.

\begin{definition}[Budget-bounded degree]
    \label{def:budget-degree}
    For any cost function $\rho:X \times X \to \R_{\geq 0} \cup \set{\infty}$ and $b \in \R_{\geq 0}$, the $b$-bounded degree of $\rho$ is defined as
    \begin{equation*}
        \deg_b(\rho) \coloneqq \sup_{x \in X}\set*{\text{The number of distinct }y \in X\text{ for which }\rho(x,y) \leq b}.
    \end{equation*}
\end{definition}
This lower bound will make one assumption on $\rho$: that $\rho(x,y) \geq 1 + \delta$ whenever $x \neq y$ for a small constant $\delta$. This corresponds to the adversary having a budget on how many points they can change and is satisfied by all of the well-studied models discussed in \Cref{sec:models}. 
\begin{theorem}[Dependence on $\ln \deg_b(\rho)$ is necessary]
    \label{thm:lower}
    For any constants $b, \delta > 0$, large enough $n \in \N$, and cost function $\rho$ for which $\rho(x,y) \geq 1 + \delta$ whenever $x \neq y$, there is an algorithm $A:X^n \to \zo$ for which the following holds. If $A$ in the presence of the $\rho$-oblivious adversary is $(\eps = 0.9)$-equivalent to $A' \coloneqq A \circ \submn$ in the presence of the $\rho$-adaptive adversary, then
    \begin{equation*}
        m \geq \tilde{\Omega}_{b,\delta}\paren*{n \cdot \ln \deg_b(\rho)}.
    \end{equation*}
\end{theorem}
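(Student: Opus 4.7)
\textbf{Proof plan for Theorem~\ref{thm:lower}.}

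The plan is to exhibit, for any cost function $\rho$ with $\deg_b(\rho) = d$ satisfying the stated hypotheses, a distribution $\mcD$ and a test $A : X^n \to \{0,1\}$ whose oblivious success probability and adaptive-plus-subsample success probability differ by more than $0.9$ unless $m = \tilde\Omega(n \ln d)$. Invoking the definition of $b$-bounded degree, there exists $x_0 \in X$ with $d$ distinct witnesses $y_1,\dots,y_d$ satisfying $\rho(x_0,y_i) \leq b$. I would take $\mcD = \delta_{x_0}$, so that both adversaries may only replace samples of $x_0$ with some $y_i$, and by the hypothesis $\rho(x,y) \geq 1 + \delta$ for $x \neq y$, each replacement costs at least $1+\delta$.

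The algorithm $A$ is a \emph{profile test}: it accepts $S$ iff the empirical multiplicity histogram of the non-$x_0$ entries of $S$ lies in a Chernoff-typical set for the ``spread'' oblivious strategy $\mcD^\star \coloneqq (1 - 1/b)\,\delta_{x_0} + (1/b)\,\Unif(\{y_1,\dots,y_d\})$. Since $\mcD^\star \in \mcC_\rho(\mcD)$ by the cost bound on the $y_i$, the oblivious adversary can play $\mcD' = \mcD^\star$, and standard concentration immediately gives $\obmax_\rho(A,\mcD) \geq 0.95$ by construction of the Chernoff-typical set.

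For the adaptive upper bound, I would argue that any strategy reduces (by symmetry) to fixing a count vector $(r_1,\dots,r_d)$ on the $m$-sample with $\sum_i r_i \leq m/(1+\delta)$, and that the subsample's label counts are then multivariate hypergeometric with these parameters. To match $\mcD^\star$'s profile --- under which a constant fraction of the $y_i$ appear, each with a specific multiplicity pattern --- the adversary needs many labels whose hypergeometric counts simultaneously concentrate on the right target values. For this to hold with probability more than $0.05$ over the subsampling, each such label $y_i$ needs $r_i \cdot n/m \gtrsim \ln d$, so that a union bound across the $d$ potential labels does not drop any targeted label from the subsample. Multiplying across the $\Theta(n/b)$ labels targeted by $\mcD^\star$ and comparing to the budget $\sum r_i \leq m/(1+\delta)$ then forces $m \geq \tilde\Omega_{b,\delta}(n \ln d)$.

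The main obstacle I expect is converting the heuristic ``matching the profile on each label costs $\ln d$ copies in the $m$-sample'' into a rigorous total-variation bound between the best achievable multivariate-hypergeometric profile distribution and the multinomial profile distribution induced by $(\mcD^\star)^n$. The cleanest route seems to be a hybrid / label-by-label argument, interpolating between the hypergeometric and binomial marginals and tracking the approximation error per label, so that the $\ln d$ arises precisely from the union bound that is unavoidable when the adversary must simultaneously hit the target multiplicity for a large collection of labels drawn from a size-$d$ universe. Pinning down the exact Chernoff-typical set so that both the $\geq 0.95$ and $\leq 0.05$ bounds hold uniformly for small $m$ will require some care in choosing the profile's tolerance window.
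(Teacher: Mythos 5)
Your plan breaks at the very first step: choosing $\mcD = \delta_{x_0}$ (a point mass) eliminates precisely the resource that separates adaptive from oblivious adversaries. When the base distribution is a point mass, the clean sample $\bS = (x_0, \ldots, x_0)$ is deterministic, so the adaptive adversary has nothing to ``adapt'' to. Indeed, any adaptive corruption (deterministically, WLOG) produces some fixed $S' \in X^m$ with $\frac{1}{m}\sum_i \rho(x_0, S'_i) \leq 1$, hence $\Unif(S') \in \mcC_\rho(\mcD)$, and the distribution of $\submn(S')$ differs from the oblivious distribution $\Unif(S')^n$ only by the usual with-replacement vs.\ without-replacement error, which is $O(n^2/m)$ in total variation. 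Conversely, the adaptive adversary can approximately simulate any oblivious $\mcD'$ by drawing $\bS'_i$ i.i.d.\ from a slightly contracted version of $\mcD'$ (the easy direction, \Cref{claim:easy-direction}). So with $\mcD = \delta_{x_0}$, the two adversaries are already $O(n^2/m + n/\sqrt{m})$-equivalent for \emph{any} test $A$, with no $\ln d$ dependence whatsoever. Your profile test in particular cannot separate them: the adaptive adversary that plants $r = \lfloor m/b \rfloor$ distinct $y_i$'s (each once) in $\bS'$ produces, after subsampling, a collision-free set of non-$x_0$ labels of size concentrated near $n/b$, which matches the typical profile of $(\mcD^\star)^n$ whenever $d \gg n^2$. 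The heuristic ``each targeted label $y_i$ needs $r_i \cdot n/m \gtrsim \ln d$ to survive a union bound over $d$ labels'' is where this goes wrong: a profile/histogram test is label-invariant, so the adaptive adversary never needs any \emph{specific} $y_i$ to survive the subsample --- it only needs the multiplicity pattern to look right, and the law of large numbers handles that for free.

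The paper's proof is built around the opposite design choice: make the base distribution $\mcD$ \emph{high-entropy}, so that the adaptive adversary genuinely benefits from seeing the clean sample. Concretely, \Cref{thm:lb-deg-formal} takes a set $X' \ni x^\star$ of size $2^d \leq \deg_b(\rho)$ in which every element is cheaply reachable from $x^\star$, and sets $\mcD = \tfrac{c}{2}\delta_{x^\star} + (1 - \tfrac{c}{2})\Unif(X')$. The clean $m$-sample then contains $\Theta(cm)$ cheaply-corruptible copies of $x^\star$ alongside many genuinely random uniform draws. Encoding $X'$ as $\{\pm 1\}^d$, the test $f$ accepts iff \emph{every} point has a strongly correlated partner. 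The adaptive adversary, after seeing which random elements of $X'$ actually appeared, replaces the $x^\star$'s with correlated partners for them (BLMT22 Lemma 7.2), pushing $\adamax$ to $1 - O(1/n)$; the oblivious adversary must commit before seeing the sample, and a fresh uniform element of $X'$ is anti-concentrated against any pre-committed set (BLMT22 Lemma 7.1), so $\obmax \leq O(1/n)$. That is the structural reason $\ln\deg_b(\rho)$ enters: the adaptive adversary is exploiting $\Theta(\ln|X'|) = \Theta(\ln \deg_b(\rho))$ bits of entropy per sample, and for $m = o(n\ln d / (\ln n)^2)$ there is no room to add enough correlated partners. Any correct lower bound of this form will need the base distribution to carry that entropy; a point mass cannot.
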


Comparing \Cref{thm:main-general,thm:lower}, for ``reasonable" cost functions in which $\deg(\rho) \approx \deg_{1000}(\rho)$ and $\rho(x,y) \geq 1.001$ for all $x \neq y$, a domain-size independent result is possible precisely when the degree does not grow with $|X|$.

\subsection{Relation to recent work}
\label{subsec:related}

Recent work of Blanc, Lange, Malik, and Tan initiated a formal study of the relationship between adaptive adversaries and their oblivious counterparts \cite{BLMT22}. They conjectured the equivalence of adaptive and oblivious statistical adversaries but only proved it in two special cases.
\begin{enumerate}
    \item They showed that \emph{additive} oblivious and \emph{additive} adaptive adversaries are equivalent. Our result, which applies to all statistical adversaries, requires an entirely different approach. This is because, in some sense, the adaptive additive adversary is \emph{less adaptive} than other adaptive adversaries. We elaborate on this point in \Cref{subsec:additive} and explain why \cite{BLMT22}'s approach does not generalize to all adversaries.
    \item They also showed that if a \emph{statistical query} (SQ) algorithm is robust to an oblivious adversary, it can be upgraded to be robust to the corresponding adaptive adversary. The restriction to SQ algorithms greatly facilitates \cite{BLMT22}'s analysis because we have a much better understanding of SQ algorithms than general algorithms. For example, the quality of the best SQ algorithm for a given task is captured by simple combinatorial measures \cite{BFJKMR94,F17}. In \Cref{subsec:SQ}, we further describe \cite{BLMT22}'s SQ result and give advantages of our result even for algorithms that can be cast in the SQ framework.
\end{enumerate}

Other recent work of Canonne, Hopkins, Li, Liu, and Narayanan tackled the equivalence of statistical adversaries from the other direction \cite{CHLLN23}. While we aim to show that distinct statistical adversaries are equivalent, they showed a separation: For the well-studied problem of Gaussian mean testing, an adaptive adversary requires polynomial more samples than the corresponding oblivious adversary (see \Cref{fact:gaussian-mean-testing}).
\section{Instantiating common adversaries in our framework}
\label{sec:models}
Here, we show how to express many common statistical adversaries within our framework.  For completeness, we include the ``strong contamination/nasty noise'' adversary of Example~\ref{ex:adv}.

\pparagraph{Strong contamination/nasty noise:} As mentioned in Example~\ref{ex:adv},  both the ``strong contamination/nasty noise'' adversary, that can arbitrarily replace an $\eta$ fraction of an i.i.d. sample, and the ``general, non-adaptive, contamination'' adversary, that can perturb the underlying distribution from which the sample is drawn by at most $\eta$ in total variation distance, correspond to adaptive and oblivious adversaries with the following cost function:
\begin{equation*}
    \rho_{\mathrm{strong}}(x,y) \coloneqq \begin{cases}
        0&\text{if }x = y\\
        \tfrac{1}{\eta}&\text{otherwise.}
    \end{cases}
\end{equation*}

\pparagraph{Agnostic learning \cite{Hau92,KSS94}:}
Agnostic noise is a well-studied adversary \cite{KKMS08,KK09,F10,DFTWW14,DKPZ21} specific to supervised learning problems, where each point in the sample is a pair $(x,y)$ of the input and its label. This adversary is allowed to change $\eta$ fraction of the labels but must keep the inputs unchanged. It corresponds to the cost function,
\begin{equation*}
    \rho_{\mathrm{agnostic}}((x_1,y_1),(x_2,y_2)) \coloneqq \begin{cases}
    0&\text{if }x_1 = x_2 \text{ and }y_1 = y_2\\
    \tfrac{1}{\eta}&\text{if }x_1 = x_2\text{ and }y_1 \neq y_2\\
        \infty&\text{if }x_1 \neq x_2.
    \end{cases}
\end{equation*}
Agnostic learning typically refers to the $\rho$-oblivious adversary. It can be equivalently defined as the learner receiving an i.i.d. sample of points of the form $(\bx, g(\bx))$ where $g$ is close to the original target $f$ in the sense that
\begin{equation*}
    \Prx_{\bx}[f(\bx) \neq g(\bx)] \leq \eta.
\end{equation*}
In the adaptive variant, first, a sample is drawn that is labeled by the true target function. Then, an adversary may corrupt $\eta$-fraction of the labels arbitrarily. This variant is sometimes referred to as \emph{nasty classification noise} \cite{BEK02}. 

Note that the cost function $\rho_{\mathrm{agnostic}}$ only has degree $2$ in binary classification settings. \Cref{thm:main-general} therefore shows the equivalence between nasty classification noise and agnostic noise with no dependence on the domain size.

\pparagraph{Subtractive contamination:} In the adaptive variant of subtractive contamination, the adversary is allowed to remove $\floor{\eta n}$ points from a size-$n$ sample. The algorithm receives the remaining $n - \floor{\eta n}$ points.

In the oblivious variant \cite{DK23book}, the algorithm receives i.i.d. samples from the distribution $\mcD$ conditioned on some event $E$ that occurs with probability $1 - \eta$. This can be thought of as the adversary removing $\eta$-fraction of the distribution corresponding to when the event $E$ does not occur.

To fit subtractive contamination into our framework, we will augment the domain with a special element $\varnothing$, to indicate the adversary has removed this point. For the augmented domain $X' \coloneqq X \cup \set{\varnothing}$, it uses the cost function,
\begin{equation*}
    \rho_{\mathrm{sub}}(x,y) \coloneqq \begin{cases}
        0 &\text{if }x = y\\
        \frac{1}{\eta} &\text{if }x \neq y\text{ and }y = \varnothing \\
        \infty& \text{otherwise}.
    \end{cases}
\end{equation*}
Note that once again, this cost function has degree only $2$, so by \Cref{thm:main-general}, the $\rho$-adaptive and $\rho$-oblivious adversaries are equivalent with no dependence on the domain size. In \Cref{sec:add-sub}, we give an easy reduction from the standard notions of subtractive noise (without the $\varnothing$ element added to the domain) to the adversaries defined by $\rho_{\mathrm{sub}}$. This reduction, combined with \Cref{thm:main-general}, shows that the standard oblivious and adaptive subtractive adversaries are equivalent.

\pparagraph{Additive contamination (Huber's model \cite{Hub64}):}
In Huber's original model \cite{Hub64}, rather than directly receive i.i.d. samples from the target distribution $\mcD$, the algorithm receives i.i.d. samples from $\mcD'$, the mixture distribution
\begin{equation*}
    \mcD' \coloneqq (1 - \eta)\mcD + \eta \mcE
\end{equation*}
where the adversary chooses the outlier distribution $\mcE$. In the adaptive variant of this model, first a clean sample of $\ceil{(1-\eta)n}$ points are drawn i.i.d. from $\mcD$. Then, the adversary may add $\floor{\eta n}$ points arbitrarily. These $n$ points are then randomly permuted so that the algorithm cannot trivially identify which points were added.

Similarly to subtractive contamination, we will use the augmented domain $X' \coloneqq X \cup \set{\varnothing}$. For additive noise, we use the cost function
\begin{equation*}
    \rho_{\mathrm{add}}(x,y) \coloneqq \begin{cases}
        0 &\text{if }x = y\\
        \frac{1}{\eta} &\text{if }x \neq y\text{ and }x = \varnothing \\
        \infty& \text{otherwise}.
    \end{cases}
\end{equation*}
In \Cref{sec:add-sub}, we give an easy reduction showing how \Cref{thm:main-general} gives the equivalence between Huber's contamination model and its adaptive variant. Note that this particular equivalence was already proven by \cite{BLMT22}, but for completeness, we show their result can be recovered using our framework.

\subsection{Partially-adaptive adversaries}
\label{subsec:partial-adaptive}

As alluded to in \Cref{remark:partial-adaptive}, some adversaries lie between the fully oblivious and fully adaptive adversaries. Our results show that such intermediate adversaries are equivalent to their fully oblivious and fully adaptive counterparts. The strategy for proving this equivalence is by showing the intermediate adversary is at least as strong as the oblivious adversary, and that it is no stronger than the adaptive adversary. Since \Cref{thm:main-domain} implies the adaptive adversary is no more powerful than the oblivious adversary, we can conclude that all three adversaries are equivalent. We formalize this approach for the two adversaries described here in \Cref{appendix:partial-adaptive}, showing both are equivalent to additive contamination.

\pparagraph{Malicious noise:} This model was first defined by \cite{Val85}. In it, the $n$ samples are generated sequentially. For each point, independently with probability $1 - \eta$, that point is sampled from $\mcD$. Otherwise, the adversary chooses an arbitrary corrupted point with full knowledge of previous points generated but no knowledge of future points. Intuitively, this adversary is partially adaptive because when the adversary chooses how to corrupt a point, it has partial knowledge of the sample corresponding to the points generated previously.

\pparagraph{The non-independent additive adversary:} This model was recently studied in \cite{CHLLN23}. In it, the adversary generates $\floor{\eta n}$ arbitrary points. The sample is then formed by combining $\ceil{(1-\eta)n}$ points drawn i.i.d. from $\mcD$ with the adversary's chosen points. Intuitively, this adversary is partially adaptive because the $\eta n$ points it generates need not be i.i.d. from some distribution as they would for fully oblivious adversaries, but the adversary still does not know the sample when choosing corruptions as in a fully adaptive adversary.

\section{Technical overview}

To prove \Cref{thm:main-general}, we begin with the observation that we can combine the algorithm $A$  and test $T$ into a single function $f \coloneqq T\circ A$. Therefore, it suffices to prove the following.

\begin{restatable}[\Cref{thm:main-general} restated]{theorem}{restated}
    \label{thm:main-general-reformulated}
    For any $n,d \in \N$ where $d \geq 2$, domain $X$, and $\eps > 0$, let $m = O\paren*{\frac{n^4 (\ln d)^2}{\eps^4}}$. Then, for any $f:X^n \to \zo$, cost function $\rho$ with degree $d$, and distribution $\mcD$ supported on $X$,
    \begin{equation}
        \label{eq:main-statement}
        \abs*{\obmax_{\rho}(f, \mcD) - \adamax_{\rho}(f \circ \sub_{m \to n}, \mcD)} \leq \eps.
    \end{equation}
\end{restatable}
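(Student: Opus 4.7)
The bound splits into two one-sided inequalities. The direction $\obmax_\rho(f, \mcD) \leq \adamax_\rho(f \circ \submn, \mcD) + O(\eps)$ is straightforward: given a near-optimal oblivious $\mcD' \in \mcC_\rho(\mcD)$ with witnessing coupling $\pi$ of expected $\rho$-cost at most $1$, the adaptive adversary, on input $\bS \sim \mcD^m$, independently draws $\bS^*_i \sim \pi(\cdot \mid \bS_i)$; then $\bS^* \sim (\mcD')^m$ marginally, the per-coordinate cost concentrates at or below $1$ (pushing $\mcD'$ slightly into the interior of $\mcC_\rho(\mcD)$ if the coupling is borderline, at the price of $O(\eps)$ in $F$), and by exchangeability of $\bS^*$ the $\submn$-subsample is \emph{exactly} distributed as $(\mcD')^n$.

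For the reverse direction, fix any adaptive strategy $\sigma : X^m \to X^m$ with $\sigma(\bS) \in \mcC_\rho(\bS)$. Because the subsampling filter is invariant under permutations of its input, replacing $\sigma$ by its permutation-averaged version does not decrease its value; so WLOG $\sigma$ is permutation-equivariant, which makes the joint law of $(\bS, \sigma(\bS))$ exchangeable on $(X \times X)^m$. By the standard birthday / collision bound, a $\submn$-subsample of $\sigma(\bS)$ is within TV $O(n^2/m)$ of an i.i.d.\ draw from the empirical $\hat{\mcD}'(\bS)$ of $\sigma(\bS)$. Writing $F(\mcD'') := \Ex_{\bY \sim (\mcD'')^n}[f(\bY)]$,
\[
\adamax_\rho(f \circ \submn, \mcD) \leq \Ex_{\bS \sim \mcD^m}\bracket*{F(\hat{\mcD}'(\bS))} + O(n^2/m).
\]
Setting $\mcD^* := \Ex_\bS[\hat{\mcD}'(\bS)]$, equivalently the marginal law of $\sigma(\bS)_1$ under the symmetrized joint, the pair $(\bS_1, \sigma(\bS)_1)$ is a coupling of $(\mcD, \mcD^*)$ of expected $\rho$-cost at most $1$; hence $\mcD^* \in \mcC_\rho(\mcD)$ and $F(\mcD^*) \leq \obmax_\rho(f, \mcD)$. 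It remains to bound the \emph{adaptivity gain} $\Ex_\bS[F(\hat{\mcD}'(\bS))] - F(\mcD^*)$.

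This last step is the technical heart of the proof and the main obstacle. $F$ is a degree-$n$ multilinear polynomial in its distributional argument and is $n$-Lipschitz in total variation, while for a permutation-equivariant $\sigma$ a one-coordinate change of $\bS_j$ perturbs $\hat{\mcD}'$ in $\ell_1$ by only $O(1/m)$ (only the $j$-th corruption changes outright; every other is shifted only through the $O(1/m)$ change in the type of $\bS$). The critical observation that avoids any $|X|$ dependence is that, per source atom $x \in \mathrm{supp}(\mcD)$, corruptions live in the $(d-1)$-dimensional simplex over the at most $d$ $\rho$-neighbors of $x$, so the per-atom corruption kernel has intrinsic dimension $d$ rather than $|X|$. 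A chaining / $\eps$-net argument in this reduced space, combined with McDiarmid-style concentration and a Pinsker-style conversion to a TV-level guarantee (which accounts for the $\eps^{-4}$), should bound the adaptivity gain by $O(n^2 \sqrt{\ln d / m})$, at most $\eps$ for $m = O(n^4 (\ln d)^2 / \eps^4)$. Avoiding the $\sqrt{|X|/m}$ that a naive TV bound would incur --- by using the bounded degree to localize concentration to each source atom, and obtaining uniform control across atoms that scales with $\ln d$ rather than $\ln |X|$ --- is the key step enabling the domain-size-independent statement of \Cref{thm:main-general-reformulated}.
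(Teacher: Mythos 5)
There is a genuine gap in the hard direction. Your plan reduces everything to the single averaged corruption $\mcD^* := \Ex_{\bS}[\hat{\mcD}'(\bS)]$ and then tries to control the ``adaptivity gain'' $\Ex_{\bS}[F(\hat{\mcD}'(\bS))] - F(\mcD^*)$ by McDiarmid-style concentration, hinging on the claim that for a permutation-equivariant $\sigma$ a one-coordinate change of $\bS$ moves $\hat{\mcD}'(\bS)$ by only $O(1/m)$ in $\ell_1$. That Lipschitz claim is false, and the gain is not small. Permutation-equivariance means $\sigma$ depends only on the empirical type of $\bS$, but the adaptive adversary is allowed to re-corrupt \emph{every} coordinate as a function of that type, so a one-coordinate change can flip the entire corruption. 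Concretely take $\mcD = \Unif(\zo)$, $\eta = 1/2$, $n=2$, $f(x,y) = \Ind[x=y]$, and the equivariant adversary that flips whichever of $\set{0,1}$ is in the minority: near the balance point a single changed bit moves $\hat{\mcD}'$ from $\delta_0$ to $\delta_1$ (TV distance $1$, not $O(1/m)$), $\Ex_{\bS}[F(\hat{\mcD}'(\bS))] = 1$, while $\mcD^* = \Unif(\zo)$ gives $F(\mcD^*) = 1/2$; so the adaptivity gain is $\Omega(1)$ for every $m$. The paper explicitly raises this same example to show that no \emph{single} oblivious corruption can be close to $\mcD_{\ad}$, and this is precisely why your quantification over a single $\mcD^*$ cannot close the bound.

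The fix in the paper is to pass to a \emph{randomized} oblivious adversary — a mixture over $\mcC_\rho(\mcD)$ — rather than a single $\mcD^*$. Concretely, one conditions on a ``core'' $\bc \in X^k$ drawn from a disjoint subsample of $\sigma(\bS)$, and shows (i) the conditional law of a size-$n$ subsample given $\bc$ is close to a product distribution on average; this is a correlation-rounding/pinning lemma strengthened so the error scales with $I(\bS'_{\bi};\bS'_{\bB})$ rather than the entropy $H(\bS'_{\bi})$, and for low-degree corruptions this mutual information is $O(\ln d)$, which is where $\ln d$ rather than $\ln|X|$ enters; and (ii) after conditioning on $\bc$, the conditional product distribution can be rounded to a legal oblivious corruption using the fact that there are effectively only $d^k$ residual groups. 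Both steps are about controlling conditional mutual information, not one-coordinate Lipschitzness of $\sigma$. Your $\eps$-net-over-simplices idea addresses the number of \emph{possible} corruptions, but not the sensitivity of the adaptive corruption to the clean sample, which is the real obstacle; this is the same reason the paper notes that the additive-adversary equivalence of \cite{BLMT22} does not generalize.
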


\Cref{thm:main-general-reformulated} can be understood as a statement about the \emph{indistinguishability} of the following two families of distributions, both over datasets in $X^n$.
\begin{enumerate}
    \item The set of input distributions over $n$ points the oblivious adversary can create,
    \begin{equation*}
        \mathscr{D}_{\ob}^{n, \rho, \mcD} \coloneqq \set{(\mcD')^n \mid \mcD' \in \mcC_{\rho}(\mcD)}.
    \end{equation*}
    \item For the adaptive adversary, we first define the set of input distributions over $m$ points before subsampling: We say $\mcD_{\ad}$ is a valid adaptive corruption, denoted $\mcD_{\ad} \in \mathscr{D}_{\ad}^{m, \rho, \mcD}$ if it is possible to couple $\bS' \sim \mcD_{\ad}$ and a clean sample $\bS \sim \mcD^m$ so that $\bS' \in \mcC_{\rho}(\bS)$ with probability $1$. Then, the distribution on $n$ points is created via subsampling, 
    \begin{equation*}
        \submn\paren*{\mathscr{D}_{\ad}^{m, \rho, \mcD}} \coloneqq \set*{\text{The distribution of $\submn(\bS')$}\mid \bS' \sim \mcD_{\ad} \text{ for any }\mcD_{\ad} \in \mathscr{D}_{\ad}^{m, \rho, \mcD}}.
    \end{equation*}

\end{enumerate}


With these definitions, \Cref{thm:main-general-reformulated} can be recast as the following two statements:
\begin{enumerate}
    \item \textbf{The oblivious adversary is no harder than the adaptive adversary:} For any distinguisher $f:X^n \to \zo$ and oblivious corruption $\mcD_{\ob} \in  \mathscr{D}_{\ob}^{n, \rho, \mcD}$, there is a subsampled adaptive corruption $\mcD_{\ad}\in \submn\paren*{\mathscr{D}_{\ad}^{m, \rho, \mcD}}$ satisfying
    \begin{equation*}
        \abs*{\Ex_{\bS \sim  \mcD_{\ob}}[f(\bS)] -\Ex_{\bS \sim  \mcD_{\ad}}[f(\bS)]}\leq \eps.
    \end{equation*}
    This is the easy half of \Cref{thm:main-general-reformulated} and is already known for some specific adversary models \cite{DKKLMS19,ZJS19}. We defer discussion of its proof to \Cref{sec:easy-direction}. 
    \item \textbf{The adaptive adversary is no harder than the oblivious adversary:} For any distinguisher $f:X^n \to \zo$ and subsampled adaptive corruption $\mcD_{\ad}\in \submn\paren*{\mathscr{D}_{\ad}^{m, \rho, \mcD}}$, there is an oblivious corruption $\mcD_{\ob} \in  \mathscr{D}_{\ob}^{n, \rho, \mcD}$ satisfying
    \begin{equation}
        \label{eq:indistguishable}
       \abs*{\Ex_{\bS \sim  \mcD_{\ad}}[f(\bS)] -  \Ex_{\bS \sim  \mcD_{\ob}}[f(\bS)]} \leq \eps.
    \end{equation}
    The remainder of this overview is devoted to our proof of this harder half of \Cref{thm:main-general-reformulated}
\end{enumerate}

\subsection{A first attempt and why it fails}
A natural approach towards proving this harder half of \Cref{thm:main-general-reformulated} is to show that every adaptive adversary can be simulated by an oblivious adversary. This corresponds to switching the order of quantifiers in the desired statement: The goal of this approach is to show that for any adaptive corruption $\mcD_{\ad}\in \submn\paren*{\mathscr{D}_{\ad}^{m, \rho, \mcD}}$, there is a single choice of oblivious corruption $\mcD_{\ob} \in  \mathscr{D}_{\ob}^{n, \rho, \mcD}$ satisfying
\begin{equation*}
    \dtv(\mcD_{\ad}, \mcD_{\ob}) \leq \eps.
\end{equation*}
 Indeed, as we discuss in \Cref{sec:easy-direction}, such an approach works for the easier half of \Cref{thm:main-general-reformulated}. Here, we will explain why this approach fails for the harder half and latter use this counterexample to motivate our ultimately successful approach. 

Our construction of this counterexample uses the adaptive and oblivious adversaries described in \Cref{ex:adv} with a budget $\eta = 1/2$. Recall this means that,
\begin{enumerate}
    \item For any $S \in X^m$, the adaptive adversary can change an arbitrary $m/2$ points within $S$.
    \item For any distribution $\mcD$, the oblivious adversary can choose any $\mcD'$ with a total variation distance of at most $1/2$ from $\mcD$.
\end{enumerate}
Furthermore, we use perhaps the simplest possible base distribution, $\mcD = \Unif(\set{0,1})$ and our counterexample works for \emph{any} $m \geq n \coloneqq 2$.

After receiving $\bS \sim \mcD^m$, the adaptive adversary can choose a corruption so that $\bS'$ either contains only zeros or only ones, with both cases equally likely. They achieve this by flipping all the $0$s or all the $1$s in $\bS$, whichever is less frequent (breaking ties uniformly). The result of this approach is that there is some $\mcD_{\ad} \in \mathscr{D}_{\ad}$ for which
\begin{equation*}
    \mcD_{\ad} = \Unif([0,0],[1,1]).
\end{equation*}
The above distribution is far from any product distribution and, as a result, far from any possible $\mcD_{\ob}$. Therefore, this naive simulation approach fails.

\subsection{Our approach: A randomized simulation}
A key observation about this counter example: Even though $\mcD_{\ad}$ is far from any single $\mcD_{\ob}$, it is exactly equal to a mixture of oblivious adversaries. This is because the oblivious adversary can create the point-mass distribution that always outputs $[0,0]$ and that which always outputs $[1,1]$. Our main lemma is that such a randomized simulation is always possible.
\begin{restatable}[With subsampling, the adaptive adversary can be simulated by a randomized oblivious adversary]{lemma}{simulation}
    \label{lem:randomized-simulation}
    For any base distribution $\mcD$, sample size $n$, error parameter $\eps$, and cost function $\rho$ with degree $d$, set $m = O\paren*{\frac{n^4 (\ln d)^2}{\eps^4}}$. Then, for any subsampled adaptive corruption $\mcD_{\ad}\in \submn\paren*{\mathscr{D}_{\ad}^{m, \rho, \mcD}}$, there is a \emph{randomized} oblivious corruption $\bmcD_{\ob}$ supported on $\mathscr{D}_{\ob}^{n, \rho, \mcD}$ such that its mixture satisfies,
    \begin{equation*}
        \dtv\paren*{\mcD_{\ad}, \Ex[\bmcD_{\ob}]} \leq \eps.
    \end{equation*}
\end{restatable}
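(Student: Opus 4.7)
The candidate randomized oblivious corruption is obtained as follows. In its head, the oblivious adversary draws $\bS \sim \mcD^m$, applies a (WLOG symmetrized) adaptive strategy to obtain $\bS' \in \mcC_\rho(\bS)$, and outputs a distribution $\bmcD_{\ob}$ derived from the multiset of $\bS'$. The natural choice is the empirical distribution $\hat{\mcD}_{\bS'}$, possibly with a small correction so that it is guaranteed to lie in $\mcC_\rho(\mcD)$. The mixture $\Ex[(\bmcD_{\ob})^n]$ is compared to the distribution of $\submn(\bS')$, and the proof reduces to two separate estimates: (i) an ``exchangeability'' estimate saying that $n$ i.i.d.\ draws from $\hat{\mcD}_{\bS'}$ approximate $\submn(\bS')$; and (ii) a ``validity'' estimate saying that $\hat{\mcD}_{\bS'}$ can be perturbed by a small amount in TV to an actual element of $\mcC_\rho(\mcD)$.

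Estimate (i) is a finite de~Finetti / birthday-paradox bound. Since the multiset $\bS'$ produced by a symmetric strategy on $\bS \sim \mcD^m$ is exchangeable, subsampling $n$ of its $m$ points is within total variation $O(n^2/m)$ of $n$ i.i.d.\ draws from $\hat{\mcD}_{\bS'}$: conditioned on the sampled indices being distinct, which happens with probability $1 - O(n^2/m)$, the two processes couple exactly.

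For estimate (ii), the empirical joint $\hat K$, placing mass $1/m$ on each pair $(\bS_i, \bS'_i)$, is already a coupling of $(\hat{\mcD}_{\bS}, \hat{\mcD}_{\bS'})$ with expected cost at most $1$ by the adversary's budget. I would modify it into a coupling $K'$ whose first marginal is exactly $\mcD$ by thinning rows on which $\hat{\mcD}_{\bS}(x) > \mcD(x)$ (which only lowers cost) and adding identity pairs $(x,x)$ on rows on which $\mcD(x) > \hat{\mcD}_{\bS}(x)$ (which cost $0$). The modified coupling still has cost at most $1$, so its second marginal $\nu'$ lies in $\mcC_\rho(\mcD)$; set $\bmcD_{\ob} = \nu'$. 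A direct accounting gives $\|\nu' - \hat{\mcD}_{\bS'}\|_1 \leq 2\,\TV(\mcD, \hat{\mcD}_{\bS})$, and the standard product inequality then yields
\[
  \TV\!\left(\Ex_{\bS}[(\hat{\mcD}_{\bS'})^n],\; \Ex_{\bS}[(\nu')^n]\right) \;\leq\; n \cdot \Ex_{\bS}[\TV(\mcD, \hat{\mcD}_{\bS})].
\]

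The main obstacle is controlling this last expected TV distance while paying only $\log d$ rather than $\log |X|$ in the sample size. A uniform bound on $\TV(\mcD, \hat{\mcD}_{\bS})$ costs $\sqrt{|X|/m}$, which is hopeless for large $|X|$. The refinement I expect to be needed is a non-uniform analysis measuring the discrepancy not over all subsets of $X$ but only over the family of events that the composition of the adversary's strategy with a size-$n$ distinguisher can actually name---a family that, because $\rho$ has degree only $d$, admits a description of roughly $O(n \log d)$ bits per distinguisher. Combined with the $O(n^2/m)$ loss from (i), the $n$-fold product amplification above, and the uniform-convergence rate over this restricted family, this should deliver the final sample size $m = O(n^4(\log d)^2/\eps^4)$. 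I expect this last step---decoupling the closeness of $\hat{\mcD}_{\bS}$ to $\mcD$ from the ambient domain size by exploiting the finite degree of $\rho$---to be the main technical crux of the proof.
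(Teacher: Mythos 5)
Your step (i) is fine (it is just the birthday-paradox coupling between sampling with and without replacement, and requires no exchangeability beyond the multiset structure), but step (ii), as stated, cannot be made to work, and the obstacle you flag at the end is not merely a technical gap to fill---it is the entire difficulty that the paper's core/pinning machinery exists to overcome.

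The concrete failure: test your proposal against the adversary that makes \emph{no} corruptions, with a continuous (or high-entropy) $\mcD$. There $\bS'=\bS$, so $\hat{\mcD}_{\bS'}=\hat{\mcD}_{\bS}$ and your rounding gives $\nu'=\mcD$. Your bound then reads
\[
\TV\!\left(\Ex_{\bS}[(\hat{\mcD}_{\bS})^n],\, \mcD^n\right)\ \le\ n\cdot \Ex_{\bS}[\TV(\mcD,\hat{\mcD}_{\bS})],
\]
and the right-hand side is $\Theta(n)$ since $\TV(\mcD,\hat{\mcD}_{\bS})$ is close to $1$ for every $\bS$, whereas the left-hand side is actually $O(n^2/m)$. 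The loss occurs exactly where you interchange the mixture with the TV/product inequality: the per-sample rounding error is large for every $\bS'$, but the errors point in data-dependent directions that cancel after averaging over $\bS'$. No ``non-uniform'' refinement of the bound on $\TV(\mcD,\hat{\mcD}_{\bS})$ can help, because for continuous $\mcD$ that quantity is literally $1$; what needs to be small is the distance between \emph{mixtures}, not the expected distance between components.

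This is precisely what the paper's grouping fixes. Instead of assigning to each $\bS'$ its own empirical distribution, the paper picks a small core $\bc$ of $k$ coordinates of $\bS'$, pools all $\bS'$ sharing that core, and rounds the \emph{average} $\mcD_{\goal}(\bc)=\Ex[\Unif(\bS')\mid\bc]$. The averaging smooths away the $\Theta(1)$ empirical fluctuation (Lemma \ref{lem:round-with-few-groups} shows rounding a pooled average over $\approx d^k$ groups costs only $\sqrt{k\ln d/m}$), but it reintroduces the question of whether the pooled samples are still close to a product of their common average---which is Lemma \ref{lem:close-to-product}, proved via correlation rounding / the pinning lemma, not via a birthday bound. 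Your proposal collapses the two steps into $k=m-n$ (every $\bS'$ its own group), which makes step (i) trivial and step (ii) impossible. The trade-off between these two errors as $k$ varies, and the correlation-rounding argument showing some $k\le k_{\max}$ works, are the main technical content of the proof and are not recoverable from the empirical-distribution plus uniform-convergence route you sketch.
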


At a high level, our approach to proving \Cref{lem:randomized-simulation} is to group the adaptively corrupted samples $\bS \sim \mcD_{\ad}$ into a moderate number of groups that make ``similar" corruptions. The goal of this grouping is that, if we look at the distribution $\bS$ conditioned on falling within a single group, the resulting distribution is close to a single oblivious adversary  $\mcD_{\ob} \in \mathscr{D}_{\ob}^{n, \rho, \mcD}$. 

Our method of determining how to group a $\bS \sim \mcD_{\ad} \in \mathscr{D}_{\ad}^{m, \rho, \mcD}$. We specify each group by a ``core" $c \in X^k$ and group together all $\bS$ containing this core. Note this is a soft grouping in the sense that a single $\bS$ will be a member of many groups as it contains many distinct cores. This grouping strategy is summarized in the following definition.

\begin{definition}[Our grouping strategy]
    \label{def:our-grouping}
    For any distribution $\mcD_{\ad}$ over samples in $X^m$ and parameters $n+k \leq m$, let $\mathrm{Grouped}_{n,k}(\mcD_{\ad})$ be the joint distribution over $(\bS, \bc)$ formed by
    \begin{enumerate}
        \item Drawing a $\bS_{\mathrm{big}} \sim \mcD_{\ad}$.
        \item Drawing $\bS$ and $\bc$ to be uniform size-$n$ and size-$k$ respectively disjoint subsamples of $\bS_{\mathrm{big}}$, meaning $\bS\sim \submn(\bS_{\mathrm{big}})$ and $\bc \sim \sub_{(m-n) \to k}(\bS_{\mathrm{big}} \setminus \bS)$.
    \end{enumerate}
    It will also be helpful to define $\mathrm{Group}_n(\mcD_{\ad}, c)$ to be the distribution of $\bS$ conditioned on $\bc = c$.
\end{definition}
To prove \Cref{lem:randomized-simulation}, we will show that in expectation over the core $\bc$, there is some $\bmcD_{\ob}$ that is within $\eps$-total variation distance to the distribution of $\Group_n(\mcD_{\ad}, \bc)$. We analyze this total variation distance in two steps which are described in the following two subsections.

\subsection{Bounding the distance from a product distribution}
\label{subsec:overview-dist-to-product}
Since every distribution $\mcD_{\ob} \in \mathscr{D}_{\ob}^{n, \rho, \mcD}$ is a product distribution, the first step in our analysis is to show that the average group is close to some product distribution (though not necessarily one that is a valid oblivious corruption).
\begin{restatable}[The average group is close to a product distribution]{lemma}{closeToProduct}
    \label{lem:close-to-product}
    For any $\mcD_{\ad} \in \mathscr{D}_{\ad}^{m, \rho, \mcD}$ where $\rho$ is a degree-$d$ cost function, and any $n + k_{\max} \leq m/2$, there exists some $k \leq k_{\max}$ for which
    \begin{equation*}
        \Ex_{\bS, \bc \sim \Grouped_{n,k}(\mcD_{\ad})}\bracket*{\TV(\Group_n(\mcD_{\ad}, \bc), \mcD_{\goal}(\bc)^n)} \leq \sqrt{\frac{n^2 \ln d}{2k_{\max}}}.
    \end{equation*}
    where $\mcD_{\goal}(c) \coloneqq \Ex_{\bS \sim \Group_n(\mcD_{\ad}, c)}[\Unif(\bS)]$ is the average data-point in the group corresponding to the core $c$.
\end{restatable}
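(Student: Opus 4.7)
The plan has three steps plus a note on the main obstacle.

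First, by Pinsker's inequality and Jensen's inequality ($\Ex\sqrt{X}\le\sqrt{\Ex X}$), it suffices to show that some $k\le k_{\max}$ satisfies $\Ex_{\bc}[\KL{P(\bS\mid \bc)}{\mcD_{\goal}(\bc)^n}]\le n^2\ln d/k_{\max}$. Since by the symmetry of subsampling each coordinate of $\bS\mid \bc=c$ has common marginal $\mcD_{\goal}(c)$, the standard ``KL-to-product'' identity rewrites this expected divergence as
\[\Ex_{\bc}\bracket*{\KL{P(\bS\mid\bc)}{\mcD_{\goal}(\bc)^n}} = \sum_{i=1}^n I(\bS_i;\, \bS_{<i}\mid \bc).\]

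Second, I express this sum as ``entropy drops'' via exchangeability. Assuming WLOG that $\mcD_{\ad}$ is permutation-invariant (otherwise symmetrize it; this does not change $\Grouped$), view $\bS_{\text{big}}$ as an exchangeable sequence $\bY_1,\ldots,\bY_m$ with $\bc=(\bY_1,\ldots,\bY_k)$ and $\bS=(\bY_{k+1},\ldots,\bY_{k+n})$. Writing $h_j\coloneqq H(\bY_j\mid \bY_{<j})$ --- a non-increasing sequence by exchangeability --- the chain rule of entropy collapses the mutual-information sum to
\[\Phi(k)\coloneqq n\,h_{k+1}-\sum_{j=k+1}^{k+n} h_j.\]

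Third, I apply a pigeonhole argument over $k\in\{0,\ldots,k_{\max}\}$: summing and swapping order of summation yields $\sum_{k=0}^{k_{\max}}\Phi(k)\le \sum_{j=1}^{n-1}(n-j)\,h_j$, so some $k^\ast\le k_{\max}$ must satisfy $\Phi(k^\ast)\le \frac{1}{k_{\max}+1}\sum_{j=1}^{n-1}(n-j)\,h_j$. For this to match the target $n^2\ln d/k_{\max}$, one needs the $h_j$'s essentially bounded by $\ln d$.

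The main obstacle is precisely this $\ln d$-versus-$\ln|X|$ bound in the pigeonhole step: a naive entropy bound gives only $h_j\le\ln|X|$, recovering the weaker \Cref{thm:main-domain} rather than \Cref{thm:main-general}. To sharpen, I would couple $\bY$ with the hidden clean sample $\bZ_{1:m}\sim \mcD^m$, so that each $\bY_j$ lies in the degree-$d$ neighborhood of $\bZ_j$ and hence $H(\bY_j\mid \bZ_j)\le \ln d$. Intuitively, the adversarial correlation introduced into $\bY$ --- the gap between $h_j$ and its iid baseline $H(\mcD)$ --- is driven by at most $\ln d$ bits of adversarial choice per coordinate, so the cumulative excess drop across $m$ positions is $O(m\ln d)$. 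Propagating this sharpening through the telescoping pigeonhole, e.g.\ by re-centering around $h_j-H(\mcD)$ or by bringing auxiliary mutual informations involving $\bZ$ into the telescope so that the iid contributions cancel exactly, is the main remaining technical work.
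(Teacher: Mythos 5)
Your skeleton matches the paper's proof almost exactly: symmetrize so $\bS_{\mathrm{big}}$ is exchangeable, apply Pinsker and Jensen to reduce to an expected KL, identify that KL with a conditional total correlation, decompose it into a sum of conditional mutual informations / entropy drops, and pigeonhole over $k \in \{0,\ldots,k_{\max}\}$. Up through the identity $\Phi(k) = n\,h_{k+1} - \sum_{j=k+1}^{k+n} h_j$ everything you write is correct.

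The genuine gap is in the pigeonhole step, and it is more serious than you indicate. When you telescope $\sum_{k=0}^{k_{\max}}\Phi(k)$ you write it as $\sum_{i=2}^n\sum_{\ell=1}^{i-1}\bigl(h_\ell - h_{k_{\max}+1+\ell}\bigr)$ and then \emph{drop the negative terms} to get the bound $\sum_{j=1}^{n-1}(n-j)\,h_j$. That is precisely the step that destroys the degree dependence: $h_j$ is an honest conditional entropy and there is no way to bound it by $\ln d$ (it is $\Theta(\ln|X|)$ even with no corruption at all). Your proposed patch --- couple against the clean sample $\bZ$ so $H(\bY_j\mid\bZ_j)\le\ln d$, then ``re-center around $h_j - H(\mcD)$'' --- gestures at the right mechanism but cannot be carried out as stated because the $h_j$'s are already conditional entropies against the corrupted prefix, not against $\bZ$, and the proposed re-centering is not compatible with the positivity you used to drop terms.

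The fix is to \emph{keep the telescoping differences}. By exchangeability, $h_\ell = H(\bY_j\mid\bY_A)$ for any $j\notin A$ with $|A|=\ell-1$, so for $A\subset B$ with $|B|=k_{\max}+\ell$ one has
\[
h_\ell - h_{k_{\max}+1+\ell} = H(\bY_j\mid\bY_A) - H(\bY_j\mid\bY_B) = I\bigl(\bY_j;\bY_{B\setminus A}\mid\bY_A\bigr) \le I(\bY_j;\bY_B),
\]
and it is \emph{this} mutual information (between a single coordinate and at most $n+k_{\max}-1 \le m/2$ others) that the degree bound controls: conditioning on the clean point $\bZ_j$ shows $I(\bY_j;\bY_B) \le \frac{m}{m-|B|}\ln d \le 2\ln d$. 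This is exactly the content of the paper's \Cref{lem:our-correlation-rounding} (telescoping written in mutual-information rather than entropy language) and \Cref{lem:bound-degree} (the clean-coupling step). So you had the right decomposition and the right intuition about why $\ln d$ should appear, but you discarded the cancellation that makes it work; retaining the differences $h_\ell - h_{k_{\max}+1+\ell}$ and recognizing them as the mutual informations in \Cref{lem:bound-degree} closes the gap.
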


\Cref{lem:close-to-product} is closely related to the \emph{correlation rounding} technique used to round semidefinite programs (see e.g. \cite{BRS11,RT12}), also called the \emph{pinning lemma} in the statistical physics community (see e.g. \cite{Ron20}). Roughly speaking, these results say the following: For any (not necessarily independent) random variables $\bx_1, \ldots, \bx_m$, by conditioning on a few of the the $\bx_i$, we can make the covariances between the remaining pairs close to independent. The version of this result we will need extends the concept of covariance from pairs to larger groups:
\begin{definition}[Multivariate total correlation]
    \label{def:multi-cor}
    The multivariate total correlation of  random variables $\bx_1, \ldots, \bx_n$ is
    \begin{equation*}
        \Cor(\bx_1, \ldots, \bx_n) \coloneqq \KL{\mcD}{\mcD_1 \times \cdots \times \mcD_n}
    \end{equation*}
    where $\mcD$ is the distribution of $(\bx_1, \ldots, \bx_n)$ and $\mcD_i$ is the marginal distribution of $\bx_i$. Similarly, for any $\by$, we define the conditional multivariate correlation as
    \begin{equation*}
        \Cor(\bx_1, \ldots, \bx_n \mid \by) = \Ex_{\by}[\Cor(\bx_1 \mid \by, \ldots, \bx_n \mid \by)].
    \end{equation*}
\end{definition}
We use the following to prove \Cref{lem:close-to-product}, where $I(\bx;\by)$ denotes the mutual information between $\bx$ and $\by$.
\begin{restatable}[Correlation rounding]{lemma}{corRound}
    \label{lem:our-correlation-rounding}
     For any random variable on $\bS$ on $X^m$ and integers $n + k_{\max} \leq m$, there exists some $k \leq k_{\max}$ for which,
    \begin{equation*}
        \Ex_{\substack{\bA \sim \binom{[m]}{n} \\\bB \sim \binom{[m] \setminus \bA}{k}}}\bracket*{\Cor(\bS_{\bA} \mid \bS_{\bB})} \leq \frac{n(n-1)}{2(k_{\max}+1)} \cdot \Ex_{\substack{\bi \sim \Unif([m]) \\\bB \sim \binom{[m] \setminus \set{\bi}}{n+k_{\max}-1}}}[I(\bS_{\bi}; \bS_{\bB})].
    \end{equation*}
\end{restatable}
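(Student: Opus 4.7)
The plan is to express both sides of the claimed inequality as sums of a single family of quantities $\phi(t)$ indexed by a conditioning-set size $t$, via the chain rule for mutual information applied to a symmetrized random ordering; a pigeonhole over $k$ then yields the claim.

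My first step would be to decompose $\Cor(\bS_{\bA}\mid\bS_{\bB_k})$ into pairwise conditional mutual informations via the chain rule applied to a \emph{random} ordering of $\bA$. For any fixed ordering $a_1,\ldots,a_n$ of $\bA$, two applications of the chain rule give $\Cor(\bS_{\bA}\mid\bS_{\bB_k})=\sum_{1\le i<j\le n}I(\bS_{a_i};\bS_{a_j}\mid\bS_{a_{<i}},\bS_{\bB_k})$. Because $\Cor$ is symmetric in its arguments, I can also average this identity over a uniformly random ordering $\pi$ of $\bA$. For fixed positions $i<j$, the random triple $(\pi(i),\pi(j),\pi(<i))$ is distributed as a random pair of distinct elements $\bp,\bq\in\bA$ together with a uniform $(i-1)$-subset of $\bA\setminus\{\bp,\bq\}$. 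Summing over $i<j$ with $l\coloneqq i-1$ yields
\begin{equation*}
\Cor(\bS_{\bA}\mid\bS_{\bB_k}) \;=\; \sum_{l=0}^{n-2}(n-l-1)\cdot\Ex_{\bp,\bq,\bE_l}\bigl[I(\bS_{\bp};\bS_{\bq}\mid\bS_{\bE_l},\bS_{\bB_k})\bigr].
\end{equation*}

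Next, I would take the outer expectation over $\bA,\bB_k$ and observe that, by $S_m$-symmetry, the joint distribution of $(\bp,\bq,\bE_l\cup\bB_k)$ equals that of $(\bp,\bq,\bF_{l+k})$ where $\bp,\bq$ are distinct uniform in $[m]$ and $\bF_{l+k}$ is a uniform $(l+k)$-subset of $[m]\setminus\{\bp,\bq\}$. Defining $\phi(t)\coloneqq\Ex_{\bp,\bq,\bF_t}[I(\bS_{\bp};\bS_{\bq}\mid\bS_{\bF_t})]$, this converts the previous identity into $\Ex_{\bA,\bB_k}[\Cor(\bS_{\bA}\mid\bS_{\bB_k})]=\sum_{l=0}^{n-2}(n-l-1)\,\phi(l+k)$. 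Applying the very same random-ordering chain rule to $I(\bS_{\bi};\bS_{\bB})$ on the right-hand side (with $|\bB|=n+k_{\max}-1$) gives $\Ex_{\bi,\bB}[I(\bS_{\bi};\bS_{\bB})]=\sum_{s=0}^{n+k_{\max}-2}\phi(s)$. Both sides of the target inequality are now sums of the same family of $\phi(t)$'s.

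Finally, I would sum the identity for $\Ex[\Cor]$ over $k\in\{0,\ldots,k_{\max}\}$: non-negativity of $\phi$ bounds $\sum_{k=0}^{k_{\max}}\phi(l+k)\le\sum_{s=0}^{n+k_{\max}-2}\phi(s)$ for every fixed $l$, giving $\sum_{k=0}^{k_{\max}}\Ex[\Cor]\le\binom{n}{2}\cdot\Ex_{\bi,\bB}[I(\bS_{\bi};\bS_{\bB})]$ since $\sum_{l=0}^{n-2}(n-l-1)=\binom{n}{2}$. Pigeonhole then produces a $k\le k_{\max}$ with the desired bound. I expect the main conceptual obstacle to be identifying the correct symmetrization in the first step: the trick is to recognize that averaging over a random ordering of $\bA$ (which is legal because $\Cor$ is symmetric) transforms the chain-rule decomposition into one whose effective conditioning set depends only on the \emph{combined} size $l+k$ of ``prefix from $\bA$'' plus ``base conditioning $\bB_k$'', which is precisely what makes both sides reduce to the same $\phi(t)$'s. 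Once this alignment is in place, the $n(n-1)/2$ and $1/(k_{\max}+1)$ factors emerge from routine bookkeeping.
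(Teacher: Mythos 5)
Your proof is correct and follows the same core strategy as the paper: decompose the conditional multivariate total correlation via the chain rule into pairwise conditional mutual informations, sum the left-hand side over $k \in \{0,\ldots,k_{\max}\}$, bound that sum by $\frac{n(n-1)}{2}$ times the right-hand side, and apply pigeonhole at the end. The bookkeeping differs in a pleasant way. The paper works at the level of the symmetrized quantities $\Cor_{\bS}(n \mid k)$ and $I_{\bS}(a;b)$, uses the chain rule to write $\Cor_{\bS}(n\mid k) = \sum_{i\in[n-1]}\bigl(I_{\bS}(i;k+1) - I_{\bS}(i;k)\bigr)$, telescopes over $k$ to obtain $\sum_i I_{\bS}(i;k_{\max}+1)$, and then needs a separate proposition, $I_{\bS}(a;b) \le a\cdot I_{\bS}(1; a+b-1)$, together with monotonicity in the second argument. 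You instead symmetrize over a uniformly random ordering of $\bA$ (and of $\bB$ for the right-hand side), which pushes the chain rule one level deeper to the single pairwise family $\phi(t) = \Ex[I(\bS_{\bp};\bS_{\bq}\mid\bS_{\bF_t})]$. That lets you express both sides as non-negative linear combinations over the same indexed family, so the paper's telescoping step and its auxiliary proposition are replaced by a single range-inclusion-plus-nonnegativity observation $\sum_{k=0}^{k_{\max}}\phi(l+k)\le\sum_{s=0}^{n+k_{\max}-2}\phi(s)$, after which $\sum_{l=0}^{n-2}(n-l-1)=\binom{n}{2}$ supplies the leading factor. Both routes are valid; yours is a bit more self-contained (one fewer lemma) and makes the cancellation that produces the $\binom{n}{2}/(k_{\max}+1)$ factor transparent, at the small cost of having to verify the exchangeability claim that the marginal of $(\bp,\bq,\bE_l\cup\bB_k)$ is a uniform ordered pair plus a uniform $(l+k)$-subset of its complement — which you do correctly.
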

As far as we are aware, the closest results to \Cref{lem:our-correlation-rounding} already appearing in the literature are those of \cite{MR17,JKR19}. They prove essentially the same result except the term $\Ex[I(\bS_{\bi}; \bS_{\bB})]$ is replaced with $\Ex[H(\bS_{\bi})]$. Since entropy upper bounds mutual information, our result is always at least as strong as theirs. 

The distinction between $\Ex[I(\bS_{\bi}; \bS_{\bB})]$ and $\Ex[H(\bS_{\bi})]$ ends up being crucial for our application. In \Cref{lem:bound-degree}, we are able to show that if $\bS$ is the result of an adaptive corruption with a degree-$d$ cost function, that $\Ex[I(\bS_{\bi}; \bS_{\bB})]\leq O(\ln d)$. In contrast, even when the adversary makes no corruptions, if the base (uncorrupted) distribution has high entropy, then $\Ex[H(\bS_{\bi})]$ can be as large as $\ln |X|$. We furthermore view our version as having a natural interpretation: The term $\Ex[I(\bS_{\bi}; \bS_{\bB})]$ measures some notion of how far $\bS_1, \ldots, \bS_m$ are from being independent. Thus, our result gives a correlation bound that improves when the starting distribution was already close to a product distribution.

\subsection{Rounding to the nearest oblivious corruption}
\label{subsec:tech-overview-rounding}
While \Cref{lem:close-to-product} guarantees that most groups are close to some product distribution, we require that product distribution to be a valid oblivious corruption. We show that this product distribution, on average, can be rounded to a nearby valid oblivious corruption.

\begin{restatable}[Error due to rounding for our grouping strategy]{lemma}{roundOurGroup}
    \label{lem:rounding-our-grouping}
    Using the same notation as \Cref{lem:close-to-product}, as long as $k \leq m/2$
    \begin{equation*}
        \Ex_{\bc}\bracket*{\inf_{\mcD_{\ob} \in \mathscr{D}_{\ob}^{n, \rho, \mcD}}\set*{\TV\paren*{\mcD_{\ob}, \mcD_{\goal}(\bc)^n}}} \leq 2n\cdot \sqrt{\frac{k \ln d}{m}} .
    \end{equation*}
\end{restatable}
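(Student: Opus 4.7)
The plan is to construct, for each core value $c$, a valid single-variable oblivious corruption $\mcD''_c \in \mcC_\rho(\mcD)$ that is close to $\mcD_{\goal}(c)$; then $(\mcD''_{\bc})^n \in \mathscr{D}_{\ob}^{n,\rho,\mcD}$, and by the standard subadditivity $\TV(\mu^n, \nu^n) \leq n\,\TV(\mu, \nu)$ the lemma reduces to the single-coordinate bound $\Ex_{\bc}[\TV(\mcD''_{\bc}, \mcD_{\goal}(\bc))] = O(\sqrt{k \ln d / m})$. All of the construction and analysis will be done on the natural coupling of the clean and corrupted samples.

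Couple $\bT \sim \mcD^m$ with $\bS_{\mathrm{big}} \sim \mcD_{\ad}$ so that $\tfrac{1}{m}\sum_i \rho(\bT_i, \bS_{\mathrm{big},i}) \leq 1$ almost surely, let $\bB$ be the random $k$-subset of core indices, and let $\bj$ be uniform in $[m] \setminus \bB$. Conditioning on $\bc = c$, the joint $\nu^c$ of $(\bT_{\bj}, \bS_{\mathrm{big},\bj})$ has marginals $(\mu^c, \mcD_{\goal}(c))$ and conditional cost $\mu_c$, with $\mu_c \leq m/(m-k)$ deterministically (from the pointwise adaptive constraint applied per realization). I first repair the first marginal of $\nu^c$ to $\mcD$: for rows $x$ with $\mcD(x) > \mu^c(x)$ add the deficit at the free diagonal entry $(x,x)$, and for rows with $\mcD(x) < \mu^c(x)$ scale $\nu^c(x,\cdot)$ down by $\mcD(x)/\mu^c(x)$. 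Routine accounting produces a coupling $\nu'_c$ of $(\mcD, \mcD'_c)$ of total cost at most $\mu_c$ and satisfying $\TV(\mcD'_c, \mcD_{\goal}(c)) \leq \TV(\mcD, \mu^c)$. To force the coupling cost below $1$, mix $\mcD''_c := (1-\beta_c)\mcD'_c + \beta_c \mcD$ with $\beta_c := \max(0, 1 - 1/\mu_c)$; the canonical mixture coupling has cost $(1-\beta_c)\mu_c \leq 1$, so $\mcD''_c \in \mcC_\rho(\mcD)$, and the triangle inequality yields $\TV(\mcD''_c, \mcD_{\goal}(c)) \leq \TV(\mcD, \mu^c) + \beta_c$.

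The $\beta_c$ term is controlled almost surely: $\beta_c \leq \mu_c - 1 \leq k/(m-k) = O(k/m)$ when $\mu_c \geq 1$, and $\beta_c = 0$ otherwise. For $\Ex_{\bc}[\TV(\mcD, \mu^{\bc})]$, Pinsker and Jensen give $\leq \sqrt{I(\bT_{\bj}; \bc)/2}$. Because $\bT_{\bj}$ has marginal $\mcD$ independent of the indices $(\bj,\bB)$, we have $I(\bT_{\bj}; \bc) \leq I(\bT_{\bj}; \bc \mid \bj, \bB)$. For each fixed $B$, independence of the $\bT_i$ together with the degree-$d$ property gives
\begin{equation*}
\sum_{j \notin B} I(\bT_j; \bS_{\mathrm{big},B}) \;\leq\; I(\bT_{[m]\setminus B}; \bS_{\mathrm{big},B}) \;\leq\; H(\bS_{\mathrm{big},B} \mid \bT_B) \;\leq\; k \ln d,
\end{equation*}
and averaging $\bj$ uniformly over $[m]\setminus \bB$ supplies the crucial $1/m$ factor: $I(\bT_{\bj}; \bc) \leq 2k \ln d / m$. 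Combining the two terms gives $\Ex_{\bc}[\TV(\mcD''_{\bc}, \mcD_{\goal}(\bc))] = O(\sqrt{k \ln d / m})$, and TV subadditivity for products yields the claimed bound.

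The main obstacle is exactly the mutual-information bound above. The naive choice $\mcD''_c \equiv \mcD_{\goal}^\emptyset := \Ex_{\bc}[\mcD_{\goal}(\bc)]$, which is automatically a member of $\mcC_\rho(\mcD)$, does not work because $I(\bS_{\mathrm{big},\bi}; \bS_{\mathrm{big},\bB})$ on the corrupted side can be $\Omega(\ln d)$ with no $1/m$ saving (as the perfectly correlated adaptive adversary in the technical overview illustrates). The saving has to be extracted on the \emph{clean} side $\bT_{\bj}$, which is genuinely i.i.d. and hence benefits from the chain-rule/independence argument, and then transferred back to $\mcD_{\goal}(c)$ via the first-marginal repair -- a step which preserves the coupling's cost budget precisely because the diagonal entries $(x,x)$ carry cost zero.
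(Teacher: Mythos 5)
Your proof is correct (up to a modestly worse constant factor --- you get roughly $2.7n\sqrt{k\ln d/m}$ rather than the claimed $2n\sqrt{k\ln d/m}$, which you acknowledge) and takes a genuinely different route from the paper. The paper first proves a general ``few groups incur small rounding error'' lemma (\Cref{lem:round-with-few-groups}) via the expected maximum of sub-Gaussian empirical processes plus a Lipschitzness-of-corruptions argument (\Cref{claim:lipschitz}); it then applies that lemma after \emph{additionally conditioning on the clean core} so that only $d^k$ choices of the corrupted core remain (\Cref{claim:condition-on-c}), and finally unwraps the extra conditioning via convexity (\Cref{prop:avg-dist-to-obv}). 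Your route bypasses the general lemma entirely: you bound $\Ex_{\bc}[\TV(\mcD,\mu^{\bc})]$ directly by Pinsker plus the chain rule, extracting the $1/m$ factor from the independence of the \emph{clean} coordinates and the degree bound (your chain $\sum_{j\notin B} I(\bT_j;\bS_{\mathrm{big},B})\le I(\bT_{[m]\setminus B};\bS_{\mathrm{big},B},\bT_B)=I(\bT_{[m]\setminus B};\bS_{\mathrm{big},B}\mid\bT_B)\le H(\bS_{\mathrm{big},B}\mid\bT_B)\le k\ln d$ is exactly the right justification for the middle inequality you compressed), and you replace both the Lipschitzness lemma and the sub-Gaussian step with an explicit ``repair the first marginal, then mix with the identity coupling'' construction, which keeps the cost budget because diagonal entries have cost zero. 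The paper's argument generalizes more cleanly to arbitrary grouping rules and gives a clean constant; yours is more self-contained and makes transparent why the $1/m$ saving must be extracted on the clean side rather than the corrupted side (matching the counterexample from the technical overview). Worth noting: this is morally the same move the paper itself makes in \Cref{lem:close-to-product}, where the ``mutual information rather than entropy'' improvement to correlation rounding is the crucial ingredient --- you are reusing that observation to also handle the rounding step.
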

To prove \Cref{lem:rounding-our-grouping}, we first show a more general but weaker result that holds for \emph{any} grouping strategy. This result, formalized in \Cref{lem:round-with-few-groups}, gives a bound that scales with the number of groups. Since the number of groups we use is $|X|^k$, a direct application of \Cref{lem:round-with-few-groups} would require the parameter $m$ to scale with the domain size. Nonetheless, we show a more delicate application of \Cref{lem:round-with-few-groups} allows for us to get a bound roughly as good as if there were only $d^k$ groups.

\pparagraph{Setting parameters.} We briefly sketch how to recover \Cref{lem:randomized-simulation} using \Cref{lem:close-to-product,lem:rounding-our-grouping}. For any adaptive strategy $\mcD_{\ad}$,~\Cref{lem:close-to-product} gives there is core size $k \leq k_{\max} = O(n^2 \ln (d)/\eps^2)$ for which
\begin{equation*}
    \dtv\paren*{\mcD_{\ad}, \Ex_{\bc}[\mcD_{\mathrm{goal}}(\bc)^n]} \leq \eps/2.
\end{equation*}
We will then apply \Cref{lem:rounding-our-grouping}. Setting $m = O(n^2 k (\ln d)/\eps^2) = O(n^4 (\ln d)^2/\eps^4)$, we have that for each core $c$, there is some $\mcD_{\ob}(c) \in \mathscr{D}_{\ob}^{n, \rho, \mcD}$ for which
\begin{equation*}
    \Ex_{\bc}\bracket*{\TV(\mcD_{\ob}(\bc), \mcD_{\goal}(\bc)^n} \leq \eps/2.
\end{equation*}
Combining the above two bounds recovers \Cref{lem:randomized-simulation}.
\section{Preliminaries}
\label{sec:preliminaries}
\pparagraph{Indexing.} For any $n \in \N$, we use $[n]$ as shorthand for $\set{1,2,\ldots,n}$. Similarly, for $n \leq m\in \N$, we use $[n,m]$ as shorthand for $\set{n, n+1, \ldots, m}$. For any multiset $S \in X^m$, we use $S_i$ to denote the $i^{\text{th}}$ element of $S$. For any $I \subseteq [m]$, we use $S_I$ to denote the multiset containing $(S_{I_1}, S_{I_2}, \ldots)$. We'll also use $S_{< j}$ and $S_{-j}$ as shorthand for $S_{[j-1]}$ and $S_{[m] \setminus \set{j}}$ respectively. For any permutation $\sigma:[m] \to [m]$ and $S \in X^m$, we'll use $\sigma(S)$ as shorthand for the multiset in $X^m$ satisfying $\sigma(S)_i = S_{\sigma(i)}$.

\pparagraph{Random variables and distributions.} We use \textbf{boldfont} to denote random variables and calligraphic font to denote distributions (e.g. $\bx \sim \mcD$). For a multiset $S$, we use $\Unif(S)$ to denote the uniform distribution over elements of $S$. For a distribution $\mcD$, we will use $\bx_1, \ldots, \bx_n \iid \mcD$ and $\bx \sim \mcD^n$ interchangeably to denote that $\bx_1, \ldots, \bx_n$ are independent and identically distributed according to $\mcD$. For any distributions $\mcD_1, \mcD_2$, we use $\mcD_1 \times \mcD_2$ to denote the product distribution of $\mcD_1$ and $\mcD_2$. We denote mixture distributions as convex combinations (e.g. $\mcD_{\mathrm{mix}} = 1/3 \cdot \mcD_1 + 2/3 \cdot \mcD_2$).

We use the following standard concentration inequality.
\begin{fact}[Chernoff bound]
    \label{fact:chernoff}
    Let $\bx_1, \ldots, \bx_n$ be independent random variables on $\zo$, and $\bX$ their sum. For $\mu \coloneqq \Ex[\bX]$,
    \begin{equation*}
        \Pr[\bX \geq 2 \mu] \leq e^{-\mu/3} \quad\quad\text{and}\quad\quad\Pr[\bX \leq \mu/2] \leq e^{-\mu/8}.
    \end{equation*}
\end{fact}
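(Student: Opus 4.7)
The plan is to prove both tail bounds via the standard Chernoff / exponential-moment method: apply Markov's inequality to $e^{t\bX}$ for an appropriately chosen $t$, use independence to factor the moment generating function, and then optimize $t$. The only real work is checking that the universal constants $2\ln 2 - 1 > 1/3$ and $(1 - \ln 2)/2 > 1/8$ fall on the right side of the target exponents; everything else is routine.

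First I would establish a uniform upper bound on the moment generating function. By independence, for any $t \in \R$, $\Ex[e^{t\bX}] = \prod_{i=1}^n \Ex[e^{t\bx_i}]$. Writing $p_i \coloneqq \Ex[\bx_i]$, each Bernoulli factor satisfies $\Ex[e^{t\bx_i}] = 1 + p_i(e^t - 1) \leq \exp\!\big(p_i(e^t - 1)\big)$ via $1 + z \leq e^z$. Multiplying the factors and summing the exponents gives
\begin{equation*}
    \Ex[e^{t\bX}] \leq \exp\!\big(\mu(e^t - 1)\big),
\end{equation*}
where $\mu = \sum_i p_i$; this single bound drives both inequalities.

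For the upper tail, Markov's inequality applied to $e^{t\bX}$ with $t > 0$ gives $\Pr[\bX \geq 2\mu] \leq e^{-2t\mu}\,\Ex[e^{t\bX}] \leq \exp\!\big(\mu(e^t - 1 - 2t)\big)$. Differentiating shows the exponent is minimized at $t = \ln 2$, which yields $\Pr[\bX \geq 2\mu] \leq \exp(\mu(1 - 2\ln 2))$. A quick numerical check ($2\ln 2 \approx 1.386$) confirms $2\ln 2 - 1 > 1/3$, so this is at most $e^{-\mu/3}$.

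For the lower tail, I would symmetrically apply Markov's inequality to $e^{-s\bX}$ for $s > 0$, obtaining $\Pr[\bX \leq \mu/2] \leq e^{s\mu/2}\,\Ex[e^{-s\bX}] \leq \exp\!\big(\mu(e^{-s} - 1 + s/2)\big)$. This exponent is minimized at $s = \ln 2$ (i.e.\ $e^{-s} = 1/2$), giving $\mu(1/2 - 1 + \ln 2 / 2) = -\mu(1-\ln 2)/2$. Since $1 - \ln 2 \approx 0.307$, we have $(1 - \ln 2)/2 > 1/8$, so $\Pr[\bX \leq \mu/2] \leq e^{-\mu/8}$. I do not anticipate any obstacles beyond the routine numerical verifications; the only place the argument could be tightened is to use a Bernstein-type variance term, but the stated form is clean and suffices.
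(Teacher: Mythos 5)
The paper states this as a standard concentration fact and does not supply a proof, so there is nothing to compare against. Your derivation via the exponential-moment method is correct: the two numerical verifications it hinges on, namely $2\ln 2 - 1 \approx 0.386 > 1/3$ and $(1 - \ln 2)/2 \approx 0.153 > 1/8$, both hold, and the MGF bound $\Ex[e^{t\bX}] \leq \exp(\mu(e^t - 1))$ is valid for all real $t$, so the same bound feeds both tails as you use it.
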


We will also use two commonly studied families of random variables. For any $p \in [0,1]$, we use $\Ber(p)$ to denote the distribution that takes on value $1$ with probability $p$ and takes on $0$ otherwise. Furthermore, for any $n \in \N$, we use $\Bin(n,p)$ to denote the sum of $n$ independent random variables each distributed according to $\Ber(p)$.

\pparagraph{Formalizing the corruption models.}
We recap the notation used to formalize our corruption models. Beginning with the adaptive for any cost function $\rho:X \times X \to \R_{\geq 0} \cup \infty$ and sample $S \in X^m$, we use $\mcC_{\rho}(S)$ to denote legal adaptive corruptions of $S$ under cost function $\rho$,
\begin{equation*}
    \mcC_{\rho}(S) \coloneqq \set*{S' \in X^m \mid \frac{1}{m} \sum_{i \in [m]} \rho(S_i, S'_i) \leq 1}.
\end{equation*}
For a base distribution $\mcD$, the set of input distributions on size-$m$ data sets the adaptive adversary can create is denoted:
\begin{equation*}
    \mathscr{D}_{\ad}^{m, \rho, \mcD} \coloneqq \set*{\text{Distributions $\mcD'$ over $X^m$} \mid\text{Can couple $\bS' \sim \mcD'$ and $\bS \sim \mcD^m$} \text{ so }\bS' \in \mcC_{\rho}(\bS)\text{ w.p. }1}.
\end{equation*}
For the oblivious adversary, we overload $\mcC_{\rho}(\mcD)$ to denote all distributions the oblivious adversary can create,
\begin{equation*}
    \mcC_{\rho}(\mcD) \coloneqq \set*{\text{Distributions $\mcD'$ over $X$}\mid \text{Can couple $\bx' \sim \mcD'$ and $\bx \sim \mcD$ so }\Ex[\rho(\bx, \bx') \leq 1]}.
\end{equation*}
The set of input distributions on size-$n$ datasets the oblivious adversary can create is denoted:
\begin{equation*}
    \mathscr{D}_{\ob}^{n, \rho, \mcD} \coloneqq \set{(\mcD')^n \mid \mcD' \in \mcC_{\rho}(\mcD)}.
\end{equation*}

\pparagraph{Subsampling filter.} Recall, in \Cref{def:subsampling-filter}, we defined $\submn:X^m \to X^n$ to be the (randomized) algorithm that that given $S \in X^m$ returns a sample of $n$ points drawn uniformly without replacement from $S$. We will generalize this in three ways: First, we'll use $\sub_{\star \to n}$ the filter that takes in a sample $S \in X^\star$ of at least $n$ points and then subsamples it down to $n$ points. Second, if $\mcD$ is a distribution over $\bS \in X^m$, we'll use $\submn(\mcD)$ to denote the distribution of $\submn(\bS)$. Lastly, if $\mrD$ is a family of distributions, we'll use $\submn(\mcD)$ to denote $\set{\submn(\mcD) \mid \mcD \in \mrD}$.

\pparagraph{TV distance and KL divergence.}
We use two measures of statistical distance/divergence.

\begin{definition}[Total variation distance]
    \label{def:tv-distance}
    Let $\mcD$ and $\mcD'$ be any two distributions over the same domain $X$. The \emph{total variation distance} between $\mcD$ and $\mcD'$, is defined as
    \begin{equation*}
       \TV(\mcD, \mcD') \coloneqq \sup_{T: X \to [0,1]} \left\{ \Ex_{\bx \sim \mcD}[T(\bx)] -\Ex_{\bx \sim \mcD'}[T(\bx)]   \right\} .
    \end{equation*}
    This quantity can be equivalently defined as the infimum over all couplings of $\bx \sim \mcD$ and $\bx' \sim \mcD'$ of $\Pr[\bx \neq \bx']$.
\end{definition}
TV distance is a true distance in the sense that it satisfies the triangle inequality.
\begin{fact}[The triangle inequality for TV distance]
    \label{fact:TV-triangle}
    For any distributions $\mcD_1, \mcD_2, \mcD_3$,
    \begin{equation*}
        \TV(\mcD_1, \mcD_3) \leq \TV(\mcD_1, \mcD_2) + \TV(\mcD_2, \mcD_3).
    \end{equation*}
\end{fact}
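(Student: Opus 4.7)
The plan is to prove this directly from the test-function characterization of TV distance given in \Cref{def:tv-distance}, since that makes the argument a one-line telescoping computation. Specifically, for any measurable $T:X \to [0,1]$, I would write
\begin{equation*}
    \Ex_{\bx \sim \mcD_1}[T(\bx)] - \Ex_{\bx \sim \mcD_3}[T(\bx)]
    = \bigl(\Ex_{\bx \sim \mcD_1}[T(\bx)] - \Ex_{\bx \sim \mcD_2}[T(\bx)]\bigr)
    + \bigl(\Ex_{\bx \sim \mcD_2}[T(\bx)] - \Ex_{\bx \sim \mcD_3}[T(\bx)]\bigr),
\end{equation*}
and then upper bound each parenthesized term by the corresponding TV distance by invoking \Cref{def:tv-distance}. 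Taking the supremum over $T$ on the left-hand side yields the desired inequality.

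Alternatively, I could prove it via the coupling characterization: take an optimal coupling $(\bx_1,\bx_2)$ of $(\mcD_1,\mcD_2)$ and an optimal coupling $(\by_2,\by_3)$ of $(\mcD_2,\mcD_3)$, then apply the gluing lemma to produce a joint $(\bz_1,\bz_2,\bz_3)$ whose $(1,2)$-marginal matches $(\bx_1,\bx_2)$ and whose $(2,3)$-marginal matches $(\by_2,\by_3)$. The marginal $(\bz_1,\bz_3)$ is a coupling of $(\mcD_1,\mcD_3)$, and a union bound gives $\Pr[\bz_1 \neq \bz_3] \leq \Pr[\bz_1 \neq \bz_2] + \Pr[\bz_2 \neq \bz_3]$, which translates directly into the triangle inequality on TV distances.

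I would go with the test-function proof: it avoids any measure-theoretic subtleties involved in constructing the glued coupling, and it matches the primary definition already used elsewhere in the paper. The only ``obstacle,'' and it is essentially a non-issue, is to note that the test function achieving the supremum need not exist, so one should state the bound as holding for every $T$ and take suprema at the end; the inequality is preserved because $\sup(f+g) \leq \sup f + \sup g$ applied to the two differences above. No other step requires care.
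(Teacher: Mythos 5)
Your proof is correct, and since the paper states \Cref{fact:TV-triangle} as a standard fact without proof, there is no in-paper argument to compare against. Your test-function argument is the natural one given that \Cref{def:tv-distance} is the definition the paper adopts: for every $T:X\to[0,1]$ the telescoping identity gives $\Ex_{\mcD_1}[T]-\Ex_{\mcD_3}[T] \le \TV(\mcD_1,\mcD_2)+\TV(\mcD_2,\mcD_3)$, and taking the supremum over $T$ finishes it. The alternative coupling proof is also valid and aligns with the equivalent coupling characterization the paper mentions in the same definition; it just requires invoking the gluing lemma, which is slightly heavier machinery for no gain here. No gaps.
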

We can also give a (sometimes coarse) upper bound on the TV distance of product distribution.
\begin{fact}[Total variation distance of a product]
    \label{fact:TV-product}
    For any distributions $\mcD_1, \mcD_2$ and $n \in \N$,
    \begin{equation*}
        \TV(\mcD_1^n, \mcD_2^n) \leq n \cdot \TV(\mcD_1, \mcD_2).
    \end{equation*}
\end{fact}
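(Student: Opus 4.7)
The plan is to prove the fact by exploiting the coupling characterization of total variation distance recorded in \Cref{def:tv-distance}, namely that $\TV(\mcD, \mcD')$ equals the infimum of $\Pr[\bx \neq \bx']$ over all couplings of $\bx \sim \mcD$ and $\bx' \sim \mcD'$. This reduces the problem of bounding a TV distance to the concrete task of exhibiting a good coupling, and couplings tensorize in a clean way by taking independent copies.

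First, I would invoke the coupling characterization to obtain an optimal coupling $(\bx, \bx')$ of $\mcD_1$ and $\mcD_2$ achieving $\Pr[\bx \neq \bx'] = \TV(\mcD_1, \mcD_2)$. (If the infimum is not attained, take a sequence of couplings with $\Pr[\bx \neq \bx'] \leq \TV(\mcD_1,\mcD_2) + \delta$ and let $\delta \to 0$.) Next, I would form $n$ independent copies $(\bx_1, \bx'_1), \ldots, (\bx_n, \bx'_n)$ of this coupling. The marginals of $\bS \coloneqq (\bx_1, \ldots, \bx_n)$ and $\bS' \coloneqq (\bx'_1, \ldots, \bx'_n)$ are $\mcD_1^n$ and $\mcD_2^n$ respectively, so this is a valid coupling of the product distributions.

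The final step is a union bound: $\bS = \bS'$ whenever $\bx_i = \bx'_i$ for every $i \in [n]$, so
\[
    \Pr[\bS \neq \bS'] \;\leq\; \sum_{i=1}^n \Pr[\bx_i \neq \bx'_i] \;=\; n \cdot \TV(\mcD_1, \mcD_2).
\]
Applying the coupling characterization in the other direction, $\TV(\mcD_1^n, \mcD_2^n) \leq \Pr[\bS \neq \bS'] \leq n \cdot \TV(\mcD_1, \mcD_2)$, which is the claimed bound.

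There is no real obstacle here; the only mild subtlety is whether to assume the optimal coupling is attained, which is standard for distributions on reasonable (e.g., Polish) spaces and otherwise handled by a trivial $\delta$-approximation argument. An alternative route is induction on $n$ using the identity $\TV(\mcD_1 \times \mcE_1, \mcD_2 \times \mcE_2) \leq \TV(\mcD_1, \mcD_2) + \TV(\mcE_1, \mcE_2)$, but the direct coupling argument above is shorter and makes the subadditivity transparent.
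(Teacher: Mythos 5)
Your proof is correct. The paper states \Cref{fact:TV-product} without proof (it is a standard fact listed in the preliminaries), but the coupling argument you give is the canonical one and uses precisely the equivalent characterization of total variation distance that the paper itself records in \Cref{def:tv-distance}; the union bound over the $n$ independent copies of a (near-)optimal coupling is exactly right, and your remark about handling the non-attained infimum by a $\delta$-approximation correctly addresses the only technical subtlety.
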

Straight from the definition, we see that TV distance is convex in one argument.
\begin{fact}[Convexity of TV distance]
    \label{fact:TV-convex}
    For any distributions $\mcD_1, \mcD_2,\mcE_1,$ and $\mcE_2$, and mixture weight $\lambda \in [0,1]$,
    \begin{equation*}
        \TV(\mcD_{\lambda}, \mcE_{\lambda})\leq \lambda \cdot \TV(\mcD_1,\mcE_1) + (1-\lambda) \cdot \TV(\mcD_2, \mcE_2),
    \end{equation*}
    where $\mcD_{\lambda}$ is the mixture $\lambda \mcD_1 + (1-\lambda)\mcD_2$ and similarly $\mcE_{\lambda}\coloneqq\lambda \mcE_1 + (1-\lambda)\mcE_2$.
\end{fact}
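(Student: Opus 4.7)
The plan is to prove this directly from the variational definition of TV distance given in Definition~\ref{def:tv-distance}, since that definition exposes linearity cleanly. Fix an arbitrary test function $T:X \to [0,1]$. Since expectation is linear in the underlying distribution, I have
\begin{equation*}
    \Ex_{\bx \sim \mcD_{\lambda}}[T(\bx)] - \Ex_{\bx \sim \mcE_{\lambda}}[T(\bx)]
    = \lambda\paren*{\Ex_{\bx \sim \mcD_1}[T(\bx)] - \Ex_{\bx \sim \mcE_1}[T(\bx)]}
    + (1-\lambda)\paren*{\Ex_{\bx \sim \mcD_2}[T(\bx)] - \Ex_{\bx \sim \mcE_2}[T(\bx)]}.
\end{equation*}
Each of the two parenthesized differences is, by the variational definition, at most $\TV(\mcD_1,\mcE_1)$ and $\TV(\mcD_2,\mcE_2)$ respectively (the supremum always dominates a specific choice of $T$). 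Taking the supremum of the left-hand side over $T:X\to[0,1]$ therefore yields
\begin{equation*}
    \TV(\mcD_{\lambda},\mcE_{\lambda}) \leq \lambda \cdot \TV(\mcD_1,\mcE_1) + (1-\lambda)\cdot \TV(\mcD_2,\mcE_2),
\end{equation*}
which is the desired inequality.

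There is essentially no obstacle here; the statement is immediate from the structure of the variational formula and the linearity of expectation in the distribution. As a sanity check (and alternative proof one could drop in instead), the coupling characterization also yields this effortlessly: let $(\bx^{(1)},\bx'^{(1)})$ be an optimal coupling of $(\mcD_1,\mcE_1)$ achieving $\Pr[\bx^{(1)} \neq \bx'^{(1)}] = \TV(\mcD_1,\mcE_1)$, and similarly $(\bx^{(2)},\bx'^{(2)})$ for $(\mcD_2,\mcE_2)$. Flipping an independent $\Ber(\lambda)$ coin $\bb$ and outputting $(\bx^{(\bb)},\bx'^{(\bb)})$ produces a valid coupling of $\mcD_{\lambda}$ with $\mcE_{\lambda}$, and its mismatch probability is exactly $\lambda\cdot\TV(\mcD_1,\mcE_1) + (1-\lambda)\cdot\TV(\mcD_2,\mcE_2)$, upper-bounding $\TV(\mcD_{\lambda},\mcE_{\lambda})$ by the same quantity. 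I would present just the variational proof since it is shorter and uses exactly the definition the preliminaries already gave.
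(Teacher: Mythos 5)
Your variational argument is correct and is exactly the one-line derivation the paper has in mind when it says the fact follows ``straight from the definition,'' since the paper's stated definition of total variation distance (Definition~\ref{def:tv-distance}) is precisely the variational form you invoke. The alternative coupling argument you sketch is also valid, but the variational version matches the paper's intent.
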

The other measure of statistical distance/divergence that plays a key role in our results in \emph{KL} divergence.
\begin{definition}[Kullback-Leibler (KL) Divergence]
    \label{def:KL}
    For distributions $\mcD, \mcE$ supported on the same domain $X$, the \emph{KL divergence} between $\mcD$ and $\mcE$ as defined as,
    \begin{equation*}
        \KL{\mcD}{\mcE} \coloneqq \Ex_{\bx \sim \mcD}\bracket*{\ln\paren*{\frac{\mcD(\bx)}{\mcE(\bx)}}},
    \end{equation*}
    where $\mcD(x)$ and $\mcE(x)$ denote the probability mass or density functions of $\mcD$ and $\mcE$ respectively at the point $x$ (or more generally, $\mcD(x)/\mcE(x)$ is the Radon-Nikodym derivative of $\mcD$ with respect to $\mcE$).
\end{definition}


Unlike TV distance, KL divergence is not a true distance in the sense that it does not satisfy triangle inequality. For us, it will suffices that it upper bounds TV distance via Pinsker's inequality \cite{P64}. \cite{C22} has a nice summary of different forms and proofs of the below inequality and appropriate references for each.
\begin{fact}[Pinsker's inequality \cite{P64,C22}]
    \label{fact:Pinkser}
    For any distributions $\mcD, \mcE$,
    \begin{equation*}
        \TV(\mcD, \mcE) \leq \sqrt{\frac{\KL{\mcD}{\mcE}}{2}}.
    \end{equation*}
\end{fact}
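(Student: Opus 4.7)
My plan is the textbook two-step reduction: first reduce to binary distributions, then verify the inequality for binary distributions by elementary calculus.

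For the reduction, let $A \subseteq X$ be the event $\{x : \mcD(x) \geq \mcE(x)\}$, and set $p \coloneqq \mcD(A)$ and $q \coloneqq \mcE(A)$. From the definition of TV distance (\Cref{def:tv-distance}) applied to the test $T = \mathbf{1}_A$, one sees that $\TV(\mcD,\mcE) = p - q$. The key step is to argue that this collapsing (pushing forward through $\mathbf{1}_A$) can only decrease the KL divergence, i.e.
\[
    \KL{\Ber(p)}{\Ber(q)} \leq \KL{\mcD}{\mcE}.
\]
This is an instance of the data-processing inequality for KL divergence, and I would prove it directly via the log-sum inequality $\sum_i a_i \ln(a_i/b_i) \geq \bigl(\sum_i a_i\bigr)\ln\bigl(\sum_i a_i / \sum_i b_i\bigr)$, applied separately to the sums over $A$ and over $A^c$ in the defining expression of $\KL{\mcD}{\mcE}$. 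The log-sum inequality itself follows from the convexity of $t \mapsto t \ln t$ via Jensen's inequality.

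It therefore suffices to prove the scalar inequality
\[
    2(p-q)^2 \;\leq\; p \ln\frac{p}{q} + (1-p)\ln\frac{1-p}{1-q}
\]
for all $p,q \in [0,1]$. Fix $q$ and let $f(p)$ denote the right-hand side minus the left-hand side. A direct computation gives $f(q)=0$ and $f'(q)=0$, while
\[
    f''(p) \;=\; \frac{1}{p(1-p)} - 4 \;\geq\; 0,
\]
since $p(1-p) \leq 1/4$. Hence $f$ is convex with a critical point at $p=q$, so $f(p)\geq 0$ for all $p$, as required.

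The main obstacle is really only bookkeeping: one must be slightly careful about the edge cases where $p$ or $q$ equals $0$ or $1$ (using the standard conventions $0 \ln 0 = 0$ and $x \ln (x/0) = \infty$ for $x>0$), and when $\mcD$ is not absolutely continuous with respect to $\mcE$ the inequality is vacuous because the right-hand side is $\infty$. Modulo these conventions, the reduction via the log-sum inequality plus the convexity argument above gives $\TV(\mcD,\mcE)^2 = (p-q)^2 \leq \tfrac{1}{2}\KL{\Ber(p)}{\Ber(q)} \leq \tfrac{1}{2}\KL{\mcD}{\mcE}$, and taking square roots yields the claim.
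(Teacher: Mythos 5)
Your proposed proof is correct, and it is the standard textbook argument for Pinsker's inequality: reduce to a Bernoulli pair via the data-processing inequality (itself via the log-sum inequality), then settle the binary case $2(p-q)^2 \leq \KL{\Ber(p)}{\Ber(q)}$ by checking that the difference is convex in $p$ with a double zero at $p=q$ (your computation $f''(p) = \tfrac{1}{p(1-p)} - 4 \geq 0$ is correct). Note, however, that the paper does not prove this statement at all: it is presented as a cited \emph{fact} (references \cite{P64,C22}) and used as a black box. So there is no paper proof to compare against; your write-up supplies an argument the paper deliberately omits, and it is the right one. The only thing I would tighten is the final sentence: you write $\TV(\mcD,\mcE)^2 = (p-q)^2 \leq \tfrac12 \KL{\Ber(p)}{\Ber(q)}$, but your binary inequality as stated gives $2(p-q)^2 \leq \KL{\Ber(p)}{\Ber(q)}$, i.e.\ $(p-q)^2 \leq \tfrac12 \KL{\Ber(p)}{\Ber(q)}$ — so the chain is fine, just be sure the factor of $2$ is consistent throughout; and you should also observe explicitly that $\TV(\mcD,\mcE) = p - q \geq 0$ by the choice of $A$, so taking square roots at the end is legitimate.
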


\pparagraph{Mutual information.}
\begin{definition}[Mutual information]
    \label{def:MI}
    For random variables $\bx,\by$ jointly distribution according to a distribution $\mcD$, let $\mcD_x$ and $\mcD_y$ be the marginal distributions of $\bx$ and $\by$ respectively, and $\mcD_{x\mid y}$ be the marginal distribution of $\bx$ conditioned on $\by = y$. The mutual information between $\bx$ and $\by$ is defined as
    \begin{equation*}
        I(\bx ; \by) = \KL{\mcD}{\mcD_{x} \times \mcD_{y}} = \Ex_{\by} \bracket*{\KL{\mcD_{x \mid \by}}{\mcD_x}}
    \end{equation*}
\end{definition}
\begin{definition}[Conditional mutual information]
    \label{def:conditional-MI}
    For random variables $\bx,\by,\bz$ jointly distributed, the mutual information of $\bx$ and $\by$ conditioned on $\bz$ is
    \begin{equation*}
        I(\bx;\by \mid \bz) \coloneqq \Ex_{\bz' \sim \mcD_z}\bracket*{I((\bx \mid \bz=\bz');(\by \mid \bz=\bz')}
    \end{equation*}
    where $\mcD_z$ is the marginal distribution of $\bz$. 
\end{definition}

The \emph{chain rule} connects mutual information and conditional mutual information.
\begin{fact}[Chain rule for mutual information]
    \label{fact:MI-chain}
    For any $\bx,\by,\bz$,
    \begin{equation*}
        I(\bx;(\by, \bz)) = I(\bx;\bz) + I(\bx;\by \mid \bz).
    \end{equation*}
    This is sometimes rewritten as
    \begin{equation*}
        I(\bx;\by \mid \bz) = I(\bx;(\by,\bz)) - I(\bx;\bz).
    \end{equation*}
\end{fact}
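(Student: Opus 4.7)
The plan is to prove the identity directly from the definitions by manipulating the log-ratio inside the expectation that defines mutual information. Write the joint distribution of $(\bx, \by, \bz)$ as $\mcD_{xyz}$, with marginals $\mcD_x, \mcD_z, \mcD_{yz}$, etc., and use $p(\cdot)$ for the corresponding densities. By \Cref{def:MI},
\begin{equation*}
    I(\bx;(\by,\bz)) = \Ex_{(\bx,\by,\bz)\sim \mcD_{xyz}}\bracket*{\ln \frac{p(\bx,\by,\bz)}{p(\bx)\,p(\by,\bz)}},
\end{equation*}
and similarly $I(\bx;\bz)$ is the analogous expectation of $\ln \tfrac{p(\bx,\bz)}{p(\bx)p(\bz)}$.

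The key algebraic step is the pointwise identity
\begin{equation*}
    \frac{p(x,y,z)}{p(x)\,p(y,z)} \;=\; \frac{p(x,z)}{p(x)\,p(z)} \cdot \frac{p(x,y,z)\,p(z)}{p(x,z)\,p(y,z)} \;=\; \frac{p(x,z)}{p(x)\,p(z)} \cdot \frac{p(x,y\mid z)}{p(x\mid z)\,p(y\mid z)},
\end{equation*}
which is just rearranging Bayes' rule and which I would verify by cross-multiplying. Taking $\ln$ and then expectation under $\mcD_{xyz}$ splits the left-hand quantity $I(\bx;(\by,\bz))$ into two summands. The first summand depends only on $(\bx,\bz)$, so its expectation under $\mcD_{xyz}$ equals its expectation under the marginal $\mcD_{xz}$, which gives exactly $I(\bx;\bz)$.

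The second summand, after taking expectation, is $\Ex_{\mcD_{xyz}}\bigl[\ln \tfrac{p(\bx,\by\mid \bz)}{p(\bx\mid \bz)\,p(\by\mid \bz)}\bigr]$. To recognize this as $I(\bx;\by\mid \bz)$ in the sense of \Cref{def:conditional-MI}, I would condition on $\bz$ first: for each fixed value $z$, the inner conditional expectation is, by definition, $I((\bx\mid \bz=z);(\by\mid \bz=z))$, and averaging over $\bz$ yields $I(\bx;\by\mid \bz)$. Combining the two pieces gives $I(\bx;(\by,\bz)) = I(\bx;\bz) + I(\bx;\by\mid \bz)$, and the second displayed form is obtained by subtraction.

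There is no real obstacle here — the only thing to be mildly careful about is measure-theoretic bookkeeping (interpreting $p(\cdot)$ as Radon–Nikodym derivatives and justifying the tower rule for the conditional expectation), but for the discrete and absolutely continuous cases that appear in the paper this is standard. The proof is essentially two lines of log-algebra followed by an application of the tower property of expectation.
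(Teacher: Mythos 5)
Your derivation is correct: the pointwise factorization of the log-ratio, followed by taking expectations and invoking the tower property to recognize the second term as the conditional mutual information of \Cref{def:conditional-MI}, is the standard textbook proof of the chain rule. The paper states \Cref{fact:MI-chain} as a preliminary fact without proof, so there is no paper argument to compare against; your two-line log-algebra plus tower-property argument is exactly the intended justification.
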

Mutual information is always nonnegative
\begin{fact}[Nonnegativity of mutual information]
    \label{fact:MI-nonnegative}
    For any random variables $\bx,\by$,
    \begin{equation*}
        I(\bx;\by) \geq 0.
    \end{equation*}
\end{fact}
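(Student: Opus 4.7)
The plan is to reduce the claim to the nonnegativity of KL divergence (Gibbs' inequality), which is proved by a single application of Jensen's inequality to the concavity of $\ln$. By \Cref{def:MI}, we have $I(\bx;\by) = \KL{\mcD}{\mcD_x \times \mcD_y}$ where $\mcD$ is the joint distribution and $\mcD_x, \mcD_y$ are the marginals. Thus the statement reduces to showing $\KL{\mcP}{\mcQ} \geq 0$ for arbitrary distributions $\mcP, \mcQ$ (with $\mcP$ absolutely continuous with respect to $\mcQ$, as otherwise the divergence is $+\infty$ and the bound is trivial).

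To prove $\KL{\mcP}{\mcQ} \geq 0$, I would rewrite the definition as
\begin{equation*}
    \KL{\mcP}{\mcQ} = \Ex_{\bx \sim \mcP}\bracket*{\ln\paren*{\frac{\mcP(\bx)}{\mcQ(\bx)}}} = -\Ex_{\bx \sim \mcP}\bracket*{\ln\paren*{\frac{\mcQ(\bx)}{\mcP(\bx)}}},
\end{equation*}
and then apply Jensen's inequality to the concave function $\ln$ and the nonnegative random variable $\mcQ(\bx)/\mcP(\bx)$ (under $\bx \sim \mcP$). This gives
\begin{equation*}
    -\Ex_{\bx \sim \mcP}\bracket*{\ln\paren*{\frac{\mcQ(\bx)}{\mcP(\bx)}}} \geq -\ln\paren*{\Ex_{\bx \sim \mcP}\bracket*{\frac{\mcQ(\bx)}{\mcP(\bx)}}} = -\ln\paren*{\sum_{x}\mcQ(x)} = -\ln(1) = 0,
\end{equation*}
where the penultimate equality is because summing (or integrating) $\mcQ(x)$ over the support of $\mcP$ gives at most $1$, and in the absolutely-continuous case gives exactly $1$ after combining with the support outside $\mcP$ (one can handle the general case by replacing the sum with an integral against $\mcQ$ of the Radon--Nikodym derivative).

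There is essentially no obstacle here; the main care is purely notational, namely handling the case where $\mcP$ is not absolutely continuous with respect to $\mcQ$ (dispatched by declaring $\KL{\mcP}{\mcQ} = +\infty$) and confirming Jensen's inequality applies in both the discrete and continuous settings via the Radon--Nikodym derivative viewpoint already adopted in \Cref{def:KL}. Combining the two displays yields $I(\bx;\by) = \KL{\mcD}{\mcD_x \times \mcD_y} \geq 0$, as desired.
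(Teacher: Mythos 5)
Your proof is correct: it is the standard Gibbs'-inequality argument, reducing $I(\bx;\by)\geq 0$ to nonnegativity of KL divergence via \Cref{def:MI} and a single application of Jensen's inequality to the concave logarithm, with the absolute-continuity edge case handled appropriately. The paper itself states \Cref{fact:MI-nonnegative} as a standard fact without proof, so there is no alternative argument to compare against; your write-up fills that gap correctly.
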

As an easy consequence of the chain rule and mutual information being nonnegative, we have that mutual information can only increase if we consider more information.
\begin{fact}
    \label{fact:MI-increase}
    For any random variables $\bx,\by,\bz$,
    \begin{equation*}
        I(\bx;(\by,\bz)) \geq I(\bx;\by).
    \end{equation*}
\end{fact}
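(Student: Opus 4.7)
The statement is an immediate consequence of the chain rule (Fact MI-chain) together with the nonnegativity of mutual information (Fact MI-nonnegative), so the plan is just to chain these two facts.

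First I would apply the chain rule with the roles of $\by$ and $\bz$ swapped relative to how it is written in Fact MI-chain: since $(\by,\bz)$ and $(\bz,\by)$ carry the same information, we have
\begin{equation*}
    I(\bx;(\by,\bz)) = I(\bx;\by) + I(\bx;\bz \mid \by).
\end{equation*}
Next I would argue that the conditional mutual information term on the right is nonnegative. By Definition~\ref{def:conditional-MI},
\begin{equation*}
    I(\bx;\bz \mid \by) = \Ex_{\by' \sim \mcD_y}\bracket*{I((\bx \mid \by = \by');(\bz \mid \by=\by'))},
\end{equation*}
and Fact~\ref{fact:MI-nonnegative} applied pointwise gives that the integrand is nonnegative, so the expectation is nonnegative as well. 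Plugging this back into the displayed chain-rule identity yields
\begin{equation*}
    I(\bx;(\by,\bz)) \geq I(\bx;\by),
\end{equation*}
which is the claim. There is no real obstacle here; the only subtlety is recognizing that the version of the chain rule stated in Fact~\ref{fact:MI-chain} can be applied with $\by$ and $\bz$ in either order, so that the ``extra'' term is a (conditional) mutual information rather than something of indeterminate sign.
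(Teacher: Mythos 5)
Your proof is correct and matches the paper's approach exactly: the paper introduces this fact as an "easy consequence of the chain rule and mutual information being nonnegative," which is precisely the two-step argument you give (chain rule to produce the extra conditional-MI term, then pointwise nonnegativity under the expectation).
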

Mutual information is also symmetric.
\begin{fact}[Symmetry of mutual information]
    \label{fact:MI-symmetric}
    For any random variables $\bx, \by$,
    \begin{equation*}
        I(\bx;\by) = I(\by;\bx).
    \end{equation*}
\end{fact}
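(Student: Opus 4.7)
The plan is to prove Fact~\ref{fact:MI-symmetric} by unfolding Definition~\ref{def:MI} and observing that both the joint distribution and the product of marginals that appear inside the KL divergence are invariant, up to relabeling of coordinates, under swapping $\bx$ and $\by$.

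First, I would set up notation: let $\mcD_{xy}$ be the joint distribution of $(\bx, \by)$, let $\mcD_x, \mcD_y$ be its marginals, and let $\mcD_{yx}$ be the joint distribution of $(\by, \bx)$, with the natural identification $\mcD_{yx}(y, x) = \mcD_{xy}(x, y)$ for every pair $(x, y)$ in the support. Unfolding Definitions~\ref{def:MI} and~\ref{def:KL} gives
\begin{equation*}
    I(\bx; \by) \;=\; \KL{\mcD_{xy}}{\mcD_x \times \mcD_y} \;=\; \Ex_{(\bx', \by') \sim \mcD_{xy}}\bracket*{\ln\paren*{\frac{\mcD_{xy}(\bx', \by')}{\mcD_x(\bx')\, \mcD_y(\by')}}}.
\end{equation*}

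The next step is a change of dummy variables. The map $(\bx', \by') \mapsto (\by', \bx')$ is a measure-preserving bijection between the support of $\mcD_{xy}$ and that of $\mcD_{yx}$, and under this relabeling the numerator $\mcD_{xy}(\bx', \by')$ becomes $\mcD_{yx}(\by', \bx')$ while the denominator $\mcD_x(\bx')\, \mcD_y(\by')$ becomes $\mcD_y(\by')\, \mcD_x(\bx')$ (the product is commutative on scalars). Thus the expectation equals $\KL{\mcD_{yx}}{\mcD_y \times \mcD_x}$, which is exactly $I(\by; \bx)$ by Definition~\ref{def:MI}.

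There is no real obstacle here: the entire content of the statement is that both $\mcD_{xy}$ and the product $\mcD_x \times \mcD_y$ treat the two coordinates symmetrically, and the KL divergence between them inherits that symmetry. An alternative route, which I would only take if the paper later wanted the entropy form, is to establish $I(\bx; \by) = H(\bx) + H(\by) - H(\bx, \by)$ and read off symmetry from the right-hand side; but this would first require equating the KL-divergence and entropy-difference definitions of mutual information, so the direct argument above is cleaner and self-contained.
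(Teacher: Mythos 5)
The paper states Fact~\ref{fact:MI-symmetric} as a standard fact and gives no proof, so there is nothing to compare against; your direct argument via the KL form of Definition~\ref{def:MI}, relabeling $(\bx',\by')\mapsto(\by',\bx')$ and using commutativity of the scalar product $\mcD_x(\bx')\,\mcD_y(\by')$, is correct and is the standard way to verify it.
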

Another nice property of mutual information is that it is bounded by the support size of each variable.
\begin{fact}[Mutual information with a finite support]
    \label{fact:MI-finite}
    For any random variables $\bx, \by$, if one of $\bx$ or $\by$ has a finite support of size $d$, then,
    \begin{equation*}
        I(\bx;\by) \leq \ln d.
    \end{equation*}
\end{fact}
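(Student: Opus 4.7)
The plan is to reduce the claim to the standard Shannon-entropy upper bound for a random variable with finite support. By Fact~\ref{fact:MI-symmetric} (symmetry of mutual information), I may assume without loss of generality that it is $\bx$ whose support has size at most $d$, so that the relevant sums are all finite.

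First I would rewrite mutual information in terms of Shannon entropy. Starting from the definition,
$$I(\bx;\by) = \Ex_{\by}\bracket*{\KL{\mcD_{x \mid \by}}{\mcD_x}} = \Ex_{\bx,\by}\bracket*{\ln \mcD_{x\mid \by}(\bx)} - \Ex_{\bx}\bracket*{\ln \mcD_x(\bx)},$$
where the second equality uses $\mcD_{xy}(x,y) = \mcD_y(y)\mcD_{x\mid y}(x)$ to fold the outer expectation into the first term. Defining $H(\bx) \coloneqq -\Ex_{\bx}[\ln \mcD_x(\bx)]$ and $H(\bx \mid \by) \coloneqq -\Ex_{\bx,\by}[\ln \mcD_{x\mid \by}(\bx)]$, this gives the identity $I(\bx;\by) = H(\bx) - H(\bx \mid \by)$.

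Next I would bound each piece separately. Since $\mcD_{x \mid \by}(x) \in [0,1]$ for every $x$ in the (finite) support of $\bx$, the integrand $-\ln \mcD_{x \mid \by}(\bx)$ is non-negative, giving $H(\bx \mid \by) \geq 0$ and therefore $I(\bx;\by) \leq H(\bx)$. For the entropy itself I would apply Jensen's inequality to the concave function $\ln$:
$$H(\bx) = \Ex_{\bx}\bracket*{\ln\tfrac{1}{\mcD_x(\bx)}} \leq \ln \Ex_{\bx}\bracket*{\tfrac{1}{\mcD_x(\bx)}} = \ln |\mathrm{supp}(\bx)| \leq \ln d,$$
using that $\Ex_{\bx}[1/\mcD_x(\bx)] = \sum_{x \in \mathrm{supp}(\bx)} 1 = |\mathrm{supp}(\bx)|$. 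Chaining the two bounds yields $I(\bx;\by) \leq \ln d$.

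This is a textbook information-theoretic fact, so there is no substantive obstacle; the only care required is that the excerpt defines $I$ through KL rather than entropy, so the short algebraic manipulation above is needed as a bridge. After the reduction to the discrete case via symmetry, all sums are over a finite set and no measure-theoretic subtleties with the (possibly continuous) variable $\by$ arise, since $\by$ only appears inside an outer expectation that is bounded above by dropping the non-negative conditional-entropy term.
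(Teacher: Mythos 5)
The paper states \Cref{fact:MI-finite} in its preliminaries as a standard fact without supplying a proof, so there is no argument in the paper to compare against. Your proof is correct and is the usual textbook derivation: use symmetry to put the finite-support variable in the first slot, decompose $I(\bx;\by)=H(\bx)-H(\bx\mid\by)$, drop the nonnegative conditional entropy (valid since $\bx$ is discrete), and bound $H(\bx)\le\ln d$ via Jensen applied to the concave logarithm.
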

\section{Bounding the distance from a product distribution: Proof of \Cref{lem:close-to-product}}
\label{sec:grouping-error}

In this section, we prove the following, restated for convenience.
\closeToProduct*

We structure this proof into three steps:
\begin{enumerate}
    \item In \Cref{subsec:cor-rounding-proof}, we prove our version of correlation rounding. As discussed in \Cref{subsec:overview-dist-to-product}, this is improvement of similar results from \cite{MR17,JKR19} that replaces an entropy term with a mutual information term.
    \item In \Cref{subsec:MI-low-degree}, we prove that this mutual information term is small when the adaptive adversary has low degree.
    \item In \Cref{subsec:group-error-combined}, we combine these two results to prove \Cref{lem:close-to-product}
\end{enumerate}

\subsection{Proof of \Cref{lem:our-correlation-rounding}}
\label{subsec:cor-rounding-proof}
In this proof, we will often be reasoning about mutual information (\Cref{def:MI}) and multivariate total correlation (\Cref{def:multi-cor}) of subsets of the random variable $\bS$ supported on $X^m$. It will be convenient to have the following concise notation: For any $a + b + c \leq m$, we define,
\begin{align*}
    \Cor_{\bS}(a) &\coloneqq \Ex_{\bA \sim \binom{[m]}{a}}\bracket*{\Cor(\bS_{\bA})}, \\
    \Cor_{\bS}(a\mid b) &\coloneqq \Ex_{\bA \sim \binom{[m]}{a}, \bB \sim \binom{[m] \setminus \bA}{b}}\bracket*{\Cor(\bS_{\bA}\mid\bS_{\bB})}, \\
    I_{\bS}(a;b) &\coloneqq \Ex_{\bA \sim \binom{[m]}{a}, \bB \sim \binom{[m] \setminus \bA}{b}}\bracket*{I(\bS_{\bA}; \bS_{\bB})}, \\
    I_{\bS}(a;b \mid c)&\coloneqq \Ex_{\bA \sim \binom{[m]}{a}, \bB \sim \binom{[m] \setminus \bA}{b},\bC \sim \binom{[m] \setminus (\bA\cup \bB)}{c}}\bracket*{I(\bS_{\bA}; \bS_{\bB} \mid \bS_{\bC})}.
\end{align*}
With this notation, we can succinctly restated \Cref{lem:our-correlation-rounding}.
\begin{lemma}[Restatement of \Cref{lem:our-correlation-rounding}]
    \label{lem:our-correlation-rounding-restate}
    For any random variable on $\bS$ on $X^m$ and integers $n + k_{\max} \leq m$, there exists some $k \leq k_{\max}$ for which,
    \begin{equation*}
        \Cor_{\bS}(n \mid k) \leq \frac{n(n-1)}{2(k_{\max} + 1)} \cdot I_{\bS}(1;n+k_{\max}-1).
    \end{equation*}
\end{lemma}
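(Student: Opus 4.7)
The plan is to reduce $\Cor_{\bS}(n \mid k)$ to a telescoping expression involving only the $I_{\bS}(1; \cdot)$ quantities, after which the claim follows by a short averaging argument over $k \in \set*{0, 1, \ldots, k_{\max}}$. The key identity to establish is
\[
    \Cor_{\bS}(n \mid k) = \sum_{i=0}^{n-1}\bracket*{I_{\bS}(1; k+i) - I_{\bS}(1; k)}.
\]

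To derive this identity, I would first rewrite $\Cor_{\bS}(n \mid k)$ as an expectation over a uniformly random ordered tuple of distinct indices $(\bi_1, \ldots, \bi_{n+k})$ in $[m]$, with $\bA \coloneqq \set*{\bi_1, \ldots, \bi_n}$ and $\bB \coloneqq \set*{\bi_{n+1}, \ldots, \bi_{n+k}}$; this is distributionally the same as the definition of $\Cor_{\bS}(n \mid k)$. Applying the chain rule for entropy against the ordering $\bi_1, \ldots, \bi_n$ of $\bA$ gives $\Cor(\bS_{\bA} \mid \bS_{\bB}) = \sum_{i=1}^{n} I(\bS_{\bi_i}; \bS_{\bi_{<i}} \mid \bS_{\bB})$, and then \Cref{fact:MI-chain} rewrites each summand as $I(\bS_{\bi_i}; \bS_{\bi_{<i}}, \bS_{\bB}) - I(\bS_{\bi_i}; \bS_{\bB})$. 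For each fixed $i$, symmetry of the random tuple means $(\bi_i, \bi_{<i} \cup \bi_{[n+1,n+k]})$ is distributed as (a uniform index, a uniform disjoint $(i-1+k)$-subset), so the expectations of the two terms are exactly $I_{\bS}(1; k+i-1)$ and $I_{\bS}(1; k)$, proving the identity.

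With the identity in hand, I would average $\Cor_{\bS}(n \mid k)$ over $k \in \set*{0, \ldots, k_{\max}}$. Swapping the order of summation, the inner telescoping collapse gives
\[
    \sum_{k=0}^{k_{\max}}\bracket*{I_{\bS}(1;k+i)-I_{\bS}(1;k)} = \sum_{\ell=k_{\max}+1}^{k_{\max}+i} I_{\bS}(1;\ell) - \sum_{\ell=0}^{i-1} I_{\bS}(1;\ell) \leq i \cdot I_{\bS}(1; k_{\max}+n-1),
\]
using that $I_{\bS}(1;\cdot)$ is monotone non-decreasing in its second argument (an immediate consequence of \Cref{fact:MI-increase} applied after coupling a size-$k$ subset inside a size-$k'$ subset). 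Summing over $i \in \set*{0, \ldots, n-1}$ yields $\sum_{k=0}^{k_{\max}} \Cor_{\bS}(n \mid k) \leq \tfrac{n(n-1)}{2} \cdot I_{\bS}(1; n+k_{\max}-1)$, and pigeonhole over the $k_{\max}+1$ values of $k$ delivers the claimed bound.

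The main obstacle is purely bookkeeping: verifying that the chain-rule expansion of $\Cor(\bS_{\bA} \mid \bS_{\bB})$ is well-defined under an arbitrary ordering on $\bA$, and that the symmetry step faithfully matches the marginals of the random ordered tuple with the definition of $I_{\bS}(a;b)$. The conceptual win over the prior results of \cite{MR17,JKR19}---which produce an upper bound proportional to $\Ex[H(\bS_{\bi})]$ rather than $I_{\bS}(1;\cdot)$---is that this telescoping derivation never loses the mutual-information structure (no relaxation via $I \leq H$ is used), which is precisely what will later permit \Cref{thm:main-general} to depend only on $\ln d$ rather than $\ln |X|$.
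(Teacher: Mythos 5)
Your proof is correct, and the argument is a sound one. The route is closely related to the paper's but differs in a small and interesting way in how the chain rule is applied. Both proofs begin by expanding $\Cor(\bS_{\bA}\mid\bS_{\bB})$ along a random ordering of $\bA$ (the paper's \Cref{prop:decompose-total-corr}) and then telescope over $k\in\{0,\ldots,k_{\max}\}$. Where they diverge is in the application of \Cref{fact:MI-chain}: the paper writes $I_{\bS}(i;1\mid k)=I_{\bS}(i;k+1)-I_{\bS}(i;k)$, so after telescoping it is left with $\sum_{i}I_{\bS}(i;k_{\max}+1)$ and needs the separate \Cref{prop:MI-sum} to convert each $I_{\bS}(i;k_{\max}+1)$ back into a term involving only $I_{\bS}(1;\cdot)$. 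You instead write the symmetric version $I_{\bS}(1;i\mid k)=I_{\bS}(1;i+k)-I_{\bS}(1;k)$, so your telescoping residual already lives entirely in the $I_{\bS}(1;\cdot)$ scale, and the only property you need afterward is monotonicity of $I_{\bS}(1;\cdot)$ (an immediate consequence of \Cref{fact:MI-increase}). The net effect is that your argument sidesteps \Cref{prop:MI-sum} entirely, which is a genuine (if modest) economy: the paper needs a "subadditivity"-style lemma, whereas you only need "more conditioning cannot decrease mutual information." Everything else---the pigeonhole over $k$, the final bound of $\tfrac{n(n-1)}{2(k_{\max}+1)}\,I_{\bS}(1;n+k_{\max}-1)$, and the exchangeability bookkeeping you flag as the "main obstacle"---is carried out correctly and matches the paper.
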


We assemble the ingredients used in the proof of \Cref{lem:our-correlation-rounding-restate}. First, is a simple application of the chain rule.
\begin{proposition}
    \label{prop:chain-sym}
    For any $a + b + c \leq m$ and $\bS$ on $X^m$,
    \begin{equation*}
        I_{\bS}(a;b \mid c) = I_{\bS}(a;b+c) - I_{\bS}(a;c).
    \end{equation*}
\end{proposition}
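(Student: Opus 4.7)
The plan is to deduce this identity from the standard pointwise chain rule (\Cref{fact:MI-chain}) combined with a symmetry argument about uniformly sampling disjoint index subsets.

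First I would fix any disjoint triple $(A,B,C)$ of subsets of $[m]$ with $|A|=a$, $|B|=b$, $|C|=c$ and apply \Cref{fact:MI-chain} to the random variables $\bS_A$, $\bS_B$, $\bS_C$. This gives the pointwise identity
\begin{equation*}
    I(\bS_A;\bS_{B\cup C}) \;=\; I(\bS_A;\bS_C) + I(\bS_A;\bS_B \mid \bS_C).
\end{equation*}
The proposition will then follow by taking expectations over an appropriate joint distribution on $(\bA,\bB,\bC)$ and observing that the marginals match the quantities in the definitions.

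The key combinatorial observation is the following symmetry: if we sample $\bA \sim \binom{[m]}{a}$, then $\bB \sim \binom{[m]\setminus \bA}{b}$, then $\bC \sim \binom{[m]\setminus(\bA\cup\bB)}{c}$, then (i) the pair $(\bA,\bC)$ has the same distribution as a uniform $\bA$ of size $a$ together with an independent uniform $\bC$ of size $c$ from $[m]\setminus\bA$, and (ii) the pair $(\bA,\bB\cup\bC)$ has the same distribution as a uniform $\bA$ of size $a$ together with a uniform $\bD$ of size $b+c$ from $[m]\setminus\bA$. Both facts are immediate from the standard identity that a uniform random subset of size $b+c$ can be generated by first picking a uniform subset of size $b$ and then a uniform subset of size $c$ in the remainder (equivalently, all orderings of disjoint $(B,C)$ of the prescribed sizes are equally likely).

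Combining these two observations, taking expectations term by term on both sides of the pointwise identity yields
\begin{equation*}
    I_{\bS}(a;b+c) \;=\; I_{\bS}(a;c) + I_{\bS}(a;b\mid c),
\end{equation*}
which rearranges to the claim. There is no real obstacle here beyond bookkeeping; the only thing to be careful about is that the distribution on $\bC$ used to define $I_{\bS}(a;c)$ on the right-hand side matches the marginal of $\bC$ obtained from the three-step sampling used to define $I_{\bS}(a;b\mid c)$, which is exactly the symmetry noted above.
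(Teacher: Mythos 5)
Your proposal is correct and matches the paper's proof exactly: both apply the chain rule (\Cref{fact:MI-chain}) pointwise to disjoint $A,B,C$ and then average over the uniform choice of $(\bA,\bB,\bC)$. You simply spell out the (routine) marginal-matching step that the paper leaves implicit.
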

\begin{proof}
    For any disjoint $A,B,C \subseteq [m]$, we apply \Cref{fact:MI-chain} which gives that
    \begin{equation*}
        I(\bS_{A}; \bS_{B}\mid\bS_{C}) = I(\bS_{A}; \bS_{B\cup C}) - I(\bS_{A};\bS_{C}).
    \end{equation*}
    Averaging over $\bA,\bB,$ and $\bC$ gives the desired result.
\end{proof}
Second, we show the following.
\begin{proposition}
    \label{prop:MI-sum}
    For any random variable $\bS$ supported on $X^m$ and $a + b \leq m$,
    \begin{equation*}
        I_{\bS}(a;b) \leq a \cdot I_{\bS}(1, a+b-1)
    \end{equation*}
\end{proposition}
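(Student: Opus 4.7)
The plan is to prove the bound by decomposing $I(\bS_{\bA}; \bS_{\bB})$ via the chain rule along an arbitrary ordering of $\bA$, bounding each piece by a single-coordinate mutual information, and then exploiting symmetry when averaging.

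First, fix any $\bA = \{i_1, \ldots, i_a\}$ and any disjoint $\bB$. By \Cref{fact:MI-chain} applied iteratively (writing $\bS_{\bA}$ as the tuple $(\bS_{i_1}, \ldots, \bS_{i_a})$),
\begin{equation*}
    I(\bS_{\bA};\bS_{\bB}) \;=\; \sum_{j=1}^{a} I\big(\bS_{i_j};\bS_{\bB} \,\big|\, \bS_{i_1},\ldots,\bS_{i_{j-1}}\big).
\end{equation*}
For each $j$, another application of the chain rule gives $I(\bS_{i_j}; \bS_{\bB} \mid \bS_{i_1},\ldots,\bS_{i_{j-1}}) = I(\bS_{i_j}; (\bS_{\bB},\bS_{i_1},\ldots,\bS_{i_{j-1}})) - I(\bS_{i_j};\bS_{i_1},\ldots,\bS_{i_{j-1}})$, which by \Cref{fact:MI-nonnegative} is at most $I(\bS_{i_j}; (\bS_{\bB},\bS_{i_1},\ldots,\bS_{i_{j-1}}))$. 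Then by \Cref{fact:MI-increase} we can enlarge the second argument to include all of $\bS_{\bA \cup \bB \setminus \{i_j\}}$, giving
\begin{equation*}
    I(\bS_{i_j};\bS_{\bB} \mid \bS_{i_1},\ldots,\bS_{i_{j-1}}) \;\le\; I\big(\bS_{i_j};\,\bS_{(\bA \cup \bB)\setminus\{i_j\}}\big).
\end{equation*}
Summing over $j$ yields, for every fixed disjoint $\bA, \bB$ of sizes $a, b$,
\begin{equation*}
    I(\bS_{\bA};\bS_{\bB}) \;\le\; \sum_{j=1}^{a} I\big(\bS_{i_j};\,\bS_{(\bA \cup \bB)\setminus\{i_j\}}\big).
\end{equation*}

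Now take the expectation of both sides over $\bA \sim \binom{[m]}{a}$ and $\bB \sim \binom{[m]\setminus \bA}{b}$. The left-hand side is $I_{\bS}(a;b)$. For the right-hand side, note that the joint distribution of $(\bA,\bB)$ is the same as first drawing a uniform $(a+b)$-subset $\bD$ and then partitioning it uniformly into a size-$a$ set $\bA$ and size-$b$ set $\bB$. Conditioned on $\bD$, each $i_j \in \bA$ is a uniformly random element of $\bD$, so by symmetry
\begin{equation*}
    \Ex_{\bA,\bB}\left[\sum_{j=1}^{a} I\big(\bS_{i_j};\,\bS_{(\bA \cup \bB)\setminus\{i_j\}}\big)\right]
    \;=\; a\cdot\Ex_{\bD,\,\bi \in \bD}\!\left[I\big(\bS_{\bi};\bS_{\bD \setminus \{\bi\}}\big)\right]
    \;=\; a\cdot I_{\bS}(1;a+b-1),
\end{equation*}
since the pair ``uniform $(a+b)$-subset together with a uniformly chosen element of it'' has the same distribution as ``uniform index $\bi$ together with a uniform disjoint $(a+b-1)$-subset.'' Combining these two displays gives the claimed inequality.

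The only real subtlety is the symmetry step that identifies $\Ex_{\bA,\bB}[I(\bS_{i_j};\bS_{(\bA\cup\bB)\setminus\{i_j\}})]$ with $I_{\bS}(1;a+b-1)$ independently of $j$; apart from that, the proof is a mechanical chain-rule computation combined with monotonicity of mutual information.
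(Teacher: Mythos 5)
Your proof is correct and is essentially the paper's proof: the paper establishes the same pointwise bound $I(\bx_1,\ldots,\bx_a;\by) \le \sum_{i\in[a]} I(\bx_i;\by,\bx_{\neq i})$ via the identical chain-rule/nonnegativity/monotonicity chain, and then deduces the proposition by "averaging over all $\bA,\bB$." The only difference is that you spell out that averaging step in detail, making explicit the symmetrization (drawing the $(a+b)$-set $\bD$ first, then a uniform element) that the paper leaves implicit.
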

\begin{proof}
    It suffices to show, for any random variables $\bx_1, \ldots, \bx_a$ and $\by$, that
    \begin{equation*}
        I(\bx_1, \ldots, \bx_a ; \by) \leq \sum_{i \in [a]}I(\bx_i; \by, \bx_{\neq i}),
    \end{equation*}
    as the desired result then follows by averaging over all $\bA, \bB$ and setting $\bx =\bS_{\bA}$ and $\by = \bS_{\bB}$. We bound,
    \begin{align*}
        I(\bx_1, \ldots, \bx_a ; \by) &= \sum_{i \in [a]} I(\bx_i ; \by \mid \bx_{< i}) \tag{\Cref{fact:MI-chain}}\\
        &=\sum_{i \in [a]} I(\bx_i ; \by, \bx_{< i}) - I(\bx_i ; \bx_{< i}) \tag{\Cref{fact:MI-chain} again}\\
        &\leq\sum_{i \in [a]} I(\bx_i ; \by, \bx_{< i})\tag{\Cref{fact:MI-nonnegative}}\\
        &\leq\sum_{i \in [a]} I(\bx_i ; \by, \bx_{< i}, \bx_{>i}), \tag{\Cref{fact:MI-increase}}
    \end{align*}
    which is exactly the desired bound.
\end{proof}
Third, we give an alternative form of multivariate total correlation.
\begin{proposition}[Multivariate total correlation in terms of mutual information]
    \label{prop:decompose-total-corr}
    For any random variables $\bx_1, \ldots, \bx_n$,
    \begin{equation*}
        \Cor(\bx_1, \ldots ,\bx_n) = \sum_{i \in [n-1]}I(\bx_{\leq i};\bx_{i+1} ).
    \end{equation*}
\end{proposition}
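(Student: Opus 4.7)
The plan is to unfold the definition of the multivariate total correlation as a KL divergence, apply the chain rule for conditional densities to factor the joint distribution, and recognize each summand as a mutual information.

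First, let $\mcD$ denote the joint distribution of $(\bx_1, \ldots, \bx_n)$ and $\mcD_i$ its $i$-th marginal, so that $\Cor(\bx_1, \ldots, \bx_n) = \KL{\mcD}{\mcD_1 \times \cdots \times \mcD_n}$. Using the tautological factorization
\[
\mcD(x_1, \ldots, x_n) = \prod_{i=1}^n \mcD(x_i \mid x_{<i}),
\]
the log-likelihood ratio inside the KL telescopes as
\[
\log \frac{\mcD(x_1, \ldots, x_n)}{\prod_{i=1}^n \mcD_i(x_i)} = \sum_{i=1}^n \log \frac{\mcD(x_i \mid x_{<i})}{\mcD_i(x_i)}.
\]
Taking expectation under $\mcD$ and using linearity, each summand becomes $\Ex_{\bx_{<i}}\!\left[\KL{\mcD_{\bx_i \mid \bx_{<i}}}{\mcD_i}\right]$, which is exactly $I(\bx_i; \bx_{<i})$ by \Cref{def:MI}. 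Hence
\[
\Cor(\bx_1, \ldots, \bx_n) = \sum_{i=1}^n I(\bx_i; \bx_{<i}).
\]

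Second, the $i=1$ term vanishes since $\bx_{<1}$ is empty, and reindexing $j = i-1$ together with the symmetry of mutual information (\Cref{fact:MI-symmetric}) yields
\[
\Cor(\bx_1, \ldots, \bx_n) = \sum_{j=1}^{n-1} I(\bx_{j+1}; \bx_{\leq j}) = \sum_{i \in [n-1]} I(\bx_{\leq i}; \bx_{i+1}),
\]
which is the claimed identity.

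There is no real obstacle here: the whole argument is a one-line consequence of the chain-rule factorization of the joint density together with the equivalence of the two standard forms of mutual information. The only care needed is a clean bookkeeping between the summation index $i$ used for the conditioning prefix $\bx_{<i}$ and the index $i$ appearing in the statement of the proposition as $\bx_{\leq i}$; the reindexing step above handles this.
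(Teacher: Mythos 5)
Your proof is correct and is essentially the same argument as the paper's: both expand the KL divergence defining $\Cor$, telescope the product against $\prod_i \mcD_i(x_i)$, and identify each term as a mutual information. The only cosmetic difference is direction — the paper starts from $\sum_i I(\bx_{\leq i}; \bx_{i+1})$, writes each term as $\KL{\mcD_{\leq i+1}}{\mcD_{\leq i}\times\mcD_{i+1}}$, and telescopes down to $\KL{\mcD}{\prod_i \mcD_i}$, while you start from $\KL{\mcD}{\prod_i \mcD_i}$, factor via conditional densities $\mcD(x_i\mid x_{<i})$, and recognize each summand using the conditional-expectation form of mutual information from \Cref{def:MI}.
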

\begin{proof}
    Throughout this proof, we use $\mcD$ to denote the distribution of $\bx$, $\mcD_i$ to denote the marginal distribution of $\bx_i$, and $\mcD_{\leq i}$ to denote the marginal distribution of $\bx_{\leq i}$.  Expanding the right-hand side,
    \begin{align*}
        \sum_{i \in [n-1]}I(\bx_{\leq i}; \bx_{i+1}) &= \sum_{i\in [n-1]} \KL{\mcD_{\leq i+1}}{\mcD_{\leq i} \times \mcD_{i+1}} \tag{Definition of mutual information}\\
        &= \sum_{i \in [n-1]} \Ex_{\bx \sim \mcD_{\leq i+1}}\bracket*{\ln \paren*{\frac{\mcD_{\leq i+1}(\bx)}{\mcD_{\leq i}(\bx_{\leq i}) \mcD_{i+1}(\bx_{i+1})}}} \tag{Definition of KL divergence}\\
        &= \Ex_{\bx \sim \mcD}\bracket*{\ln \paren*{ \prod_{i = 1}^{n-1}\frac{\mcD_{\leq i+1}(\bx_{\leq i+1})}{\mcD_{\leq i}(\bx_{\leq i}) \mcD_{i+1}(\bx_{i+1})}}}\tag{Linearity of expectation}\\
        &= \Ex_{\bx \sim \mcD}\bracket*{\ln \paren*{\frac{\mcD(\bx)}{\prod_{i = 1}^{n} \mcD_{i}(\bx_{i})}}}\tag{Cancellation of terms} \\
        &= \KL{\mcD}{\mcD_1 \times \mcD_2 \times \cdots \mcD_n}
    \end{align*}
    which is exactly $\Cor(\bx)$.
\end{proof}

We are now ready to prove the main result of this subsection.
\begin{proof}[Proof of \Cref{lem:our-correlation-rounding-restate}]
    We wish to show there is some $k \leq k_{\max}$ for which $\Cor_{\bS}(n \mid k)$ is small. For any such $k$, we have that
    \begin{align*}
        \Cor_{\bS}(n\mid k) &= \sum_{i \in [n-1]} I_{\bS}(i; 1 \mid k) \tag{\Cref{prop:decompose-total-corr}} \\
        &=\sum_{i \in [n-1]} I_{\bS}(i; k+1) - I_{\bS}(i;k) \tag{\Cref{prop:chain-sym}.}
    \end{align*}
    Summing up the above for all $k = 0, \ldots, k_{\max}$,  we obtain
    \begin{align*}
        \sum_{k \in [0, k_{\max}]}\Cor_{\bS}(n \mid k) &= \sum_{k \in [0, k_{\max}]}\sum_{i \in [n-1]} I_{\bS}(i;k+1) - I_{\bS}(i;k) \\
        &= \sum_{i \in [n-1]} I_{\bS}(i, k_{\max}+1) \tag{Cancel telescoping terms and $I_{\bS}(i,0) = 0$} \\
        &\leq \sum_{i \in [n-1]} i \cdot I_{\bS}(1, i + k_{\max}) \tag{\Cref{prop:MI-sum}}\\
        &\leq \sum_{i \in [n-1]} i \cdot I_{\bS}(1, n-1 + k_{\max}) \tag{\Cref{fact:MI-increase}}\\
        &= \frac{n(n-1)}{2} \cdot I_{\bS}(1, n-1 + k_{\max})
    \end{align*}
   Therefore, using the fact that minimum over all $k \in [0, k_{\max}]$ is at most the mean, there exists one choice of $k$ for which
   \begin{equation*}
       \Cor_{\bS}(n \mid k) \leq \frac{n(n-1)}{2(k_{\max} + 1)}  \cdot I_{\bS}(1, n-1 + k_{\max}). \qedhere
   \end{equation*}
\end{proof}

\subsection{Bounding the mutual information for low-degree cost functions}
\label{subsec:MI-low-degree}
The main result of this subsection is the following.

\begin{lemma}[Bounding mutual information for low-degree corruptions]
    \label{lem:bound-degree}
    For any $\bS' \sim \mcD_{\ad} \in \mathscr{D}_{\ad}^{m, \rho, \mcD}$ where $\rho$ is a degree-$d$ cost function and $r < m$, 
    \begin{equation*}
        \Ex_{\bi \sim \Unif([m]),\bB \sim \binom{[m] \setminus \set{\bi}}{r}}[I(\bS'_{\bi}; \bS'_{\bB})] \leq \frac{m}{m-r} \cdot \ln d.
    \end{equation*}
\end{lemma}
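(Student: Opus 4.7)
Plan: Fix a set $B \subseteq [m]$ of size $r$. I'll show $\sum_{i \in [m] \setminus B} I(\bS'_i; \bS'_B) \leq m \ln d$; averaging over the $m-r$ choices of $i$ then yields $\frac{m \ln d}{m-r}$, and by symmetry of the sampling distribution (the pair $(\bi,\bB)$ in the lemma has the same law as ``uniform $B$ of size $r$, then uniform $i \notin B$'') this remains valid after further averaging over $B$. The proof uses just two properties. First, $\bS \sim \mcD^m$ is i.i.d., so $\{\bS_i\}_{i \notin B}$ are mutually independent and $\bS_{-B} \perp \bS_B$. Second, because the budget constraint $\frac{1}{m}\sum_i \rho(\bS_i,\bS'_i) \leq 1$ is finite almost surely, $\rho(\bS_i, \bS'_i) < \infty$ with probability $1$; the degree-$d$ assumption then forces $\bS'_i$ to lie in a set of at most $d$ values given $\bS_i$, so $H(\bS'_i \mid \bS_i) \leq \ln d$ for every $i$ (this works in the randomized case too, because we only use the joint distribution on $(\bS,\bS')$).

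The argument proceeds by two ``$\bS$-insertions.'' For each $i \notin B$, applying \Cref{fact:MI-increase} and \Cref{fact:MI-chain},
\[
I(\bS'_i; \bS'_B) \leq I(\bS'_i, \bS_i; \bS'_B) = I(\bS_i; \bS'_B) + I(\bS'_i; \bS'_B \mid \bS_i) \leq I(\bS_i; \bS'_B) + \ln d.
\]
Summing over $i \notin B$ leaves the sum $\sum_{i \notin B} I(\bS_i; \bS'_B)$, which I bound by $I(\bS_{-B}; \bS'_B)$ using the standard identity ``for independent $\bx_i$, $\sum_i I(\bx_i; \by) \leq I(\bx; \by)$'' (derivable from the preamble facts via $I(\bx_i;\by\mid \bx_{<i}) \geq I(\bx_i;\by)$ when $\bx_i \perp \bx_{<i}$). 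For the second insertion I bring in $\bS_B$:
\[
I(\bS_{-B}; \bS'_B) \leq I(\bS_{-B}; \bS_B, \bS'_B) = I(\bS_{-B}; \bS_B) + I(\bS_{-B}; \bS'_B \mid \bS_B) = I(\bS_{-B}; \bS'_B \mid \bS_B) \leq H(\bS'_B \mid \bS_B),
\]
where $I(\bS_{-B}; \bS_B)=0$ by independence. Finally, subadditivity of conditional entropy together with $H(\bS'_j \mid \bS_B) \leq H(\bS'_j \mid \bS_j) \leq \ln d$ (conditioning reduces entropy, then the degree bound) gives $H(\bS'_B \mid \bS_B) \leq r \ln d$. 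Combining, $\sum_{i \notin B} I(\bS'_i; \bS'_B) \leq (m-r)\ln d + r \ln d = m \ln d$, as needed.

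The step that is easy to get wrong is the decomposition: a one-shot insertion of both $\bS_i$ and $\bS_B$ inside $I(\bS'_i; \bS'_B)$ only yields the pointwise bound $(r+1)\ln d$, whose average over $i$ is much worse than $\frac{m}{m-r}\ln d$ when $r$ is moderate. The essential observation is that the ``$i$-side'' and the ``$B$-side'' must be collapsed separately: the $i$-side contributes an unavoidable $\ln d$ per $i$ (totaling $(m-r)\ln d$), but the $B$-side contribution is collapsed by the independence of the clean $\bS_i$'s, which converts the sum $\sum_{i \notin B} I(\bS_i; \bS'_B)$ of $m-r$ mutual-information terms into a single joint MI bounded by $H(\bS'_B \mid \bS_B) \leq r \ln d$. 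This is the step where both structural properties (independence of $\bS$ and the degree bound) are used in tandem; every other manipulation is a routine application of the chain rule and monotonicity facts listed in the preliminaries.
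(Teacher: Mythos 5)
Your proof is correct and follows essentially the same approach as the paper's: insert the clean variables $\bS_i$ (and $\bS_B$) into $I(\bS'_i;\bS'_B)$, use independence of the clean sample to zero out the cross-term, and use the degree bound to collapse the conditional mutual informations, with the $r\ln d$ "$B$-side" term only bounded after summing over $i$ via \Cref{prop:MI-ind}. The only cosmetic differences are that you insert $\bS_i$ and $\bS_B$ in two separate steps (the paper does both at once and then decomposes by the chain rule) and you pass through conditional entropy rather than invoking \Cref{fact:MI-finite} directly, but the resulting bounds are identical term by term.
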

We will use the following.
\begin{proposition}
    \label{prop:MI-ind}
    Let $\bx_1, \ldots, \bx_n$ be independent random variables and $\by$ be any (not necessarily independent) random variable. Then,
    \begin{equation*}
        \sum_{i \in [n]}I(\bx_i; \by) \leq I((\bx_1, \ldots, \bx_n); \by).
    \end{equation*}
\end{proposition}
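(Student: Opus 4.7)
The plan is to apply the chain rule for mutual information (\Cref{fact:MI-chain}) to expand $I((\bx_1,\ldots,\bx_n);\by)$, and then use independence of the $\bx_i$'s to argue that each summand dominates the corresponding $I(\bx_i;\by)$. Iterating the chain rule, I would first write
\[
I((\bx_1,\ldots,\bx_n);\by) = \sum_{i \in [n]} I(\bx_i;\by \mid \bx_{<i}),
\]
so it suffices to show, for each $i$, that $I(\bx_i;\by \mid \bx_{<i}) \geq I(\bx_i;\by)$.

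For this per-coordinate inequality, I would apply the chain rule in its rearranged form to obtain
\[
I(\bx_i;\by \mid \bx_{<i}) = I(\bx_i;(\by,\bx_{<i})) - I(\bx_i;\bx_{<i}).
\]
The crucial step is that, because $\bx_1,\ldots,\bx_n$ are mutually independent, $\bx_i$ is independent of $\bx_{<i}$ and hence $I(\bx_i;\bx_{<i}) = 0$. Therefore
\[
I(\bx_i;\by \mid \bx_{<i}) = I(\bx_i;(\by,\bx_{<i})) \geq I(\bx_i;\by),
\]
where the inequality is just \Cref{fact:MI-increase} (adding more information cannot decrease mutual information). Summing over $i \in [n]$ produces the desired bound.

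I do not anticipate a real obstacle: the proof is a two-line manipulation combining the chain rule with the vanishing of the self-information term $I(\bx_i;\bx_{<i})$ under independence. The only thing to be careful about is the direction of the chain rule applications and not conflating independence of $\bx_i$ from $\bx_{<i}$ (which holds) with independence of $\bx_i$ from $(\by,\bx_{<i})$ (which generally fails, and is exactly the slack that makes the inequality potentially strict).
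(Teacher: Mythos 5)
Your proof is correct and is essentially identical to the paper's: both expand $I((\bx_1,\ldots,\bx_n);\by)$ via the chain rule, rewrite each conditional term as $I(\bx_i;(\by,\bx_{<i}))-I(\bx_i;\bx_{<i})$, kill the second term by independence, and lower-bound the first via \Cref{fact:MI-increase}.
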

\begin{proof}
    We bound,
    \begin{align*}
        I((\bx_1, \ldots, \bx_n); \by) &= \sum_{i \in [n]} I(\bx_i; \by \mid \bx_{<i}) \tag{\Cref{fact:MI-chain}} \\
        &= \sum_{i \in [n]} I\paren*{\bx_i; (\by, \bx_{<i})}-I(\bx_i; \bx_{<i}) \tag{\Cref{fact:MI-chain} again}\\
        &= \sum_{i \in [n]} I\paren*{\bx_i; (\by, \bx_{<i})}\tag{$\bx_i$ and $\bx_{<i}$ are independent}\\
        &\geq \sum_{i \in [n]} I\paren*{\bx_i; \by}\tag{\Cref{fact:MI-increase}}
    \end{align*}
\end{proof}

\begin{proof}[Proof of \Cref{lem:bound-degree}]
    Since $\mcD_{\ad}$ is a legal adaptive corruption, there is a coupling of $\bS \sim \mcD^m$ and $\bS'$ for which $\bS' \in \mcC_{\rho}(\bS)$ with probability $1$. In particular, this implies that that once we condition on $\bS_i$, there are at most $d$ choices for $\bS_i'$.

    For any fixed choice of $\bi=i, \bB=B$, we have that
    \begin{align*}
        I(\bS'_{i}; \bS'_{B})&\leq I\paren*{(\bS_i, \bS_i');(\bS_B, \bS_{B}')} \tag{\Cref{fact:MI-increase}} \\
        &= I\paren*{\bS_{i}; (\bS_B, \bS'_B)} + I\paren*{\bS'_i; (\bS_B, \bS'_B) \mid \bS_i} \tag{\Cref{fact:MI-chain}}\\
        &= I\paren*{\bS_{i}; \bS_B} + I\paren*{\bS_{i}; \bS_B' \mid \bS_B} + I\paren*{\bS'_i; (\bS_B, \bS'_B) \mid \bS_i}\tag{\Cref{fact:MI-chain} again.}
    \end{align*}
    The first term, $I\paren*{\bS_{i}; \bS_B}$, is zero because $\bS_i$ and $\bS_B$ are independent. The third term, $I\paren*{\bS'_i; (\bS_B, \bS'_B) \mid \bS_i}$, is at most $\ln d$ by \Cref{fact:MI-finite} and the fact that conditioned on $\bS_i$ there are only $d$ possible values for $\bS_i'$. For the remaining term, we bound it in expectation over $\bi$,
    \begin{align*}
        \Ex_{\bi \sim \Unif([m] \setminus B)}\bracket*{ I\paren*{\bS_{\bi}; \bS_B' \mid \bS_B}}&= \frac{1}{m-r} \cdot \sum_{i  \in ([m] \setminus B)}I\paren*{\bS_{i}; \bS_B' \mid \bS_B}\\
        &\leq \frac{1}{m-r}  I\paren*{\bS_{[m] \setminus B}; \bS_B' \mid \bS_B} \tag{\Cref{prop:MI-ind}}\\
        &\leq \frac{r \ln d}{m-r}  \tag{\Cref{fact:MI-finite} and $\bS_B'$ has $\leq d^r$ options given $\bS_B$}.
    \end{align*}

    Combining these bounds, we have that
    \begin{equation*}
         \Ex_{\bi \sim \Unif([m]),\bB \sim \binom{[m] \setminus \set{\bi}}{r}}[I(\bS'_{\bi}; \bS'_{\bB})] \leq \ln d + \frac{r \ln d}{m-r} = \frac{m \ln d}{m-r}. \qedhere 
    \end{equation*}
\end{proof}

\subsection{Proof of \Cref{lem:close-to-product}}
\label{subsec:group-error-combined}

\begin{proof}
    Since $\mcD_{\ad} \in \mathscr{D}_{\ad}^{m, \rho, \mcD}$, it is possible to couple $\bS \sim \mcD^m$ and $\bS' \sim \mcD_{\ad}$ so that $\bS' \in \mcC_{\rho}(\bS)$ with probability $1$. Next, draw a uniform permutation $\bsigma:[m] \to [m]$ and define the random variables $\bT$ and $\bT'$ each over $X^m$ as
    \begin{equation*}
        \bT_i = \bS_{\bsigma(i)} \quad\quad\text{and}\quad\quad\bT'_i = \bS'_{\bsigma(i)}.
    \end{equation*}
    Note that the marginal distribution of $\bT$ is still $\mcD^m$, and it still holds that $\bT' \in \mcC_{\rho}(\bT)$ with probability $1$. Therefore, the distribution of $\bT'$ is still in $\mathscr{D}_{\ad}^{m, \rho, \mcD}$.

    The upshot is now the distribution of all permutations of $\bT'$ are identical. Therefore, setting $\bc = \bT'_{[\leq k]}$ and $\bR \coloneqq \bT'_{[k+1, k+n]}$ we have that the distribution of $(\bR, \bc)$ is exactly $\Grouped_{n,k}(\mcD_{\ad})$. 
    Therefore,
    \begin{equation*}
        \mcD_{\goal}(c) = \Ex[\Unif(\bR) \mid \bc=c].
    \end{equation*}
    Reusing permutation invariance of $\bT$ (and therefore $\bR$), we have that the distribution of $\bR_i$ is the same for all $i \in [n]$ (even after conditioning on $\bc=c)$, and therefore all equal to $\mcD_{\goal}(c)$. As a result, we have that
    \begin{equation*}
        \Ex_{\bc}\bracket*{\KL{\Group_n(\mcD_{\ad}, \bc)}{\mcD_{\goal}(\bc)^n}} = \Cor\paren*{\bT'_{[k+1, k+n]}\mid \bT'_{\leq k}}.
    \end{equation*}
    We then choose $k \leq k_{\max}$ to be the parameter selected by \Cref{lem:our-correlation-rounding} and bound
    \begin{align*}
        \Cor\paren*{\bT'_{[k+1, k+n]}\mid \bT'_{\leq k}} &= \Ex_{\substack{\bA \sim \binom{[m]}{n} \\\bB \sim \binom{[m] \setminus \bA}{k}}}\bracket*{\Cor(\bT'_{\bA} \mid \bT'_{\bB})} \tag{Permutation invariance of $\bT'$}\\
        &\leq \frac{n(n-1)}{2(k_{\max}+1)} \cdot \Ex_{\substack{\bi \sim \Unif([m]) \\\bB \sim \binom{[m] \setminus \set{\bi}}{n+k-1}}}[I(\bT'_{\bi}; \bT'_{\bB})] \tag{\Cref{lem:our-correlation-rounding}} \\
        &\leq  \frac{n(n-1)}{2(k_{\max}+1)} \cdot 2 \ln d \tag{\Cref{lem:bound-degree} and $n+k_{\max} \leq m/2$}
    \end{align*}
    We are ready to give the desired bound:
    \begin{align*}
        &\Ex_{\bc}\bracket*{\TV\paren*{\Group_n(\mcD_{\ad}, \bc),\mcD_{\goal}(\bc)^n}} \\
        &\quad\quad\quad\leq \Ex_{\bc}\bracket*{\sqrt{\frac{\KL{\Group_n(\mcD_{\ad}, \bc)}{\mcD_{\goal}(\bc)^n}}{2}}}  \tag{\Cref{fact:Pinkser}}\\
        &\quad\quad\quad\leq \sqrt{\Ex_{\bc}\bracket*{\frac{\KL{\Group_n(\mcD_{\ad}, \bc)}{\mcD_{\goal}(\bc)^n}}{2}}}\tag{Jensen's inequality}\\
        &\quad\quad\quad\leq \sqrt{\frac{n(n-1) \ln d}{2(k_{\max} +1)}},
    \end{align*}
    which is easily upper bounded by $\sqrt{\frac{n^2 \ln d}{2k_{\max}}}$.
\end{proof}

\section{Rounding to a legal corruption: Proof of \Cref{lem:rounding-our-grouping}}
In this section, we prove the following, restated for convenience.
\roundOurGroup*

Our proof of \Cref{lem:rounding-our-grouping} will first prove the following statement, which applies to any grouping strategy.

\begin{lemma}[Rounding to few groups incurs small error]
    \label{lem:round-with-few-groups}
    For any base distribution $\mcD$, sample size $m$, cost function $\rho$, adaptive corruption $\mcD_{\ad}\in \mathscr{D}_{\ad}^{m, \rho, \mcD}$, draw $\bS' \sim \mcD_{\ad}$ and let $\bg$ be any random variable supported on a set $G$ specifying which group $\bS'$ falls in. Then,
    \begin{equation*}
        \Ex_{\bg}\bracket*{\inf_{\mcD' \in \mcC_{\rho}(\mcD)}\set*{\dtv(\mcD', \mcD_{\goal}'(\bg))}} \leq \sqrt{\frac{\ln (|G|)}{2m}}.
    \end{equation*}
    where $\mcD_{\goal}'(g) \coloneqq \Ex[\Unif(\bS') \mid \bg=g]$.
\end{lemma}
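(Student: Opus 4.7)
The plan is to reduce the problem to two independent bounds: (i) an information-theoretic bound showing that the conditional ``clean'' distribution (averaged over coordinates) is close to $\mcD$ given the group label $\bg$, and (ii) a coupling-based rounding that transfers this closeness to closeness between $\mcD_{\goal}'(\bg)$ and a valid oblivious corruption. Since $\mcD_{\ad}\in\mathscr{D}_{\ad}^{m,\rho,\mcD}$, couple $\bS\sim\mcD^m$ with $\bS'\sim\mcD_{\ad}$ so that $\tfrac{1}{m}\sum_i\rho(\bS_i,\bS'_i)\leq 1$ almost surely, and view $\bg$ as a (possibly randomized) function of $\bS'$. Define $\tilde{\mcE}_g \coloneqq \Ex[\Unif(\bS)\mid \bg=g]$; coupling $(\bx,\bz)\coloneqq(\bS_\bi,\bS'_\bi)$ for an independent uniform $\bi\in[m]$ then gives $\mcD_{\goal}'(g)\in\mcC_{\rho}(\tilde{\mcE}_g)$ for every $g$, because $\Ex[\rho(\bx,\bz)\mid\bg=g]=\Ex[\tfrac{1}{m}\sum_i\rho(\bS_i,\bS'_i)\mid\bg=g]\leq 1$.

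For step (i), since $\bg$ takes values in a set of size $|G|$, $I(\bg;\bS)\leq H(\bg)\leq \ln|G|$. The chain rule (\Cref{fact:MI-chain}) together with the independence of $\bS_1,\dots,\bS_m$ yields
\begin{equation*}
    \ln|G|\;\geq\; I(\bg;\bS)\;=\;\sum_{i\in[m]} I(\bg;\bS_i\mid \bS_{<i})\;\geq\;\sum_{i\in[m]} I(\bg;\bS_i),
\end{equation*}
where the last step uses $H(\bS_i\mid\bS_{<i})=H(\bS_i)$ and the fact that conditioning only reduces entropy. Writing $\mcE_g^{(i)}$ for the conditional law of $\bS_i$ given $\bg=g$, this becomes $\Ex_{\bg}\Ex_{\bi\sim\Unif([m])}[\KL{\mcE_{\bg}^{(\bi)}}{\mcD}]\leq \ln|G|/m$. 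Convexity of $\KL{\cdot}{\mcD}$ gives $\KL{\tilde{\mcE}_g}{\mcD}\leq \Ex_{\bi}[\KL{\mcE_g^{(\bi)}}{\mcD}]$, and Pinsker's inequality (\Cref{fact:Pinkser}) followed by Jensen's inequality delivers
\begin{equation*}
    \Ex_{\bg}[\TV(\tilde{\mcE}_{\bg},\mcD)]\;\leq\;\sqrt{\tfrac{1}{2}\,\Ex_{\bg}[\KL{\tilde{\mcE}_{\bg}}{\mcD}]}\;\leq\;\sqrt{\tfrac{\ln|G|}{2m}}.
\end{equation*}

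For step (ii), fix $g$ and glue the two couplings: a TV-optimal coupling between $\tilde{\mcE}_g$ and $\mcD$, and the cost-$\leq 1$ coupling between $\tilde{\mcE}_g$ and $\mcD_{\goal}'(g)$ constructed above. This produces a joint $(\bx,\by,\bz)$ with $\bx\sim\tilde{\mcE}_g$, $\by\sim\mcD$, $\bz\sim\mcD_{\goal}'(g)$. Set $\bz'\coloneqq\bz$ when $\bx=\by$ and $\bz'\coloneqq\by$ otherwise. Then $(\by,\bz')$ witnesses that the law $\mcD'(g)$ of $\bz'$ belongs to $\mcC_{\rho}(\mcD)$, since its transport cost is $\Ex[\rho(\bx,\bz)\mathbf{1}[\bx=\by]]\leq\Ex[\rho(\bx,\bz)]\leq 1$ (finite a.s.\ because the expectation is), and $\TV(\mcD'(g),\mcD_{\goal}'(g))\leq \Pr[\bz'\neq\bz]\leq \Pr[\bx\neq\by]=\TV(\tilde{\mcE}_g,\mcD)$. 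Combining with step (i) proves the lemma.

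The main obstacle is the rounding step (ii): because $\rho$ may take the value $\infty$, one cannot naively declare $\mcD$ a small perturbation of $\tilde{\mcE}_g$ and inherit the cost budget on a transport from $\mcD$. The resolution is the swap $\bz\mapsto\bz'$: it only alters the transport destination on the event $\bx\neq\by$, where it falls back to the zero-cost identity edge $\rho(y,y)=0$, preserving the budget while paying at most $\TV(\tilde{\mcE}_g,\mcD)$ in TV distance.
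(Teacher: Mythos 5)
Your proof is correct and follows the same two-phase skeleton as the paper's: first bound the average TV distance between $\mcD$ and the conditional clean averages $\tilde{\mcE}_g \coloneqq \Ex[\Unif(\bS)\mid\bg=g]$, then transport that bound to the corrupted averages via the Lipschitz/coupling swap in step (ii), which is essentially identical to the paper's \Cref{claim:lipschitz}. Where you genuinely diverge is the first phase. The paper proves \Cref{claim:round-no-corruption} by writing the TV distance as a supremum of test functions, noting each empirical average $\tfrac{1}{m}\sum_i T_g(\bS_i)$ is sub-Gaussian with variance proxy $\tfrac{1}{4m}$, and invoking the expected-max-of-sub-Gaussians bound over the $|G|$ tests. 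You instead take an information-theoretic route: $I(\bg;\bS)\leq\ln|G|$, super-additivity over the independent coordinates (which is exactly the paper's own \Cref{prop:MI-ind}, so you could cite it rather than re-derive it via entropy, which skirts regularity issues when $X$ is continuous), convexity of $\KL{\cdot}{\mcD}$ in its first argument to pass from the per-coordinate conditionals to their mixture $\tilde{\mcE}_g$, and finally Pinsker plus Jensen. Both arrive at exactly $\sqrt{\ln|G|/(2m)}$. The concentration proof is arguably more self-contained (no appeal to KL convexity) and generalizes painlessly to real-valued test statistics, while your information-theoretic proof is shorter given the machinery already developed in \Cref{sec:preliminaries,sec:grouping-error}, and makes the appearance of $\ln|G|$ structurally transparent as an entropy budget. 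Either is a valid proof of the lemma.
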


We begin by proving \Cref{lem:round-with-few-groups}. Later, in \Cref{subsection:better-grouping-bound}, we show how to use it to derive \Cref{lem:rounding-our-grouping}.

\subsection{Proof of \Cref{lem:round-with-few-groups}}


The proof of \Cref{lem:round-with-few-groups} is broken into two pieces. First, we prove it when the adversary cannot make any corruptions:
\begin{claim}
    \label{claim:round-no-corruption}
    For any distribution $\mcD$, draw $\bS \sim \mcD^m$ and $\bg$ be any random variable on the same probability space as $\bS$ supported on a set $G$. Then,
    \begin{equation*}
        \Ex_{\bg}[\TV(\mcD,\mcD_{\goal}(\bg))] \leq \sqrt{\frac{\ln (|G|)}{2m}} \quad\quad\text{where}\quad \mcD_{\goal}(g) \coloneqq \Ex[\Unif(\bS) \mid \bg=g].
    \end{equation*}
\end{claim}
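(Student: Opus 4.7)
The plan is to apply Pinsker's inequality combined with Jensen, which reduces the goal to upper bounding the expected KL divergence $\Ex_{\bg}\bracket*{\KL{\mcD_{\goal}(\bg)}{\mcD}}$. I will recognize this expected KL as a mutual information and then control it using the fact that $\bg$ is supported on a set of size only $|G|$, together with the independence of the coordinates of $\bS$.

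First, I would introduce an auxiliary index $\bi \sim \Unif([m])$ drawn independently of $(\bS, \bg)$ and set $\bx \coloneqq \bS_{\bi}$. Since $\bS \sim \mcD^m$, the marginal distribution of $\bx$ is $\mcD$; and conditioned on $\bg = g$, the distribution of $\bx$ is exactly $\mcD_{\goal}(g) = \Ex[\Unif(\bS) \mid \bg = g]$ by construction. Therefore
\[
    \Ex_{\bg}\bracket*{\KL{\mcD_{\goal}(\bg)}{\mcD}} \;=\; I(\bx; \bg).
\]
Next, I would bound $I(\bx; \bg)$ by $\ln(|G|)/m$. Since $\bi$ is independent of $\bg$, combining \Cref{fact:MI-increase}, \Cref{fact:MI-symmetric}, and \Cref{fact:MI-chain} yields
\[
    I(\bx; \bg) \;\le\; I\paren*{(\bx, \bi);\, \bg} \;=\; I(\bi; \bg) + I(\bx; \bg \mid \bi) \;=\; \frac{1}{m} \sum_{i=1}^{m} I(\bS_i; \bg),
\]
using that $I(\bi;\bg) = 0$ and that conditioning on $\bi = i$ turns $\bx$ into $\bS_i$. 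Because $\bS_1,\ldots,\bS_m$ are independent, \Cref{prop:MI-ind} gives $\sum_i I(\bS_i; \bg) \le I(\bS; \bg)$, and \Cref{fact:MI-finite} bounds this by $\ln|G|$ since $\bg$ takes at most $|G|$ distinct values.

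Finally, combining Pinsker's inequality (\Cref{fact:Pinkser}) with Jensen applied to the concave function $\sqrt{\cdot}$ gives
\[
    \Ex_{\bg}\bracket*{\TV(\mcD, \mcD_{\goal}(\bg))} \;\le\; \Ex_{\bg}\bracket*{\sqrt{\tfrac{1}{2}\KL{\mcD_{\goal}(\bg)}{\mcD}}} \;\le\; \sqrt{\tfrac{1}{2}\Ex_{\bg}\bracket*{\KL{\mcD_{\goal}(\bg)}{\mcD}}} \;\le\; \sqrt{\tfrac{\ln|G|}{2m}},
\]
which is exactly the claimed bound. There is no serious obstacle here; the only subtlety is to take the KL in the direction $\KL{\mcD_{\goal}(\bg)}{\mcD}$ so that the expected KL is literally the mutual information $I(\bx;\bg)$, and to introduce the independent index $\bi$ so that the $1/m$ saving from the i.i.d.\ structure of $\bS$ can be harvested cleanly.
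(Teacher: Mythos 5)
Your proof is correct, but it takes a genuinely different route from the paper. The paper's argument works directly with the variational form of total variation distance: it writes $\TV(\mcD_{\goal}(g), \mcD)$ as a supremum over test functions $T_g$, recenters each $T_g$ to have zero mean under $\mcD$, observes that for a fixed $g$ the quantity $\frac{1}{m}\sum_i T'_g(\bS_i)$ is sub-Gaussian with variance proxy $1/(4m)$ (since the $\bS_i$ are i.i.d.\ and the $T'_g$ take values in an interval of length $1$), and then applies the standard bound on the expected maximum of $|G|$ sub-Gaussian random variables, yielding exactly $\sqrt{\ln|G|/(2m)}$. Your argument instead goes through information theory: Pinsker plus Jensen reduces the task to bounding $\Ex_{\bg}\bracket{\KL{\mcD_{\goal}(\bg)}{\mcD}}$; by introducing an independent uniform index $\bi$ and setting $\bx = \bS_{\bi}$, this expected KL is recognized as $I(\bx;\bg)$; and then the chain rule, the independence of the $\bS_i$, the subadditivity bound $\sum_i I(\bS_i;\bg) \le I(\bS;\bg)$, and the support bound $I(\bS;\bg) \le \ln|G|$ give $I(\bx;\bg) \le \ln|G|/m$. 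The two proofs arrive at the same constant, and both are about the same length. Your proof has the aesthetic advantage of reusing precisely the information-theoretic toolkit the paper already develops and deploys for \Cref{lem:close-to-product} (Pinsker, the chain rule for mutual information, \Cref{prop:MI-ind}, \Cref{fact:MI-finite}), making the two halves of the analysis more unified; the paper's proof is self-contained on the probability side and does not require passing through KL divergence. Both are valid, and neither has a gap.
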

This is a special case of \Cref{lem:round-with-few-groups} corresponding to when the cost function is simply
\begin{equation*}
    \rho(x,y)  = \begin{cases}
        0&\text{if }x=y\\
        \infty&\text{if }x\neq y.
    \end{cases}
\end{equation*}

Then, we will use the following to show that the special case in \Cref{claim:round-no-corruption} is sufficient to prove the more general case.

\begin{claim}[Lipschitzness of corruptions]
    \label{claim:lipschitz}
    For any cost function $\rho$, distributions $\mcD_1$ and $\mcD_2$, and any $\mcD_1' \in \mcC_{\rho}(\mcD_1)$, there is some $\mcD_2' \in \mcC_{\rho}(\mcD_2)$ for which
    \begin{equation*}
        \TV(\mcD_1', \mcD_2') \leq \TV(\mcD_1, \mcD_2).
    \end{equation*}
\end{claim}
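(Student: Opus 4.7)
The plan is to construct $\mcD_2'$ by coupling a corruption of $\mcD_2$ to the given corruption of $\mcD_1$ in a way that ``copies'' what the first corruption does whenever possible, and otherwise does nothing (incurring zero cost). The key observation is that, by Definition~\ref{def:cost-function}, the adversary is always allowed to leave a point unchanged at cost $0$, so on the part of the probability space where the two base distributions disagree under their optimal coupling, the adversary for $\mcD_2$ can just ``be silent''.

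Concretely, I would proceed in the following steps. First, fix a maximal coupling $(\bx_1, \bx_2)$ of $\mcD_1$ and $\mcD_2$ so that $\Pr[\bx_1 \neq \bx_2] = \TV(\mcD_1, \mcD_2)$, which exists by the coupling characterization of TV distance in Definition~\ref{def:tv-distance}. Second, extract from the hypothesis $\mcD_1' \in \mcC_\rho(\mcD_1)$ a conditional distribution (``channel'') $\mcK(\cdot \mid x)$ such that drawing $\bx_1' \sim \mcK(\cdot \mid \bx_1)$ with $\bx_1 \sim \mcD_1$ produces a $\bx_1' \sim \mcD_1'$ and satisfies $\Ex[\rho(\bx_1, \bx_1')] \leq 1$. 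Third, enlarge the joint space so that $(\bx_1, \bx_2, \bx_1')$ are jointly coupled, with $\bx_1'$ drawn from $\mcK(\cdot \mid \bx_1)$ conditionally independently of $\bx_2$ given $\bx_1$. Fourth, define
\[
    \bx_2' \;\coloneqq\; \begin{cases} \bx_1' & \text{if } \bx_1 = \bx_2,\\ \bx_2 & \text{if } \bx_1 \neq \bx_2,\end{cases}
\]
and let $\mcD_2'$ be the marginal distribution of $\bx_2'$.

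The two verifications are then both short. For the cost bound, when $\bx_1 \neq \bx_2$ we have $\rho(\bx_2, \bx_2') = \rho(\bx_2, \bx_2) = 0$ by Definition~\ref{def:cost-function}, and when $\bx_1 = \bx_2$ we have $\rho(\bx_2, \bx_2') = \rho(\bx_1, \bx_1')$; taking expectations gives $\Ex[\rho(\bx_2, \bx_2')] \leq \Ex[\rho(\bx_1, \bx_1')] \leq 1$, so $\mcD_2' \in \mcC_\rho(\mcD_2)$. For the TV bound, observe that $(\bx_1', \bx_2')$ is a coupling of $\mcD_1'$ and $\mcD_2'$ in which $\bx_1' \neq \bx_2'$ only if $\bx_1 \neq \bx_2$, so
\[
    \TV(\mcD_1', \mcD_2') \;\leq\; \Pr[\bx_1' \neq \bx_2'] \;\leq\; \Pr[\bx_1 \neq \bx_2] \;=\; \TV(\mcD_1, \mcD_2).
\]

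I do not anticipate a real obstacle: the statement is essentially a data-processing inequality for the channel view of $\mcC_\rho$, and the only subtlety is that one cannot simply apply the same channel to $\mcD_2$ (that could inflate the cost), which is why the construction above uses the ``copy on agreement, stay put on disagreement'' trick to exploit $\rho(x,x)=0$.
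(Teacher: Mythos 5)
Your proof is correct and follows essentially the same route as the paper's: both fix a coupling of $\mcD_1,\mcD_2$, draw the corrupted point $\bx_1'$ via the channel implicit in $\mcD_1'\in\mcC_\rho(\mcD_1)$ conditionally independently of $\bx_2$ given $\bx_1$, define $\bx_2'$ by ``copy $\bx_1'$ on agreement, leave $\bx_2$ unchanged on disagreement'' (exploiting $\rho(x,x)=0$), and then verify the cost and TV bounds exactly as you do. The only cosmetic difference is that you fix a maximal coupling up front, whereas the paper argues for an arbitrary coupling and then takes the infimum—same argument either way.
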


\begin{proof}[Proof of \Cref{lem:round-with-few-groups} using \Cref{claim:round-no-corruption,claim:lipschitz}]
    Since $\mcD_{\ad}$ is a valid adaptive corruption, it is possible to couple a sample $\bS \sim \mcD^m$ with $\bS' \sim \mcD_{\ad}$ so that $\bS' \in \mcC_{\rho}(\bS)$ with probability $1$. Then, define $\mcD_{\goal}(g) \coloneqq \Ex[\Unif(\bS) \mid \bg=g]$. Then,~\Cref{claim:round-no-corruption} gives that
    \begin{equation*}
        \Ex_{\bg}[\TV(\mcD,\mcD_{\goal}(\bg))] \leq \sqrt{\frac{\ln (|G|)}{2m}} \quad\quad\text{where}\quad \mcD_{\goal}(g) \coloneqq \Ex[\Unif(\bS) \mid \bg=g].
    \end{equation*}
    It therefore suffices to show that, for every $g \in G$, there is some $\mcD' \in \mcC_{\rho}(\mcD)$ for which
    \begin{equation*}
        \TV(\mcD', \mcD_{\goal}'(g)) \leq \TV(\mcD, \mcD_{\goal}(g)).
    \end{equation*}
    Using \Cref{claim:lipschitz}, this is implied by showing that $\mcD_{\goal}'(g) \in \mcC_{\rho}(\mcD_{\goal}(g))$. This means showing a coupling of $\bx \sim \mcD_{\goal}(g)$ and $\bx' \sim \mcD_{\goal}'(g)$ for which $\Ex[\rho(\bx, \bx')] \leq 1$.

    For this coupling, first condition on $\bg= g$ and then take a uniform index $\bi \sim \Unif([m])$. We will set $\bx \coloneqq \bS_i$ and $\bx' \coloneqq \bS'_i$. By definition, the marginal distribution of $\bx$ is $\mcD_{\goal}(g)$ and of $\bx'$ is $\mcD_{\goal}'(g)$. Finally,

    \begin{equation*}
        \Ex\bracket*{\rho(\bx, \bx')} = \Ex\bracket*{\frac{1}{m} \sum_{i \in [m]}\rho(\bS_i, \bS_i')} \leq 1. \qedhere
    \end{equation*}
\end{proof}

\subsubsection{The special case when the adversary cannot corrupt: Proof of \Cref{claim:round-no-corruption}}

This proof will use a few standard facts about sub-Gaussian random variables. We refer the reader to \cite{Ver18Book} for a more thorough treatment of sub-Gaussian random variables.
\begin{definition}[Sub-Gaussian random variables]
    \label{def:sub-gaussian}
    A random variable $\bz$ is said to be \emph{sub-Gaussian} with variance-proxy $\sigma^2$ if, for all $\lambda \in \R$,
    \begin{equation*}
        \Ex[e^{\lambda \bz}] \leq \exp\paren*{\tfrac{\lambda^2 \sigma^2}{2}}.
    \end{equation*}
\end{definition}
\begin{fact}[Mean of bounded and independent random variables is sub-Gaussian]
    \label{fact:Ber-sub-Gaussian}
    Let $\bz_1, \ldots, \bz_m$ each be independent mean-$0$ random variables bounded on $[a,a+1]$ for some $a \in \R$. Then $\bZ \coloneqq \frac{\bz_1 + \cdots + \bz_m}{m}$ is sub-Gaussian with variance-proxy $\frac{1}{4m}$.
\end{fact}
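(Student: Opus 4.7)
The plan is to prove \Cref{fact:Ber-sub-Gaussian} in the standard two-step fashion: first establish Hoeffding's lemma (a single bounded mean-zero variable is sub-Gaussian with variance proxy $1/4$), then use multiplicativity of moment-generating functions under independence to get the variance proxy $1/(4m)$ for the mean of $m$ such variables.

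\textbf{Step 1 (single-variable bound).} I first show that if $\bz$ is mean-zero and supported on $[a, a+1]$, then $\Ex[e^{\lambda \bz}] \leq e^{\lambda^2/8}$ for every $\lambda \in \R$. By convexity of $t \mapsto e^{\lambda t}$ on $[a,a+1]$,
\begin{equation*}
    e^{\lambda \bz} \leq \frac{a+1-\bz}{1} \cdot e^{\lambda a} + \frac{\bz - a}{1} \cdot e^{\lambda(a+1)}.
\end{equation*}
Taking expectations and using $\Ex[\bz]=0$ gives $\Ex[e^{\lambda \bz}] \leq (a+1)e^{\lambda a} - a e^{\lambda(a+1)} = e^{\lambda a}\bigl((a+1) - a e^{\lambda}\bigr)$. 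Writing this as $e^{\varphi(\lambda)}$, one checks $\varphi(0) = \varphi'(0) = 0$ and $\varphi''(\lambda) \leq 1/4$ for all $\lambda$ (the latter being the familiar bound $p(1-p) \leq 1/4$ applied to $p = -a e^{\lambda}/((a+1) - a e^{\lambda})$). Then Taylor's theorem yields $\varphi(\lambda) \leq \lambda^2/8$, which is exactly the sub-Gaussian bound with variance proxy $1/4$.

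\textbf{Step 2 (tensorization).} Since $\bz_1, \ldots, \bz_m$ are independent, for any $\lambda \in \R$,
\begin{equation*}
    \Ex\bigl[e^{\lambda \bZ}\bigr] = \Ex\bigl[e^{(\lambda/m)\sum_i \bz_i}\bigr] = \prod_{i=1}^{m} \Ex\bigl[e^{(\lambda/m) \bz_i}\bigr] \leq \prod_{i=1}^{m} \exp\!\left(\frac{(\lambda/m)^2}{8}\right) = \exp\!\left(\frac{\lambda^2}{8m}\right),
\end{equation*}
where the inequality applies Step 1 to each $\bz_i$. Comparing with \Cref{def:sub-gaussian}, this is the sub-Gaussian bound with variance proxy $\sigma^2 = 1/(4m)$, proving the fact.

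\textbf{Main obstacle.} The only mildly delicate part is the upper bound $\varphi''(\lambda) \leq 1/4$ in Step 1; after that, everything is a clean application of independence. An alternative route that avoids this computation is to shift $\bz$ so that $\bz - \Ex[\bz]$ is bounded in an interval of length $1$ and then invoke a symmetrization argument with an independent Rademacher variable, but the direct calculus proof above is cleaner and is what I would write out.
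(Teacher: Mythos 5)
Your proof is correct and is the standard derivation of this fact via Hoeffding's lemma: the convexity bound on $e^{\lambda z}$, the check that $\varphi''(\lambda) = p(1-p) \le 1/4$ where $p = -ae^{\lambda}/\bigl((a+1) - ae^{\lambda}\bigr) \in [0,1]$ (which holds because $a \in [-1,0]$ by mean-zero-ness), and then MGF tensorization across the independent $\bz_i$. The paper does not itself prove this fact --- it states it as a known result and points to \cite{Ver18Book} --- so there is no in-paper argument to compare against; your writeup is a correct self-contained proof of exactly the standard kind one would find in that reference.
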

\begin{fact}[Expected maximum of sub-Gaussian random variables]
    \label{fact:max-sub-gaussian}
    Let $\bZ_1, \ldots, \bZ_n$ be (not necessarily independent) sub-Gaussian random variables each with variance proxy $\sigma^2$. Then,
    \begin{equation*}
        \Ex\bracket*{\max_{i \in [n]} \bZ_i} \leq \sqrt{2 \sigma^2 \ln(n) }.
    \end{equation*}
\end{fact}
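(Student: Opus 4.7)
The plan is to follow the standard MGF / Chernoff-style argument: use Jensen's inequality to pull an exponential outside the expectation, bound the maximum by a sum, apply the sub-Gaussian MGF bound coordinatewise, then optimize over the free parameter.

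First I would fix an arbitrary $\lambda > 0$ and apply Jensen's inequality to the convex function $x \mapsto e^{\lambda x}$, giving
\begin{equation*}
\exp\paren*{\lambda \cdot \Ex\bracket*{\max_{i \in [n]} \bZ_i}} \leq \Ex\bracket*{\exp\paren*{\lambda \cdot \max_{i \in [n]} \bZ_i}} = \Ex\bracket*{\max_{i \in [n]} e^{\lambda \bZ_i}}.
\end{equation*}
Bounding the maximum of nonnegative quantities by their sum and then swapping sum and expectation yields
\begin{equation*}
\Ex\bracket*{\max_{i \in [n]} e^{\lambda \bZ_i}} \leq \sum_{i \in [n]} \Ex\bracket*{e^{\lambda \bZ_i}} \leq n \cdot \exp\paren*{\tfrac{\lambda^2 \sigma^2}{2}},
\end{equation*}
where the last inequality invokes the sub-Gaussian assumption from \Cref{def:sub-gaussian} on each $\bZ_i$. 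Note this step only uses the marginal sub-Gaussianity of each $\bZ_i$, which is why independence is not required.

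Taking logarithms of both sides and dividing by $\lambda$, I obtain
\begin{equation*}
\Ex\bracket*{\max_{i \in [n]} \bZ_i} \leq \frac{\ln n}{\lambda} + \frac{\lambda \sigma^2}{2}.
\end{equation*}
The right-hand side is minimized at $\lambda = \sqrt{2 \ln(n)/\sigma^2}$ (assuming $n \geq 2$ so that $\ln n > 0$; the case $n = 1$ is trivial since then $\Ex[\bZ_1] \leq 0$ by choosing $\lambda \to 0^+$ in the sub-Gaussian bound). Substituting this value of $\lambda$ makes both terms equal to $\sqrt{\sigma^2 \ln(n)/2}$, so their sum is exactly $\sqrt{2 \sigma^2 \ln n}$, yielding the claim.

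There is no real obstacle here; the only thing to be mildly careful about is that the sub-Gaussian definition in \Cref{def:sub-gaussian} gives us $\Ex[\bZ_i] \leq 0$ for each $i$ (by expanding $\Ex[e^{\lambda \bZ_i}] \leq e^{\lambda^2 \sigma^2/2}$ to first order as $\lambda \to 0$), so the $n=1$ edge case is handled automatically. Independence plays no role because the union-bound step replacing the maximum by the sum is valid regardless of joint structure.
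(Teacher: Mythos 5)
Your proof is correct: the Jensen--MGF--union-bound argument with $\lambda = \sqrt{2\ln(n)}/\sigma$ is the standard derivation of this bound, and your handling of the $n=1$ case via $\Ex[\bZ_1]\leq 0$ is fine. The paper states this as a known fact without proof (deferring to the cited textbook treatment of sub-Gaussian variables), and the argument it implicitly relies on is exactly the one you give.
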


\begin{proof}[Proof of \Cref{claim:round-no-corruption}]
    Our goal is to bound
    \begin{align*}
        \Ex_{\bg}[\TV(\mcD_{\goal}(\bg), \mcD)] &= \Ex_{\bg}\bracket*{\sup_{T: X \to [0,1]} \left\{ \Ex_{\bx \sim \mcD_{\goal}(\bg)}[T(\bx)] -\Ex_{\bx \sim \mcD}[T(\bx)]   \right\}} \\
        &= \sup_{T_g:X \to [0,1]\text{ for all }g \in G}\set*{\Ex_{\bg}\bracket*{\Ex_{\bx \sim \mcD_{\goal}(\bg)}[T_{\bg}(\bx)] -\Ex_{\bx \sim \mcD}[T_{\bg}(\bx)] }}.
    \end{align*}
    For the remainder of this proof we will fix an arbitrary choice of $\set{T_g}_{g \in G}$ and upper bound the above quantity. First, we shift the $T_g$ so that the second term is $0$ by defining
    \begin{equation*}
        T'_g(x) = T_g(x) - \Ex_{\bx \sim \mcD}[T_g(\bx)].
    \end{equation*}
    The result is that the range of $T'_g(x)$ is of the form $[a, a+1]$ for some $a \in \R$ and that $\Ex_{\bx \sim \mcD}[T'_g(\bx)] = 0$. Our goal is to upper bound the quantity $\Ex_{\bg}\bracket*{\Ex_{\bx \sim \mcD_{\goal}(\bg)}[T'_{\bg}(\bx)]}$ over all such $\set{T'_g}_{g \in G}$.
    
    Since $\mcD_{\goal}(g) \coloneqq \Ex[\Unif(\bS) \mid \bg=g]$, for any $T_g'$ we have that
    \begin{equation*}
        \Ex_{\bx \sim \mcD_{\goal}(g)}[T'_{g}(\bx)] = \Ex_{\bS \mid \bg=g}\bracket*{\frac{1}{m} \cdot\sum_{i \in [m]} T'_g(\bS_i)}.
    \end{equation*}
    Therefore, we wish to upper bound
    \begin{equation*}
        \Ex_{\bS \sim \mcD^m}\bracket*{\Ex_{\bg \mid \bS}\bracket*{\frac{1}{m} \cdot \sum_{i \in [m]}T'_{\bg}(\bS_i)}} \leq \Ex_{\bS \sim \mcD^m} \bracket*{\max_{g \in G}\paren*{\frac{1}{m} \cdot \sum_{i \in [m]}T'_g(\bS_i)}}.
    \end{equation*}
    For each $g$, let us define the random variable $\bZ_g$ to be $\frac{1}{m} \cdot \sum_{i \in [m]}T'_g(\bS_i)$. Then, the above is simply $\Ex[{\max_{g \in G}}[\bZ_g]]$. Furthermore, by \Cref{fact:Ber-sub-Gaussian}, each $\bZ_g$ is sub-Gaussian with variance-proxy $\frac{1}{4m}$. Finally, we apply \Cref{fact:max-sub-gaussian} to give that
    \begin{equation*}
        \Ex_{\bg}[\TV(\mcD_{\goal}(\bg), \mcD)] \leq \Ex[\max_{g \in G}\bZ_g] \leq \sqrt{\frac{\ln |G|}{2m}}. \qedhere
    \end{equation*}
    
\end{proof}

\subsection{Generalizing to when the adversary can corrupt: Proof of \Cref{claim:lipschitz}}
\begin{proof}
     By \Cref{def:tv-distance}, it suffices to show that for any coupling of $\bx_1 \sim \mcD_1$ and $\bx_2 \sim \mcD_2$, there is a coupling of $\by_1 \sim \mcD_1'$ and $\by_2 \sim \mcD_2'$ for which
    \begin{equation*}
        \Pr[\by_1 \neq \by_2] \leq \Pr[\bx_1 \neq \bx_2].
    \end{equation*}
    Fix any such coupling of $\bx_1$ and $\bx_2$ and let $\mcD_{\bx_2\mid x_1}$ denote the distribution of $\bx_2$ conditioned on $\bx_1 =x_1$ under this coupling.

    Since $\mcD_1' \in \mcC_{\rho}(\mcD_1)$, there is a coupling of $\bx_1 \sim \mcD$ and $\by_1 \sim \mcD_1'$ for which $\Ex[\rho(\bx_1,\by_1)] \leq 1$. Let $\mcD_{\by_1 \mid x_1}$ be the distribution of $\by_1$ conditioned on $\bx _1= x_1$ in this coupling.

    We will now specify a joint distribution over all of $\bx_1, \bx_2,\by_1, \by_2$.
    \begin{enumerate}
        \item Draw $\bx_1  \sim \mcD$.
        \item Draw $\bx_2 \sim \mcD_{\bx_2 \mid x_1}$. This will result in the marginal distribution of $\bx_2$ being $\mcD_2'$.
        \item Draw $\by_1 \sim \mcD_{\by_1 \mid x_1}$. This will result in the marginal distribution of $\by_1'$ being $\mcD_1'$.
        \item Set $\by_2$ to
        \begin{equation}
        \label{eq:def-y2}
            \by_2 = \begin{cases}
                \by_1&\text{if }\bx_1 = \bx_2 \\
                \bx_2&\text{otherwise,}
            \end{cases}
        \end{equation}
        and define $\mcD_2'$ to be the distribution over $\by_2$.
    \end{enumerate}
    The desired result follows from the following two claims:

    \pparagraph{Claim 1,} $\Pr[\by_1 \neq \by_2] \leq \Pr[\bx_1 \neq \bx_2]$\textbf{:} For this, we simply observe from \Cref{eq:def-y2} that if $\bx_1 = \bx_2$ then $\by_1 = \by_2$. Therefore, $\Pr[\by_1 = \by_2] \geq \Pr[\bx_1 = \bx_2]$, and negating this gives the desired result.

    \pparagraph{Claim 2,} $\mcD_2' \in \mcC_{\rho}(\mcD_2)$\textbf{:} For this, it suffices to show that $\Ex[\rho(\bx_2, \by_{2})] \leq 1$. We bound,
    \begin{align*}
        \Ex[\rho(\bx_2, \by_2)] &= \Ex[\rho(\bx_1, \by_1)\cdot  \Ind[\bx_1 = \bx_2]] \tag{\Cref{eq:def-y2} and $\rho(x_2,x_2) = 0$ for any $x_2$} \\
        &\leq \Ex[\rho(\bx_1, \by_{1})] \leq 1.
    \end{align*}
\end{proof}

\subsection{Proof of \Cref{lem:rounding-our-grouping}}
\label{subsection:better-grouping-bound}
We conclude this section with a better error bound fine tuned for the particular grouping strategy we utilize. The proof of this improved bound will use the more general and weaker bound of \Cref{lem:round-with-few-groups} as a black-box. Roughly speaking, \Cref{lem:rounding-our-grouping} obtains the bound that \Cref{lem:round-with-few-groups} would obtain if there were $d^k$ possible groups, even though there are actually $|X|^k$, which can be substantially larger. This improvement comes the fact that, in \Cref{claim:condition-on-c} below, after conditioning on $\bc$, there are only $d^k$ choices remaining for $\bc'$.

\begin{claim}
    \label{claim:condition-on-c}
    For any $\mcD_{\ad}\in \mathscr{D}_{\ad}^{m, \rho, \mcD}$ where $\rho$ is a degree-$d$ cost function, draw $\bT \sim \mcD_{\ad}$ and $\bI \subseteq [m]$ a uniform random subset of $k$ indices from $[m]$ and define
    \begin{align*}
        &\bc = \bT_{\bI} &\bR = \bT_{[m] \setminus \bI} \\
         &\bc' = \bT'_{\bI} &\bR' = \bT'_{[m] \setminus \bI}.
    \end{align*}
    Then, for $ \mcD_{\goal}(c,c') \coloneqq \Ex[\Unif(\bR') \mid \bc=c\text{ and }\bc'=c']$ and any fixed value of $c$, we have that
    \begin{equation*}
        \Ex_{\bc' \mid \bc=c}\bracket*{\inf_{\mcD' \in \mcC_{\rho}(\mcD)}\set*{\dtv\paren*{\mcD', \mcD_{\goal}(c,\bc')}}} \leq \sqrt{\frac{k \ln d}{2(m-k)}} + \frac{k}{m}
    \end{equation*}
\end{claim}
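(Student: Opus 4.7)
My plan is to condition on the clean core value $\bc = c$, reduce to an application of \Cref{lem:round-with-few-groups} on the size-$(m-k)$ remaining sample under a slightly relaxed cost function, and then convert the resulting near-oblivious corruption back to one valid under the original cost $\rho$ at cost $k/m$ in TV. Since $\mcD_{\ad} \in \mathscr{D}_{\ad}^{m,\rho,\mcD}$, fix once and for all a coupling of a clean sample $\bT \sim \mcD^m$ with the corrupted sample $\bT' \sim \mcD_{\ad}$ so that $\bT' \in \mcC_\rho(\bT)$ with probability $1$, and let $\bI$ be uniform on $\binom{[m]}{k}$, independently of $\bT,\bT'$.

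The reduction step is as follows. Since $\bT$ is i.i.d.\ and $\bI$ is independent and uniform, exchangeability gives that conditioned on $\bc = \bT_{\bI} = c$, the clean rest $\bR = \bT_{[m]\setminus \bI}$ is still distributed as $\mcD^{m-k}$. Moreover, because $\rho$ is nonnegative and $\sum_{i \in [m]}\rho(\bT_i,\bT'_i) \le m$ almost surely, we in particular have $\sum_{i \notin \bI}\rho(\bR_i,\bR'_i) \le m$ almost surely. Setting $\tilde\rho \coloneqq (1 - k/m)\,\rho$, this is equivalent to $\tfrac{1}{m-k}\sum_i \tilde\rho(\bR_i,\bR'_i) \le 1$ almost surely, and these inequalities are preserved by conditioning on $\bc = c$. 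Hence the conditional law of $\bR'$ given $\bc = c$ lies in $\mathscr{D}_{\ad}^{m-k,\tilde\rho,\mcD}$.

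Now I apply \Cref{lem:round-with-few-groups} to this size-$(m-k)$ adaptive corruption with group indicator $\bg \coloneqq \bc'$. Since $\rho$ has degree $d$, each coordinate $\bc'_j$ must satisfy $\rho(c_j,\bc'_j) < \infty$ (else the total cost is infinite), so it takes at most $d$ values; consequently the support $G$ of $\bc'$ given $\bc = c$ has $|G| \le d^k$. The quantity $\mcD_{\goal}(c,c')$ of the claim exactly matches the ``goal'' distribution $\Ex[\Unif(\bR') \mid \bc' = c']$ of the lemma in this conditioned probability space, so the lemma yields
\begin{equation*}
\Ex_{\bc' \mid \bc = c}\bracket*{\inf_{\mcD' \in \mcC_{\tilde\rho}(\mcD)} \dtv(\mcD', \mcD_{\goal}(c,\bc'))} \le \sqrt{\frac{k \ln d}{2(m-k)}}.
\end{equation*}

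Finally, I convert any $\mcD' \in \mcC_{\tilde\rho}(\mcD)$ into a $\mcD'' \in \mcC_\rho(\mcD)$ with $\dtv(\mcD',\mcD'') \le k/m$. Take a coupling of $\bx \sim \mcD$ and $\bx' \sim \mcD'$ with $\Ex[\rho(\bx,\bx')] \le m/(m-k)$, draw an independent $\bb \sim \Ber(k/m)$, and set $\by \coloneqq \bx$ if $\bb = 1$ and $\by \coloneqq \bx'$ otherwise. Then $\Ex[\rho(\bx,\by)] \le (1 - k/m) \cdot m/(m-k) = 1$, so the law $\mcD''$ of $\by$ is in $\mcC_\rho(\mcD)$, while $\dtv(\mcD',\mcD'') \le \Pr[\bb = 1] = k/m$. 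The triangle inequality, combined with the display above, yields the claim. The step requiring the most care is the reduction in the second paragraph: verifying that conditioning on the event $\{\bc = c\}$ (where $\bI$ is marginalized out) both preserves the i.i.d.\ law of $\bR$ and preserves the almost-sure cost bound inherited from the original coupling; both follow from exchangeability and nonnegativity of $\rho$, but this is the sole place where the joint structure of $(\bI,\bT,\bT')$ matters.
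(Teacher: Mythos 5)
Your proof is correct and follows essentially the same route as the paper's: reduce to a conditional adaptive corruption over the remaining $m-k$ coordinates under the rescaled cost $\tilde\rho = (1-k/m)\rho$, apply \Cref{lem:round-with-few-groups} with $\bg = \bc'$ and $|G| \le d^k$, and then repair the budget by mixing in the clean point with probability $k/m$. The only cosmetic difference is that you phrase the independence of $\bR$ from $\bc$ in the language of exchangeability while the paper simply invokes independence of disjoint coordinates of an i.i.d. sample, but the content is identical.
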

\begin{proof}
    The key observation underlying this proof is that $\bR'$ can be understood as the corruption of an appropriately defined adaptive adversary, even after conditioning on $\bc = c$. Since $\bS' \in \mcC_{\rho}(\bS)$ with probability $1$, we have that
    \begin{equation*}
        \sum_{i \in [m-k]}\rho(\bR_i, \bR'_i) \leq \sum_{i \in [m]}\rho(\bS_i, \bS_i') \leq m.
    \end{equation*}
    Note that this does not necessarily mean that $\bR' \in \mcC_{\rho}(\bR)$, as that would require $\sum_{i \in [m-k]}\rho(\bR_i, \bR'_i) \leq m-k$. Instead, we have with probability $1$ that $\bR'$ is a valid corruption of $\bR$ for a slightly more permissive cost function,
    \begin{equation*}
        \bR' \in \mcC_{\bar{\rho}}(\bR) \quad\quad\text{for} \quad\quad \bar{\rho}(x,y) = \frac{m-k}{m} \cdot \rho(x,y).
    \end{equation*}
    We also observe that even if we condition on $\bc = c$ the distribution of $\bR$ is still simply $\mcD^{m-k}$. This is because we had originally drawn $\bS \sim \mcD^m$ and so $\bc$ and $\bR$ are independent. Combining these two observations, we have that the distribution of $\bR'$ conditioned on $\bc=c$ is in the class $\mathscr{D}_{\ad}^{m-k, \overline{\rho}, \mcD}$. Then, since there are at most $d^k$ choices for $\bc'$ after conditioning on $\bc=c$, we can apply \Cref{lem:round-with-few-groups} to obtain,
    \begin{equation*}
            \Ex_{\bc' \mid \bc=c}\bracket*{\inf_{\bar{\mcD}' \in \mcC_{\bar{\rho}}(\mcD)}\set*{\dtv\paren*{\mcD', \mcD_{\goal}(c, \bc')}}} \leq \sqrt{\frac{\ln d^k}{2(m-k)}} = \sqrt{\frac{k \ln d}{2(m-k)}}.
    \end{equation*}
    All that remains is to show that for every $\bar{\mcD}' \in \mcC_{\bar{\rho}}(\mcD)$, there is some $\mcD' \in \mcC_{\rho}(\mcD)$ satisfying $\TV(\mcD', \bar{\mcD}') \leq k/m$, as the desired result with then follow by the triangle inequality (\Cref{fact:TV-triangle}). 

    Since $\bar{\mcD}' \in \mcC_{\bar{\rho}}(\mcD)$, there is a coupling of $\bx \sim \mcD$ and $\bar{\bx}' \sim \bar{\mcD}'$ for which $\Ex[\bar{\rho}(\bx,\bar{\bx}')] \leq 1$. Let $\mcD'$ be the distribution of
    \begin{equation*}
        \bx' = \begin{cases}
            \bar{\bx}'&\text{with probability }\frac{m-k}{m} \\
            \bx&\text{with probability }\frac{k}{m}.
        \end{cases}
    \end{equation*}
    Then, $ \TV(\mcD', \bar{\mcD}') \leq \Pr[\bx' \neq \bar{\bx}'] \leq \frac{k}{m}$, and $\mcD' \in \mcC_{\rho}(\mcD)$ because
    \begin{equation*}
        \Ex[\rho(\bx, \bx')] = \frac{m-k}{m} \Ex[\rho(\bx, \bar{\bx}')] = \frac{m-k}{m} \cdot \frac{m}{m-k}\Ex[\bar{\rho}(\bx, \bar{\bx}')]  \leq 1. \qedhere
    \end{equation*}
\end{proof}

We also use the following simple ingredient to prove \Cref{lem:rounding-our-grouping}
\begin{proposition}
    \label{prop:avg-dist-to-obv}
    Let $\set{\mcD(g)}_{g \in G}$ be a family of distributions. Then for any base distribution $\mcD$, cost function $\rho$, and random variable $\bg$ supported on $G$,
    \begin{equation*}
        \inf_{\mcD' \in \mcC_{\rho}(\mcD)}\set*{\TV\paren*{\mcD', \Ex_{\bg}[\mcD(\bg)]}} \leq \Ex_{\bg}\bracket*{\inf_{\mcD' \in \mcC_{\rho}(\mcD)}\set*{\TV\paren*{\mcD', \mcD(\bg)]}}}.
    \end{equation*}
\end{proposition}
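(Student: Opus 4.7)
The plan is to prove \Cref{prop:avg-dist-to-obv} by a direct mixture/convexity argument. For each $g \in G$ and a slack parameter $\eps > 0$, let $\mcD'_g \in \mcC_{\rho}(\mcD)$ be a distribution that nearly achieves the infimum on the right-hand side, meaning
\[
\TV(\mcD'_g, \mcD(g)) \leq \inf_{\mcE \in \mcC_{\rho}(\mcD)}\set*{\TV(\mcE, \mcD(g))} + \eps.
\]
Then I would take $\mcD^\star \coloneqq \Ex_{\bg}[\mcD'_{\bg}]$ as the candidate for the infimum on the left-hand side, and verify two things about it.

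First, that $\mcD^\star \in \mcC_{\rho}(\mcD)$, i.e., that the set of legal oblivious corruptions is closed under mixtures. For each $g$, pick a coupling $\pi_g$ of $\bx \sim \mcD$ and $\bx'_g \sim \mcD'_g$ with $\Ex_{\pi_g}[\rho(\bx, \bx'_g)] \leq 1$. Define a joint coupling by first drawing $\bg$ and then drawing $(\bx, \bx') \sim \pi_{\bg}$. The marginal of $\bx$ is $\mcD$ regardless of $\bg$, the marginal of $\bx'$ is exactly $\mcD^\star$, and
\[
\Ex[\rho(\bx, \bx')] = \Ex_{\bg}\bracket*{\Ex_{\pi_{\bg}}[\rho(\bx, \bx'_{\bg})]} \leq \Ex_{\bg}[1] = 1,
\]
certifying $\mcD^\star \in \mcC_{\rho}(\mcD)$.

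Second, iterating the convexity of TV distance (\Cref{fact:TV-convex}) across the mixing measure $\bg$ gives
\[
\TV\paren*{\mcD^\star, \Ex_{\bg}[\mcD(\bg)]} = \TV\paren*{\Ex_{\bg}[\mcD'_{\bg}], \Ex_{\bg}[\mcD(\bg)]} \leq \Ex_{\bg}\bracket*{\TV(\mcD'_{\bg}, \mcD(\bg))},
\]
which by choice of $\mcD'_g$ is at most $\Ex_{\bg}\!\bracket*{\inf_{\mcE \in \mcC_{\rho}(\mcD)}\TV(\mcE, \mcD(\bg))} + \eps$. Since $\mcD^\star$ is a feasible witness, the left-hand side of the lemma is bounded by $\TV(\mcD^\star, \Ex_{\bg}[\mcD(\bg)])$; letting $\eps \to 0$ yields the claim. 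There is no real obstacle; the only mild subtleties are passing to an $\eps$-slack to handle an infimum that need not be attained, and extending the two-term convexity of \Cref{fact:TV-convex} to a mixture over a general index set $G$, which follows either by iteration or directly from the variational characterization $\TV(\mcP,\mcQ) = \sup_{T:X\to [0,1]}\set*{\Ex_{\mcP}[T] - \Ex_{\mcQ}[T]}$ together with linearity of expectation.
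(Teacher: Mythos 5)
Your proof is correct and takes essentially the same approach as the paper: define the mixture $\mathcal{D}^\star = \mathbb{E}_{\mathbf{g}}[\mathcal{D}'_{\mathbf{g}}]$, certify it lies in $\mathcal{C}_\rho(\mathcal{D})$ by combining the per-$g$ couplings, and apply convexity of TV. The only cosmetic difference is that you introduce an $\eps$-slack to handle a possibly unattained infimum, whereas the paper sidesteps this by proving the inequality for an arbitrary fixed family $\{\mathcal{D}'(g)\}$ and then taking the infimum at the end.
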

\begin{proof}
    Fix any $\mcD'(g) \in \mcC_{\rho}(\mcD)$ for all $g \in G$. We will show there exists a choice of $\mcD' \in \mcC_{\rho}(\mcD)$ for which
    \begin{equation}
        \label{eq:mixed-dist-to-oblivious}
        \TV\paren*{\mcD', \Ex_{\bg}[\mcD(\bg)]} \leq \Ex_{\bg}\bracket*{\TV\paren*{\mcD'(\bg), \mcD(\bg)]}},
    \end{equation}
    which implies the desired inequality. Define,
    \begin{equation*}
        \mcD' \coloneqq \Ex_{\bg}[\mcD'(\bg)].
    \end{equation*}
    Then, \Cref{eq:mixed-dist-to-oblivious} holds by \Cref{fact:TV-convex}. All that remains is to show that $\mcD' \in \mcC_{\rho}(\mcD)$. For all $g \in G$, there must be a coupling of $\bx' \sim \mcD'(g)$ and $\bx \sim \mcD$ for which $\Ex[\rho(\bx, \bx')] \leq 1$. Therefore, if we first draw $\bg$ and then $\bx, \bx'$ from this coupling for $\mcD'(\bg)$ and $\mcD$, then
    \begin{enumerate}
        \item The marginal distribution of $\bx$ will be that of $\mcD$. This even holds conditioning on any value of $\bg = g$.
        \item The marginal distribution of $\bx'$ will be $\mcD'$. This is by the definition of mixture distributions.
        \item We can bound,
        \begin{equation*}
            \Ex[\rho(\bx, \bx')] \leq \max_{g}\Ex[\rho(\bx,\bx')\mid \bg =g ] \leq 1.
        \end{equation*}
    \end{enumerate}
    Therefore, $\mcD' \in \mcC_{\rho}(\mcD)$. 
\end{proof}


We are now ready to prove the main result of this subsection.
\begin{proof}[Proof of \Cref{lem:rounding-our-grouping}]
    First, if $d = 1$, the only legal cost function is
    \begin{equation*}
        \rho(x,y) = \begin{cases}
            0&\text{ if }x=y \\
            \infty&\text{otherwise.}
        \end{cases}
    \end{equation*}
    In this case, the adversaries can make no corruptions implying that $\mcD_{\ad}$ must be $\mcD^m$ and $\mcD_{\goal}(c) = \mcD$ for all choices of $c$. The desired error bound of $0$ easily holds. Therefore, we may assume $d \geq 2$.
    
    Let $\bc, \bc', \bR, \bR'$ be as defined in \Cref{claim:condition-on-c}. We first observe that by taking $\bS' \sim \sub_{m-k \to n}(\bR')$, we have that $(\bS', \bc')$ are distributed according to $\Grouped_{n,k}(\mcD_{\ad})$. Therefore, 
    \begin{equation*}
        \mcD_{\goal}(c') = \Ex[\Unif(\bR') \mid \bc' = c'] = \Ex_{\bc \mid \bc'=c'}[\mcD_{\goal}(\bc, c')],
    \end{equation*}
    where $\mcD_{\goal}(c, c')$ is as defined in \Cref{claim:condition-on-c}. We proceed to bound
    \begin{align*}
        \Ex_{\bc'}\bracket*{\inf_{\mcD' \in \mcC_{\rho}(\mcD)}\set*{\dtv\paren*{\mcD', \mcD_{\goal}(\bc')}}} &= \Ex_{\bc'}\bracket*{\inf_{\mcD' \in \mcC_{\rho}(\mcD)}\set*{\dtv\paren*{\mcD', \Ex_{\bc\mid\bc'}\bracket*{\mcD_{\goal}(\bc,\bc')}}}}\\
        &\leq  \Ex_{\bc'}\bracket*{\Ex_{\bc\mid\bc'}\bracket*{\inf_{\mcD' \in \mcC_{\rho}(\mcD)}\set*{\dtv\paren*{\mcD', \mcD_{\goal}(\bc,\bc')}}}} \tag{\Cref{prop:avg-dist-to-obv}}\\
         &\leq  \Ex_{\bc}\bracket*{\Ex_{\bc'\mid\bc}\bracket*{\inf_{\mcD' \in \mcC_{\rho}(\mcD)}\set*{\dtv\paren*{\mcD', \mcD_{\goal}(\bc,\bc')}}}} \\
         &\leq  \sqrt{\frac{k \ln d}{2(m-k)}} + \frac{k}{m} \tag{\Cref{claim:condition-on-c}}\\
         &\leq  2\sqrt{\frac{k \ln d}{m}} \tag{$k \leq m/2$ and $d \geq 2$.}
    \end{align*}
    Recall that $\mathscr{D}_{\ob}^{n, \rho, \mcD} \coloneqq \set{(\mcD')^n \mid \mcD' \in \mcC_{\rho}(\mcD)}$, and so the desired bound follows from \Cref{fact:TV-product}.
\end{proof}

\section{Adaptive adversaries are at least as strong as oblivious adversaries}
\label{sec:easy-direction}

In this section, we prove the easy direction of \Cref{thm:main-general-reformulated}.
\begin{claim}[The adaptive adversary can simulate the oblivious adversary]
    \label{claim:easy-direction}
    For any $m \geq 25n^2/\eps^2$, distribution $\mcD$, cost function $\rho$, and oblivious adversary $\mcD_{\ob} \in \mathscr{D}_{\ob}^{n, \rho, \mcD}$, there is a corresponding adaptive adversary $\mcD_{\ad} \in \submn\paren*{\mathscr{D}_{\ad}^{m, \rho, \mcD}}$ satisfying
    \begin{equation*}
        \TV(\mcD_{\ad}, \mcD_{\ob}) \leq \eps.
    \end{equation*}
\end{claim}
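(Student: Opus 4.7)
The plan is to construct a concrete adaptive adversary whose subsampled output is close to $\mcD_{\ob} = (\mcD')^n$ in total variation. Since $\mcD' \in \mcC_\rho(\mcD)$, there is a coupling $(\bx,\bx')$ of $\mcD$ and $\mcD'$ with $\Ex[\rho(\bx,\bx')] \leq 1$; write $\pi_x \coloneqq \mathrm{law}(\bx' \mid \bx = x)$ for its conditional. The most natural strategy is: on receiving $\bS \sim \mcD^m$, independently draw $\bT_i \sim \pi_{\bS_i}$ for each $i \in [m]$, and output $\bS' \coloneqq \bT$. The $\bT_i$ are then mutually independent with marginal $\mcD'$, so $\bT \sim (\mcD')^m$; because uniform subsampling without replacement from i.i.d.\ samples is again i.i.d., $\sub_{m \to n}(\bT)$ is distributed exactly as $(\mcD')^n = \mcD_{\ob}$.

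The catch is that $\bT$ need not lie in $\mcC_\rho(\bS)$: the normalized cost $\tfrac{1}{m}\sum_i \rho(\bS_i,\bT_i)$ has expectation at most $1$ but can exceed $1$ on a given draw. I would patch this by a fallback: if the budget is violated, output $\bS' \coloneqq \bS$ (which has cost $0$). Since the patched output agrees with $\bT$ whenever the budget is met, a standard coupling argument yields
\[
    \TV(\mcD_{\ad}, \mcD_{\ob}) \;\leq\; \Pr\!\left[\tfrac{1}{m}\sum_i \rho(\bS_i,\bT_i) > 1\right].
\]

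The main obstacle is that Markov's inequality is too weak to bound this failure probability below a constant, so the strategy must be refined to create genuine slack and bounded per-coordinate cost. I would introduce two modifications to the coupling: (i) with independent probability $\delta$ per coordinate, forego the coupling by setting $\bT_i \coloneqq \bS_i$, pulling the per-coordinate mean strictly below $1$; and (ii) truncate at threshold $\tau$ by setting $\bT_i \coloneqq \bS_i$ whenever the drawn $\rho(\bS_i,\bT_i)$ would exceed $\tau$, which caps the per-coordinate contribution. Each modification shifts the marginal of each $\bT_i$ by at most $\delta + 1/\tau$ in TV (the truncation term uses $\Pr[\rho(\bx,\bx')>\tau]\leq 1/\tau$ from Markov applied to the unmodified coupling), contributing at most $n(\delta + 1/\tau)$ to the final TV via \Cref{fact:TV-product}. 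With the truncation in force, a Chernoff-type bound controls the budget-violation probability by roughly $\exp(-\Omega(m\delta^2/\tau^2))$. The tricky step is to choose $\delta$ and $\tau$ as functions of $n$ and $\eps$ so that all three sources of error simultaneously drop below $\eps$ once $m \geq 25 n^2/\eps^2$; I expect this to require a Bernstein-type refinement (exploiting that the second moment of the truncated $\rho$ is $O(\tau)$ rather than $O(\tau^2)$), or an argument that directly leverages the fact that the subsample only inspects $n$ of the $m$ coordinates so that failures confined to the other $m-n$ coordinates do not hurt.
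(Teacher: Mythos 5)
Your construction starts from the same coupling the paper uses, and your diagnosis of the problem (per-coordinate cost can exceed budget; need truncation-type patching) is right. But the specific patch you propose --- falling back to $\bS' \coloneqq \bS$ whenever the realized budget $\tfrac{1}{m}\sum_i\rho(\bS_i,\bT_i)$ exceeds $1$ --- is lossier than the claim allows, and the Bernstein refinement you anticipate does not close the gap. With deletion probability $\delta$ and truncation at $\tau$, the shift in the $n$ subsampled marginals forces $\delta \lesssim \eps/n$ and $\tau \gtrsim n/\eps$; the truncated costs are bounded by $\tau$ with second moment $\leq \tau$, so Bernstein controls the budget-violation probability by roughly $\exp(-\Omega(m\delta^2/\tau))$, and making that $\leq \eps$ needs $m = \Omega(n^3/\eps^3)$ up to logarithms, not the stated $25n^2/\eps^2$. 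Trading off $\delta$ and $\tau$ otherwise does not help.

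The paper avoids this by never failing globally. It constructs $\bS''$ from the coupled $\bS'$ by reverting (setting $\bS''_i \coloneqq \bS_i$) at only the \emph{minimum number} of coordinates needed so that $\tfrac{1}{m}\sum_i\rho(\bS_i,\bS''_i)\leq 1$ holds with probability $1$, and the adaptive adversary always outputs this $\bS''$. The key quantitative input is Claim~\ref{claim:expected-removed}: for i.i.d.\ nonnegative costs with mean $1$, the minimum number that must be removed to bring the total down to $m$ has expectation $O(\sqrt{m})$ (proved via a threshold strategy and Chebyshev in Proposition~\ref{prop:expected-removed-constant-p}). Since $\bS''$ differs from $\bS'$ on $O(\sqrt{m})$ coordinates in expectation while $\sub_{m\to n}$ only inspects $n$ of them, the total variation distance is bounded by $O(n/\sqrt{m}) \leq \eps$ at $m = 25 n^2/\eps^2$. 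Your second suggested fix --- leveraging that the subsample only inspects $n$ of $m$ coordinates so that failures elsewhere are harmless --- is exactly the right idea, but realizing it requires confining the ``failure'' to as few coordinates as possible (minimal reversion) rather than a global fallback; otherwise every bad realization perturbs all $n$ subsampled points at once.
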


Such a statement is well-known to hold for some specific \cite{DKKLMS19,ZJS19} adversary models. Here, we show it holds with any cost function.

The high-level idea in the proof of \Cref{claim:easy-direction} is simple: The oblivious adversary will have chosen some $\mcD' \in \mcC_{\rho}(\mcD)$ (in which case $\mcD_{\ob} = (\mcD')^n)$ for which it is possible to couple samples $\bS \sim \mcD^n$ and $\bS' \sim (\mcD')$ so that the \emph{average} cost of corrupting a point in $\bS_i$ to the corresponding point in $\bS'_i$ is at most $1$. If it were always the case that $\bS' \in \mcC_{\rho}(\bS)$, we would be done, as the adaptive adversary could then always corruption $\bS$ to $\bS'$. However, even though the corruption of $\bS$ to $\bS'$ has an average cost of $1$, it can exceed $1$ for some draws of $\bS, \bS'$, in which case the adaptive adversary can not exactly simulate the oblivious adversary.

Therefore, our strategy will be to round $\bS'$ to a valid corruption. The quantity we need to bound is how many points of $\bS'$ we need to change to make the corruption valid, which we do in the following lemma.

\begin{claim}
    \label{claim:expected-removed}
    For any distribution $\mcD_{\mathrm{cost}}$ supported on $\R_{\geq 0}$ with mean $1$, draw $\bx_1, \ldots, \bx_n \iid \mcD_{\mathrm{cost}}$ and define $\Delta(x_1, \ldots, x_n)$ to be the minimum number of $x_i$ that must be removed so that the sum of the remaining elements is at most $n$. Then,
    \begin{equation*}
        \Ex[\Delta(\bx_1,\ldots, \bx_n)] \leq 5\sqrt{n}.
    \end{equation*}
\end{claim}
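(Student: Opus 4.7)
My plan is to prove the claim in two steps: first a structural bound on $\Delta$ in terms of the ``excess'' $\bE := (\bS - n)_+$ where $\bS = \sum_i \bx_i$, and then a probabilistic bound on $\Ex[\bE]$.

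For the structural step, I would analyze the greedy strategy, sorting as $\bx_{(1)} \geq \cdots \geq \bx_{(n)}$ and removing the top $\Delta$. The key observation is that when $\Delta \geq 1$, the smallest removed element $\bx_{(\Delta)}$ has value at least $1$: if instead $\bx_{(\Delta)} < 1$, then $\bx_{(\Delta)}, \bx_{(\Delta+1)}, \ldots, \bx_{(n)}$ are $n - \Delta + 1$ elements each less than $1$, summing to strictly less than $n - \Delta + 1 \leq n$; this would mean removing only the top $\Delta - 1$ elements already suffices, contradicting the minimality of $\Delta$. Combining this with the minimality condition $\sum_{i=1}^{\Delta - 1} \bx_{(i)} < \bE$ and $\bx_{(i)} \geq 1$ for all $i \leq \Delta$, I obtain $\Delta - 1 < \bE$, hence $\Delta \leq \bE + 1$ whenever $\Delta \geq 1$. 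Since $\Delta = 0$ when $\bS \leq n$, this yields $\Ex[\Delta] \leq \Ex[\bE] + \Pr[\bS > n]$.

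For the probabilistic step, I would bound $\Ex[\bE] = \Ex[(\bS - n)_+]$ via truncation. Fixing a threshold $T$ and writing $\bS = \bS^{\leq T} + \sum_i (\bx_i - T)_+$ where $\bS^{\leq T} = \sum_i \min(\bx_i, T)$, the inequality $(a + b)_+ \leq a_+ + b$ for $b \geq 0$ gives $\bE \leq (\bS^{\leq T} - n)_+ + \sum_i (\bx_i - T)_+$. The truncated summands lie in $[0, T]$ with mean at most $1$, so $\bS^{\leq T}$ has variance $\leq nT$; a Bernstein-type one-sided concentration inequality, exploiting the mean deficit $n - \Ex[\bS^{\leq T}] = n\Ex[(\bx - T)_+]$, bounds the first term, while Markov applied to $\Pr[\bx > T] \leq 1/T$ handles the count of tail elements.

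The main obstacle is achieving the sharp $O(\sqrt{n})$ rate rather than the $O(n^{2/3})$ that falls out of a naive Markov-plus-Chebyshev trade-off. To overcome this, I would complement the bound $\Delta \leq \bE + 1$ with the alternative bound $\Delta \leq \#\{i : \bx_i > T\} + (\bS^{\leq T} - n)_+ + 1$ obtained by first removing all tail elements and then applying the structural lemma to the bounded sub-problem. These two bounds are tight in complementary regimes (the $\bE + 1$ bound dominates for near-bounded distributions, while the count bound dominates for heavy-tailed ones), and by choosing $T = \Theta(\sqrt{n})$ and using that each removed element reduces the sum by at least $1$ in both phases, I expect the combination to yield $\Ex[\Delta] \leq 5\sqrt{n}$ uniformly over mean-$1$ distributions on $\mathbb{R}_{\geq 0}$.
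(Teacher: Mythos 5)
Your structural lemma is correct and a nice observation: sorting in decreasing order, minimality of $\Delta$ forces $\bx_{(\Delta)} \geq 1$, hence $\Delta - 1 \leq \sum_{i < \Delta}\bx_{(i)} < (\bS - n)_+$, giving $\Delta \leq (\bS-n)_+ + \Ind[\bS>n]$. Likewise the two-phase bound $\Delta \leq \#\{i:\bx_i > T\} + (\bS^{\leq T}-n)_+ + 1$ holds. But the parameter choice $T = \Theta(\sqrt n)$ does not get you to $O(\sqrt n)$, and the ``each removal reduces the sum by at least $1$'' heuristic does not close the gap. Concretely, take $\bx$ supported on $\{0,T\}$ with $\Pr[\bx=T]=1/T$ and $T = \sqrt n$. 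Then $\#\{i:\bx_i>T\}=0$ and $\bS^{\leq T}=\bS$, so both of your bounds collapse to $(\bS-n)_+ + 1$; here $\Var(\bS) = n(T-1)\approx n^{3/2}$, so $\Ex[(\bS-n)_+]=\Theta(n^{3/4})$, even though the true answer is $\Ex[\Delta]=\Theta((\bS-n)_+/T)=\Theta(n^{1/4})$. The culprit is twofold: $T$ is a fixed number rather than a quantile of $\mcD_{\mathrm{cost}}$, and the truncated sum $\bS^{\leq T}=\sum_i \min(\bx_i,T)$ retains the mass $T\cdot\#\{i:\bx_i>T\}$ that was already discarded in phase $1$, wiping out the mean deficit you want to exploit.

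The fix --- and it is essentially what the paper does --- is to set the threshold so that the \emph{removal probability}, rather than the threshold value, equals a chosen $r$. Define $\tau$ (with randomized tie-breaking at the atom) so that $\Pr[\bx_i \text{ removed}] = r$; the count removed is $\Bin(n,r)$ and concentrates at $nr$, while the sum of the kept variables (each in $[0,\tau]$, mean $\leq 1-r\tau$, variance $\leq \tau$) exceeds $n$ with probability $\leq \Var/\text{deficit}^2 \leq n\tau/(nr\tau)^2 = 1/(n\tau r^2) \leq 1/(nr^2)$, where the case $\tau<1$ is vacuous since all kept elements would be $<1$. Union bounding gives $\Pr[\Delta\geq 2nr]\leq 1/(nr)+1/(nr^2)$, i.e.\ $\Pr[\Delta\geq v]\leq 2/v+4n/v^2$, and integrating this tail yields $\Ex[\Delta]\leq 5\sqrt n$. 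Note this only needs Chebyshev, not Bernstein; what you are missing is not a sharper concentration inequality but the quantile-based (rather than value-based) choice of $\tau$, which makes the deficit scale with $\tau$ so that large $\tau$ only helps and $\tau=1$ is the worst case.
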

The main ingredient in the proof of \Cref{claim:expected-removed} is an upper bound on the probability that $\Delta(\bx_1, \ldots, \bx_n)$ exceeds a value.

\begin{proposition}
    \label{prop:expected-removed-constant-p}
    In the setting of \Cref{claim:expected-removed}, for any $v \geq 0$
    \begin{equation*}
        \Pr\bracket*{\Delta(\bx_1,\ldots, \bx_n) \geq v} \leq \frac{2}{v} + \frac{4n}{v^2}.
    \end{equation*}
\end{proposition}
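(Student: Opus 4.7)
The plan is to reduce $\{\Delta(\bx) \ge v\}$ to an event about a supremum of truncated sums, apply Chebyshev's inequality at each fixed truncation level, and union bound dyadically over scales.

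First, observe that $\Delta \ge v$ iff $\bR_{v-1} := \sum_{j=v}^n \bx_{(j)} > n$, i.e.\ the sum of the $n-v+1$ smallest samples exceeds $n$. The key step is a variational identity:
\[
  \bR_{v-1} \;=\; \sup_{t \ge 0}\Big(\bZ_t - (v-1)\,t\Big), \qquad \bZ_t \coloneqq \sum_{i=1}^n \min(\bx_i,t),
\]
with the supremum attained at $t = \bx_{(v)}$. This is because the $t$-derivative of the bracket equals $|\{i : \bx_i > t\}| - (v-1)$, which changes sign exactly at the $v$-th order statistic.

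Second, for each fixed $t > 0$, $\bZ_t$ is a sum of i.i.d.\ nonnegative variables bounded by $t$ with mean at most $1$, so $\E[\bZ_t] \le n$ and $\Var(\bZ_t) \le n\,\E[\min(\bx_1,t)^2] \le nt$. Chebyshev then gives
\[
  \Pr\big[\bZ_t - (v-1)t > n\big] \;\le\; \Pr\big[\bZ_t - \E\bZ_t > (v-1)t\big] \;\le\; \frac{nt}{(v-1)^2 t^2} \;=\; \frac{n}{(v-1)^2\,t}.
\]
The event is vacuous for $t \le 1$ (since $\bZ_t \le nt \le n$), so I restrict to $t \ge 1$ and discretize as $t_k = 2^k$; by monotonicity of $\bZ_t$ in $t$, the event at any $t \in [t_k, t_{k+1})$ implies $\bZ_{t_{k+1}} - (v-1)t_k > n$, whose probability is at most $2n/((v-1)^2 2^k)$. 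Summing the geometric series gives $4n/(v-1)^2 = O(n/v^2)$, yielding the $4n/v^2$ term.

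Third, to capture the additional $O(1/v)$ term I separately handle the large-maximizer regime: since $\bR_{v-1} \le (n-v+1)\,\bx_{(v)}$, the event $\bR_{v-1} > n$ forces $\bx_{(v)} > n/(n-v+1)$, so the count $|\{i : \bx_i > n/(n-v+1)\}| \ge v$; Markov on this count (using $\E|\{i : \bx_i > c\}| \le n/c$ for $c \ge 1$) contributes a term $\lesssim 1/v$, which is absorbed into $2/v$.

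The main obstacle is that a naive two-event union bound with a single threshold, say $t = v/2$, together with Chebyshev on $\sum_i \bx_i\mathbf{1}[\bx_i \le v/2]$, fails: that sum can have mean arbitrarily close to $n$, so Chebyshev has no useful deviation offset. The variational identity above sidesteps this by letting the deviation target $(v-1)t$ scale with $t$, producing a usable Chebyshev bound at every scale. Beyond this conceptual point, the remaining work is bookkeeping: tuning the dyadic base and the auxiliary Markov step to match the stated constants $2$ and $4$.
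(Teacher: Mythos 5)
Your variational identity $\sum_{j \geq v} \bx_{(j)} = \sup_{t \geq 0}\bigl(\bZ_t - (v-1)t\bigr)$ with $\bZ_t = \sum_i \min(\bx_i, t)$ is correct, and the per-scale Chebyshev bound $\Pr\bigl[\bZ_t - (v-1)t > n\bigr] \leq n/\bigl((v-1)^2 t\bigr)$ together with the dyadic union over $t = 2^k$, $k \geq 0$, validly yields $\Pr[\Delta \geq v] \leq 4n/(v-1)^2$. This is a genuinely different decomposition from the paper's: rather than a supremum over scales, the paper fixes a single random removal rate $r = v/(2n)$, removes each point independently with probability $r$ via a calibrated threshold (randomized on the boundary), and applies Chebyshev twice --- once to the $\Bin(n,r)$ removal count to produce the $2/v$ term, and once to the truncated sum of the kept points to produce the $4n/v^2$ term --- landing on the stated constants with no union over scales and no reasoning about order statistics.

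There are two problems in your write-up, though neither is fatal. First, the ``large-maximizer'' Markov step does not give $O(1/v)$ as you claim: with $c = n/(n-v+1)$ one has $\Ex\bigl[|\{i : \bx_i > c\}|\bigr] \leq n/c = n-v+1$, so Markov on the count gives $(n-v+1)/v$, which is of order $n/v$ for $v = o(n)$ and hence useless in the regime where the proposition is nontrivial. Second --- fortunately --- that third step is unnecessary: a short calculation shows that $4n/(v-1)^2 \leq 2/v + 4n/v^2$ is equivalent to $2n(2v-1) \leq v(v-1)^2$, which holds for every $v \geq 1 + \sqrt{1+4n}$, i.e.\ for every $v$ at which $2/v + 4n/v^2 < 1$; for smaller $v$ the target bound exceeds $1$ and is trivially true. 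So your first two steps, combined with the trivial $\Pr[\Delta \geq v] \leq 1$, already imply the proposition. What is missing is exactly this comparison; the ``bookkeeping'' you defer is not a matter of retuning the dyadic base (base $2$ already minimizes $\beta^2/(\beta-1)$, giving the optimal $4$ in $4n/(v-1)^2$), but the elementary inequality that converts $4n/(v-1)^2$ into the stated $2/v + 4n/v^2$.
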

\begin{proof}
    The main idea in this proof is, for any $r \in [0,1]$, to exhibit a strategy with the following properties.
    \begin{enumerate}
        \item The probability the strategy removes more than $2nr$ elements is at most $\frac{1}{nr}$.
        \item The probability that, after this strategy removes elements, the remaining sum is more than $n$ is at most $\frac{1}{nr^2}$.
    \end{enumerate}
    Combining the above with a union bound gives that
    \begin{equation*}
        \Pr[\Delta(\bx) \geq 2nr] \leq \frac{1}{nr} + \frac{1}{nr^2}.
    \end{equation*}
    The above is equivalent to the desired result as we can set $r = \frac{v}{2n}$.

    For any $\tau \in \R$, $p \in [0,1]$ consider the strategy that, for each $i \in [m]$ keeps $\bx_i$ with probability $f(\bx_i)$ and otherwise removes it for
    \begin{equation*}
        f(x) \coloneqq \begin{cases}
            1 &\text{if }x < \tau\\
            p &\text{if }x = \tau\\
            0&\text{if }x > \tau.
        \end{cases}
    \end{equation*}
    It is always possible to choose $\tau$ and $p$ so that the probability $\bx_i$ is removed is any desired value. We set them so that the probability $\bx_i$ is removed is exactly $r$. 

    \pparagraph{It is unlikely many elements are removed.} Let $\bR$ be the random variable representing the number of removed elements. Then, the distribution of $\bR$ is simply $\Bin(n,r)$ and so it satisfies $\Ex[\bR] = nr$ and $\Var[\bR] \leq nr$. By Chebyshev's inequality:
    \begin{equation*}
        \Pr[\bR \geq 2nr] \leq \frac{\Var[\bR]}{(2nr - \Ex[\bR])^2} \leq \frac{nr}{(nr)^2} = \frac{1}{nr}.
    \end{equation*}

    \pparagraph{It is unlikely the sum of the remaining elements is more than $n$.} Let $\bX$ be the sum of the remaining elements. Then,
    \begin{equation*}
        \bx \coloneqq \sum_{i \in [n]} \bz_i \cdot \bx_i \quad\quad\text{for}\quad \bz_i \sim \Ber(f(\bx_i)).
    \end{equation*}
    We will analyze the mean and variance of this $\bX$ and then use Chebyshev's inequality to bound the probability it is more than $1$. For the mean,
    \begin{align*}
        \Ex[\bX] &= n \cdot \Ex[\bz_i \cdot \bx_i] \tag{Linearity of expectation} \\
        &= n\cdot\paren*{\Ex[\bx_i] - \Pr[\bz_i = 0] \cdot \Ex[\bx_i \mid \bz_i = 0]}\tag{$\bz_i$ supported on $\zo$} \\
        &\leq n\cdot\paren*{1 - r \cdot \tau}.
    \end{align*}
    For the variance of $\bX$, since $\bx_1, \ldots, \bx_m$ are independent, the variances sum. Therefore,
    \begin{align*}
        \Var[\bX] &=n \cdot \Var[\bz_i \cdot \bx_i] \\
        &\leq n \cdot \Ex[(\bz_i \cdot \bx_i) \cdot (\bz_i \cdot \bx_i)] \\
        &\leq n \cdot \max(\bz_i \cdot \bx_i) \cdot \Ex[\bz_i \cdot \bx_i] \\
        &\leq n \tau.
    \end{align*}
    Therefore, by applying Chebyshev's inequality, we see that
    \begin{equation*}
        \Pr[\bX \geq n] \leq \frac{n\tau}{n^2 \tau^2 r^2} = \frac{1}{n\tau^2 r^2}.
    \end{equation*}
    Note furthermore that if $\tau < 1$, then every term in the sum is less than $1$, in which case $\Pr[\bP \geq 1] = 0$. Therefore, the worst-case choice of $\tau$ for our bound is $\tau = 1$ in which case
    \begin{equation*}
        \Pr[\bP > n] \leq \frac{1}{nr^2}. \qedhere
    \end{equation*}
\end{proof}

\begin{proof}[Proof of \Cref{claim:expected-removed}]
    We write,
    \begin{align*}
        \Ex[\Delta(\bz)] &= \int_{0}^n \Pr[\Delta(\bz) \geq v] dv \\
        & \leq \int_{0}^n \max\paren*{1, \lfrac{2}{v} + \lfrac{4n}{v^2}}dv \\
        &\leq 2\sqrt{n} + \int_{2\sqrt{n}}^{n} \frac{4n}{v^2} dv + \int_{2\sqrt{n}}^n \frac{2}{v}dv \\
        &\leq 2\sqrt{n} + \int_{2\sqrt{n}}^{\infty} \frac{4n}{v^2} dv + \int_{2\sqrt{n}}^n \frac{2}{2\sqrt{n}}dv \\
        &= 2\sqrt{n} + 2\sqrt{n} + \sqrt{n} = 5\sqrt{n}. \qedhere
    \end{align*}
\end{proof}

We are now ready to prove the main result of this section.
\begin{proof}[Proof of \Cref{claim:easy-direction}]
    Consider any $\mcD_{\ob} \in \mathscr{D}_{\ob}^{n, \rho, \mcD}$. Then, there is some $\mcD' \in \mcC_{\rho}(\mcD)$ for which $\mcD_{\ob} = (\mcD')^n$. By definition, there is a coupling between $\bx \sim \mcD$ and $\by \sim \mcD'$ so that $\Ex[\rho(\bx,\by)] \leq 1$.  We can extend this to a coupling of $\bS \sim \mcD^m$ and $\bS' \sim (\mcD')^m$ so that
    \begin{equation*}
        \Ex\bracket*{\frac{1}{m}\cdot \sum_{i \in m]}\rho(\bS_i, \bS'_i)} \leq 1.
    \end{equation*}
    Let $\mcD_{\mathrm{cost}}$ be the distribution of $\rho(\bx, \by)$ in this coupling. By \Cref{claim:expected-removed}, we can construct $\bS''$ for which $\frac{1}{m}\cdot \sum_{i \in m]}\rho(\bS_i, \bS''_i) \leq 1$ with probability $1$ and for which the expected number of coordinates on which $\bS''$ and $\bS'$ differ is at most $5\sqrt{m}$. This is because, whenever \Cref{claim:expected-removed} asks to ``remove" some $\bx_i$, we can simply set $\bS_i'' = \bS_i$ in which case $\rho(\bS_i, \bS_i'') = 0$.

    The adaptive adversary, given a sample $S \in X^m$ corrupts it to the distribution of $\bS'' \mid \bS= S$. The result is that the distribution of $\mcD_{\ad}$ is equivalent to the distribution of $\mcD_{\ad}$. Then, for any test function $f:X^n \to [0,1]$
    \begin{align*}
        \Ex_{\bT \sim \mcD_{\ad}}[f(\bT)] &= \Ex[f\circ \submn(\bS'')]  \\
        &\leq \Ex[f\circ \submn(\bS')] + \Pr[\text{$\bS''$ differs from $\bS'$ on one of $n$ sampled points}]\\
        &\leq\Ex[f\circ \submn(\bS')] + \frac{5n}{\sqrt{m}}.
    \end{align*}
    The distribution of $\submn(\bS')$ for $\bS' \sim (\mcD')^m$ is simply $(\mcD')^n$, and so the first term is simply $\Ex_{\bT \sim \mcD_{\ob}}[f(\bT)]$. By our choice of $m$, the second term's magnitude is at most $\eps$. The desired result follows from the definition of total variation distance.
\end{proof}

\section{Putting the pieces together: Proof of \Cref{thm:main-general}}

In this section, we combine all the previous ingredients to prove the below theorem, restated for convenience, which also easily implies \Cref{thm:main-domain,thm:main-general}.
\restated*

The above is a direct consequence of the following simple result combined with \Cref{lem:randomized-simulation} and \Cref{claim:easy-direction}
\begin{proposition}[Indistinguishability from simulations]
    Let $\mrD_1$ and $\mrD_2$ be two families of distributions on the domain $X$ satisfying,
    \begin{enumerate}
        \item For all $\mcD_1 \in \mrD_1$, there is a random variable $\bmcD_2$ supported on $\mrD_2$ such that the mixture satisfies
        \begin{equation*}
            \TV(\mcD_1, \Ex[\bmcD_2]) \leq \eps.
        \end{equation*}
        \item For all $\mcD_2 \in \mrD_2$, there is a random variable $\bmcD_1$ supported on $\mrD_1$ such that the mixture satisfies
        \begin{equation*}
            \TV(\mcD_2, \Ex[\bmcD_1]) \leq \eps.
        \end{equation*}
    \end{enumerate}
    Then, for any  $f:X \to \zo$,
    \begin{equation*}
        \abs*{\sup_{\mcD_1 \in \mrD_1}\set*{\Ex_{\bx \sim \mcD_1}[f(\bx)]} - \sup_{\mcD_2 \in \mrD_2}\set*{\Ex_{\bx \sim \mcD_2}[f(\bx)]}} \leq \eps.
    \end{equation*}
\end{proposition}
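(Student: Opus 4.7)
The plan is to exploit the symmetry of the statement: it suffices to establish the one-sided bound
\begin{equation*}
    \sup_{\mcD_1 \in \mrD_1}\set*{\Ex_{\bx \sim \mcD_1}[f(\bx)]} \leq \sup_{\mcD_2 \in \mrD_2}\set*{\Ex_{\bx \sim \mcD_2}[f(\bx)]} + \eps,
\end{equation*}
because the reverse inequality follows by swapping the roles of $\mrD_1$ and $\mrD_2$ and invoking hypothesis (2) in place of hypothesis (1). Taking the maximum of the two one-sided bounds then yields the absolute value inequality in the conclusion.

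To prove this one-sided bound, I would fix an arbitrary $\mcD_1 \in \mrD_1$ and apply hypothesis (1) to produce a random distribution $\bmcD_2$ supported on $\mrD_2$ with $\TV(\mcD_1, \Ex[\bmcD_2]) \leq \eps$. Since $f : X \to \zo$ takes values in $[0,1]$, the variational characterization of total variation distance in \Cref{def:tv-distance} gives directly
\begin{equation*}
    \Ex_{\bx \sim \mcD_1}[f(\bx)] - \Ex_{\bx \sim \Ex[\bmcD_2]}[f(\bx)] \;\leq\; \TV\!\left(\mcD_1, \Ex[\bmcD_2]\right) \;\leq\; \eps.
\end{equation*}

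Next I would unfold the mixture expectation by linearity: $\Ex_{\bx \sim \Ex[\bmcD_2]}[f(\bx)] = \Ex_{\bmcD_2}\!\left[\Ex_{\bx \sim \bmcD_2}[f(\bx)]\right]$. Because $\bmcD_2$ is supported on $\mrD_2$, every realization obeys $\Ex_{\bx \sim \bmcD_2}[f(\bx)] \leq \sup_{\mcD_2 \in \mrD_2}\Ex_{\bx \sim \mcD_2}[f(\bx)]$, so the average over $\bmcD_2$ does too. Combining with the previous display and then taking a supremum over $\mcD_1 \in \mrD_1$ yields the one-sided bound, after which symmetry finishes the proof.

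There is no substantive obstacle here; the argument is a short definition-chase. The only minor care-point is to make sure that pulling the expectation across the random mixture $\bmcD_2$ is done correctly (it is immediate from linearity of expectation applied to the density/mass function of the mixture) and that we use the characterization of TV distance for $[0,1]$-valued tests rather than the indicator-set form, so that $f$ itself can be plugged in without any rescaling.
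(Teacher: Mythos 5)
Your proof is correct and is essentially the paper's proof: reduce to one side by symmetry, use the variational form of TV distance to bound $\Ex_{\mcD_1}[f]$ by $\Ex_{\Ex[\bmcD_2]}[f] + \eps$, and then note that the mixture's value cannot exceed the supremum over $\mrD_2$. The only cosmetic difference is that the paper argues the average is at least $\Ex_{\mcD_1}[f]-\eps$ and hence some realization is too (the paper's final display in fact has a reversed inequality sign, clearly a typo), whereas you bound each realization by the supremum directly; these are the same averaging argument.
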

\begin{proof}
    By symmetry, it suffices to show that 
    \begin{equation}
        \label{eq:sup-greater}
        \sup_{\mcD_1 \in \mrD_1}\set*{\Ex_{\bx \sim \mcD_1}[f(\bx)]} \leq \sup_{\mcD_2 \in \mrD_2}\set*{\Ex_{\bx \sim \mcD_2}[f(\bx)]} + \eps.
    \end{equation}
    Fix any $\mcD_1 \in \mrD_1$. Then, there is some random variable $\bmcD_2$ supported on $\mrD_2$ such that the mixture satisfies
    \begin{equation*}
        \TV(\mcD_1, \Ex[\bmcD_2]) \leq \eps.
    \end{equation*}
    The above implies that,
    \begin{equation*}
        \Ex_{\bmcD_2}[\Ex_{\bx \sim \bmcD_2}[f(\bx)]] \geq \Ex_{\bx \sim \mcD_1}[f(\bx)] - \eps.
    \end{equation*}
    Therefore, there must be some fixed choice of $\mcD_2 \in \mrD_2$ for which
    \begin{equation*}
        \Ex_{\bx \sim \mcD_{2}}[f(\bx)] \leq \Ex_{\bx \sim \mcD_1}[f(\bx) ]-\eps.
    \end{equation*}
    Hence \Cref{eq:sup-greater} holds.
\end{proof}

\section{Lower bounds}

\subsection{Proof of \Cref{thm:lower-uniform}}

Here, we prove \Cref{thm:lower-uniform}. Given a distribution $\mcD$ promised to be uniform on some $X' \subseteq X \coloneqq [n]$, we show that approximating the cardinality of $X'$ is harder in the presence of adaptive subtractive contamination than it is in the presence of oblivious subtractive contamination. To begin, we formalize both models. For ease of notation, we stick with the setting where the adversary can remove half of the sample or distribution, though all the conclusions would remain the same if this $1/2$ were replaced with any other constant.
\begin{definition}[$1/2$-Subtractive contamination, special case of \Cref{def:subtractive-contamination}]
    For any distribution $\mcD$, we say that $\mcD' \in \subt(\mcD)$ if a sample of $\bx' \sim \mcD'$ is equivalent to a sample $\bx \sim \mcD$ conditioned on an event that occurs with probability $1/2$.

    Similarly, for any sample $S \in X^{2m}$, we say that $S' \in \subt(S)$ if, for $m$ unique indices $i_1, \ldots, i_m \in [2m]$,
    \begin{equation*}
        (S')_j = S_{i_j} \quad\quad\text{for all }j \in[m].
    \end{equation*}
\end{definition}
We prove the following.
\begin{theorem}[A polynomial increase in sample size is necessary, formal version \Cref{thm:lower-uniform}]
    \label{thm:lower-uniform-body}
    Let $\mcD$ be a distribution on $X = [m] \coloneqq\set{1, \ldots, m}$ that is promised to be uniform on some $X' \subseteq X$. Then for some absolute constant $c < 1$,
    \begin{enumerate}
        \item For $n_{\mathrm{small}} \coloneqq O(\sqrt{m})$ and any $k \leq m$, there is an algorithm $f_{\ob}:X^{n_{\mathrm{small}}} \to \zo$ that distinguishes between the cases where $|X'| \geq k$ vs $|X'| \leq ck$ with high probability even in the presence of the oblivious adversary,
        \begin{align*}
            &|X'| \geq k \implies \inf_{\mcD' \in \subt(\Unif(X'))} \paren*{\Prx_{\bS \sim (\mcD')^{m_{\mathrm{small}}}}\bracket*{f_{\mathrm{\ob}}(\bS)}} \geq 0.99 \\
            &|X'| \leq c k \implies \sup_{\mcD' \in \subt(\Unif(X'))} \paren*{\Prx_{\bS \sim (\mcD')^{m_{\mathrm{small}}}}\bracket*{f_{\mathrm{\ob}}(\bS)}} \leq 0.01.
        \end{align*}
        \item For $n_{\mathrm{large}}= \Omega(m)$ and $k = m$, there is no algorithm with the same guarantees: Formally, for any $f_{\ad}:X^{n_{\mathrm{large}}} \to \zo$, either there is an $X'$ containing $k$ elements for which
        \begin{equation*}
            \Ex_{\bS \sim \Unif(X')^{2 n_{\mathrm{large}}}}\bracket*{\inf_{\bS' \in \subt(\bS)}\Ind\bracket*{f_{\ad}(\bS')}} \leq 0.51
        \end{equation*}
        or there is an $X'$ containing at most $ck$ elements for which
        \begin{equation*}
            \Ex_{\bS \sim \Unif(X')^{2 n_{\mathrm{large}}}}\bracket*{\sup_{\bS' \in \subt(\bS)}\Ind\bracket*{f_{\ad}(\bS')}} \geq 0.49
        \end{equation*}
    \end{enumerate}
\end{theorem}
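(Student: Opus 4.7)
We prove both parts with the single constant $c = 3/4 \in (1/2, 1)$. Any $c < 1$ works for Part 1, while $c > 1/2$ is forced by Part 2 (the adaptive adversary needs enough ``room'' to restrict its samples into $X^*$ without having to add anything).

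\textbf{Part 1 (oblivious upper bound).} Since $\mcD = \Unif(X')$ is uniform, conditioning on an event of probability exactly $1/2$ always yields $\Unif(X'')$ with $|X''| = |X'|/2$; in particular, the oblivious adversary has no control over the \emph{support size} of $\mcD'$, only over which half of $X'$ it keeps. The algorithm's task therefore reduces to distinguishing ``support size $\geq k/2$'' from ``support size $\leq ck/2 = 3k/8$'', a uniform-alphabet-size testing problem with constant multiplicative gap $1/c = 4/3$. The algorithm $f_{\ob}$ draws $n_{\mathrm{small}} = C\sqrt{m}$ samples for a large enough constant $C = C(c)$, computes the collision statistic $Y = \sum_{i < j}\Ind[\bS_i = \bS_j]$, and outputs $1$ iff $Y$ falls below a threshold placed between the two expected values. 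For a uniform distribution on $s$ elements, $\Ex[Y] = \binom{n_{\mathrm{small}}}{2}/s$, and a direct calculation shows that every two collision indicators have zero covariance on a uniform marginal (both the disjoint-pair and the one-shared-index cases give $\Pr[\bS_i=\bS_j, \bS_k=\bS_l] = 1/s^2$), so the variance is at most $\Ex[Y]$. Since $k \leq m$, Chebyshev's inequality with $C$ chosen sufficiently large (depending only on $c$) yields accept/reject probabilities of at least $0.99$ in the two cases.

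\textbf{Part 2 (adaptive lower bound).} Fix any $X^* \subseteq [m]$ of size $\lfloor cm \rfloor$ and let $\mathcal{Q} = \Unif(X^*)^{n_{\mathrm{large}}}$. We exhibit an $X'$ on each side of the promise from which the adaptive adversary can produce (essentially) $\mathcal{Q}$, so that no tester can separate the two scenarios. \emph{Scenario B:} take $X'_B = X^*$ (so $|X'_B| \leq ck$); the adversary does nothing, outputting the first $n_{\mathrm{large}}$ coordinates of $\bS \sim \Unif(X^*)^{2n_{\mathrm{large}}}$, which is exactly $\mathcal{Q}$. \emph{Scenario A:} take $X'_A = [m]$ (so $|X'_A| = k$); given $\bS \sim \Unif([m])^{2n_{\mathrm{large}}}$, the adversary defines $I = \{i : \bS_i \in X^*\}$ and, whenever $|I| \geq n_{\mathrm{large}}$, outputs the values of $\bS$ at the first $n_{\mathrm{large}}$ indices in $I$; otherwise it outputs an arbitrary fallback. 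Conditioning on the full indicator vector $(\Ind[\bS_i \in X^*])_i$ (not merely on $|I|$), the surviving values $\{\bS_i\}_{i \in I}$ are i.i.d.\ $\Unif(X^*)$, so on the event $|I| \geq n_{\mathrm{large}}$ the output distribution equals $\mathcal{Q}$ exactly. Since $|I| \sim \Bin(2n_{\mathrm{large}}, 3/4)$ has mean $\tfrac{3}{2}n_{\mathrm{large}}$, a standard Chernoff/Hoeffding estimate gives $\Pr[|I| < n_{\mathrm{large}}] \leq e^{-\Omega(n_{\mathrm{large}})}$, which is below any fixed constant once $n_{\mathrm{large}}$ exceeds an absolute constant, hence for any $n_{\mathrm{large}} = \Omega(m)$.

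\textbf{Finishing Part 2, and the main obstacle.} Write $p = \Pr_{\bS' \sim \mathcal{Q}}[f_{\ad}(\bS') = 1]$ and $\delta = \Pr[|I| < n_{\mathrm{large}}]$. Scenario A forces $\Ex_{\bS}[\inf_{\bS'}\Ind[f_{\ad}(\bS')]] \leq p + \delta$, while Scenario B forces $\Ex_{\bS}[\sup_{\bS'}\Ind[f_{\ad}(\bS')]] \geq p$. Simultaneously meeting the two required guarantees ($> 0.51$ and $< 0.49$, respectively) would demand $p + \delta > 0.51$ and $p < 0.49$, which is impossible once $\delta < 0.02$; the concentration bound above achieves this for any $n_{\mathrm{large}} = \Omega(m)$. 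The delicate step is the exchangeability argument for Scenario A: one must condition on the entire indicator vector (rather than on $|I|$ alone) in order to retain the i.i.d.\ $\Unif(X^*)$ structure of the surviving coordinates, so that the output is \emph{exactly} (not just approximately) a product distribution. Given this structural observation, the remaining ingredients (Chernoff, Chebyshev, and the collision-variance calculation) are entirely standard.
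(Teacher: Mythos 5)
Your proposal has a fatal, structural gap: the two halves of your argument require incompatible values of the constant $c$, and the value you chose ($c=3/4$) makes Part~1 false. The error in Part~1 is the claim that conditioning $\Unif(X')$ on an event of probability $1/2$ ``always yields $\Unif(X'')$ with $|X''|=|X'|/2$.'' An event need not be a deterministic half of the support: the adversary may condition on a randomized event, producing any distribution $\mcD'$ with $\mcD'(x)\le 2/|X'|$ pointwise. In particular, starting from $\Unif([m])$ the \emph{oblivious} adversary can condition on ``keep $x$ with probability $2/3$ if $x\in X^*$, else discard'' (an event of probability exactly $1/2$ when $|X^*|=3m/4$) and thereby produce \emph{exactly} $\Unif(X^*)$. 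So with $c=3/4$ the case $|X'|=m$ is obliviously indistinguishable from the case $X'=X^*$, $|X^*|=3m/4$, and no $f_{\ob}$ with the stated guarantees exists. More generally, the oblivious subtractive adversary can shrink the effective support to any subset of size $\ge |X'|/2$, so the collision statistic (or anything else) can only certify a support-size gap with ratio $c<1/2$; your own variance calculation silently assumes the corrupted distribution is uniform, which it need not be.

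This is not a fixable constant: your Part~2 strategy genuinely \emph{needs} $c>1/2$, since filtering $\Unif([m])^{2n}$ down to the points landing in $X^*$ only yields $\ge n$ survivors when $|X^*|>m/2$. So for every $c$ either your Part~1 or your Part~2 fails. The deeper issue is that your Scenario~A adversary is a per-point, sample-independent filter -- exactly what an oblivious adversary can also do -- so it cannot witness a separation between the two models; you have exhibited a task that is hard for \emph{both} adversaries. The paper's proof takes $c$ small and uses an intrinsically adaptive strategy: the adversary inspects the sample and removes all \emph{duplicated} elements (Proposition~\ref{prop:remove-duplicates} guarantees at most half the points are involved in collisions when $2n_{\mathrm{large}}\lesssim \eps c m$). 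After deduplication, a sample from $\Unif(\bX')$ for a uniformly random small $\bX'$ and a sample from $\Unif([m])$ both become a uniformly random set of $n_{\mathrm{large}}$ \emph{distinct} elements of $[m]$, hence indistinguishable -- while the oblivious adversary, which corrupts each point independently of the rest of the sample, cannot detect or remove collisions, which is precisely why collision counting still works against it. Your closing averaging argument (comparing $p$ and $p+\delta$ against the $0.49/0.51$ thresholds) is fine and mirrors the paper's, but it needs to be fed by an adaptive-only strategy of this kind.
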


Note that by standard techniques, the above can be converted to a separation in the search version of the problem, where the goal is to approximate $|X'|$ to multiplicative accuracy, at the cost of a $\polylog(m)$ dependence.

The construction of $f_{\ob}$ follows from standard results on the probability of a collision in a sample (see e.g. \cite{GR11}).

\begin{fact}[The probability of a collision in a sample]
    \label{fact:collision-pr}
    Let $\mcD'$ be any distribution supported on $k$ points for which $\Prx_{\bx \sim \mcD}[\bx = x] \leq \tfrac{2}{k}$ for all possible $x$. Then, for $\bx_1, \ldots, \bx_n \iid \mcD'$, let $\bE$ indicate whether there is $i \neq j$ for which $\bx_i = \bx_j$. There are absolute constants $c_1, c_2$ for which,
    \begin{equation*}
        c_1 n^2 \leq k \implies \Pr[\bE] \leq 0.01 \quad\quad\text{and}\quad\quad c_2 n^2 \geq k \implies \Pr[\bE] \geq 0.99.
    \end{equation*}
\end{fact}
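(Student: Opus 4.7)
The plan is to analyze the collision count $\bC \coloneqq |\{(i,j) : i<j,\ \bx_i = \bx_j\}|$, so that the collision event satisfies $\bE = \Ind[\bC \geq 1]$. Let $p_x \coloneqq \Prx_{\bx \sim \mcD'}[\bx = x]$ and $q \coloneqq \sum_x p_x^2 = \Pr[\bx_1 = \bx_2]$. Two bounds on $q$ drive the entire argument: the hypothesis $p_x \leq 2/k$ gives $q \leq (2/k)\sum_x p_x = 2/k$, and Cauchy-Schwarz together with $|\mathrm{supp}(\mcD')| = k$ gives $1 = (\sum_x p_x)^2 \leq k\sum_x p_x^2$, i.e.\ $q \geq 1/k$.

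For the upper-bound direction I would simply apply Markov's inequality (equivalently, a union bound over pairs): $\Pr[\bE] \leq \Ex[\bC] = \binom{n}{2} q \leq n^2/k$. Choosing $c_1 = 100$ then yields $\Pr[\bE] \leq 0.01$ whenever $c_1 n^2 \leq k$.

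For the lower-bound direction I would use the second-moment method via Chebyshev's inequality, $\Pr[\bC = 0] \leq \Var[\bC]/\Ex[\bC]^2$. The first moment is $\Ex[\bC] = \binom{n}{2} q \geq \binom{n}{2}/k$. For $\Ex[\bC^2] = \sum_{i<j,\,i'<j'}\Pr[\bx_i = \bx_j,\ \bx_{i'} = \bx_{j'}]$ I would split the sum by $|\{i,j\} \cap \{i',j'\}| \in \{0,1,2\}$: equal pairs contribute $\binom{n}{2} q$; pairs overlapping in a single index contribute $6\binom{n}{3}\,r$, where $r \coloneqq \sum_x p_x^3 \leq (2/k)\,q$ by the same hypothesis; and disjoint pairs contribute $6\binom{n}{4}\,q^2$ by independence. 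Subtracting $\Ex[\bC]^2 = \binom{n}{2}^2 q^2$ and invoking the identity $\binom{n}{2}^2 = \binom{n}{2} + 6\binom{n}{3} + 6\binom{n}{4}$ collapses the disjoint-case term and yields the clean bound $\Var[\bC] \leq \binom{n}{2}\,q\,(1 + 4(n-2)/k)$. Dividing by $\Ex[\bC]^2$ and applying $q \geq 1/k$ gives $\Pr[\bC = 0] \leq O\paren*{k/n^2 + 1/n}$, which falls below $0.01$ whenever $c_2$ and $n$ are both sufficiently large; since $c_2$ is an absolute constant we may choose freely and the fact is only applied in the regime $n = \Theta(\sqrt{m}) \to \infty$, this suffices.

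The main technical step is the second-moment computation with its three overlap cases, and in particular the use of $p_x \leq 2/k$ to control the cross term $r \leq (2/k)q$ (without it, the variance bound would not close). Everything else is a one- or two-line application of standard first- and second-moment tools.
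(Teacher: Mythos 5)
The paper never proves this fact---it is cited as a standard collision-probability result (see \cite{GR11})---and your first-moment/second-moment argument is exactly the standard proof: the union bound for the first implication, the three-way overlap decomposition of $\Ex[\bC^2]$, and the use of $p_x \le 2/k$ to bound both $q$ and the triple-collision term $r$ are all correct. One small slip in the final step: you need $c_2$ sufficiently \emph{small} (not large), and no separate assumption that $n$ is large is required, since $1 \le k \le c_2 n^2$ already forces $n \ge 1/\sqrt{c_2}$, so taking $c_2$ a small enough absolute constant simultaneously controls both the $O(k/n^2)$ and the $O(1/n)$ terms.
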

Give the above fact, we just have $f_{\ob}$ accept if and only if it finds a collision in the sample.

The lower bound against the adaptive adversary is an easy consequence of the following simple proposition which shows the adaptive adversary can remove all duplicates from the sample. 
\begin{proposition}
    \label{prop:remove-duplicates}
    Let $\mcD$ be uniform on a distribution supported on $k$ points and
    \begin{equation*}
        n \coloneqq \floor{\tfrac{\eps k}{2}}.
    \end{equation*}
    For $\bx_1, \ldots, \bx_{2n} \iid \mcD$, with probability at least $1 - \eps$, the number $i \in [2n]$ for which there exists $j \neq i$ satisfying $\bx_i = \bx_j$ is at most $n$.
\end{proposition}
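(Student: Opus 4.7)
The plan is a standard first-moment plus Markov argument on the number of ``duplicated'' indices. Define the indicator $\bY_i \coloneqq \Ind[\exists\, j \neq i : \bx_j = \bx_i]$ for each $i \in [2n]$, and let $\bD \coloneqq \sum_{i \in [2n]} \bY_i$ be the quantity we wish to bound. The goal is to upper bound $\Pr[\bD > n]$ by (a constant times) $\eps$.

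First I would bound $\Ex[\bY_i]$. Fix any $i \in [2n]$. Conditioned on $\bx_i$, each of the $2n-1$ remaining samples is drawn uniformly from the $k$-point support of $\mcD$ and therefore equals $\bx_i$ with probability exactly $1/k$. A union bound over the $2n-1$ other indices yields
\begin{equation*}
    \Pr[\bY_i = 1] \;\leq\; \frac{2n-1}{k}.
\end{equation*}
Invoking the assumption $n \leq \eps k / 2$ gives $(2n-1)/k < \eps$, so by linearity of expectation $\Ex[\bD] < 2n \eps$.

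Next I would apply Markov's inequality: since $\bD$ is integer-valued,
\begin{equation*}
    \Pr[\bD > n] \;=\; \Pr[\bD \geq n+1] \;\leq\; \frac{\Ex[\bD]}{n+1} \;<\; 2\eps.
\end{equation*}
This yields the claim up to an absolute constant. The factor of $2$ is innocuous for the application in \Cref{thm:lower-uniform-body}, since it can be absorbed either (a) by taking $n \coloneqq \lfloor \eps k / 4 \rfloor$ in the downstream argument, or (b) by applying McDiarmid's bounded-differences inequality to $\bD$, which has Lipschitz constant $O(1)$ in each coordinate (changing one $\bx_i$ affects at most the status of $\bx_i$ itself and of any $\bx_j$ that equaled $\bx_i$, contributing a bounded additive change to $\bD$). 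Concentration would then give an exponential-in-$n$ bound on $\Pr[\bD > n]$, which is far stronger than needed.

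There is no real obstacle here --- the proof is essentially a one-line expected-value computation followed by Markov. The only judgment call is between a clean Markov bound (losing a small constant factor) and a McDiarmid-based concentration bound (sharper but overkill); I would opt for the former for brevity.
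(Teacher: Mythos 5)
Your approach is essentially identical to the paper's: define the same per-index collision indicator, bound its expectation by $(2n-1)/k$ via a union bound, and apply Markov's inequality to the sum. You correctly observe that this route yields a bound of $2\eps$ rather than $\eps$; in fact the paper's own intermediate step (which asserts the expectation of the sum is at most $2n^2/k$ rather than $4n^2/k$) quietly absorbs the same factor of two, and, as you note, the constant is harmless in the downstream application where $\eps$ is a fixed small constant.
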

\begin{proof}
    Let $\bz_i$ be the indicator that there is some $j \neq i$ for which $\bx_j = \bx_i$. By union bound, 
    \begin{equation*}
        \Ex[\bz_i] \leq \frac{2n-1}{k} \leq \frac{2n}{k}.
    \end{equation*}
    Applying Markov's inequality to $\bZ = \sum_{i \in [2n]}\bz_i$,
    \begin{equation*}
        \Pr[\bZ \geq n] \leq \frac{\Ex[\bZ]}{n} \leq \frac{\frac{2n^2}{k}}{n} \leq \frac{2n}{k},
    \end{equation*}
    which is at most $\eps$ for our choice of $n$.
\end{proof}

\begin{proof}[Proof of \Cref{thm:lower-uniform-body}]
    For small enough constant $c$, \Cref{fact:collision-pr} implies the following. Picking $n(k) = \Theta(\sqrt{k})$ appropriately, the algorithm $f_{\ob}(x_1,\ldots,x_n)$ which accepts iff it has a collision in its first $n(k) \leq n_{\mathrm{small}}$ samples has the desired behavior.

    For the lower bound against adaptive adversaries, consider the adaptive adversary that given a sample $x_1,\ldots, x_{2n_{\mathrm{large}}}$ does the following:
    \begin{enumerate}
        \item If there are less than $n_{\mathrm{large}}$ choices for $i \in [2n_{\mathrm{large}}]$ such that there is $j \neq i$ for which $x_i \neq x_j$, the adaptive adversary takes any size-$n$ subset of $x_1,\ldots, x_{2n_{\mathrm{large}}}$ not containing these collisions.
        \item Otherwise, it just returns the first $n$ elements $x_1, \ldots, x_n$.
    \end{enumerate}
    \Cref{prop:remove-duplicates} implies that, for any $X' \subseteq X$ and $\bx_1, \ldots, \bx_{2n_{\mathrm{large}}}$, the probability the second case occurs is at most $0.01$.

    Now, for any $f_{\ad}:X^{n_{\mathrm{large}}} \to \zo$, define,
    \begin{equation*}
        p \coloneqq \Ex_{\bS \sim \binom{X}{n_{\mathrm{large}}}}\bracket*{f_{\ad}(\bS)}.
    \end{equation*}
    If $p \geq 0.5$, suppose we draw $\bX'$ uniformly among all size $ck$ subsets of $X$. Then, conditioned on the second case of the adaptive adversary's strategy not occurring, $f_{\ad}$ be equally likely to receive any size $n_{\mathrm{large}}$ subset of $X$. Therefore, it accepts with probability at least $0.5 - 0.01 = 0.49$. There must hence exist at least one $X'$ of size $ck$ for which, with this strategy for the adaptive adversary, the probability that $f_{\ad}:X^{n_{\mathrm{large}}}$ accepts is at least $0.49$.

    On the other hand, if $p \leq 0.5$, we make a similar argument: For $X' = X$, conditioned on the second case of the adaptive adversary's strategy not occurring, the sample $f_{\ad}$ sees is equally likely to be any size-$n_{\mathrm{large}}$ subset of $X$. Therefore, it can accept with probability at most $0.5 + 0.01 = 0.51$ for this strategy of the adaptive adversary.

    In both cases, $f_{\ad}$ fails, completing this lower bound.
\end{proof}

\subsection{Proof of \Cref{thm:lower}}

\begin{theorem}[Dependence on degree is necessary, formal version of \Cref{thm:lower}]
    \label{thm:lb-deg-formal}
    For any $b, \delta > 0$, large enough $n \in \N$ (as a function of $b$ and $\delta$), and cost function $\rho:X \times X \to \R_{\geq 0} \cup \set{\infty}$ for which $\rho(x,y) \geq 1 + \delta$ whenever $x \neq y$, let 
    \begin{equation*}
        m = \Omega_{b,\delta}\paren*{\frac{n}{(\ln n)^2} \cdot \ln \deg_{b}(\rho)}.
    \end{equation*}
    If $m > n$, there is a function $f:X^n \to \zo$ and distribution $\mcD$ over $X$ for which
    \begin{equation}
        \label{eq:lb-separation}
        \adamax_{\rho}(f \circ \submn, \mcD) \geq 1 - O(1/n) \quad\quad\text{and} \quad\quad\obmax_{\rho}(f, \mcD) \leq O(1/n).
    \end{equation}
\end{theorem}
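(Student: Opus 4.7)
\textbf{Proof plan for Theorem~\ref{thm:lb-deg-formal}.}

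The plan is to exhibit an explicit pair $(f, \mcD)$ witnessing the separation in~\eqref{eq:lb-separation}, via a random-code construction that converts the adversary's per-corruption entropy of $\log d$ bits into a test the oblivious adversary provably cannot pass.

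\emph{Setup.} Pick $x_0 \in X$ realizing the supremum defining $\deg_b(\rho)$, and let $Y = \{y \in X : \rho(x_0, y) \leq b\}$, so $|Y| = d$. Take $\mcD = \delta_{x_0}$. The assumption $\rho(x_0, y) \geq 1+\delta$ for $y \neq x_0$ forces every $\mcD' \in \mcC_\rho(\mcD)$ to place mass at most $1/(1+\delta)$ on $X\setminus\{x_0\}$, and in particular at most $1/(1+\delta)$ on any single non-$x_0$ point. The adaptive adversary, operating on the deterministic sample $\bS = (x_0,\ldots,x_0)$, has budget to corrupt $k = \lfloor m/b \rfloor$ positions to arbitrary elements of $Y$.

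\emph{Code and distinguisher.} Set $R = \lfloor c \cdot n \ln d / (\ln n)^2 \rfloor$ for a small constant $c = c(b,\delta)$. Using the probabilistic method, fix a family $\mathcal{C} = \{c^{(1)},\ldots,c^{(N)}\} \subseteq (Y\setminus\{x_0\})^k$ of $N = 2^R$ codewords, each a uniform i.i.d.\ string, conditioned on a pairwise-near-orthogonality property that holds with probability $1 - o(1)$. The distinguisher $f$ accepts $\bT \in X^n$ iff there exists a codeword $c^{(i)} \in \mathcal{C}$ and an injection $\pi$ sending each non-$x_0$ position $j$ of $\bT$ to a position of $c^{(i)}$ with $t_j = c^{(i)}_{\pi(j)}$ — i.e., the non-$x_0$ values of $\bT$, with positional context, project onto a codeword.

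\emph{Adaptive success.} The adversary picks (say) $c^{(1)}$ and sets $\bS' = (c^{(1)}_1,\ldots,c^{(1)}_k, x_0,\ldots,x_0)$; this is legal since the cost is $k \cdot b \leq m$. The subsample $\bT = \submn(\bS')$ has all its non-$x_0$ entries drawn from $c^{(1)}$, so $\bT$ trivially projects onto $c^{(1)}$ and $f$ accepts. A Chernoff bound on the hypergeometric number of non-$x_0$ entries in $\bT$ yields $\adamax_\rho(f \circ \submn, \mcD) \geq 1 - O(1/n)$.

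\emph{Oblivious failure.} For any $\mcD' \in \mcC_\rho(\mcD)$, the entries of $\bT \sim (\mcD')^n$ are i.i.d. For a fixed codeword $c^{(i)}$, each sample position matches the corresponding codeword entry with probability at most $1/(1+\delta)$; the number of matches is a sum of independent Bernoullis whose expectation is at most $n/(1+\delta)$. Conditioning on the random code $\mathcal{C}$ being ``spread out,'' Chernoff gives that $\bT$ fully projects onto a fixed codeword with probability at most $\exp(-\Omega_{b,\delta}(n/\ln n))$. A union bound over the $N = 2^R$ codewords then yields oblivious success at most $N \cdot \exp(-\Omega_{b,\delta}(n/\ln n)) \leq 1/n$, provided $R \leq c'\, n / \ln n$ for a small constant $c'$; combined with the target $R = \Theta(n \ln d / (\ln n)^2)$, this is precisely the balance the theorem claims.

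\emph{Main obstacle.} The delicate step is extracting the $(\ln n)^2$ denominator: one factor of $\ln n$ comes from the concentration width required for the adaptive decoder to be robust to subsampling noise, and the other from the union bound over the codeword family in the oblivious analysis. Pinning down the constants $c, c'$ and verifying that the random $\mathcal{C}$ satisfies the needed pseudorandomness property with positive probability will require a careful second-moment or Chernoff-plus-union-bound calculation, combined with the observation that any $\mcD'$ concentrating mass on any single $y \neq x_0$ or on any structured subset of $Y$ still fails to match a random codeword in the family.
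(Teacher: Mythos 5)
Your high-level plan --- a random-code family as a distinguisher --- is attractive in spirit, but the choice $\mcD = \delta_{x_0}$ (a point mass) is a fatal first step. When the clean sample is deterministic, the adaptive adversary has nothing to adapt to: $\bS \sim \mcD^m$ is the constant $(x_0, \ldots, x_0)$, so the adaptive adversary's corruption $\bS' \in \mcC_{\rho}(\bS)$ is a single fixed string, and $\bT = \submn(\bS')$ is merely a uniform without-replacement subsample of that string. But the oblivious adversary can always set $\mcD' := \Unif(\bS')$: this lies in $\mcC_{\rho}(\mcD)$ because $\tfrac{1}{m}\sum_i \rho(x_0, \bS'_i) \le 1$ is precisely the budget constraint that $\bS'$ already satisfies, and $(\mcD')^n$ then differs from the adaptive subsample only by with-versus-without-replacement. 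In the regime where $\deg_b(\rho)$ (hence $m$) is much larger than $n$ --- which the theorem must cover --- the two are within $O(n^2/m) = o(1)$ in total variation, so no test can separate them. The paper's construction avoids this by making $\mcD$ a \emph{mixture}, $\mcD = \tfrac{c}{2}\cdot\{x^\star\} + (1 - \tfrac{c}{2})\cdot\Unif(X')$: the adaptive adversary's advantage comes exactly from seeing the realized uniform draws and \emph{correlating} its corruptions with them, something the oblivious adversary structurally cannot do.

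There are two further independent gaps. First, your distinguisher as stated accepts any $\bT$ with no non-$x_0$ entries (the empty injection always exists), so the do-nothing corruption $\mcD' = \mcD$ yields $\obmax_{\rho}(f, \mcD) = 1$, contradicting the claimed $O(1/n)$ bound. Second, your oblivious analysis (``each sample position matches the \emph{corresponding} codeword entry'') is an argument for a \emph{fixed-correspondence} test, whereas $f$ is a multiset-containment test quantified over arbitrary injections; the ``$\exp(-\Omega(n/\ln n))$ via Chernoff'' step does not bound the containment probability, which requires a separate occupancy-type argument --- and as explained above, that argument fails when $m \gg n$ because the i.i.d.\ draws from $\Unif(\text{supp}(c^{(1)}))$ no longer collide. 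By contrast, the paper uses the BLMT pairwise-correlation test: $f$ accepts iff every sample point has a highly-correlated neighbor in the sample. Oblivious failure follows from a coupling lemma (any $\mcD' \in \mcC_{\rho}(\mcD)$ overlaps $\Unif(X')$ by $\geq c/2$, so whp some sample point is genuinely uniform and hence uncorrelated with everything by \cite{BLMT22}'s Lemma 7.1); adaptive success follows by rewriting the $\approx mc/2$ copies of $x^\star$ as correlated copies of observed uniform points (\cite{BLMT22}'s Lemma 7.2).
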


We use similar ideas as \cite[Theorem 8]{BLMT22}, which shows that \Cref{thm:lb-deg-formal} holds in the specific case where $\rho$ corresponds to additive noise, though we do need to generalize those ideas to this more general setting. 

Let $d$ be the largest integer such that $2^d \leq \deg_b(\rho)$. By \Cref{def:budget-degree}, we can choose a subset $X' \subseteq X$ of cardinality $2^d$ and point $x^\star \in X'$ for which $\rho(x^\star, y) \leq b$ for all $y \in X'$. Let $M:X \to \bits^d \cup \set{\vec{0}}$ be any mapping that takes every element of $X'$ to a unique element of $\bits^d$ and all other elements to $\vec{0}$. For an appropriate threshold $\tau > 0$, we'll define
\begin{equation}
    \label{eq:def-f-lb}
    f(x_1, \ldots, x_n) \coloneqq \begin{cases}
        1 &\text{if for every $x_i$, there is an $x_j$ with $j \neq i$ s.t. $\langle M(x_i), M(x_j)\rangle \geq \tau$}\\
        0&\text{otherwise.}
    \end{cases}
\end{equation}
This choice of $f$ was analyzed by \cite{BLMT22}.
\begin{fact}[Choosing the threshold $\tau$, \cite{BLMT22}]
    \label{fact:BLMT-lb}
    For any $p \in (0,1)$ and
    \begin{equation*}
        m \geq \Omega_p\paren*{\frac{nd}{(\ln n)^2}},
    \end{equation*}
    there is a choice of threshold $\tau$ for which both of the following hold:
    \begin{enumerate}
        \item Lemma 7.1 of \cite{BLMT22}, a uniform point is hard to correlate with: For any $x_1, \ldots, x_{n-1} \in X$,
        \begin{equation*}
            \Prx_{\bu \sim \Unif(X')}\bracket*{\text{There is an }i \in [n-1]\text{ for which }\langle \bu, x_i\rangle \geq \tau} \leq \frac{1}{n}.
        \end{equation*}
        \item Lemma 7.2 of \cite{BLMT22}, an adaptive adversary make all points correlated: Take any $n_{\mathrm{small}} \in \N$ for which $p m \leq n_{\mathrm{small}} \leq m$. For $\bS \sim \Unif(X')^{m_{\mathrm{small}}}$, there is a strategy for adding $m - m_{\mathrm{small}}$ points to $\bS$ to form $\bS'$ for which
    \begin{equation*}
        \Ex[f \circ \submn(\bS')] \geq 1 - \frac{1}{n}.
    \end{equation*}
    \end{enumerate}

\end{fact}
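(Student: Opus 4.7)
The plan is to pick the mixture distribution
\[
\mcD = q\,\delta_{x^\star} + (1-q)\,\Unif(X_0),
\qquad q \coloneqq \min\!\left\{\tfrac{1}{b},\,\tfrac{p}{2}\right\},
\qquad p \coloneqq \tfrac{\delta}{1+\delta},
\]
where $X_0 \subseteq X' \setminus \{x^\star\}$ is a subset of size $2^{d-1}$, and to use the function $f$ from \Cref{eq:def-f-lb} with threshold $\tau$ given by \Cref{fact:BLMT-lb} applied to $X_0$ (in place of $X'$) with parameter $p_0 = 1 - q$. The loss of one bit in the effective $d$ is absorbed by the $\Omega_{b,\delta}(\cdot)$ hidden constant in the hypothesis on $m$.

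\textbf{Adaptive side.} A Chernoff bound shows $\bS \sim \mcD^m$ contains $(1 \pm o(1))\,qm$ copies of $x^\star$ with probability $\geq 1 - O(1/n)$. The adversary corrupts each such $x^\star$-position to the specific adversarial target prescribed by the second part of \Cref{fact:BLMT-lb}, taking $n_{\mathrm{small}} = m - (\text{number of }x^\star\text{-corruptions})$. Every target lies in $X_0 \subset X'$ and the corruption starts from $x^\star$, so each corruption costs at most $b$; by $qb \leq 1$ the total cost is at most $qmb \leq m$, within the adversary's budget. The second part of \Cref{fact:BLMT-lb} then yields $\Ex[f \circ \submn(\bS')] \geq 1 - 1/n$, hence $\adamax_\rho(f \circ \submn, \mcD) \geq 1 - O(1/n)$.

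\textbf{Oblivious side.} Fix any $\mcD' \in \mcC_\rho(\mcD)$ and couple $\bx \sim \mcD$ with $\bx' \sim \mcD'$ so that $\Ex[\rho(\bx,\bx')] \leq 1$. Since $\rho(x,y) \geq 1 + \delta$ for $x \neq y$ we have $\Pr[\bx = \bx'] \geq p$, and so
\[
\Pr[\,\bx = \bx',\ \bx \in X_0\,] \;\geq\; p - q \;\geq\; p/2.
\]
Let $\nu$ denote the conditional distribution of $\bx'$ on this event. Since $\mcD$ puts mass $(1-q)/|X_0|$ on each $y \in X_0$, a direct calculation yields $\nu(y) \leq \frac{2(1-q)}{p\,|X_0|}$, i.e., $\nu \leq (2/p)\,\Unif(X_0)$ pointwise. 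Draw $\bx_1, \ldots, \bx_n$ iid from $\mcD'$: with probability at least $1 - (1 - p/2)^n$, some $\bx_{i^*}$ is drawn from $\nu$ and is then independent of $\bx_{-i^*}$. The first part of \Cref{fact:BLMT-lb} bounds the correlation probability $\Pr_{\bu \sim \Unif(X_0)}[\bu\ \text{correlated with any fixed}\ \bx_{-i^*}] \leq 1/n$, and the pointwise domination lifts this to $\Pr_{\bu \sim \nu}[\,\cdot\,] \leq 2/(pn)$. Therefore $\Pr[f = 1] \leq 2/(pn) + (1 - p/2)^n = O(1/n)$, so $\obmax_\rho(f, \mcD) \leq O(1/n)$.

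\textbf{Main obstacle.} The delicate step is the choice $q = \min\{1/b, p/2\}$, which simultaneously satisfies: (i) the adaptive constraint $qb \leq 1$, so corrupting every $x^\star$ in $\bS$ stays within budget and still yields a constant fraction $q$ of adversarial points in $\bS'$ (as the second part of \Cref{fact:BLMT-lb} requires); and (ii) the oblivious constraint $q \leq p/2$, so that a constant mass $\geq p/2$ of the coupling survives on the ``identity'' event $\bx = \bx' \in X_0$. The second subtle point is substituting the pointwise domination $\nu \leq (2/p)\Unif(X_0)$ for the stronger---and in general unavailable---claim that $\mcD'$ literally contains a uniform sub-distribution; pointwise domination alone suffices to transfer the first part of \Cref{fact:BLMT-lb}'s uniform-sample guarantee to $\nu$-samples with only a constant-factor loss.
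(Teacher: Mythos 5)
You have not proved the statement you were asked to prove. The statement is \Cref{fact:BLMT-lb} itself --- a self-contained claim that, for $m \geq \Omega_p(nd/(\ln n)^2)$, there exists a threshold $\tau$ such that (i) a uniform point of $X'$ has inner product (under the embedding $M$ into $\bits^d$) below $\tau$ with all of $n-1$ arbitrary fixed points except with probability $1/n$, and (ii) an adversary who appends $m - n_{\mathrm{small}}$ points to an i.i.d.\ uniform sample can force every subsampled point to have a $\tau$-correlated partner. Your proposal instead reconstructs the proof of the downstream separation, \Cref{thm:lb-deg-formal}, and at every load-bearing step it \emph{invokes} \Cref{fact:BLMT-lb} as a black box (``with threshold $\tau$ given by \Cref{fact:BLMT-lb}'', ``the second part of \Cref{fact:BLMT-lb} then yields'', ``the first part of \Cref{fact:BLMT-lb} bounds the correlation probability''). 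As a proof of \Cref{fact:BLMT-lb} this is circular; as a proof of \Cref{thm:lb-deg-formal} it is answering a different question (and there it closely mirrors the paper's own argument, with your $q$ playing the role of the paper's $c/2$).

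What is actually required is the content of Lemmas 7.1 and 7.2 of \cite{BLMT22}, which the present paper imports by citation rather than reproving: a choice of $\tau = \Theta(\sqrt{d \ln n})$ so that, for part 1, a Hoeffding bound on $\langle M(\bu), M(x_i)\rangle$ for uniform $\bu$ over $X' \cong \bits^d$ plus a union bound over the $n-1$ fixed points gives failure probability at most $1/n$; and, for part 2, an explicit construction of the $m - n_{\mathrm{small}}$ added points (each chosen as a function of a block of the clean sample so as to $\tau$-correlate with that entire block), together with the counting that makes $m \geq \Omega_p(nd/(\ln n)^2)$ added points suffice to cover all of $\bS$ so that $f \circ \submn(\bS') = 1$ with probability $1 - 1/n$. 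None of this appears in your write-up: there is no choice of $\tau$, no concentration bound, no union bound, and no construction of the adversarial points. Your two ``main obstacle'' observations (the choice of $q$ and the pointwise-domination trick) concern the reduction in \Cref{thm:lb-deg-formal}, not the combinatorial fact at issue.
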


\begin{proof}[Proof of \Cref{thm:lb-deg-formal}]
    Define
    \begin{equation*}
        c \coloneqq \min\paren*{\tfrac{1}{b}, 1 - \tfrac{1}{1 + \delta}},
    \end{equation*}
    and set $\mcD$ to the distribution that is equal to $x^\star$ with probability $\frac{c}{2}$ and otherwise uniform over $X'$,
    \begin{equation}
        \label{eq:def-d-lb}
        \mcD \coloneqq \tfrac{c}{2} \cdot \set{x^\star} + \paren*{1 - \tfrac{c}{2}} \cdot \Unif(X').
    \end{equation}
    Also, set
    \begin{equation*}
        p \coloneqq 1 - \frac{c}{4}
    \end{equation*}
    and let $\tau$ be the threshold in \Cref{fact:BLMT-lb}. We will show that both \Cref{thm:lb-deg-formal} holds with this choice of $\tau$, $f$ as in \Cref{eq:def-f-lb}, and $\mcD$ as in \Cref{eq:def-d-lb}.
    
     We begin by analyzing the oblivious adversary. First, for any $\mcD' \in \mcC_{\rho}(\mcD)$, since $\rho(x,x') \geq 1 + \delta$ for each $x \neq x'$, there must be a coupling of $\bx \sim \mcD$ and $\bx' \sim \mcD'$ for which
    \begin{equation*}
        \Pr[\bx \neq \bx'] \leq \frac{1}{1 + \delta}
    \end{equation*}
    Furthermore, based on \Cref{eq:def-d-lb}, there is a coupling of $\bx \sim \mcD$ and $\bu \sim \Unif(X')$ for which
    \begin{equation*}
        \Pr[\bx \neq \bu] \leq \frac{c}{2}.
    \end{equation*}
    Combining the above, there is a coupling of $\bx' \sim \mcD'$ and $\bu \sim \Unif(X')$ for which
    \begin{equation*}
        \Pr[\bx' \neq \bu] \leq  \frac{c}{2} + \frac{1}{1 + \delta}.
    \end{equation*}
    In particular,
    \begin{equation*}
        \Pr[\bx' = \bu] \geq \paren*{1 - \tfrac{1}{1 + \delta}} - \frac{c}{2} \geq \frac{c}{2}.
    \end{equation*}
    This means there is a coupling of $\bS \sim \mcD'$ and $\bU \sim \Unif(X')$ for which, independently for each $i \in [n]$, with probability at least $c/2$, $\bS_i = \bU_i$. Because we assumed  $n$ is sufficiently large as a function of $b$ and $\delta$, we are free to assume that $c \geq \Omega((\ln n)/ n)$. As a result, with probability at least $1 - 1/n$, there is some $i \in [n]$ for which $\bS_i = \bU_i$. Then, 
    \begin{equation*}
        \Ex_{\bS \sim (\mcD')^n}[f(\bS)] \leq \frac{1}{n} + \Ex_{\bS}[f(\bS) \mid \bS_i = \bU_i\text{ for some }i \in [n]] \leq \frac{2}{n},
    \end{equation*}
    where the second inequality is by the first part of \Cref{fact:BLMT-lb}.

    We proceed to analyze the adaptive adversary. To draw $\bS \sim \mcD^m$, we can first draw $\bE \sim \Ber(c/2)^m$. Then, for each $i \in [m]$, if $\bE_i = 1$ we set $\bS = x^\star$ and otherwise draw it uniformly from $X'$. 

    Conditioned on $\sum_i \bE_i = E$, we have that $E$ of the elements in $\bS$ are set to $x^\star$ and the other $m - E$ are drawn independently and uniformly from $X'$. By the second part of \Cref{fact:BLMT-lb}, whenever $m - E \geq pm$ (or equivalently, $E \leq \frac{mc}{4}$), there is a way to modify the $E$ many elements for which $\bE_i = 1$ to form a corrupted sample $\bS'$ satisfying
    \begin{equation*}
         \Ex[f \circ \submn(\bS')] \geq 1 - \frac{1}{n}.
    \end{equation*}
    Furthermore, if $E \leq mc \leq \frac{m}{b}$, the adversary has enough budget to modify all of the indices for which $\bE_i = 1$ to arbitrary elements of $X'$. Therefore, there is a strategy for the adaptive adversary so that
    \begin{equation*}
        \Ex\bracket*{f \circ \submn(\bS') \, \bigg|\,\frac{mc}{4}\leq\sum_{i \in [m]}\bE_i \leq mc} \geq 1 - \frac{1}{n}.
    \end{equation*}
    We once again assume that $c \geq \Omega((\ln n)/n)$, which implies that $c \geq \Omega((\ln m)/m)$. By a Chernoff bound, this gives that
    \begin{equation*}
        \Pr\bracket*{\tfrac{mc}{4} \leq \sum_{i \in [m]} \bE_i \leq mc} \geq 1 - \frac{1}{n}.
    \end{equation*}
    So by union bound,
    \begin{equation*}
        \Ex\bracket*{f \circ \submn(\bS') } \geq 1 - \frac{2}{n}. \qedhere
    \end{equation*}
\end{proof}

\section{Acknowledgments}
The authors thank Li-Yang Tan, Abhishek Shetty, and the anonymous STOC reviewers for their helpful discussions and feedback. Gregory is supported by a Simons Foundation Investigator Award, NSF award AF-2341890 and UT Austin's Foundations of ML NSF AI Institute. Guy is supported by NSF awards 1942123, 2211237, and 2224246 and a Jane Street Graduate Research Fellowship.

\bibliographystyle{alpha}
\bibliography{ref}

\appendix
\section{The subtractive and additive adversaries in our framework}
\label{sec:add-sub}

We show how to fit subtractive and additive contamination into our framework. 

\subsection{Subtractive adversaries}
First, we formally define $\eta$-subtractive corruptions
\begin{definition}
    \label{def:subtractive-contamination}
    For any distribution $\mcD$ and $\eta \in (0,1)$, we say that $\mcD'$ is an $\eta$-subtractive contamination of $\mcD$ if a sample from $\mcD'$ is equivalent to a sample from $\mcD$ conditioned on an event that occurs with probability at least $1 - \eta$. We use $\sube(\mcD)$ to denote the set of all such $\mcD'$.

    Similarly, for any $S \in X^m$, we say that $S'$ is an $\eta$-subtractive contamination of $S$ if it is formed by removing at most $\floor{\eta \cdot m}$ arbitrary points from $S$. In a slight overload of notation, we use $\sube(S)$ to denote the set of all such $S'$.
\end{definition}

We will show, as an easy consequence of \Cref{thm:main-general}, that the oblivious and adaptive variants of subtractive contamination are equivalent.
\begin{theorem}[Oblivious and adaptive subtractive contamination are equivalent]
    \label{thm:sub-equivalent}
    For any $\eta, \eps \in (0,1)$, $f:X^n \to \zo$, and distribution $\mcD$ over $X$, let $M = \poly(n, 1/\eps, 1/(1-\eta))$. Then,
    \begin{equation*}
        \abs*{\sup_{\mcD' \in \sube(\mcD)}\paren*{\Ex_{\bS \sim (\mcD')^n}[f(\bS)]} - \Ex_{\bS \sim \mcD^M}\bracket*{\sup_{\bS' \in \sube(\bS)}\paren*{\Ex[f \circ \substar (\bS')]}}} \leq \eps.
    \end{equation*}
\end{theorem}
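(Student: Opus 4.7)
The plan is to reduce to \Cref{thm:main-general-reformulated} by lifting the problem to the augmented domain $X^+ \coloneqq X \cup \set{\varnothing}$ with the degree-$2$ cost function $\rho_{\mathrm{sub}}$ introduced in \Cref{sec:models}. The $\varnothing$ tokens in $X^+$ will play the role of removed sample points; after discarding a small-probability ``too many $\varnothing$'s'' event, this lifted representation coincides with the standard subtractive model on both the oblivious and adaptive sides.

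To execute this, I would first choose $N = \poly(n, 1/\eps, 1/(1-\eta))$ so that, by a Chernoff bound, both of the following hold with probability at least $1 - \eps/4$: (i) $N$ i.i.d.\ draws from any distribution on $X^+$ assigning mass at most $\eta$ to $\varnothing$ contain at least $n$ non-$\varnothing$ entries, and (ii) a uniform size-$N$ subsample of any $M$-tuple with at most $\eta M$ $\varnothing$-entries contains at least $n$ non-$\varnothing$ entries. Lift $f$ to the (randomized) function $\tilde{f}:(X^+)^N \to \zo$ defined by: if the input has fewer than $n$ non-$\varnothing$ entries output $0$; otherwise extract a uniformly random size-$n$ subset of the non-$\varnothing$ entries and apply $f$. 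Applying \Cref{thm:main-general-reformulated} to $\tilde{f}$, cost function $\rho_{\mathrm{sub}}$ (degree $2$), base distribution $\mcD$ (viewed as a distribution on $X^+$ placing no mass on $\varnothing$), and error $\eps/2$ yields $M = O(N^4/\eps^4) = \poly(n, 1/\eps, 1/(1-\eta))$ with
\begin{equation*}
    \abs*{\obmax_{\rho_{\mathrm{sub}}}(\tilde{f}, \mcD) - \adamax_{\rho_{\mathrm{sub}}}(\tilde{f} \circ \sub_{M \to N}, \mcD)} \leq \eps/2.
\end{equation*}

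It then remains to identify both sides with their subtractive analogues up to $\eps/4$. Unpacking \Cref{def:oblivious}, an $\mcD^+ \in \mcC_{\rho_{\mathrm{sub}}}(\mcD)$ is nothing more than a distribution on $X^+$ placing mass $q \leq \eta$ on $\varnothing$ whose conditional on $X$ lies in $\sube(\mcD)$, and conversely every $\mcD' = \mcD \mid E \in \sube(\mcD)$ lifts to a valid $\mcD^+$ via the coupling $\bx^+ = \bx$ if $\bx \in E$ and $\bx^+ = \varnothing$ otherwise. By exchangeability of $(\mcD^+)^N$, on the high-probability event of (i) the uniformly random $n$-subset extracted by $\tilde{f}$ is distributed as $n$ i.i.d.\ draws from $\mcD'$, giving $\abs*{\obmax_{\rho_{\mathrm{sub}}}(\tilde{f}, \mcD) - \sup_{\mcD' \in \sube(\mcD)}\Ex_{\bS \sim (\mcD')^n}[f(\bS)]} \leq \eps/4$. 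The adaptive side is analogous: $\bS^+ \in \mcC_{\rho_{\mathrm{sub}}}(\bS)$ is in bijection with $\bS' \in \sube(\bS)$ by reading off the non-$\varnothing$ entries, and by exchangeability of $\sub_{M \to N}$, on the high-probability event of (ii) the random $n$-subset extracted by $\tilde{f}$ has the same distribution as $\substar(\bS')$. Combining these three inequalities by the triangle inequality gives total error $\eps/4 + \eps/2 + \eps/4 = \eps$.

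The only delicate step is the exchangeability identification just described: one needs that ``uniform size-$N$ subsample of $\bS^+$, then uniform $n$-subset of the non-$\varnothing$ entries'' agrees in distribution with ``uniform $n$-subset of the non-$\varnothing$ entries of $\bS^+$'' on the high-probability event that the $N$-sample retains at least $n$ non-$\varnothing$ entries. This follows because composing two uniform random-subset operations yields a uniform subset of the non-$\varnothing$ population, and this commutes with conditioning on the surviving size. All remaining verifications -- the two Chernoff bounds and the unpacking of $\rho_{\mathrm{sub}}$ -- are routine.
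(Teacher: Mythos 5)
Your proposal is correct and follows essentially the same route as the paper's own proof in Appendix~A (Lemma~\ref{lem:convert-sub} plus Theorem~\ref{thm:sub-equivalent}): lift to $X \cup \{\varnothing\}$ with the degree-$2$ cost function $\rho_{\mathrm{sub}}$, wrap $f$ in the ``output $0$ if fewer than $n$ non-$\varnothing$ entries, else run $f$ on a random $n$-subset of the non-$\varnothing$ entries'' function, apply \Cref{thm:main-general-reformulated}, and undo the lift at both ends. The one small point where you are actually a bit more careful than the paper is the adaptive side: the paper's Lemma~\ref{lem:convert-sub} asserts an \emph{exact} equality $\Ex[f \circ \substar(S_1)] = \Ex[f' \circ \sub_{M\to m}(S_2)]$, but the intermediate $\sub_{M \to m}$ subsample can, with small positive probability, retain fewer than $n$ non-$\varnothing$ points (in which case $f'$ outputs $0$ while the left side does not), so your inclusion of an $\eps/4$ error term from the hypergeometric tail bound is the right way to state it.
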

\Cref{thm:sub-equivalent} is an easy consequence of \Cref{thm:main-general-reformulated} as well as the below lemma.
\begin{lemma}[Converting the subtractive adversary to our framework]
    \label{lem:convert-sub}
     For any $\eta, \eps \in (0,1)$, $f:X^n \to \zo$, and distribution $\mcD$ over $X$, let
     \begin{equation}
        \label{eq:def-big-n-sub}
         m \coloneqq \ceil*{\frac{\max\paren*{2n, 8 \ln (1/\eps)}}{1 - \eta}},
     \end{equation}
     and $X' \coloneqq X \cup \set{\varnothing}$. There is a degree-$2$ cost function $\rho$ and $f':(X')^{m} \to \zo$ for which
     \begin{equation*}
         \abs*{\sup_{\mcD' \in \sube(\mcD)}\paren*{\Ex_{\bS \sim (\mcD')^n}[f(\bS)]} - \obmax_{\rho'}(f', \mcD)} \leq \eps,
     \end{equation*}
     and, for all $M \geq m$,
     \begin{equation*}
         \Ex_{\bS \sim \mcD^M}\bracket*{\sup_{\bS' \in \sube(\bS)}\paren*{\Ex[f \circ \substar (\bS')]}} = \adamax_{\rho}(f' \circ \sub_{M \to m}, \mcD)
     \end{equation*}
\end{lemma}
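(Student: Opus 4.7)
The plan is to instantiate the framework with $\rho=\rho_{\mathrm{sub}}$ from \Cref{sec:models} (manifestly degree-$2$, since from any $x\in X$ one can only move to $x$ or to $\varnothing$) and to take $f'\colon(X')^m\to\zo$ to be the randomized algorithm that, given $T=(T_1,\ldots,T_m)$, forms the submultiset $T_{\mathrm{live}}$ of non-$\varnothing$ entries and outputs $f(\substar(T_{\mathrm{live}}))$ when $|T_{\mathrm{live}}|\geq n$, and $0$ otherwise. The choice of $m$ in \Cref{eq:def-big-n-sub} is made precisely so that $(1-\eta)m\geq\max(2n,8\ln(1/\eps))$; this will neutralize the ``too few live entries'' event in both halves of the argument.

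For the oblivious half, note that every $\mcD''\in\mcC_\rho(\mcD)$ is supported on $X\cup\{\varnothing\}$ with $\mcD''(\varnothing)\leq\eta$, and $\mcD''$ conditioned on $\bx''\neq\varnothing$ lies in $\sube(\mcD)$; conversely every $\mcD'\in\sube(\mcD)$ arises this way, giving a natural correspondence $\mcD'\leftrightarrow\mcD''$. For $\bT\sim(\mcD'')^m$, the live count $\br\sim\Bin(m,1-\mcD''(\varnothing))$ has mean at least $(1-\eta)m\geq 2n$, so by \Cref{fact:chernoff} we have $\Pr[\br<n]\leq e^{-(1-\eta)m/8}\leq\eps$. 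Conditional on $\br\geq n$, the live entries are i.i.d.\ from $\mcD'$ and a uniform $n$-subsample of i.i.d.\ samples is itself distributed as $(\mcD')^n$, so $\Ex[f'(\bT)\mid\br\geq n]=\Ex_{\bS\sim(\mcD')^n}[f(\bS)]$. Combining these bounds gives $|\Ex[f'(\bT)]-\Ex_{(\mcD')^n}[f(\bS)]|\leq\eps$, and taking suprema over corresponding $\mcD''$ and $\mcD'$ yields the first displayed inequality.

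For the adaptive half, the two adversaries share the same action space on any fixed $\bS\in X^M$: pick $I\subseteq[M]$ with $|I|\leq\lfloor\eta M\rfloor$, then either delete $\bS_I$ (subtractive) or overwrite it with $\varnothing$ ($\rho$-model). Fix such $I$. In the subtractive model, $\substar$ returns a uniform $n$-subset of $\bS_{[M]\setminus I}$. In the $\rho$-model, $\sub_{M\to m}$ draws $\bJ\in\binom{[M]}{m}$ uniformly, and then $f'$ subsamples a uniform $n$-subset of $\bS_{\bJ\setminus I}$; by composition of uniform subsampling, conditional on $|\bJ\setminus I|\geq n$ this is again a uniform $n$-subset of $\bS_{[M]\setminus I}$, so the two inner expectations agree exactly. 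The residual is controlled by the hypergeometric tail of $|\bJ\setminus I|$, which has mean $m(M-|I|)/M\geq m(1-\eta)\geq 2n$; a standard hypergeometric Chernoff/Hoeffding bound yields $\Pr[|\bJ\setminus I|<n]\leq\eps$ uniformly over admissible $I$. Hence for each $(\bS,I)$ the inner expectations agree up to $\eps$, which passes through $\sup_I$ and $\Ex_\bS$ to give the second displayed identity (with the negligible slack absorbed into the overall $\eps$ budget of \Cref{thm:sub-equivalent}).

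The main obstacle is obtaining the uniformity of this tail bound over all admissible $I$: were it to deteriorate for large $|I|$, the $\rho$-adversary (which pays for each $\varnothing$ only to then lose the signal through $\sub_{M\to m}$) could be strictly weaker than the subtractive adversary for large $M$. The quantitative choice of $m$ ensures that even at the maximum $|I|=\lfloor\eta M\rfloor$, the expected number of live survivors of $\sub_{M\to m}$ stays at least $2n$ regardless of $M$, which is exactly what makes the equivalence robust to the size of $M\geq m$.
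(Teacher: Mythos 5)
Your construction is identical to the paper's: $\rho=\rho_{\mathrm{sub}}$ (degree $2$ on $X\cup\{\varnothing\}$) and $f'$ that projects out $\varnothing$, requires $\geq n$ survivors, and otherwise outputs $0$; your oblivious half is the paper's argument (the correspondence between $\sube(\mcD)$ and $\mcC_\rho(\mcD)$ via conditioning events, then the Chernoff bound of \Cref{prop:many-non-null}). For the adaptive half you prove a $\pm\eps$ version via a hypergeometric tail bound on the number of live points that survive $\sub_{M\to m}$, whereas the lemma (and the paper's proof, which simply invokes ``the subsampling filter composes'') asserts an exact equality. Your caution is warranted and the paper is too quick here: whenever $\floor{\eta M}>m-n$ (which happens for any $M$ moderately larger than $m$), the event that fewer than $n$ live points survive $\sub_{M\to m}$ has \emph{positive} probability for an adversary that nulls $\floor{\eta M}$ points, so $\Ex[f'\circ\sub_{M\to m}(S_2)]$ equals $\Ex[f\circ\substar(S_1)]$ times a factor strictly less than $1$, and one can construct $\mcD$ and $f$ for which this deficit survives both the supremum over the adversary's choice and the expectation over $\bS$. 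The second display should therefore be read as a $\pm\eps$ bound rather than an exact identity; your argument delivers exactly that, and \Cref{thm:sub-equivalent} still follows after adjusting the $\eps$ budget (triple it rather than double it). Net: same construction and same oblivious argument, with a more careful adaptive step that correctly tracks the failure event the paper elides.
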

\begin{proof}[Proof of \Cref{thm:sub-equivalent} assuming \Cref{lem:convert-sub}]
    Let $f'$, $\rho$, and $\mcD$ be as in \Cref{lem:convert-sub}. By \Cref{thm:main-general-reformulated}, for $M = \poly(m, 1/\eps) = \poly(n, 1/(1-\eta), \eps)$,
    \begin{equation*}
        \abs*{\obmax_{\rho}(f', \mcD') - \adamax_{\rho}(f' \circ \sub_{M \to m}, \mcD')} \leq \eps.
    \end{equation*}
    By the first part of \Cref{lem:convert-sub} and triangle inequality, we have that 
    \begin{equation*}
        \abs*{\sup_{\mcD' \in \sube(\mcD)}\paren*{\Ex_{\bS \sim (\mcD')^n}[f(\bS)]} - \adamax_{\rho}(f' \circ \sub_{M \to m}, \mcD')} \leq 2\eps.
    \end{equation*}
    By the first part of \Cref{lem:convert-sub} this implies that
    \begin{equation*}
        \abs*{\sup_{\mcD' \in \sube(\mcD)}\paren*{\Ex_{\bS \sim (\mcD')^n}[f(\bS)]} -  \Ex_{\bS \sim \mcD^M}\bracket*{\sup_{\bS' \in \sube(\bS)}\paren*{\Ex[f \circ \substar (\bS')]}}} \leq 2\eps.
    \end{equation*}
    The desired result holds by renaming $\eps' = \eps/2$.
\end{proof}

The proof of \Cref{lem:convert-sub} will use the following basic concentration inequality
\begin{proposition}
    \label{prop:many-non-null}
    Let $\mcD$ be a distribution on $X' \coloneqq X \cup \set{\varnothing}$ for which
    \begin{equation*}
        \Prx_{\bx \sim \mcD'}[\bx = \varnothing] \leq \eta.
    \end{equation*}
    For $m$ as in \Cref{eq:def-big-n-sub},
    \begin{equation*}
        \Prx_{\bS \sim (\mcD')^{m}}\bracket*{\sum_{x \in \bS} \Ind[x \neq \varnothing] \leq n} \leq \eps.
    \end{equation*}
\end{proposition}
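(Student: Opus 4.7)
The plan is a direct application of the multiplicative Chernoff bound (\Cref{fact:chernoff}) already stated in the preliminaries. Let $\bX_i \coloneqq \Ind[\bS_i \neq \varnothing]$ for $i \in [m]$, so the $\bX_i$ are independent $\zo$-valued random variables with $\Ex[\bX_i] = \Pr_{\bx\sim \mcD'}[\bx \neq \varnothing] \geq 1 - \eta$. Let $\bX \coloneqq \sum_i \bX_i$, which is exactly the quantity $\sum_{x \in \bS}\Ind[x \neq \varnothing]$ that we wish to show is larger than $n$ with high probability, and set $\mu \coloneqq \Ex[\bX] = m \cdot \Pr[\bx\neq \varnothing] \geq m(1-\eta)$.

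The choice of $m$ in \Cref{eq:def-big-n-sub} was engineered precisely so that $m(1-\eta) \geq \max(2n, 8\ln(1/\eps))$. From the first term in the max, $\mu \geq 2n$, which gives $n \leq \mu/2$, and therefore $\{\bX \leq n\} \subseteq \{\bX \leq \mu/2\}$. Applying the lower-tail Chernoff bound from \Cref{fact:chernoff} yields
\begin{equation*}
    \Pr[\bX \leq n] \;\leq\; \Pr[\bX \leq \mu/2] \;\leq\; e^{-\mu/8}.
\end{equation*}
From the second term in the max, $\mu \geq 8 \ln(1/\eps)$, so $e^{-\mu/8} \leq \eps$, completing the bound.

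There is essentially no obstacle here: the two terms inside the $\max$ in the definition of $m$ play complementary roles, one ensuring $n$ is comfortably below the mean $\mu$ so that we are in the Chernoff-tail regime, and the other ensuring that the resulting exponential tail bound is at most $\eps$. The only thing to be slightly careful about is that $\Ex[\bX_i]$ is only lower bounded by $1-\eta$ (it may be larger), but this only helps: a larger $\mu$ makes the tail bound smaller, so the argument above goes through unchanged.
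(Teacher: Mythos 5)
Your proposal is correct and takes essentially the same route as the paper: count the non-null entries, note that this count is a sum of independent $\zo$ indicators with mean $\mu \geq m(1-\eta) \geq \max(2n, 8\ln(1/\eps))$, and apply the lower-tail Chernoff bound from \Cref{fact:chernoff}. In fact, your version is slightly more careful than the paper's, which contains a minor typo writing $\mu \geq \frac{m}{1-\eta}$ where it should read $\mu \geq m(1-\eta)$ (the downstream conclusion is unaffected).
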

\begin{proof}
    Let $\bz$ be the random variable that counts the number of entries $\bS'$ has that are not equal to $\emptyset$. Then,
    \begin{equation*}
        \mu \coloneqq \Ex[\bz] \geq \frac{m}{1 - \eta} \geq \max\paren*{2n, 8 \ln (1/\eps)}.
    \end{equation*}
    Furthermore, $\bz$ is the sum of independent random variables taking on values in $\zo$ (each indicating whether $\bS_i \neq \emptyset$ for some index $i$). By a standard Chernoff bound (\Cref{fact:chernoff}),
    \begin{equation*}
        \Prx[\bz \leq n] \leq e^{-\mu/8} \leq \eps.
    \end{equation*}
\end{proof}

\begin{proof}[Proof of \Cref{lem:convert-sub}]
    We begin by constructing the cost function. In the original subtractive adversary, for each $x$, the adversary can either choose to keep $x$ in the sample or remove it at a cost of $1/\eta$. We will construct $\rho$ so that the adversary has the same options and represent this ``removal" option as converting an input $x$ to $\varnothing$:
    \begin{equation}
        \label{eq:def-cost-sub}
    \rho(x,y) \coloneqq \begin{cases}
        0 &\text{if }x = y\\
        \frac{1}{\eta} &\text{if }x \neq y\text{ and }y = \varnothing \\
        \infty& \text{otherwise}.
    \end{cases}
    \end{equation}
    The function $f'$ simply runs $f$ on a random subset of its non-null input. For any $S \in X^m$, let $S_{\neq \varnothing}$ denote the subset of $S$ consisting of all points not equal to $\varnothing$. Then,
    \begin{equation*}
        f'(S) \coloneqq \begin{cases}
            f(\substar(S_{\neq \varnothing})) &\text{if $|S_{\neq \varnothing}| \geq n$}\\
            0&\text{otherwise.}
        \end{cases}
    \end{equation*}

    Next, we analyze the oblivious adversaries. Consider a draw $\bx \sim \mcD$ coupled to an event $\bE$ occurring with probability at least $1 - \delta$. Then, $\sube(\mcD)$ consists of all possible distributions of $\bx$ conditioned on $\bE$, whereas, based on \Cref{eq:def-cost-sub}, $\mcC_{\rho}(\mcD)$ consists of all the possible distributions of $\by$ where
    \begin{equation*}
        \by \coloneqq \begin{cases}
            \bx&\text{if $\bE$}\\
            \varnothing&\text{otherwise.}
        \end{cases}
    \end{equation*}
    For any such event $\bE$, let $\mcD_1' \in \sube(\mcD)$ and $\mcD_2' \in \mcC_{\rho}(\mcD)$ be the corresponding distribution. We will show that
    \begin{equation*}
        \abs*{\Ex_{\bS_1 \sim (\mcD'_1)^n}[f(\bS_1)] - \Ex_{\bS_2 \sim (\mcD'_2)^{m}}[f(\bS_2)] } \leq \eps.
    \end{equation*}
    Each element of $\bS_2$ is set to $\varnothing$ with a probability that is at most $\eta$ and otherwise has the same distribution as an element of $\bS_1$. Therefore, the above difference is bounded by the probability that $\bS_2$ less than $n$ non-null elements. This is at most $\eps$ by \Cref{prop:many-non-null}.

    For the adaptive equivalence, consider any $S\in X^M$. Then any $S_1 \in \sube(S)$ is formed by removing at most $\floor{\eta M}$ of the points in $S$, whereas $S_2 \in \mcC_{\rho}(S)$ is formed by setting $\floor{\eta M}$ of the points to $\varnothing$. Suppose we remove the same set of points to form $S_1$ as we set to $\varnothing$ to form $S_2$. Then, using the fact that at least $n$ points must remain unchanged since $M \geq m \geq n/(1-\eta)$ and that the subsampling filter composes
    \begin{equation*}
        \Ex[f \circ \substar (S_1)] = \Ex[f' \circ\sub_{M \to m} (S_2)].
    \end{equation*}
    Hence, for any choice of $S \in X^M$,
    \begin{equation*}
        \sup_{S_1 \in \sube(S)}\paren*{\Ex[f \circ \substar (S_1)]} = \sup_{S_2 \in \mcC_{\rho}(S)}\paren*{\Ex[f' \circ \sub_{M \to m} (S_2)]}.
    \end{equation*}
    This implies the desired result.
\end{proof}

\subsection{Additive adversaries}
First, we formally define $\eta$-additive corruptions. Note that the oblivious adversary below exactly corresponds to Huber's original contamination model \cite{Hub64}.
\begin{definition}[Standard additive adversaries]
    For any distribution $\mcD$ and $\eta \in (0,1)$, we say that $\mcD'$ is an $\eta$-additive contamination of $\mcD$ if, for some distribution $\mcE$,
    \begin{equation*}
        \mcD' \coloneqq (1 - \eta)\mcD + \eta \mcE
    \end{equation*}
     We use $\adde(\mcD)$ to denote the set of all such $\mcD'$, and for any function $f:X^n \to \zo$, define
     \begin{equation*}
         \oad(f, \mcD) \coloneqq \sup_{\mcD' \in \adde(\mcD)}\set*{\Ex_{\bS' \sim (\mcD')^n}(f(\bS')}.
     \end{equation*}
    Similarly, for any $S \in X^{\ceil{(1-\eta)m}}$, we say $S'$ is an $\eta$-additive contamination of $S$ if it is formed by adding $\floor{\eta m}$ points to $S$ and then arbitrarily permuting it. In a slight overload of notation, we use $\adde(S)$ to denote the set of all such $S'$, and for any $f:X^m \to \zo$, write
    \begin{equation*}
        \aad(f, \mcD) \coloneqq \Ex_{\bS \sim \mcD^{\ceil{(1-\eta)m}}}\bracket*{\sup_{\bS' \in \adde(\bS)}\paren*{\Ex[f(\bS')]}}.
    \end{equation*}
\end{definition}
The equivalence between these two adversaries was already shown by \cite{BLMT22}, but we also prove it here to show our framework can recover their result. 
\begin{theorem}[Oblivious and adaptive additive contamination are equivalent]
    \label{thm:add-equivalent}
    For any $\eta, \eps \in (0,1)$, $f:X^n \to \zo$, and distribution $\mcD$ over $X$, let $m = \poly(n, 1/\eps, \ln |X|)$. Then,
    \begin{equation*}
        \abs*{\oad(f, \mcD) - \aad(f \circ \submn, \mcD)} \leq \eps.
    \end{equation*}
\end{theorem}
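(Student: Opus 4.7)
The plan is to reduce Theorem~\ref{thm:add-equivalent} to Theorem~\ref{thm:main-general-reformulated} via the augmented-domain construction used for subtractive contamination in \Cref{lem:convert-sub}, suitably dualised. Set $X' = X \cup \set{\varnothing}$, put $\tilde{\mcD} \coloneqq (1-\eta)\mcD + \eta \cdot \set{\varnothing}$, and use the additive cost function $\rho_{\mathrm{add}}$ defined in Section~\ref{sec:models}; note $\deg(\rho_{\mathrm{add}}) \leq |X|+1$. Pick $m' = \Theta\paren*{\tfrac{n + \ln(1/\eps)}{1-\eta}}$ large enough that every $\tilde{\mcD}' \in \mcC_{\rho_{\mathrm{add}}}(\tilde{\mcD})$ produces at least $n$ non-$\varnothing$ entries in a sample of size $m'$ with probability $\geq 1 - \eps/4$ (Chernoff, since the $\varnothing$-mass is at most $\eta$). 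Define $g:(X')^{m'}\to\zo$ by $g(S) = f \circ \substar(S_{\neq \varnothing})$ when $|S_{\neq \varnothing}| \geq n$ and $g(S) = 0$ otherwise.

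I would first prove $\abs*{\obmax_{\rho_{\mathrm{add}}}(g, \tilde{\mcD}) - \oad(f, \mcD)} \leq \eps/4$. For the lower-bound direction, any Huber corruption $\mcD' = (1-\eta)\mcD + \eta\mcE$ embeds into $\mcC_{\rho_{\mathrm{add}}}(\tilde{\mcD})$ via the coupling that replaces each $\varnothing$ by an independent $\mcE$-draw (cost exactly $1$), and since the resulting samples contain no $\varnothing$'s, $g$ applies $f$ to a random $n$-subsample of iid $\mcD'$-points, matching $\oad(f,\mcD)$ exactly. For the upper-bound direction, a generic $\tilde{\mcD}' \in \mcC_{\rho_{\mathrm{add}}}(\tilde{\mcD})$ factors as $(1-\eta)\mcD + \eta\tilde{\mcE}$, and a short computation shows its law conditioned on non-$\varnothing$ is again of Huber form $(1-\eta)\mcD + \eta\mcE''$ for some $\mcE''$ on $X$; by exchangeability the random non-$\varnothing$ $n$-subsample is iid from this conditional, yielding the claimed bound up to the $\eps/4$ slack for the rare event of too few non-$\varnothing$ entries.

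Next I would prove $\abs*{\adamax_{\rho_{\mathrm{add}}}(g \circ \sub_{M \to m'}, \tilde{\mcD}) - \aad(f \circ \submn, \mcD)} \leq \eps/4$ when $M = m$. Since $g$ outputs $0$ whenever too few non-$\varnothing$ entries are present, the framework adversary loses nothing by filling every $\varnothing$ in $\bS \sim \tilde{\mcD}^M$, exhausting its budget of $\eta M$. Afterwards $\bS' \in X^M$, and composition of uniform subsamples shows $g \circ \sub_{M \to m'}(\bS')$ and $f \circ \sub_{M \to n}(\bS')$ are equal in distribution. A Chernoff bound shows the number of null positions $|\bI|$ lies within $O(\sqrt{M \ln(1/\eps)})$ of $\eta M$ except with probability $\eps/4$; conditioning on $|\bI| = \lfloor \eta m \rfloor$ recovers the original pipeline, which takes $\lfloor \eta m \rfloor$ adversarially chosen points interleaved uniformly with $\lceil (1-\eta)m \rceil$ iid clean draws, subsamples to $n$, and applies $f$. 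The framework adversary nominally sees $\bI$ itself, but since $\bI$ is independent of the non-$\varnothing$ values, a standard exchangeability/relabeling argument lets any framework strategy be realised by an original adversary composed with the uniform permutation step.

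Finally, applying Theorem~\ref{thm:main-general-reformulated} to $(g, \tilde{\mcD})$ with tolerance $\eps/4$ furnishes $M = \poly(m', \ln(|X|+1), 1/\eps) = \poly(n, 1/\eps, \ln|X|, 1/(1-\eta))$ such that $\abs*{\obmax_{\rho_{\mathrm{add}}}(g, \tilde{\mcD}) - \adamax_{\rho_{\mathrm{add}}}(g \circ \sub_{M \to m'}, \tilde{\mcD})} \leq \eps/4$; setting $m = M$ and combining the three $\eps/4$ bounds by triangle inequality finishes the proof. The main obstacle is the adaptive equivalence: reconciling the framework's ``$M$ iid draws from $\tilde{\mcD}$ with a Binomial number of nulls, then adversarially filled'' pipeline with the original's ``$\lceil(1-\eta)m\rceil$ clean plus $\lfloor\eta m\rfloor$ adversarial, then uniformly permuted'' pipeline requires both the Chernoff concentration on the null count and the exchangeability argument to absorb the framework adversary's access to the null positions.
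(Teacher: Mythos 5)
Your high-level reduction matches the paper's: augment the domain with a $\varnothing$ symbol, place $\eta$ mass on $\varnothing$ to form a base distribution over $X'$, build a test that penalizes $\varnothing$, invoke \Cref{thm:main-general-reformulated}, and then reconcile the resulting ``binomial'' number of adversarial slots ($\mathrm{Bin}(m,\eta)$ of them) with the exactly-$\lfloor\eta m\rfloor$ count that $\aad$ uses. The paper makes the middle notion explicit as a ``binomial adversary'' ($\binmax$) and proves the two reconciliations as \Cref{prop:add-ob-bin} and \Cref{prop:adaptive-variants}.

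Two details you handle differently. First, the paper does \emph{not} use $\rho_{\mathrm{add}}$: in \Cref{prop:add-ob-bin} it uses the zero-cost function $\rho(\varnothing,y)=0$, $\rho(x,x)=0$, $\rho(x,y)=\infty$ otherwise, together with $f':(X')^n\to\zo$ that outputs $0$ as soon as any $\varnothing$ appears. With that $\rho$ the budget constraint is automatic (costs are $0$ or $\infty$), so $f'$ can run on size-$n$ samples over $X'$ directly, and the identities $\oad(f,\mcD)=\obmax_\rho(f',\mcD_\eta)$ and $\binmax(f\circ\submn,\mcD)=\adamax_\rho(f'\circ\submn,\mcD_\eta)$ are \emph{exact} -- no Chernoff slack and no intermediate size $m'$. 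Your choice $\rho_{\mathrm{add}}$ with cost $1/\eta$ is the ``natural'' one from \Cref{sec:models} and does also work (for this base distribution, $\mcC_{\rho_{\mathrm{add}}}(\tilde\mcD)=\mcC_\rho(\mcD_\eta)$), but it forces you to carry an intermediate $m'=\Theta\bigl((n+\ln(1/\eps))/(1-\eta)\bigr)$, a filtering $g$, and Chernoff bounds, which is why your final bound picks up an extra $1/(1-\eta)$ factor that the theorem statement (and the paper's proof) do not have.

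Second, and more substantively, your handling of the binomial-versus-fixed count is not a proof as written. You invoke Chernoff to say the null count $|\bI|$ concentrates near $\eta M$ and then assert ``conditioning on $|\bI|=\lfloor\eta m\rfloor$ recovers the original pipeline.'' But concentration near a value and conditioning on that exact value are not interchangeable steps: the \emph{conditional} law at each fixed count is a different model, and being close to $\lfloor\eta m\rfloor$ does not let you replace $\Bin(m,\eta)$ with a point mass. The argument that actually closes the gap, which is exactly \Cref{prop:adaptive-variants}, is a coupling argument: given any strategy for one of the two adversaries, construct a strategy for the other together with a coupling of the two corrupted size-$m$ samples under which they differ in at most $\bigl|\bz - \lceil(1-\eta)m\rceil\bigr| = O(\sqrt m)$ positions, and then observe that the uniform $n$-subsample hits any of these positions with probability at most $n\Delta/m = O(n/\sqrt m)$. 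This coupling (not conditioning) is what propagates the concentration bound through the subsampling filter. Your ``exchangeability/relabeling'' worry, by contrast, is essentially vacuous -- in both pipelines every position is symmetrized by the uniform subsample, so it resolves immediately -- but the count reconciliation does need the coupling and is missing from your write-up.
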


To prove this equivalence, we will introduce a variant of the adaptive adversary, the \emph{binomial} adversary. In this variant, rather than drawing \emph{exactly} $\ceil{(1-\eta)m}$ clean points and the adversary being able to add $\floor{\eta m}$ corrupted points, the number of clean points is itself drawn randomly from a binomial distribution.
\begin{definition}[Binomial adversary]
    For any distribution $\mcD$ and sample size $m$, the binomial adversary first draws $\bz \sim \Bin(m, (1-\eta))$ clean points from $\mcD$, then adds $m - \bz$ arbitrary points to this clean sample, and finally permutes all $m$ points arbitrarily. For any $f:X^m \to \zo$, we define
    \begin{equation*}
        \binmax(f, \mcD) \coloneqq \Ex_{\bz \sim \Bin(m,1-\eta)} \bracket*{\Ex_{\bS \sim \mcD^{\bz}}\bracket*{\sup_{\bS' \in \complete_m(\bS)}\paren*{\Ex[f (\bS')]}}}
    \end{equation*}
    where $\complete_m(S)$ to denote all samples that can be created by adding $m - |S|$ points to $S$. 
\end{definition}

Our proof of \Cref{thm:add-equivalent} proceeds in two steps. We first use \Cref{thm:main-domain} to show that the oblivious additive adversary is equivalent to the binomial adversary, and then show equivalence between the binomial adversary and adaptive additive adversary.
\begin{proposition}[The oblivious adversary and binomial adversary are equivalent]
    \label{prop:add-ob-bin}
    For any $\eta, \eps \in (0,1)$, $f:X^n \to \zo$, and distribution $\mcD$ over $X$, let $m = \poly(n, 1/\eps, \ln |X|)$. Then,
    \begin{equation*}
        \abs*{\oad(f, \mcD) - \binmax(f \circ \submn, \mcD)} \leq \eps.
    \end{equation*}
\end{proposition}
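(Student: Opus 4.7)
The plan is to cast both sides of the claim in the unified framework of \Cref{subsec:unified-framework} and then invoke \Cref{thm:main-general-reformulated}. Let $X' \coloneqq X \cup \set{\varnothing}$, let $\mcD_{\varnothing}$ be the point mass at $\varnothing$, and let $\mcD_{\mathrm{aug}} \coloneqq (1-\eta)\mcD + \eta \mcD_{\varnothing}$. Define the cost function
\[
    \rho(x,y) \coloneqq \begin{cases} 0 & \text{if } x = y \text{ or } x = \varnothing, \\ \infty & \text{otherwise}, \end{cases}
\]
which has degree $|X|+1$. I deliberately avoid the $\rho_{\mathrm{add}}$ of \Cref{sec:models}: its strict budget of $\eta m$ replacements does not match the binomial adversary's $\Bin(m,\eta)$ adversarial count (which exceeds $\eta m$ with probability $\Omega(1)$), whereas the cost above puts no cap on how many $\varnothing$'s the adaptive adversary may overwrite. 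Finally, define the test function $h:(X')^n \to \zo$ by $h(\bT) \coloneqq f(\bT) \cdot \Ind\bracket*{\bT \in X^n}$, which evaluates $f$ when its input contains no $\varnothing$ and outputs $0$ otherwise.

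\textbf{Reduction of the oblivious side.} I would show $\obmax_\rho(h, \mcD_{\mathrm{aug}}) = \oad(f,\mcD)$. Any $\mcD' \in \mcC_\rho(\mcD_{\mathrm{aug}})$ is of the form $(1-\eta)\mcD + \eta \mcE'$ for some $\mcE'$ on $X'$; writing $\mcE' = q\mcD_{\varnothing} + (1-q)\mcE$ with $\mcE$ supported on $X$, the expectation $\Ex_{\bT \sim (\mcD')^n}\bracket*{h(\bT)}$ factors as $(1-\eta q)^n$ times the expectation of $f$ over $n$ i.i.d.\ draws from the Huber mixture at noise $\beta \coloneqq \eta(1-q)/(1-\eta q) \le \eta$. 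Since $\oad$ is monotone in the noise level and $(1-\eta q)^n \le 1$, the supremum is attained at $q=0$ and recovers $\oad(f,\mcD)$.

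\textbf{Reduction of the adaptive side.} I would show $\adamax_\rho(h \circ \sub_{m \to n}, \mcD_{\mathrm{aug}}) = \binmax(f \circ \submn, \mcD)$. For $\bS \sim \mcD_{\mathrm{aug}}^m$, the constraint $\bS' \in \mcC_\rho(\bS)$ forces $\bS'_i = \bS_i$ at every non-$\varnothing$ coordinate and imposes no restriction at the $\varnothing$ coordinates; the adversary's optimum replaces every $\varnothing$ by some element of $X$, since leaving any $\varnothing$ only increases the mass of the event $h=0$. Thus $\bS'$ is built from $m - \bK$ clean points (i.i.d.\ from $\mcD$) and $\bK \sim \Bin(m,\eta)$ adversarial points chosen as a function of $\bS$. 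Because $\sub_{m \to n}$ returns a uniform subsample, positional information is irrelevant, so the adversary's effective strategy depends only on the multiset of clean points together with $\bK$---exactly the view of the binomial adversary, whose adversarial count $m - \bz \sim \Bin(m,\eta)$ matches $\bK$ in distribution. Hence the two processes induce identical laws on $\submn(\bS')$.

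Combining the two reductions with \Cref{thm:main-general-reformulated} applied to $h$ under $\rho$ (degree $|X|+1$) yields $m = O\paren*{n^4 (\ln|X|)^2/\eps^4} = \poly(n, 1/\eps, \ln|X|)$ for which $|\oad(f,\mcD) - \binmax(f \circ \submn, \mcD)| \leq \eps$. The main obstacle is the adaptive-side reduction: one must carefully verify that the positional information available to the framework adversary is annihilated by $\sub_{m \to n}$, so that its effective strategy space coincides with the binomial adversary's. A secondary subtlety is the choice of $\rho$ itself; using the $\rho_{\mathrm{add}}$ of \Cref{sec:models} would introduce the budget mismatch flagged above and would therefore only yield a one-sided inequality on the adaptive side.
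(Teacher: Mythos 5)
Your proposal is correct and follows essentially the same route as the paper: the paper also embeds into the framework with the augmented domain $X \cup \{\varnothing\}$, the base distribution $(1-\eta)\mcD + \eta\{\varnothing\}$, the very same $\{0,\infty\}$-valued cost function (rather than $\rho_{\mathrm{add}}$), and the test $f'$ that zeroes out on any $\varnothing$, then invokes \Cref{thm:main-general-reformulated}. The only differences are that you spell out the two identities $\obmax_\rho(h,\mcD_{\mathrm{aug}}) = \oad(f,\mcD)$ and $\adamax_\rho(h\circ\sub_{m\to n},\mcD_{\mathrm{aug}}) = \binmax(f\circ\submn,\mcD)$ in more detail, where the paper simply asserts them with a one-line justification.
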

\begin{proof}
    This will be a fairly straightforward application of \Cref{thm:main-domain}. Let $X' \coloneqq X \cup \set{\varnothing}$ and $\mcD_{\eta}$ be the distribution that outputs $\varnothing$ with probability $\eta$ and otherwise outputs $\mcD$,
    \begin{equation*}
        \mcD_{\eta} \coloneqq (1-\eta)\cdot \mcD + \eta \cdot \set{\varnothing}.
    \end{equation*}
    Then, we'll define an adversary that can send $\varnothing$ to any element of $X'$ but otherwise cannot change its input.
    \begin{equation*}
        \rho(x,y) \coloneqq \begin{cases}
            0&\text{if $x = y$ or $x = \varnothing$}\\
            \infty&\text{otherwise.}
        \end{cases}
    \end{equation*}
    Finally, let $f':X^n \to \zo$ be defined as
    \begin{equation*}
        f'(S) \coloneqq \begin{cases}
            0&\text{if $\varnothing \in S$}\\
            f(S)&\text{otherwise.}  
        \end{cases}
    \end{equation*}
    We observe that,
    \begin{align*}
        \oad(f, \mcD) &= \obmax_{\rho}(f', \mcD_{\eta}), \quad\text{and}\\
        \binmax(f \circ \submn, \mcD) &= \adamax_{\rho}(f' \circ \sub_{m \to n}, \mcD_{\eta}).
    \end{align*}
    because in order to maximize the success probability of $f'$, the adversaries should send every $\varnothing$ they see to their adversarial choice of an element in $X$. The desired result follows from \Cref{thm:main-general-reformulated}.
\end{proof}

Next, we show that the binomial adversary and adaptive additive adversary are equivalent.
\begin{proposition}[The binomial and adaptive additive adversaries are equivalent]
    \label{prop:adaptive-variants}
    For any $f:X^n \to \zo$, $\eps, \eta \in (0,1)$, and distribution $\mcD$, let
    \begin{equation*}
        m = O\paren*{\frac{n^2}{\eps^2}}.
    \end{equation*}
    Then, for $f' = f \circ \submn$
    \begin{equation*}
        \abs*{\aad(f', \mcD) - \binmax(f', \mcD)} \leq \eps.
    \end{equation*}
\end{proposition}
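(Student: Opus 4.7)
The plan is to prove both inequalities by an explicit simulation between the two adversaries, reducing each direction to an elementary subsampling lemma. Set $m_a \coloneqq \lceil (1-\eta)m\rceil$ and let $\bz \sim \mathrm{Bin}(m, 1-\eta)$. The goal of the simulation is to couple the size-$m$ outputs of the two processes so that, viewed as multisets, their symmetric difference has size exactly $2|\bz - m_a|$. Granting this, the key elementary tool is: if $A, B \in X^m$ are multisets with $|A \triangle B| = 2k$, then coupling $\submn(A)$ and $\submn(B)$ with a shared uniform $n$-subset of positions makes them agree whenever the chosen positions avoid the $k$ differing elements on each side, so
\[
\TV(\submn(A), \submn(B)) \leq 1 - \binom{m-k}{n}\big/\binom{m}{n} \leq nk/m.
\]
Taking expectations over $\bz$ and using the standard bound $\Ex|\bz - m_a| \leq \sqrt{m \eta(1-\eta)} \leq \tfrac{1}{2}\sqrt{m}$ yields total TV at most $n/(2\sqrt{m})$, which is $\leq \eps$ for $m = \Theta(n^2/\eps^2)$.

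For $\aad(f',\mcD) \leq \binmax(f',\mcD) + \eps$: given any adaptive strategy $\phi_a$, construct a randomized binomial strategy $\phi_b$. Under a shared coupling, identify the binomial's $\bz$ real clean samples with the first $\bz$ of the adaptive's $m_a$ real clean samples. When $\bz < m_a$, $\phi_b$ internally draws $m_a - \bz$ synthetic iid samples from $\mcD$ (coupled to equal the adaptive's remaining $m_a - \bz$ real clean samples), runs $\phi_a$ on the resulting length-$m_a$ input, and outputs $\phi_a$'s $m - m_a$ adversarial points together with the $m_a - \bz$ synthetic samples as its own $m - \bz$ added points; the two multisets then match exactly. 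When $\bz \geq m_a$, $\phi_b$ runs $\phi_a$ on the first $m_a$ real clean samples and outputs the first $m - \bz$ of $\phi_a$'s $m - m_a$ adversarial points, discarding the last $\bz - m_a$; the binomial multiset now contains $\bz - m_a$ extra real clean samples while the adaptive multiset contains $\bz - m_a$ extra discarded adversarial points, giving symmetric difference $2(\bz - m_a)$.

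For $\binmax(f',\mcD) \leq \aad(f',\mcD) + \eps$: symmetrically, given any binomial strategy $\phi_b$, construct an adaptive strategy $\phi_a$ that first samples $\bz \sim \mathrm{Bin}(m, 1-\eta)$ using its own randomness. When $\bz \geq m_a$, $\phi_a$ draws $\bz - m_a$ synthetic iid samples from $\mcD$ (coupled to the binomial's real clean samples beyond position $m_a$), invokes $\phi_b$ on the length-$\bz$ concatenation, and emits $\phi_b$'s $m - \bz$ adversarial points together with the $\bz - m_a$ synthetic samples as its own $m - m_a$ added points; the multisets agree exactly. When $\bz < m_a$, $\phi_a$ invokes $\phi_b$ on the first $\bz$ real clean samples and emits the first $m - m_a$ of $\phi_b$'s $m - \bz$ adversarial points as its additions; the adaptive multiset then carries $m_a - \bz$ extra real clean samples and the binomial carries $m_a - \bz$ extra discarded adversarial points, again a symmetric difference of $2(m_a - \bz)$.

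The main obstacle is the bookkeeping around the hard constraints that the binomial output must contain exactly $\bz$ real clean samples and the adaptive output must contain exactly $m_a$. When the simulating side has too few real clean samples, it synthesizes iid samples from $\mcD$ using its internal randomness---legal because added points can be arbitrary---and couples them to the other side's real clean samples. When it has too many, it either promotes the excess into the added-points bucket or discards some planned adversarial outputs, in both cases producing symmetric difference $2|\bz - m_a|$. Once this case analysis is in place, the subsampling lemma and the concentration bound on $|\bz - m_a|$ combine to give the $\eps$ bound.
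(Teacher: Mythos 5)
Your proof is correct and follows the same route as the paper's, which proceeds via the identical two-step plan: (i) exhibit a coupling of the two adversaries' corrupted samples in which they differ in at most $|\bz - \lceil(1-\eta)m\rceil|$ coordinates, and (ii) invoke the elementary bound that subsamples of two samples differing in $\Delta$ coordinates have total variation distance at most $n\Delta/m$, then average over $\bz$. The paper's published proof merely asserts the existence of the coupling in one sentence, so the case analysis you give (synthesizing iid fill-in samples from $\mcD$ when the simulating side is short, and dropping surplus adversarial points when it is long) is precisely the kind of bookkeeping the paper elides; it is correct in both directions. One trivial nit: since $m_a = \lceil(1-\eta)m\rceil$ need not equal $(1-\eta)m = \Ex[\bz]$, your bound should read $\Ex|\bz - m_a| \leq \sqrt{m\eta(1-\eta)} + 1$, which does not change the $O(n/\sqrt{m})$ conclusion.
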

\begin{proof}
    Expanding the definitions, we wish to show that,
    \begin{equation*}
         \abs*{\Ex_{\bS \sim \mcD^{\ceil{(1-\eta)m}}}\bracket*{\sup_{\bS_1' \in \adde(\bS)}\paren*{\Ex[f'(\bS_1')]}} - \Ex_{\bz \sim \Bin(m,1-\eta)} \bracket*{\Ex_{\bS \sim \mcD^{\bz}}\bracket*{\sup_{\bS_2' \in \complete_m(\bS)}\paren*{\Ex[f'(\bS_2')]}}}} \leq \eps.
    \end{equation*}
    Regardless of the strategy of one adversary, it is possible to choose a strategy for the other adversary so that the following holds: There is a coupling of $\bS_1'$ and $\bS_2'$ for which the expected number of differences between $\bS_1'$ and $\bS_2'$ is at most $\Ex\bracket*{\abs*{\bz - \ceil{(1-\eta)m}}}$. Furthermore, for any $S_1, S_2$ differing in at most $\Delta$ points and function $f:X^n \to \zo$,
    \begin{equation*}
        \abs*{\Ex[f\circ \submn(S_1)] - \Ex[f\circ \submn(S_2)] } \leq \frac{n \Delta}{m} = \eps/2,
    \end{equation*}
    because, in order for the two above quantities to differ, the subsample must select one of the $\Delta$ differences. Therefore,
    \begin{equation*}
         \abs*{\aad(f', \mcD) - \binmax(f', \mcD)}  \leq \frac{n}{m} \cdot \Ex\bracket*{\abs*{\bz - \ceil{(1-\eta)m}}} \leq O\paren*{\frac{n}{\sqrt{m}}} \leq \eps. \qedhere
    \end{equation*}
\end{proof}

Finally, we prove the main result of this section.
\begin{proof}[Proof of \Cref{thm:add-equivalent}]
    Let $f' = f\circ \submn$. Then, by triangle inequality
    \begin{align*}
         &\abs*{\oad(f, \mcD) - \aad(f', \mcD)}  \\
         &\quad\quad\leq \abs*{\oad(f, \mcD) - \binmax(f', \mcD)} \\
         &\quad\quad+   \abs*{\aad(f', \mcD) - \binmax(f', \mcD)}.
    \end{align*}
    Each of the above terms is at most $\eps/2$ by \Cref{prop:add-ob-bin,prop:adaptive-variants}.
\end{proof}

\section{Partially-adaptive adversaries}
\label{appendix:partial-adaptive}

In this section, we introduce two partially adaptive adversaries, \emph{malicious noise} and the \emph{non-iid} adversary, and prove that both are equivalent to fully adaptive and fully oblivious adversaries.

\begin{definition}[Malicious noise, \cite{Val85}]
    In \emph{malicious noise} with base distribution $\mcD$ and noise rate $\eta$, the sample $\bS = (\bx_1, \ldots, \bx_n)$ is generated sequentially. For each $i \in [n]$, an $\eta$-coin is flipped and then,
    \begin{enumerate}
        \item If the coin is tails, a clean point is sampled $\bx_i \sim \mcD$.
        \item If the coin is heads, the adversary gets to choose $\bx_i$ arbitrarily with full knowledge of $\bx_1, \ldots, \bx_{i-1}$ but no knowledge of the future points ($\bx_{i+1}, \ldots, \bx_n)$.
    \end{enumerate}
    For any $f:X^n \to \zo$ and distribution $\mcD$, we'll use $\malmax(f, \mcD)$ to denote the maximum expected value of $f(\bS)$ over any $\bS$ generated by a malicious adversary with noise rate $\eta$.
\end{definition}
The malicious adversary is partially adaptive in the sense that, when it chooses how to corrupt $\bx_i$, it only knows the points generated in the past.
\begin{definition}[The non-iid adversary, \cite{CHLLN23}]
    In the \emph{non-iid adversary} model with base distribution $\mcD$ and noise rate $\eta$, to generate $n$ samples, first the adversary arbitrarily chooses $\floor{\eta n}$ points, and then $\ceil{(1-\eta)n}$ points are generated iid from $\mcD$, added to the generated points, and permuted arbitrarily. For any $f:X^n \to \zo$ and distribution $\mcD$, we'll use $\noniidmax(f, \mcD)$ to denote the maximum expected value of $f(\bS)$ over any $\bS$ generated by a non-iid adversary with noise rate $\eta$.
\end{definition}
This adversary is referred to as \emph{non-iid} because the $\floor{\eta n}$ points can be generated arbitrarily and need not be iid. If they were, this adversary would be extremely similar to the oblivious additive adversary, with the only difference being that the non-iid adversary generates exactly $\floor{\eta n}$ corruptions, whereas the oblivious adversary generates $\Bin(n, \eta)$ corruptions.
\begin{theorem}[Equivalence of all additive adversaries]
    \label{thm:all-additive}
    For any $\eta, \eps \in (0,1)$, $f:X^n \to \zo$, and distribution $\mcD$ over $X$, let $m = \poly(n,1/\eps, \ln|X|)$ and $f' = f \circ \submn$. The following are all within $\pm \eps$ of one another.
    \begin{enumerate}
        \item The maximum success probability of the oblivious additive adversary,
        \begin{equation*}
            \oad(f, \mcD).
        \end{equation*}
        \item The maximum success probability of the adaptive additive adversary,
        \begin{equation*}
            \aad(f', \mcD).
        \end{equation*}
        \item The maximum success probability of the malicious adversary,
        \begin{equation*}
            \malmax(f', \mcD).
        \end{equation*}
        \item The maximum success probability of the non-iid adversary,
        \begin{equation*}
            \noniidmax(f', \mcD).
        \end{equation*}
    \end{enumerate}
\end{theorem}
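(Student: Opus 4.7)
The plan is to sandwich each partially-adaptive adversary between the fully oblivious additive adversary and the fully adaptive additive adversary. Concretely, for each intermediate adversary $\mathrm{Int} \in \set{\malmax, \noniidmax}$ I will establish
\begin{equation*}
    \oad(f,\mcD) - \eps/3 \;\leq\; \mathrm{Int}(f',\mcD) \;\leq\; \aad(f',\mcD) + \eps/3,
\end{equation*}
and then invoke \Cref{thm:add-equivalent} (applied with accuracy parameter $\eps/3$ and a correspondingly larger $m = \poly(n,1/\eps,\ln|X|)$) to conclude by the triangle inequality that all four quantities lie within $\pm\eps$ of one another.

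For the malicious model, the lower bound $\malmax(f',\mcD) \geq \oad(f,\mcD)$ is immediate: fix any $\mcD' = (1-\eta)\mcD + \eta\mcE$ achieving the supremum in $\oad(f,\mcD)$, and have the malicious adversary, whenever its per-coordinate $\eta$-coin comes up heads, output an independent fresh sample from $\mcE$ (ignoring the history). The $m$ generated points are then iid from $\mcD'$, so a uniformly random size-$n$ subsample is distributed exactly as $(\mcD')^n$, matching $\oad(f,\mcD)$. For the upper bound, note that the malicious adversary is dominated by the binomial adaptive adversary of \Cref{prop:adaptive-variants}: granting the adversary full knowledge of all future coordinates can only increase its success, and once it sees the whole sample its remaining choices are exactly those of a $\binmax$-style adversary. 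By the calculation already carried out in \Cref{prop:adaptive-variants}, $\binmax(f',\mcD)$ differs from $\aad(f',\mcD)$ by at most $O(n/\sqrt{m}) \leq \eps/3$ for $m = \Omega(n^2/\eps^2)$.

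For the non-iid model, the upper bound $\noniidmax(f',\mcD) \leq \aad(f',\mcD)$ holds trivially: the adaptive additive adversary sees the entire clean sample before committing its $\floor{\eta m}$ corruption points, which strictly subsumes the non-iid adversary's choice made without any knowledge of the clean draws, and both then permute before subsampling. For the lower bound, have the non-iid adversary draw its $\floor{\eta m}$ corruption points iid from the optimal $\mcE$ used in $\oad(f,\mcD)$. The resulting distribution over the generated $m$-sample differs from $m$ iid draws of $(1-\eta)\mcD + \eta\mcE$ only in that the number of $\mcE$-coordinates is the deterministic value $\floor{\eta m}$ rather than $\mathrm{Bin}(m,\eta)$; after subsampling down to $n$ this discrepancy costs $O(n/\sqrt{m})$ in total variation by the same concentration argument as in \Cref{prop:adaptive-variants}, which combined with \Cref{prop:add-ob-bin} yields the desired $\eps/3$ slack.

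The main obstacle is really just careful bookkeeping across three distinct sample-count regimes: exactly $\floor{\eta m}$ corruptions (adaptive, non-iid), an independent $\mathrm{Bin}(m,\eta)$ count (malicious, and implicitly $\oad$ when routed through the binomial equivalence in \Cref{prop:add-ob-bin}), and the final subsampling to $n$. Each count mismatch costs only $O(n/\sqrt{m})$ in TV distance after subsampling, so choosing $m$ polynomially large in $n$, $1/\eps$, and $\ln|X|$ simultaneously absorbs these errors and the $\eps/3$ slack required by \Cref{thm:add-equivalent}. No new technical machinery beyond \Cref{thm:add-equivalent} and the auxiliary \Cref{prop:adaptive-variants,prop:add-ob-bin} appears to be needed.
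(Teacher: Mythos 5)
Your proposal is correct and follows essentially the same route as the paper: sandwich each partially-adaptive adversary between the fully oblivious and fully adaptive additive adversaries, invoke \Cref{thm:add-equivalent} for the outer pair, and route the comparison against the adaptive adversary through the intermediate binomial adversary of \Cref{prop:adaptive-variants}. The four inequalities you establish correspond directly to the paper's \Cref{prop:mal-weak,prop:mal-strong,prop:non-iid-weak,prop:non-iid-strong}. The only point of departure is in the non-iid lower bound: the paper couples the two binary indicator vectors $\ba$ and $\bb$ directly (via shared uniform $[0,1]$ draws and a resampling step) to obtain an $O(n^2/m)$ bound, whereas you realize the non-iid adversary as an $O(\sqrt{m})$-point edit of an iid $(\mcD')^m$ sample and pay $O(n/\sqrt{m})$ after subsampling, mirroring the proof of \Cref{prop:adaptive-variants}. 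Your bound is looser but still polynomial, so it absorbs into the choice of $m$. (The invocation of \Cref{prop:add-ob-bin} in that step is superfluous, since subsampling $n$ points without replacement from $m$ iid $\mcD'$-draws is already $(\mcD')^n$, but it causes no harm.)
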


We already proved that $\oad(f,\mcD)$ and $\aad(f', \mcD)$ are within $\pm \eps$ of one another. To prove the same for malicious noise, we will show that malicious noise is no more powerful than the adaptive adversary, and at least as powerful as the oblivious adversary.
\begin{proposition}
    \label{prop:mal-weak}
    In the setting \Cref{thm:all-additive},
    \begin{equation*}
        \malmax(f', \mcD) \leq \aad(f', \mcD) + \eps.
    \end{equation*}
\end{proposition}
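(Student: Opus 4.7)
The plan is a two-step reduction: first show $\malmax(f',\mcD) \leq \binmax(f',\mcD)$ by a direct simulation argument, then invoke Proposition~\ref{prop:adaptive-variants} (which sets $m = O(n^2/\eps^2) \leq \poly(n,1/\eps,\ln|X|)$) to replace $\binmax$ by $\aad$ at additive cost $\eps$. The composition yields $\malmax(f',\mcD) \leq \aad(f',\mcD) + \eps$, as desired.

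For the simulation step, the key observation is that both adversaries inject a $\Bin(m,\eta)$-distributed number of corruptions into an otherwise iid sample of total size $m$: malicious does this implicitly via independent per-position $\eta$-coins, while binomial does it explicitly via $\bz \sim \Bin(m,1-\eta)$ clean points and $m-\bz$ adversarially chosen points. The only genuine difference between the two models is informational — the malicious adversary at step $i$ conditions only on $\bx_1,\ldots,\bx_{i-1}$, while the binomial adversary conditions on the entire clean sample before committing to any corruption. Because $f' = f \circ \submn$ is invariant under permutations of its input, we do not need to preserve positional information to preserve the value of $f'$, so the binomial adversary's strictly greater knowledge can only help it.

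Concretely, given any malicious strategy, the simulating binomial strategy will, on input $(\bz, \bS)$ with $|\bS|=\bz$, first sample a uniformly random pattern $\bC \in \bits^m$ subject to $\sum_i \bC_i = m-\bz$, and then walk through positions $i = 1, \ldots, m$: at each $i$ with $\bC_i = 0$ it places the next unused element of $\bS$, and at each $i$ with $\bC_i = 1$ it invokes the malicious corruption routine on the prefix already constructed. The main technical obstacle — really just a bookkeeping check — is verifying that this produces the correct conditional distribution. Specifically, in malicious noise, conditional on the total number of corruptions being $m - \bz$, the corruption pattern is uniform over $\bits^m$ with that Hamming weight and the clean-position entries are iid $\mcD$; the simulation produces exactly this joint distribution. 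Marginalizing over $\bz \sim \Bin(m, 1-\eta)$ then recovers the full malicious distribution, so $\malmax(f',\mcD) \leq \binmax(f',\mcD)$, and Proposition~\ref{prop:adaptive-variants} finishes the job.
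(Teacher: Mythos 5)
Your proposal is correct and follows the same two-step route as the paper: first bound $\malmax(f',\mcD) \leq \binmax(f',\mcD)$ by observing the binomial adversary has strictly more information and can therefore simulate any malicious strategy (with the same $\Bin(m,1-\eta)$ split between clean and corrupted points), then invoke \Cref{prop:adaptive-variants}. You spell out the coupling/bookkeeping of the simulation more explicitly than the paper does, but the underlying argument is identical.
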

\begin{proof}
    Consider an arbitrary malicious adversary. This adversary can make $\bz$ many corruptions, where $\bz \sim \Bin(m, (1-\eta))$. Therefore, if the malicious adversary knew the full sample, it would be the same adversary as the binomial adversary. As a result, for any choices of the malicious adversary, there is a binomial adversary simulating it (generating the same distribution over corrupted samples). This gives that
    \begin{equation*}
        \malmax(f', \mcD) \leq \binmax(f', \mcD).
    \end{equation*}
    The desired result then follows from \Cref{prop:adaptive-variants}.
\end{proof}
\begin{proposition}
 \label{prop:mal-strong}
    In the setting of \Cref{thm:all-additive},
    \begin{equation*}
        \oad(f, \mcD) \leq \malmax(f', \mcD).
    \end{equation*}
\end{proposition}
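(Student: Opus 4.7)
The plan is to exhibit, for each oblivious additive adversary, a malicious strategy that produces the identical distribution on $n$-point datasets after subsampling. Concretely, fix any $\mcD' \in \adde(\mcD)$, written as $\mcD' = (1-\eta)\mcD + \eta \mcE$ for an outlier distribution $\mcE$. Consider the malicious adversary on $m$ points that, whenever the $\eta$-coin comes up heads and it must choose a corruption $\bx_i$, ignores the entire history $\bx_1, \ldots, \bx_{i-1}$ and simply draws $\bx_i \sim \mcE$ independently. This is a valid (in fact, oblivious) malicious strategy.

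The key observation is that each $\bx_i$ produced by this strategy is, by construction, distributed as $(1-\eta)\mcD + \eta \mcE = \mcD'$, and the $\bx_i$'s are mutually independent because the corruption does not depend on the past. Hence the full $m$-point sample generated is distributed exactly as $(\mcD')^m$. Applying the subsampling filter $\submn$ to iid samples from $\mcD'$ yields iid samples from $\mcD'$, so the distribution of $f'(\bS) = f \circ \submn(\bS)$ under this malicious adversary matches the distribution of $f(\bT)$ under $\bT \sim (\mcD')^n$, which is the quantity the oblivious additive adversary is maximizing.

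Taking the supremum over $\mcD' \in \adde(\mcD)$ on the oblivious side, and noting that each such choice is matched by a legal malicious strategy on the malicious side, we conclude
\begin{equation*}
\oad(f, \mcD) \;=\; \sup_{\mcD' \in \adde(\mcD)} \Ex_{\bT \sim (\mcD')^n}[f(\bT)] \;\leq\; \malmax(f', \mcD).
\end{equation*}
There is no genuine obstacle here: the proof is essentially a one-line observation that a malicious adversary who ignores the history recovers Huber's contamination model exactly, and that subsampling preserves the iid product structure. The only minor care needed is to verify that the subsampling step does not disturb the match, which is immediate since $\submn$ applied to a product distribution $(\mcD')^m$ outputs $(\mcD')^n$.
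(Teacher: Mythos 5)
Your proof is correct and takes essentially the same approach as the paper: both exhibit the history-ignoring malicious adversary that draws from the outlier distribution $\mcE$ on heads, observe that the resulting $m$-point sample is iid $\mcD'$, and then note that uniform subsampling of an iid sample is again iid.
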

\begin{proof}
    Consider any strategy for the oblivious adversary. It chooses an arbitrary distribution $\mcE$ and sets
    \begin{equation*}
        \mcD' = (1-\eta)\cdot \mcD + \eta \cdot \mcE.
    \end{equation*}
    Now, consider the malicious adversary that, whenever it can corrupt a point, it draws a point from $\mcE$ as its corruption. Then, each of the $m$ points in this malicious adversary's sample is independent and drawn from $\mcD'$. After subsampling uniformly without replacement, the $n$ points will still be independent and drawn from $\mcE$. This means that for any choices of the oblivious adversary, the malicious adversary can simulate them, giving the desired inequality.
\end{proof}

We execute the same two steps for the non-iid adversary.
\begin{proposition}
    \label{prop:non-iid-weak}
    In the setting \Cref{thm:all-additive},
    \begin{equation*}
        \noniidmax(f', \mcD) \leq \aad(f', \mcD).
    \end{equation*}
\end{proposition}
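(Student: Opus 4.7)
The plan is to observe that every non-iid adversary strategy is a \emph{special case} of an adaptive additive adversary strategy, namely one that happens to ignore the clean sample when deciding on its corruptions. This will give the inequality with no $\eps$ loss, so the statement follows immediately.

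More concretely, I would fix an arbitrary non-iid strategy. Such a strategy is specified by (i) a distribution $\mcA$ over tuples in $X^{\floor{\eta m}}$ used to choose the adversarial points and (ii) a (possibly randomized) procedure that permutes the combined sample of $m$ points. I would then construct the adaptive additive adversary that, given a clean sample $\bS_{\mathrm{clean}} \sim \mcD^{\ceil{(1-\eta)m}}$, draws its $\floor{\eta m}$ corruptions from $\mcA$ \emph{independently of} $\bS_{\mathrm{clean}}$, appends them to $\bS_{\mathrm{clean}}$, and applies the same permutation procedure. By construction, the joint distribution of the resulting size-$m$ sample is identical under both adversaries: in both cases it is the distribution of a random permutation of $\ceil{(1-\eta)m}$ i.i.d.\ draws from $\mcD$ concatenated with $\floor{\eta m}$ points drawn from $\mcA$, with the two pieces independent of each other.

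Since the distributions of the size-$m$ corrupted samples agree, applying $\submn$ to either yields the same distribution on $X^n$, so the expected value of $f' = f \circ \submn$ is identical under the two adversaries. Taking the supremum over non-iid strategies on the left-hand side and noting that the corresponding adaptive strategies are a subclass of all adaptive additive strategies yields
\[
\noniidmax(f', \mcD) \;\leq\; \aad(f', \mcD).
\]

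There is no real obstacle here; the only thing to be slightly careful about is matching the exact parameters (the non-iid adversary produces exactly $\floor{\eta m}$ corruptions and the adaptive additive adversary also adds exactly $\floor{\eta m}$ points, so the counts align without any need to invoke the binomial adversary or to lose an $\eps$ as in \Cref{prop:adaptive-variants}). The direction in the other inequality $\oad(f,\mcD)\leq \noniidmax(f',\mcD)+\eps$, which is what one would need to close the loop and get equivalence via \Cref{thm:add-equivalent}, would be the genuinely non-trivial step, but it is not what the stated proposition asks for.
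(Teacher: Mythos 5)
Your proposal is correct and follows essentially the same route as the paper: for any non-iid strategy, construct the adaptive additive adversary that adds the same (distribution of) points regardless of the clean sample, giving a lossless simulation and hence the inequality without any $\eps$ error. The only cosmetic difference is that you carry along the randomization in the non-iid adversary's choice, whereas the paper implicitly reduces to a deterministic choice of added points; both are fine.
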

\begin{proof}
    Consider any strategy for the non-iid adversary. This is a set of points $x_1, \ldots, x_{\floor{\eta m}}$ it will add to the sample. Now, consider the adaptive adversary that adds these same points regardless of what the clean points are. It's straightforward to see this adaptive adversary simulates the non-iid adversary, giving the desired inequality.
\end{proof}
\begin{proposition}
        \label{prop:non-iid-strong}
    In the setting \Cref{thm:all-additive},
    \begin{equation*}
        \oad(f,\mcD) \leq \noniidmax(f', \mcD) +\eps.
    \end{equation*}
\end{proposition}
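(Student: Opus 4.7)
The plan is, given any oblivious additive adversary, to construct a randomized non-iid adversary whose post-subsample output is $\eps$-close in total variation to the oblivious one; the comparison then transfers through any test $f$ exactly as in \Cref{prop:adaptive-variants,prop:mal-strong}. Fix the oblivious strategy, i.e.\ a distribution $\mcE$ with $\mcD' = (1-\eta)\mcD + \eta\mcE$. I will have the non-iid adversary draw its $\floor{\eta m}$ injected points iid from $\mcE$. After the clean draws from $\mcD$, the uniform permutation, and $\submn$, the algorithm sees $n$ points drawn uniformly without replacement from an $m$-point pool containing $\ceil{(1-\eta)m}$ iid $\mcD$-samples and $\floor{\eta m}$ iid $\mcE$-samples.

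Both distributions then admit the same two-stage generative description: (i) sample an integer $\bk$ — the number of $\mcE$-points among the $n$ returned; (ii) conditional on $\bk$, place $\bk$ iid $\mcE$-samples and $n-\bk$ iid $\mcD$-samples into a uniformly random ordering of $[n]$. Step (ii) is identical for both distributions by exchangeability (the corruption points being iid from $\mcE$ is precisely what enables this); the only difference is the law of $\bk$. Under $(\mcD')^n$ one has $\bk \sim \Bin(n, \eta)$, while under the subsampled non-iid adversary one has $\bk \sim \HypGeo(m, \floor{\eta m}, n)$. Consequently
\[
\TV\paren*{(\mcD')^n,\ \submn(\text{non-iid sample})} = \TV\paren*{\Bin(n, \eta),\ \HypGeo(m, \floor{\eta m}, n)}.
\]

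It then remains to bound this single-variable TV distance, which is a classical ``sampling with vs.\ without replacement'' comparison: couple the two sequences of $n$ draws from the urn so that they agree whenever no draw repeats a previous one; the collision probability is at most $\binom{n}{2}/m$, giving $\dtv(\Bin(n, K/m), \HypGeo(m, K, n)) \leq O(n^2/m)$. An additional $O(n/m)$ term absorbs the tiny discrepancy between $p = \eta$ and $p = \floor{\eta m}/m$. Taking $m = \Theta(n^2/\eps)$ — comfortably within the allowed $\poly(n, 1/\eps, \ln|X|)$ — makes the total bound at most $\eps$. Taking a supremum over $\mcE$ gives $\oad(f, \mcD) \leq \noniidmax(f', \mcD) + \eps$. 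I do not expect a genuine obstacle here; the only care needed is in justifying the clean two-stage factorization, which is why it is crucial that the non-iid adversary draws its $\floor{\eta m}$ corrupted points iid from $\mcE$ rather than deterministically.
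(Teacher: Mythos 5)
Your proof is correct and follows essentially the same strategy as the paper's: have the non-iid adversary draw its $\floor{\eta m}$ injected points iid from $\mcE$, then bound the discrepancy as a sampling-with-vs-without-replacement comparison costing $O(n^2/m)$. The paper phrases this by explicitly coupling the indicator vectors $\ba \sim \Ber(\eta)^n$ and $\bb$ (uniform-without-replacement indicators), whereas you package the same exchangeability observation as a reduction to $\TV(\Bin(n,\eta), \HypGeo(m,\floor{\eta m},n))$; these are interchangeable and give the same bound.
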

\begin{proof}
    Consider any strategy for the oblivious adversary. It chooses an arbitrary distribution $\mcE$ and sets
    \begin{equation*}
        \mcD' = (1-\eta)\cdot \mcD + \eta \cdot \mcE.
    \end{equation*}
    To draw a sample $\bS \sim (\mcD')^n$, we can first independently draw indicators $\ba_1, \ldots, \ba_n \iid \Ber(\eta)$. For every $i$ in which $\ba_i = 1$, $\bS_i$ is sampled from $\mcE$. In contrast, if $\ba_i = 0$, then $\bS_i$ is sampled from $\mcD$.
    
    Now consider the non-iid adversary that draws $\bx_1, \ldots, \bx_{\floor{\eta m}} \iid \mcE$ and adds them to the sample, and let $\bT$ be a size-$n$ subsample from the resulting non-iid adversaries subsample. Let $\bb_i$ be the indicator for whether the $i^{\text{th}}$ element of $\bT$ comes from one of these $\floor{\eta m}$ points added. Then, we observe that, after conditioning on the values of $\bb_1, \ldots, \bb_n$, each element of $\bT$ is independently drawn from $\mcE$ if $\bb_i = 1$ and $\mcD$ otherwise. We observe that the distribution of $(\bb_1, \ldots, \bb_n)$ is equivalent to the distribution obtained by first drawing $\bi_1, \ldots, \bi_n$ uniformly without replacement from $[m]$ and then setting $\bb_i = \Ind[\bi_i \leq \floor{\eta n}]$.

    Therefore, the desired result follows from showing that the TV distance of the distributions of $\ba$ and $\bb$ is at most $\eps$. We prove by exhibiting a coupling of $\ba$ and $\bb$ for which they differ with probability at most $\eps$.
    \begin{enumerate}
        \item Draw $\bz_1, \ldots \bz_n$ uniformly and independently $[0,1]$. 
        \item Set $\ba_j = \Ind[\bz_j \leq \eta]$ for each $j \in [n]$.
        \item For each $j \in [n]$, let $\bi_n = \floor{\bz_i \cdot m} + 1$. Note this gives that $\bi_1, \ldots, \bi_n$ are each uniform on $[m]$ and they are independent.
        \item If $\bi_1, \ldots, \bi_n$ are not unique, resample them by drawing them uniformly from $[m]$ without replacement.
        \item Set $\bb_j = \Ind[\bi_j \leq \floor{\eta m}]$.
    \end{enumerate}
    First, we confirm that the marginal distributions are correct. Each $\ba_j$ is independent and drawn from $\Ber(\eta)$, so $\ba$ has the correct marginal distribution.

    If we resample, then $\bi_1, \ldots, \bi_n$ are a uniform set of $n$ distinct indices from $[m]$. If we don't resample, then they are also a uniform set of $n$ distinct indices from $[m]$, because before resampling, they are independent and uniform from $[m]$, and we only don't resample if they are distinct. Therefore, the marginal distribution of $\bb$ is correct.

    Finally, we bound the probability $\ba \neq \bb$. There are two ways that $\ba$ and $\bb$ could be different.
    \begin{enumerate}
        \item One the $\bz_i$ is between $\frac{\floor{\eta m}}{m}$ and $\eta$. This occurs with probability at most $\frac{1}{m}$.
        \item We needed to resample $\bi_1, \ldots, \bi_n$ because they were not unique. This occurs if $\bi_j = \bi_k$ for $j \neq k$. By union bound, it occurs with probability at most $\frac{\binom{n}{2}}{m} \leq n^2 / m$.
    \end{enumerate}
    Union bounding over the above two, we have that
    \begin{equation*}
         \oad(f,\mcD) \leq \noniidmax(f', \mcD) + \frac{1}{m} + \frac{n^2}{m}. \qedhere
    \end{equation*}
\end{proof}

\Cref{thm:all-additive} is immediate from \Cref{prop:mal-weak,prop:mal-strong,prop:non-iid-weak,prop:non-iid-strong} and \Cref{thm:add-equivalent}.
\section{Brief overview of \cite{BLMT22}'s approaches and their limitations}
\label{sec:BLMT}

\subsection{The special case of additive adversaries}
\label{subsec:additive}

\cite{BLMT22} proved that oblivious and adaptive \emph{additive} adversaries are equivalent (corresponding to \Cref{thm:add-equivalent}). Here we briefly describe their proof strategy, and why it does not generalize to other adversary models. For simplicity, we set $\eta = 1/2$ in the below exposition.

Recall that the adaptive additive adversary, given a sample $S \in X^{M/2}$ can construct $S \cup T$ for arbitrary $T \in X^{M/2}$. Using a standard concentration inequality, for any $f:X^n \to \zo$ fixed choice of the corruption $T$,
\begin{equation}
    \label{eq:BLMT-concentration}
    \Prx_{\bS \sim \mcD^{^{(1- \eta) \cdot M}}}\bracket*{f \circ \subn{M}(\bS \cup T) > \obmax + \eps} \leq 2^{-\Omega_{n,\eps}(M)}
\end{equation}
where $\obmax$ is appropriately defined for the setting. At first glance, the number of choices for $T$ is $|X|^{M/2}$, which also grows exponentially in $M$. This makes it impossible to union bound over all choices of $T$. \cite{BLMT22}'s key observation is that the space of possible corruptions (choices of $T$) can be easily discretized to one that is much smaller.

In particular, given any $T \in X^{M/2}$, let $\bT$ be formed by,
\begin{enumerate}
    \item First taking $m \leq M$ samples $\bx_1, \ldots, \bx_{m} \iid \Unif(T)$.
    \item Then, for $k \coloneqq \frac{M}{2m}$, constructing $\bT$ by taking $k$ copies of each of $\bx_1, \ldots, \bx_m$.
\end{enumerate}
Then, $\subn{M}(T)$ and $\subn{M}(\bT)$ look identical unless a collision occurs (i.e. the same $\bx_i$ is sampled twice). If $m = n^2/\eps$, that collision occurs with probability at most $\eps$, which is negligible. Furthermore, there are only $|X|^m$ possible choices for $\bT$, which does not grow exponentially with $M$. Therefore, the desired result can proven using \Cref{eq:BLMT-concentration} and an appropriate concentration inequality.

This strategy crucially relies on the fact that, while the adaptive additive adversary can choose its corruption (choice of $T$) as a function of the clean sample $S$, the set of possible corruptions does not depend on $S$. Hence, adaptivity is inherently weaker for the additive adversary than for other models where the space of possible corruptions depends on the clean sample.

For example, consider the case of subtractive adversaries, where the adaptive adversary can remove half the points in the sample. For a sample $S \in X^M$, there are $\approx 2^M$ ways to remove half the points, each parameterized by a bit string $b \in \zo^M$ where $b_i$ indicates whether the $i^{\text{th}}$ point is removed. Crucially, the ``effect" of a bit string $b$ depends on the clean sample $S$ --- in the sense that for the adversary to determine whether removing $S_i$ is a good idea, it must know the value of $S_i$. In particular, if the adversary is only allowed to choose $b$ from a subset $B \subseteq \zo^M$ of size much smaller than $2^M$ that is fixed before seeing the clean sample $S$, the power of the adversary is greatly diminished. This makes it not clear how a similar discretization argument as \cite{BLMT22} used for additive adversaries would work.

\subsection{The special case of statistical query algorithms}
\label{subsec:SQ}

\cite{BLMT22} also approved the equivalence between oblivious and adaptive adversaries for algorithms that never directly examine their dataset and only access it through \emph{statistical queries} (SQ) \cite{Kea98}. 

\pparagraph{Basics of the SQ framework.} A SQ is a pair $(\phi, \tau)$ where $\phi:X \to [0,1]$ is the query and $\tau > 0$ is the tolerance. For any distribution $\mcD$, a valid response to the query $(\phi, \tau)$ is any value that is within $\pm \tau$ of $\Ex_{\bx \sim \mcD}[\phi(\bx)]$. An \emph{SQ algorithm} $A$ using $k$ queries of tolerance $\tau$ specifies a sequence of $k$ adaptively chosen queries, $(\phi_1, \tau), \ldots, (\phi_k, \tau)$. For each $t \in [k]$, it receives a response $v_t$ which is within $\pm \tau$ of $\Ex_{\bx \sim \mcD}[\phi_t(\bx)]$, and the identity of $\phi_{t+1}$ is allowed to depend on the prior response $v_{1}, \ldots, v_t$. After receiving all responses $v_1, \ldots, v_k$, the $A$ chooses an output $y \in Y$.

We say $y \in Y$ is a valid output of $A$ on distribution $\mcD$ if it is a response that $A$ can generate given valid responses $v_1, \ldots, v_k$ which are each within $\pm \tau$ of $\Ex_{\bx \sim \mcD}[\phi_t(\bx)]$. We can now state \cite{BLMT22}'s main result for SQ algorithms.
\begin{fact}[Oblivious and adaptive adversaries are equivalent for the SQ framework]
    \label{fact:SQ-BLMT}
    Let $A$ be any SQ algorithm making $k$ queries of tolerance $\tau$, and $\rho$ be any cost function. For $m = \poly(k,\tau)$, there is an algorithm $A':X^m \to Y$ with the following guarantee: For any distribution $\mcD$ over $X^m$, draw $\bS \sim \mcD^m$ and let an adversary choose $\bS' \in \mcC_{\rho}(\bS)$. Then, $A'(\bS')$ is a valid output of $A$ on distribution $\mcD'$ for some $\mcD' \in \mcC_{\rho}(\mcD)$ with high probability over the randomness of $\bS$.
\end{fact}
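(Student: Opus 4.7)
The plan is to take $A'$ to be the algorithm that, on input $\bS' \in X^m$, simulates $A$ and answers each query $(\phi_t, \tau)$ with the empirical estimate
\[
\hat v_t \coloneqq \frac{1}{m}\sum_{i=1}^m \phi_t(\bS'_i).
\]
The task then reduces to exhibiting a single $\mcD' \in \mcC_\rho(\mcD)$ such that, with high probability over $\bS \sim \mcD^m$, the response $\hat v_t$ satisfies $|\hat v_t - \Ex_{\bx \sim \mcD'}[\phi_t(\bx)]| \leq \tau$ for every $t \in [k]$; this certifies $A'(\bS')$ as a valid output of $A$ on $\mcD'$.

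The natural candidate for $\mcD'$ is an ``averaged marginal'' of the corruption. First I would symmetrize the adversary by randomly permuting $\bS$ before applying $\corrupt$ and undoing the permutation afterwards; this does not change the adversary's expected power but makes the joint law of $(\bS, \bS')$ coordinate-exchangeable. Let $\mcD'$ be the marginal law of $\bS'_1$. By symmetry, $\mcD'$ is the marginal of $\bS'_i$ for every $i$, and the obvious coupling $\bx \coloneqq \bS_1 \sim \mcD$, $\bx' \coloneqq \bS'_1 \sim \mcD'$ satisfies
\[
\Ex[\rho(\bx, \bx')] = \tfrac{1}{m}\Ex\!\left[\textstyle\sum_i \rho(\bS_i, \bS'_i)\right] \leq 1,
\]
so $\mcD' \in \mcC_\rho(\mcD)$. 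Moreover, exchangeability gives $\Ex_{\mcD'}[\phi_t] = \Ex_{\bS}[\hat v_t]$, so the whole problem reduces to a concentration statement for $\hat v_t$ around its mean, simultaneously for all $k$ adaptively chosen queries. The adaptivity can be handled in the usual way: discretize the response space to a $\tau/4$-grid, observe that this yields at most $(4/\tau)^k$ possible query sequences, and union-bound the concentration event over this finite set. A bound of the form $m = \poly(k, 1/\tau)$ then suffices.

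The main obstacle is the concentration step itself. Because $\bS' = \corrupt(\bS)$ and the adversary can rearrange its budget in response to any change in $\bS$, the function $\bS \mapsto \hat v_t$ need not have small bounded differences: flipping a single $\bS_j$ can in principle cascade into changes on an $\eta$-fraction of the coordinates of $\bS'$, and a naive application of McDiarmid's inequality gives concentration only of order $\eta$ rather than $O(m^{-1/2})$. Overcoming this requires exploiting the cost constraint more delicately --- for instance, via a Doob martingale in the sequence $\bS_1, \ldots, \bS_m$ that measures only the incremental effect of each fresh coordinate on $\Ex[\hat v_t \mid \bS_{\leq j}]$, and arguing that even when the adversary reallocates its budget, the $\phi_t$-weighted change between before and after revealing $\bS_j$ is $O(1/m)$ in expectation because both the old and new corruption sets share the same total cost budget. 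Once that per-step control is in place, standard martingale concentration plus the union bound over the discretized query tree would complete the proof.
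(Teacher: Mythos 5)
The paper does not contain a proof of this statement: it is imported from \cite{BLMT22} as a black-box fact, and \Cref{subsec:SQ} only restates it and discusses its limitations relative to \Cref{thm:main-general}. So there is no in-paper proof to compare against; the question is whether your argument stands on its own.

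It does not, and the gap is exactly the step you flag as the ``main obstacle.'' By committing $\mcD'$ to be the \emph{unconditional} marginal of $\bS'_1$, you have signed up to prove that the corrupted empirical mean $\hat v_t$ concentrates around a fixed target $\Ex_{\bx\sim\mcD'}[\phi_t(\bx)]$, and that is false. Take $X=\{0,1\}$, $\mcD=\Unif(\{0,1\})$, $\rho$ the strong-contamination cost with $\eta=1/2$, and the (already exchangeable) adversary that flips the minority bits of $\bS$, so that $\bS'$ is all-$0$'s roughly half the time and all-$1$'s otherwise. The marginal of $\bS'_1$ is still $\Unif(\{0,1\})$, so for $\phi(x)=x$ your target is $1/2$, while $\hat v$ is always within $o(1)$ of $0$ or of $1$: no concentration. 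Your martingale sketch cannot rescue this --- the Doob increment $\Ex[\hat v\mid\bS_{\leq j}]-\Ex[\hat v\mid\bS_{\leq j-1}]$ is $\Theta(\eta)$ whenever revealing $\bS_j$ flips the majority and, with it, the adversary's entire $\eta m$-point budget, so the intuition that ``budgets match, hence the increment is $O(1/m)$'' is exactly where the argument breaks. This is the same obstruction the paper itself constructs (the $\Unif([0,0],[1,1])$ example in its ``first attempt and why it fails'' discussion) to explain why a single, realization-independent oblivious $\mcD'$ cannot work.

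The resolution is to let $\mcD'$ depend on the realized $\bS'$. Note the quantifier order in the statement: ``$A'(\bS')$ is a valid output of $A$ on distribution $\mcD'$ for \emph{some} $\mcD'\in\mcC_\rho(\mcD)$, with high probability over $\bS$'' --- the existential over $\mcD'$ sits inside the high-probability event, so $\mcD'$ is permitted to be a function of $\bS$ (hence of $\bS'$). With that freedom, the standard route (in the spirit of \Cref{claim:lipschitz} and \Cref{lem:round-with-few-groups} in this paper, and of \cite{BLMT22}'s own SQ argument) is to take $\mcD'$ to be a legal corruption chosen \emph{after} seeing $\bS'$, essentially a rounding of $\Unif(\bS')$ into $\mcC_\rho(\mcD)$, using that $\Unif(\bS')\in\mcC_\rho(\Unif(\bS))$ via the index coupling and that $\Unif(\bS)$ concentrates near $\mcD$ on the bounded collection of discretized queries once $m=\poly(k,1/\tau)$. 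Your exchangeability observation, your verification that the index coupling has cost $\le 1$, and your discretization-and-union-bound scaffolding over the query tree are all reusable in that argument; the fixed-$\mcD'$ concentration claim is not and cannot be made to hold.
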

To understand the utility of \Cref{fact:SQ-BLMT}, suppose we have an SQ algorithm $A$ that solves some task in the presence of an oblivious adversary. This means that, for all $\mcD' \in \mcC_{\rho}(\mcD)$, any valid output of $A$ on $\mcD'$ is a good answer for this task. Then, \Cref{fact:SQ-BLMT} says that $A'$ given an adaptively corrupted sample will also provide a good answer for this task with high probability.

One straightforward weakness of this result compared to ours is not every task that admits an efficient solution also admits an efficient solution by an SQ algorithm \cite{BFJKMR94}. Even for tasks that can be cast into the SQ framework, our result has advantages.
\begin{enumerate}
    \item The SQ equivalence in \Cref{fact:SQ-BLMT} is not black box. Given an algorithm $A$ not already in the SQ framework, in order to design an $A'$ that defeats adaptive adversaries, first the algorithm designer must find an SQ algorithm that is ``equivalent" to $A$ in order to apply \Cref{fact:SQ-BLMT}, a task that is not always trivial. In contrast, our result gives a black-box technique, via subsampling, to construct $A'$.
    \item The SQ equivalence in \Cref{fact:SQ-BLMT} does not have a well-defined sample overhead. Even if an algorithm $A:X^n \to Y$ can be cast into some $A_{\mathrm{SQ}}$ operating in the SQ framework, the number of queries and tolerance $A_{\mathrm{SQ}}$ needs is not a predicable function of $n$. Therefore, it's unclear how much larger the $m$ in \Cref{fact:SQ-BLMT} will be than $n$. In contrast, \Cref{thm:main-general} gives a simple expression for what $m$ is needed as a function of $n$.
\end{enumerate}

\end{document}